\documentclass[dvipsnames,table,svgnames,x11names,10pt]{article} % For LaTeX2e
\usepackage[preprint]{tmlr}
\usepackage{xcolor}
% If accepted, instead use the following line for the camera-ready submission:
%\usepackage[accepted]{tmlr}
% To de-anonymize and remove mentions to TMLR (for example for posting to preprint servers), instead use the following:

% Optional math commands from https://github.com/goodfeli/dlbook_notation.
%%%%% NEW MATH DEFINITIONS %%%%%

\usepackage{amsmath,amsfonts,bm}

% Mark sections of captions for referring to divisions of figures

% Highlight a newly defined term

% Figure reference, lower-case.

% Figure reference, capital. For start of sentence

% Section reference, lower-case.

% Section reference, capital.

% Reference to two sections.

% Reference to three sections.

% Reference to an equation, lower-case.
\def\eqref#1{equation~\ref{#1}}
% Reference to an equation, upper case

% A raw reference to an equation---avoid using if possible

% Reference to a chapter, lower-case.

% Reference to an equation, upper case.

% Reference to a range of chapters

% Reference to an algorithm, lower-case.

% Reference to an algorithm, upper case.

% Reference to a part, lower case

% Reference to a part, upper case

\def\1{\bm{1}}

% Random variables

% rm is already a command, just don't name any random variables m

% Random vectors

% Elements of random vectors

% Random matrices

% Elements of random matrices

% Vectors

% Elements of vectors

% Matrix

% Tensor
\DeclareMathAlphabet{\mathsfit}{\encodingdefault}{\sfdefault}{m}{sl}
\SetMathAlphabet{\mathsfit}{bold}{\encodingdefault}{\sfdefault}{bx}{n}

% Graph

% Sets

% Don't use a set called E, because this would be the same as our symbol
% for expectation.

% Entries of a matrix

% entries of a tensor
% Same font as tensor, without \bm wrapper

% The true underlying data generating distribution

% The empirical distribution defined by the training set

% The model distribution

% Stochastic autoencoder distributions

 % Laplace distribution

% Wolfram Mathworld says $L^2$ is for function spaces and $\ell^2$ is for vectors
% But then they seem to use $L^2$ for vectors throughout the site, and so does
% wikipedia.

 % See usage in notation.tex. Chosen to match Daphne's book.

\DeclareMathOperator*{\argmax}{arg\,max}

\usepackage{amsthm}
\usepackage{hyperref}
\usepackage{url}
\usepackage{colortbl}
\usepackage{microtype}
\usepackage{graphicx}

\def\BibTeX{{\rm B\kern-.05em{\sc i\kern-.025em b}\kern-.08emT\kern-.1667em\lower.7ex\hbox{E}\kern-.125emX}}
\usepackage{amsmath,amssymb,amsfonts}
\usepackage{booktabs}
\usepackage{textcomp}
\usepackage{algpseudocode}
\usepackage{algorithm}
\usepackage{bm}
\usepackage[markup=underlined]{changes}
\algrenewcommand{\algorithmiccomment}[1]{\quad //#1}
\algnewcommand{\LineComment}[1]{\State // \quad #1}
\usepackage{multirow}
\algnewcommand{\SideComment}[1]{\quad /* #1 */}
\newcommand{\rom}[1]{\uppercase\expandafter{\romannumeral #1\relax}}
\usepackage{bbm}
\usepackage{mathtools}
\usepackage{mathabx}
\usepackage{comment}
\usepackage{subcaption}
\usepackage[utf8]{inputenc} % allow utf-8 input
\usepackage[T1]{fontenc}    % use 8-bit T1 fonts
\usepackage{hyperref}       % hyperlinks
\usepackage{url}            % simple URL typesetting
\usepackage{booktabs}       % professional-quality tables   
\usepackage{nicefrac}       % compact symbols for 1/2, etc.
\usepackage{microtype}      % microtypography
\newtheorem{theorem}{Theorem}

\newtheorem{lemma}[theorem]{Lemma}
\newtheorem{definition}{Definition}[section]

\providecommand{\abs}[1]{\lvert#1\rvert} \providecommand{\norm}[1]{\lVert#1\rVert}

\definechangesauthor[color=BrickRed]{Ajinkya}

\usepackage{todonotes}
\setcommentmarkup{\todo[color={authorcolor!20},size=\scriptsize]{#3: #1}}

%% Rather hacky definition of an "annote"
%% by riding on \added

% \newcommand{\change}[1]{\textcolor{red}{#1}}
% \usepackage{changes}
% \definechangesauthor[name={Ajinkya}, color=black!50!green]{AKM}
% \definechangesauthor[name={question}, color=red]{question}

% \newcommand{\finalstate}[1][]{%
%   \renewcommand{\added}[2][]{##2}%
  %\renewcommand{\deleted}[4][]{}
%}%
%\newcommand{\convertreplaced}{%
%  \renewcommand{\replaced}[3][]%{\added[##1]{##2}}}

% Alternative Assumption!

% New environment
\newenvironment{assumptionp}[1]{
  
  \assumptionalt
}{\endassumptionalt}
  
% \usepackage[style=tmlr]{biblatex}
% \addbibresource{alpha_ref.bib}
\allowdisplaybreaks

\title{SPriFed-OMP: A Differentially Private Federated Learning Algorithm for Sparse Basis Recovery}

% Authors must not appear in the submitted version. They should be hidden
% as long as the tmlr package is used without the [accepted] or [preprint] options.
% Non-anonymous submissions will be rejected without review.

\author{\name Ajinkya K Mulay \email mulay@purdue.edu \\
  \addr Elmore Family School of Electrical and Computer Engineering\\
  Purdue University
  \AND
  \name Xiaojun Lin \email linx@ecn.purdue.edu \\
  \addr Elmore Family School of Electrical and Computer Engineering\\
  Purdue University}

% The \author macro works with any number of authors. Use \AND 
% to separate the names and addresses of multiple authors.

  % Insert correct month for camera-ready version
 % Insert correct year for camera-ready version
 % Insert correct link to OpenReview for camera-ready version
\usepackage{footnote}

\begin{document}

\maketitle

\begin{abstract}
Sparse basis recovery is a classical and important statistical learning problem when the number of model dimensions $p$ is much larger than the number of samples $n$. However, there has been little work that studies sparse basis recovery in the Federated Learning (FL) setting, where the client data's differential privacy (DP) must also be simultaneously protected. In particular, the performance guarantees of existing DP-FL algorithms (such as DP-SGD) will degrade significantly when $p \gg n$, and thus, they will fail to learn the true underlying sparse model accurately. In this work, we develop a new differentially private sparse basis recovery algorithm for the FL setting, called \textit{SPriFed-OMP}. \textit{SPriFed-OMP} converts OMP (Orthogonal Matching Pursuit) to the FL setting. Further, it combines SMPC (secure multi-party computation) and DP to ensure that only a small amount of noise needs to be added in order to achieve differential privacy. As a result, \textit{SPriFed-OMP} can efficiently recover the true sparse basis for a linear model with only $n = \mathcal{O}(\sqrt{p})$ samples. We further present an enhanced version of our approach, \textit{SPriFed-OMP-GRAD} based on gradient privatization, that improves the performance of \textit{SPriFed-OMP}. Our theoretical analysis and empirical results demonstrate that both \textit{SPriFed-OMP} and  \textit{SPriFed-OMP-GRAD} terminate in a small number of steps, and they significantly outperform the previous state-of-the-art DP-FL solutions in terms of the accuracy-privacy trade-off.
\end{abstract}

\section{Introduction}\label{section:introduction}
For many statistical learning applications, such as genomics and economics \citep{liang2008statistical, clarke2008properties, belloni2014high, fan2011sparse}, it is essential to deal with situations where the number of samples ($n$) is significantly lower than the number of model parameters ($p$). Without additional constraints, such problems are \textit{ill-determined} as there will be many models that can fit the same set of training samples. To address this issue, a popular approach is to assume some sparsity conditions on the desired model. Along this line, there have been significant advances in compressed sensing techniques, such as LASSO \citep{zhao2006model}, \citep{meinshausen2007relaxed} Orthogonal Matching Pursuit (OMP) \citep{tropp2007signal}, \citep{needell2009cosamp} Forward-Backward Algorithm (FoBA) \citep{zhang2011adaptive} and Least Angles Regression (LARS) \citep{efron2004least}, those leverage sparsity assumptions on the data to extract the true sparse basis of the underlying model. Once the sparse basis is identified, the sparse model can be estimated using standard learning techniques, as the problem is no longer ill-determined.

However, the above methods usually assume that the training datasets are already hosted on a server and thus neglect the privacy of the users who generate the training data. Recently, Federated Learning (FL) \citep{mcmahan2017communication} has been proposed as a standard approach to protect user privacy, which keeps user data at the clients during training. Specifically, the clients update the models on their data and only upload the new gradients to the parameter server. The server updates the model parameters based on the aggregated global gradient and then sends the new model parameters back to the clients, furthering the training process. Such iterations are repeated until the global model converges, with no need to upload the user data to the server. However, note that FL is insufficient for privacy protection because the client gradients transmitted to the server may still leak information \citep{huang2021evaluating}, \citep{geiping2020inverting}. Therefore, in the literature, Differential Privacy (DP) \citep{dwork2014algorithmic} has also been introduced into FL. DP privatizes FL by adding noise to client gradients before uploading them to the parameter server. By adding appropriate noises, individual client gradients are hidden from the server and any external adversary while still enabling the learning of the global model. A notable example is DP-SGD, which can be applied to any SGD-based training algorithms and arbitrary loss functions (e.g., regularized objective functions in LASSO).

Unfortunately, this DP-FL pipeline is a poor fit for training sparse models when the number of training samples ($n$) is significantly below the number of model parameters ($p$). For ease of discussion, in this paper, we assume that each client contains a single training sample. Therefore, in our analysis, the number of clients equals the number of samples. \footnote{Our analysis can be easily extended to multiple samples per client by the group privacy notion discussed in Section~\ref{section:problem-setup}.} Note that, intuitively, the gradient uploaded by each client has $p$ elements, one for each model parameter. For DP to achieve the desirable privacy guarantee, we usually need to add noise to the gradients with variance proportional to the model dimensions \citep{dwork2014algorithmic}. Thus, when $p \gg n$, the noise required in the DP-FL setting can completely overwhelm the signal and thus prevent the recovery of the correct sparse basis. We refer to this problem as the \textit{curse-of-dimensionality} for DP-FL. 

In particular, for DP-SGD \citep{abadi2016deep}, even when it is applied to Lipschitz loss functions, the empirical risk is of the order $\mathcal{O}(\frac{p}{n})$ (Theorem 2.4 in \citet{bassily2014private}). When $p \gg n$, even though one can apply DP-SGD to a sparse-solution-seeking objective, such as LASSO, it will not produce accurate answers. Other DP-FL algorithms, such as objective perturbation \citep{kifer2012private}, have similar issues. For example, the empirical loss of the objective perturbation mechanism in \citet{kifer2012private} is of the order $\mathcal{O}(\frac{p^2}{n})$ (see Theorems 4.1-4.2 and 5 in \citet{kifer2012private}), which will also fail to produce accurate sparse models when $p \gg n$. Thus, developing DP-FL algorithms that can attain provable sparse recovery under high-dimensional settings remains an open question. We note that if we do not consider the FL setting, there are DP solutions in the literature for sparse recovery under high dimension \citep{thakurta2013differentially, talwar2015nearly}. However, these algorithms assume full data access at the server and thus fail to work in the FL setting (where data are kept at the client for privacy).

In this work, we develop a new sparse-recovery algorithm specifically for the DP-FL setting, which, under suitable conditions, 
can ensure DP for data at the clients and recover the true sparse basis even when $p \gg n$. Specifically, our new algorithm called \textit{SPriFed-OMP} (Sparse Private Federated OMP), is based on Orthogonal Matching Pursuit (OMP) \citep{tropp2007signal}. At each iteration, OMP picks one basis with the highest correlation with the target, subtracts the contribution of this basis from the target, and continues to search for the next basis with the highest correlation with the residual until a given number $s$ of sparse basis is identified. However, the standard OMP \citep{tropp2007signal} is not designed for the FL setting, nor does it ensure DP. We augment OMP with a noisy SMPC (secure multi-party computation) algorithm \citep{bonawitz2017practical, kairouz2021distributed} so that only a smaller amount of noise is added to the aggregate correlation, which ensures DP. In some steps, e.g., to compare the correlation with the target across all bases, this noise is still in order $O(\sqrt{p})$. However, we only need to perform such steps $s$ times, which is usually much smaller than the number of iterations for DP-SGD to converge. As a result, the overall amount of noise will be much smaller. 
    % \item We leverage the \textit{greedy} Orthogonal Matching Pursuit (OMP) algorithm to provide the first DP-FL result for exactly performing support recovery for high-dimensional \textit{ill-defined} models when $p \ll n$ under only the Restricted Isometry Property (RIP) assumption.
    % \item We re-purpose the popular greedy Orthogonal Matching Pursuit (OMP) algorithm to present an efficient private solution for when the Mean Squared Error (MSE) problem (\textit{i.e., } when $Q(\cdot) \triangleq \frac{1}{2} ||X\alpha - y||_2^2$ in Eqn.~\ref{eqn:non-private-problem}).
% 
Our careful analysis shows that, as long as $n = \mathcal{O}(\sqrt{p})$, our algorithm will be able to recover the true sparse basis with high probability under the similar assumption of Restricted Isometry Property (RIP) as in standard OMP. While this general idea is quite intuitive, the detailed design of \textit{SPriFed-OMP} also matters a lot. Indeed, we present two versions of \textit{SPriFed-OMP}. In the first version, we privatize the individual correlations (feature and feature-residual correlations) to compute the total feature-residual correlation in each step. This version introduces our first enhancement that specifically adds lower order noise to the selected basis elements. Lower order noise enables lower sample size requirement for sparse basis recovery while also incurring lower test error as we will see in Section~\ref{section:empirical-results}. Next, we identify a simplified second version of \textit{SPriFed-OMP} that introduces our second enhancement based on gradient privatization. We will see in the experimental section (Section~\ref{section:empirical-results}) that adding noise to the gradient is much more advantageous under clipping than adding noise to correlations. Furthermore, for both methods, we can quantify the estimation error (Theorem~\ref{theorem:estimation-error}) and the empirical risk (Theorem~\ref{theorem:risk-analysis}) to be on the order of $\mathcal{O}\Big(\sqrt{\frac{s\log(s)}{n}}\Big)$ and $\mathcal{O}\Big(\frac{s\log(s)}{n}\Big)$, respectively which are in the same order as the traditional non-private Ordinary Least Squares (OLS) estimate that already assumes knowledge of the correct sparse basis.
    % \item added[id=question, comment={unclear about how much information to put here. thinking, condition on n, basis recovery whp, but maybe not others}]{Compared to previous private solutions, we cannot only identify high-performing low-error models but also retrieve the true sparse-model basis.}
    % \item By leveraging the unique nature of OMP, our work delivers strong results for the \textit{ill-determined} private problem while essentially bypassing privacy's \textit{curse-of-dimensionality} tax. We demonstrate the applicability of our model for larger networks, further incentivizing the study of smaller models for effective privacy deployments.
% \end{enumerate}

\subsection{Related Work}\label{section:related-work}

In the literature, we have two major categories for private sparse learning: (1) The sparse learning methods that only work in the central DP setting (\emph{i.e.,} where the data can be sent to the server non-privately and only the server privatizes the output) and, (2) the learning methods that are either directly DP-FL compatible or can be adapted to be suitable for the DP-FL setting. We note that not all of these methods can provide basis recovery guarantees (some of them only guarantees empirical risks). Below, we discuss the related work in each category.

\textbf{Central DP compatible sparse learning methods:} The recent sparse methods in \citet{asi2021private, bassily2021non} have empirical risk that holds for $n = \mathcal{O}(\sqrt{p})$ samples under the RSC constraint. However, both of these methods require centralized data access due to their unique sampling requirements and certain non-private mechanisms, and thus, they are not suitable for the DP-FL domain.

\textbf{DP-FL compatible sparse learning methods:} As we discussed earlier, DP-SGD is not designed for the $p \gg n$ setting or exploiting sparsity. Amongst the DP-FL sparse learning methods, the work of \citet{kifer2012private} is most closely related to ours. Although the method proposed in \citet{kifer2012private} is not geared toward the DP-FL setting, the \textit{Samp-Agg} algorithm can be adapted to the DP-FL setting. However, support recovery in \citet{kifer2012private} requires that each client's data matrix individually satisfies the Restricted Strong Convexity (RSC) \citep{zhang2011adaptive} assumption. In contrast, the RIP assumption in our proposed solution is considerably weaker, as it only needs to hold over the entire training set across all clients. Thus, our solution can apply to a larger set of scenarios. There are two other related work that aim at the DP-FL and $p \gg n$ setting. \citet{mangold2023high} recently proposed a mechanism that has an empirical risk that holds for $n = \mathcal{O}(\sqrt{p})$ samples under the RSC constraint. We note that \citet{mangold2023high} is only studied for the centralized DP setting. Still, the algorithm can be appropriately adjusted for the DP-FL setting with the Gaussian noise \footnote{We provide a modified/enhanced DP-FL version of Algorithm 1 in \cite{mangold2023high} for comparison in the form of Algorithm~\ref{algorithm:modified-DP-GCD}.}. However, their analysis lacks a sparse basis recovery guarantee, and thus it is unclear whether their methods can find the underlying true basis. Furthermore, their empirical risk values cannot avoid the polynomial dependency on $p$. \citet{wang2019sparse} also considers the problem of estimating sparse models under the $p \gg n$ scenario and where noise is directly added to the client communication to the server to preserve DP. They provide two algorithmic results. The first result applies to the $p \gg n$ case, but it studies the privacy scenarios where only the response vector $\bm{y}$ must be private. The measurement matrix $\bm{X}$ has no privacy protection. When the matrix $\bm{X}$ must also be protected, their second result still requires $n = O(p)$ (ignoring the logarithmic terms). Thus, our proposed algorithms provide better privacy guarantees with fewer samples. In a similar sense, the private hard-thresholding methods proposed in \citet{wang2019differentially, hu2022high} either do not recover the sparse models or require $n = O(p)$ samples for convergence.

\textbf{Noisy OMP:} 
Finally, our analysis of the noisy version of OMP is also related to \citep{chen2012orthogonal}, which studies how adding noise to the measurement matrix $\bm{X}$ will impact the accuracy of OMP. However, their analysis does not apply to the DP or the DP-FL cases as it does not privatize the response vector $\bm{y}$. Furthermore, they directly add noise to the measurement matrix $\bm{X}$. As a result, the noise level needed for ensuring DP for the matrix $\bm{X}$ would have had to be at least $\sqrt{pn}$, which is higher than our solution's. Specifically, our solution adds noise to the correlation between each basis and the target. Further, by employing \textit{Noisy-SMPC} (Algorithm~\ref{algorithm:noisySMPC}), the level of noise added is only of the order $\sqrt{ps}$. Thus, a new analysis is needed to study our improved DP and accuracy guarantees.

The rest of the paper is structured as follows. Section~\ref{section:problem-setup} presents the system model and provides a brief overview of differential privacy and the goal of compressed sensing. In Section~\ref{section:Private OMP}, we provide the details of our proposed algorithm \textit{SPriFed-OMP} and its other variant \textit{SPriFed-OMP-GRAD}. Sections~\ref{section:Privacy Analysis} and \ref{section:analysis} provide a thorough utility-privacy analysis of our proposed implementation. Section~\ref{section:proof-sketch-perf} provides an intuitive sketch of the core result. Section~\ref {section:empirical-results} provides empirical results. Then, we conclude. Complete proofs and supporting results are omitted due to the page limits.

\section{System Model}\label{section:problem-setup}

We assume that the ground truth model is $\bm{y} = \bm{x}\bm{\alpha_{*}} + \bm{\epsilon}$, where $\bm{\alpha_{*}} \in \mathbb{R}^p$ is the underlying model parameter that we wish to recover, $\bm{x} \in \mathbb{R}^p$ and $ \bm{y} \in \mathbb{R}$ are the input-output pair, and $\bm{\epsilon} \sim \mathcal{N}(0, \sigma^2_{\epsilon}) \in \mathbb{R}$ is an additive error. We assume that this ground-truth model is sparse so that $\bm{\alpha_{*}}$ has at most $s$ non-zero elements, \textit{i.e.,} $||\bm{\alpha_{*}}||_0 \leq s$. We now consider $n$ input-output training pairs represented by $(\bm{X}, \bm{y}) \triangleq (\bm{x_i}, y_i)_{i \in \{1, ..., n\}}$ where each pair belongs to a single distinct client. \footnote{This assumption can be easily relaxed to allow more than one training pair per client by performing the privacy analysis with group privacy. As long as the samples of each client are considerably lower than $n$, the group privacy cost will be small. If a client holds a large number of samples, we could consider limiting the maximum number of samples per client in each iteration of our algorithm to upper bound our privacy costs.} We emphasize that the server does not have the entire data, so we need to study a federated learning setting \citep{mcmahan2017communication}.
% We learn the approximate model $\hat{\alpha}$ by solving Equation~\ref{eqn:non-private-problem}. % 

Any DP-FL algorithm iterates over $n$ distributed clients over steps $t = 1, ..., T$. For each step $t$, the $i^{th}$ client transmits message $m_{i, t}$, computed as a function of the client dataset and any information it receives from the server plus possibly additional randomness. At the end of the $T$-th iteration, the server aims to recover $\bm{\alpha_{*}}$ (likely with some error). Let us use $\mathcal{M}$ to denote such a distributed and randomized mechanism. We aim to ensure Differential Privacy (DP) for the clients, and hence we present the Differential Privacy (DP) definition for $\mathcal{M}$ below.

\begin{definition}\label{definition:approximate-DP}\textbf{[Approximate Distributed Differential Privacy]} For any two neighboring distributed datasets $\bm{X, X{'}} \in \mathbb{R}^{n \times p}$ differing in one of their clients and the corresponding data-pairs (\emph{i.e.}, they only differ in the data-pair for one client $k \in \{1, ..., n\}$). Let $\mathcal{M}(\bm{X})$ be the output when $\mathcal{M}$ operates on $([\bm{x_1, ...,x_k, ..., x_n}], [y_1, ..., y_k, ... y_n])$ and let $\mathcal{M}(\bm{X^{'}})$ be the output when $\mathcal{M}$ operates on $([\bm{x_1, ...,x_k^{'}, ..., x_n}], [y_1, ..., y_k^{'}, ...,  y_n])$. We say that the randomized and distributed mechanism $\mathcal{M}$ is $(\epsilon, \delta)$ differentially private if, for any set $\mathcal{F} \in \mathbb{R}^p$,
\begin{align*}
    \text{Pr}[\mathcal{M}(\bm{X}) \in \mathcal{F}] \leq e^{\epsilon} \cdot \text{Pr}[\mathcal{M}(\bm{X^{'}}) \in \mathcal{F}] + \delta.
\end{align*}
\end{definition}

% \deleted[id=AKM]{While training a model with FL and DP} on a dataset distributed over $n$ clients, each client generates data artifacts such as the client gradients observed in DP-SGD. We represent the $i^{th}$ client's contribution as $f(x_i, y_i)$. DP privatizes each client's contribution by addition of noise $\eta_i$ to generate $\tilde{f}(X_i, y_i) \triangleq f(X_i, y_i) + \eta_i$. After a few steps of client training, $\tilde{f}(x_i, y_i)$ is uploaded to the parameter server, and the server returns the aggregated value $\tilde{f}_n  \triangleq \frac{1}{n}\sum_{i=1}^n \tilde{f}(x_i, y_i)$ to all clients. Training progresses in such a manner until the learned model converges, as shown in

% (Eqn.~\ref{eqn:MSE}) represents the non-private objective for the Mean Squared Error objective.
% \begin{align}\label{eqn:MSE}
%    \bar{\alpha} & \triangleq \argmin_{\alpha \in \mathbb{R}^p, ||\alpha||_0 \leq s} \frac{1}{2} ||X\alpha - y||_2^2 
%   \\ & = \argmin_{\alpha \in \mathbb{R}^p, ||\alpha||_0 \leq s} \frac{1}{2n} \sum_{i=1}^n (x_i\alpha - y_i)^2
% \end{align}
% Introduce RIP conditions and show what we are attempting to solve 
Next, we introduce essential assumptions relevant to our system model. Recall that the goal of the server is to estimate the sparse model parameter $\bm{\alpha_{*}}$ without access to the client data. Even without the DP requirement, recovering the true sparse model typically requires additional assumptions on the data matrix  \citep{zhang2009consistency, candes2005decoding}. Below, in assumption~\ref{assumption:RIP} we describe the Restricted Isometry Property (RIP), which is usually required for OMP \citep{candes2005decoding}, which our proposed algorithm is based on.
\begin{assumptionp}{1}\label{assumption:RIP}
\textbf{Restricted Isometry Property (RIP):} A measurement matrix $\bm{X} \in \mathbb{R}^{n \times p}$ is said to satisfy \textit{RIP} of order-$K$ if there exists a constant $\zeta \in [0, 1]$ such that, 
\begin{align*}
    &(1-\zeta) ||\bm{v}||_{2}^2 \leq ||\bm{Xv}||_{2}^2 \leq (1+\zeta) ||\bm{v}||_{2}^2
\end{align*}
for all vectors $\bm{v} \in \mathbb{R}^{p}, ||\bm{v}||_{0} \leq K$. Note that $\zeta$ is referred to as the isometric constant. Furthermore, the Restricted Isometry Constant (\textit{RIC}) is defined as the \textit{infimum} of all possible $\zeta$ values that satisfy RIP of order-$K$ for a given measurement matrix $\bm{X} \in \mathbb{R}^{n \times p}$. In other words, assuming that the RIP of order $K$ is satisfied, the corresponding RIC is given by,
\begin{align}
    \zeta_{K} = \text{inf}_{\zeta} &\Big\{\zeta | (1-\zeta)||\bm{v}||_{2}^2 \leq ||\bm{Xv}||_{2}^{2} \leq (1+\zeta)||\bm{v}||_{2}^2 \nonumber
    \\ & \text{ for all } \bm{v} \nonumber \in \mathbb{R}^p, ||\bm{v}||_{0} \leq K \Big\}. \nonumber
\end{align}
\end{assumptionp}

 We acknowledge that RIP is a more restrictive condition than other related conditions in the compressed-sensing literature, such as the Restricted Strong Convexity from \citet{jalali2011learning} and  \citet{zhang2009consistency} or the Positive Cone Condition from \citet{efron2004least}. However, even under this RIP assumption, no solution exists in the literature to attain DP in an FL setting and achieve exact support recovery with the number of samples much smaller than the model dimension. Thus, our contribution under the RIP assumption still represents a significant contribution. We will leave the study of other assumptions for future work. 

We also make the following assumption, typical in the differential privacy literature \citep{dwork2014algorithmic, wang2019sparse}. 
\begin{assumptionp}{2} \textbf{[Bounded data matrix and response]}\label{assumption:bounded} We assume that the elements in the measurement matrix $\bm{X}$ and its response $\bm{y}$ are bounded by scalars $X_M$ and $y_M$ respectively. 
\end{assumptionp} 

The boundedness of matrix or vector elements can be easily achieved by clipping the values to particular bounds. We can also easily maintain the original vector variance by re-scaling after clipping. Alternatively, if the underlying data distribution is light-tailed, we could leverage concentration bounds combined with the union bound to obtain a realistic bound on the data values.

% added[id=AKM]{Suppose, the combined non-private average gradient over $n$ clients at step $t$ is given by $\frac{g^t_n}{n} \in \mathbb{R}^p$. DP-FL combined with the \textit{NoisySMPC} algorithm (Algorithm~\ref{algorithm:noisySMPC}) then requires noise $\frac{\eta_t}{n}; \eta_t \sim \mathcal{N}(0, p\kappa)$ for achieving DP. Thus, at each step $t$ we access gradient $\frac{g^t_n}{n}$ privately as $\frac{\tilde{g^t_n}}{n} = \frac{g^t_n + \eta_t}{n}$. Starting with the initial weight of $w_0$, the private DP-SGD and LASSO updates (with constant learning rate $\gamma$) can be written such that $\tilde{w}_k = \tilde{w}_{k-1} - \gamma(\frac{g_{k-1} + \eta_{k-1}}{n}) = w_0 - \gamma(\sum_{t=0}^{k-1} (\frac{g_{t} + \eta_t}{n}))$. Clearly, comparing with the non-private weights ($w_t; t=0, ..., k$) we realize that $\tilde{w}_k - w_k = -\gamma\sum_{t=0}^{k-1} (\frac{\gamma\eta_t}{n}) \sim \mathcal{N}(0, k \cdot \frac{\gamma^2 p\kappa^2}{n^2})$. By fundamental Gaussian concentration inequalities, we can now bound with high probability $||\tilde{w}_k - w_k||_2 \leq \sqrt{p}||\tilde{w}_k - w_k||_{\infty} \leq \kappa\gamma \frac{p\sqrt{k}}{n}$. There are two issues with this result: (1) we require that $n \geq p\sqrt{k}$ even to obtain a constant estimation error, and this does not suit a $p \gg n$ setting and (2) we are unable to bound $k$ and thus we would incur higher DP costs (by composition results) and it would lead to a larger sample size as well.}

As discussed in section~\ref{section:introduction}, the standard DP approaches of adding noise are highly ineffective for recovering sparse models when $p \gg n$. Thus, our paper aims to develop a mechanism that is both DP-FL and can recover the exact support with several samples much smaller than the total model dimension. Specifically, let $\bm{\hat{\alpha}}$ denote the estimated model parameter of our proposed DP-FL algorithm. Recall that $\bm{\alpha_{*}}$ is the ground-truth model parameter with sparsity $s$.  We wish to (1) Ensure that the support of $\bm{\hat{\alpha}}$ matches the true support with high probability, (2) Quantify the empirical risk $\Delta R \triangleq \frac{1}{n} \sum_{i=1}^n \Big((\bm{x_i} \bm{\hat{\alpha}} - y_i)^2 - (\bm{x_i} \bm{\alpha_{*}} - y_i)^2\Big)$ and estimation error $\Delta \alpha \triangleq ||\bm{\hat{\alpha}} - \bm{\alpha_{*}}||_2$ such that both are small and do not blow up with large $p$. 
\section{The \textit{SPriFed-OMP} Algorithm}\label{section:Private OMP}

Our primary goal in this work is to recover a sparse model given a high number of features and a few samples. Note that if we can guarantee exact sparse basis recovery, we only need to add noise to the output model with variance proportional to the model sparsity, which is a much easier goal to accomplish. Thus, below, we will first focus on the goal to identify the true sparse basis. The analysis for recovering $\alpha_{*}$ with a small estimation error will then be presented later (see Theorems~\ref{theorem:estimation-error} and \ref{theorem:risk-analysis}).

% Thus, we call such a sparse algorithm \textit{well-planned} as here we first extract useful model features before publishing the model. As described above, in the context of DP, \textit{well-planned} algorithms enjoy similar privacy guarantees as their unplanned counterparts with significantly lower noise. Thus, with such \textit{well-planned} algorithms, we expect to recover models with optimal estimation error (Theorem~\ref{theorem:estimation-error}) and optimal model risk (Theorem~\ref{theorem:risk-analysis}).

\textbf{Non-Private OMP:} 
% Notably, the popular OMP algorithm is a \textit{well-planned} algorithm that can provide exact sparse recovery in a high-dimensional setting. 
Our proposed algorithm is based on OMP, a popular algorithm for exact sparse recovery without DP considerations. Below, we first describe the standard version of OMP. OMP iteratively selects a single new feature in each step. The feature with the absolute maximum correlation to the
current residual is picked during each step. The residual is set to the model response $y$ in the initial step. At each subsequent step, the residual is updated by removing the newly picked feature's contribution from the response. OMP continues iterating until a model of a predetermined dimension/sparsity $s$ is selected. 

\textbf{Challenge to make OMP differentially private:} Below, we explain the challenge to make OMP differentially private in the FL setting. First, when OMP computes the correlation of each basis with the residual and picks the basis with the highest correlation, it must be able to do so without direct access to the data on all clients. Second, when OMP subtracts the contribution of a new basis from the current residual, it needs the covariance of the existing basis. This must also be done without direct access to the data on all clients. 

Both of these challenges can be overcome if we can differentially privately compute the correlation between two columns that are spread across all clients. Further, the total number of such computations must be carefully controlled to limit the amount of noise added. This idea leads to our first version of \textit{SPriFed-OMP}, which focuses on computing the correlations in a DP-FL manner. The server maintains the estimated model parameter $\hat{\alpha}$ and is never released to the clients. (In contrast, the second version of \textit{SPriFed-OMP}, which will be presented later, will release the privatized model parameter to the clients.)

% For instance, we need to compute the data correlation with the residual and the data covariance in a privacy-preserving manner such that both the individual client's and the combined client's output remain confidential. 

\textbf{Differentially-Privately and Distributed Computation of Correlation :} Specifically, consider two columns of data $z^a_i$ and $z^b_i$ spread across all clients $i \in \{1, ..., n\}$. We now develop a mechanism so that the server can compute the sum $\sum_{i=1}^n z^a_i z^b_i$. For privacy, though, each client cannot disclose $z^a_i z^b_i$ directly. We can add noise directly to the client, which will lead to a noise level that is too high. In contrast, below, we use an approach that requires lower noise.  

% In this setting, the data matrix, response pair ($\bm{X} \in \mathbb{R}^{n \times p}, \bm{y} \in \mathbb{R}^n)$ is distributed over $n$ clients so that the $i^{th}$ client; $i \in \{1, ..., n\}$ consists of the following pair $(\bm{x_i} \in \mathbb{R}^p, y_i \in \mathbb{R})$. Further, suppose that each client computes the local function $f(\bm{x_i}, y_i); i \in [n]$ (for example, data correlation or covariance), and the server wishes to compute the sum over these local functions. 

\textbf{\textit{NoisySMPC}:} We use the \textit{NoisySMPC} mechanism (Algorithm~\ref{algorithm:noisySMPC}) that combines SMPC \citep{bonawitz2017practical} with a much lower level of noise to ensure DP. This algorithm, which is distributed and satisfies DP, forms a core component of our proposed \textit{SPriFed-OMP} algorithm. To achieve DDP  (distributed and differentially private), \textit{NoisySMPC} adds two levels of randomness. First,  \textit{NoisySMPC} modifies each client's contribution as $\tilde{f}(\bm{x_i}, y_i) = f(\bm{x_i}, y_i) + \eta_i$ where $\bm{x_i}, y_i$ is the $i^{th}$ client's data-response pair and $f$ computes a statistic such as the correlation between the client's data and response. The per-client noise $\eta_i$ is added so that the total noise $\eta_{sum} = \sum_{i=1}^n \eta_i$ at the server is sufficient to differentially privatize the sum of client outputs. Second, \textit{NoisySMPC} adds $f(z^a_i, z^b_i)$ across all clients through SMPC, which further protects the privacy of individual clients. This underlying SMPC mechanism allows the clients to sum their contributions $\tilde{f}$ without disclosing any individual values. Examples of such SMPC mechanisms can be pair-wise client key sharing as described in \citep{bonawitz2017practical} or the distributed discrete Gaussian  \citep{kairouz2021distributed}. We refer the reader to the above literature for further details regarding SMPC and related mechanisms. \textit{NoisySMPC} adds significantly lower variance noise to the sum (reduced by a factor of $n$), compared to privatizing each client's output individually, as shown in line $11$ of the Algorithm~\ref{algorithm:noisySMPC}).

% The $i^{th}$ client partitions its noisy input $\tilde{f}(\bm{x_i}, y_i)$ into $n$ randomly generated shares such that these shares sum up to $\tilde{f}(\bm{x_i}, y_i)$. While keeping one share, the client distributes the remaining $n-1$ shares to the other $n-1$ clients via standard private-public key cryptography. Thus, now, each client has one share of each other. Each client computes the sum of these shares and sends it to the server. The server aggregates the sum of all clients to compute the required function $\tilde{f} = \sum_{i=1}^n f(\bm{x_i}, y_i) + \eta_i$ (\emph{i.e., } the privatized DP sum). Note that the sum sent by each client is private since it essentially contains the sum of random shares over clients that do not provide any utility on their own and without addition to the rest of the shares. The random shares are generated by running a random number generator over modulo a huge prime number. 

\textbf{Private OMP:} We are now ready to present the complete \textit{SPriFed-OMP} algorithm in the DP-FL setting. Algorithm~\ref{algorithm:SPriFed-OMP} contains the pseudocode. Recall that, for each step in OMP, we compute, for all features, the correlation of the feature column and the residual. For the first step, the residual is set to the model response, and thus, the computation can be denoted by $\bm{X^Ty}$. The lines $4-6$ perform this computation privately using the \textit{NoisySMPC} Algorithm~\ref{algorithm:noisySMPC}. This noisy correlation selects the feature with the highest absolute correlation as part of the predicted basis (line $8$). We then subtract the correlation contributed by the newly contributed feature $l^{*}$ from the previous residual to obtain the new residual. Mathematically, we represent this correlation contribution as $\bm{\beta_{l^{*}}(\beta_{l^{*}, l^{*}})^{-1} \gamma_{l^{*}}} = \bm{X^TX_{l^{*}}(X_{l^{*}}^TX_{l^{*}})^{-1} X_{l^{*}}^T y}$ where $\bm{\beta_{l^{*}} = X^TX_{l^{*}}, \beta_{l^{*}, l^{*}} = (X_{l^{*}}^T X_{l^{*}}), \gamma_{l^{*}} = X_{l^{*}}^Ty}$. Note that this computation computes the correlation between the new column $l^{*}$ and other columns. These correlations can be computed via the \textit{NoisySMPC} mechanism. We represent these combined computations on lines $10-18$. Note that after a feature is chosen, its associated correlation computation must only be done twice via \textit{NoisySMPC}. We first privatize correlations over all un-selected features (order $\sqrt{p-l}$ where $l$ are the number of features already chosen). From amongst these correlations, we choose the correlation with the highest value. Once the highest correlated feature is identified, we add noise to this correlation with a much lower noise value (of order $\mathcal{s}$ where $s$ is the sparsity of the model). Thus, we can significantly reduce the noise impact by privatizing the highest correlation twice.

\emph{Remark:} We notice that, to privatize the $p$-dimensional correlations (lines \textit{5} and \textit{11}), we cannot reduce the variance of DP noise below $\mathcal{O}(p)$ (see the Gaussian mechanism from \citet{dwork2014algorithmic}). Thus, each time we compute the $p$-dimensional correlations, we are required to add noise of magnitude $\mathcal{O}(\sqrt{p})$. However, we only need to do so finitely many times (\emph{i.e., } $s$ times), thus requiring a finite privacy budget. Further, thanks to \textit{NoisySMPC}, we can get away with adding a significantly smaller amount of noise (lines \textit{14-17}).

Further, we note that for our method, we need to compute a maximum of
$p(s+1) + 2s^2$ private correlations. In \textit{DP-SGD}, we instead need to compute $pT$ private correlations, where the number of iterations $T$ is significantly higher than $s$ and often around the order of $n$. Furthermore, in our proposed algorithm, we require order-$p$ variance noise only for recovering the basis while order-$s$ variance noise is used to compute the model parameter on these bases. \textit{DP-SGD} on the other hand requires order-$p$ variance noise throughout for all $T$ iterations. Thus, the noise required for our method is much lower in terms of the number of private correlation computations and the model parameter computation. Therefore, we expect to recover a model with significantly higher accuracy. 

% Finally, although using \textit{NoisySMPC} on OMP is relatively straightforward, we still need to quantify the noise carefully added and the accuracy loss. That is the goal of the sections below. 

\textbf{\emph{Enhancement 1: Adding lower noise to selected features}}

While the above idea of privatize OMP may seem straightforward, there is a key step in Line \textit{14} which greatly enhances the performance of SPriFed-OMP. Note that in lines \textit{5} and \textit{11} of Algorithm~\ref{algorithm:SPriFed-OMP}, we already privatize the $p$ dimensional artifacts $\bm{X^T y}$ and $\bm{X^T X_{l^{*}}}$ respectively with noise of variance $p\sigma_1^2$. Although we could have directly use such privatized values in estimating $\bm{\alpha}$, the resulting error will be high. Instead, in line \textit{14}, we re-privatize the private model with much lower noise of variance $s\sigma_2^2$. We have found that this re-privatization step can significantly enhance the performance of our proposed algorithm, ensuring its sparse basis recovery success. Please see the numerical results in Section~\ref{section:empirical-results} for details. Intuitively, although we re-privatize the same features/artifacts twice, since, we add lower noise in the second time, the noise in $\tilde{\alpha}$ estimate (refer line \textit{(15)} in Algorithm~\ref{algorithm:SPriFed-OMP}) is of order only depending on $s$. In contrast, if we do not utilize this modification, then the order of noise in both $\bm{\beta}$ and $\tilde{\bm{\alpha}}$ will be $\mathcal{O}(\sqrt{p})$. Since these estimates will be used in later steps to identify the next basis, without re-privatization the higher noise will disrupt the correlation computation, leading to stricter requirements on the sample size.

% Here, we say re-privatization as the artifacts used for the private model are already privatized in lines \textit{(5)} and \textit{(11)} of Algorithm~\ref{algorithm:SPriFed-OMP}, albeit with much higher order noise. We argue that this is a significant contribution of 

\subsection{The SPriFed-OMP-GRAD Algorithm}
We now present the second version of private OMP with our second enhancement in Algorithm~\ref{algorithm:SPriFed-OMP-GRAD} that further enhances the performance of  Algorithm~\ref{algorithm:SPriFed-OMP}. 

\textbf{\emph{Enhancement 2: Re-visiting SPriFed-OMP from the gradient perspective}}
 
In line 15 of Algorithm~\ref{algorithm:SPriFed-OMP}, we compute the private correlation (essentially the gradient) at the $l^{th}$ step given by, $\bm{\gamma_0^{-} - \beta\beta_{S_{\mathcal{A}}}\gamma_{S_{\mathcal{A}}} = (\gamma_0^{-})^t + \eta_{\gamma_0} - (\beta^t + \eta_{\beta}) \tilde{\alpha}}$ where $\bm{(\gamma_0^{-})^t, \beta^t}$ represent the true (non-noisy correlations), $\bm{\eta_{\gamma_0}, \eta_{\beta}}$ represent the corresponding noise values and $\bm{\tilde{\alpha} (=\beta_{S_{\mathcal{A}}}\gamma_{S_{\mathcal{A}}})}$ represents the privatized linear model. We notice that since we are already privatizing the correlations required in the computation of $\bm{\tilde{\alpha}}$, we are naturally inclined also to privatize $\bm{(\gamma_{0}^{-})^t}$ and $\bm{\beta^t}$ similarly. Thus, in our first approach of Algorithm~\ref{algorithm:SPriFed-OMP}, all clients collaboratively compute correlations, and then the server computes the final private model and the private gradient. However, our second approach improves this correlation computation by first computing the $L_2$ sensitivity (Definition~\ref{definition:L2}) of the gradient calculated using the private model (or mathematically, $\bm{(\gamma_0^{-})^t - \beta^t \tilde{\alpha}}$) and then adding the corresponding DP noise to the entire correlation rather than separately privatizing each correlation. In this second approach, given the model, each client computes the gradient on its own device. Collaboratively, all clients and the server then compute the aggregated gradient. However, the server still handles the model computation with the noisy correlation method. That way, the server can share the private model with individual clients without sacrificing privacy. Later in the next section and the experimental section, we will also discuss a clipping-based version of the new algorithm, \textit{SPriFed-OMP-GRAD} (Algorithm~\ref{algorithm:SPriFed-OMP-GRAD}) that leverages clipping to provide better empirical performance.

 % \subsection{Enhancement 2: Re-visiting SPriFed-OMP from the gradient perspective}

We now present the second major enhancement in our proposed method in Algorithm~\ref{algorithm:SPriFed-OMP-GRAD}. We note that the correlation computed in line \textit{15} of Algorithm~\ref{algorithm:SPriFed-OMP} is essentially the private gradient computed at the current model value. In Algorithm~\ref{algorithm:SPriFed-OMP}, we first separately privatize correlation values and then combine them to obtain the private gradient. On the other hand, in Algorithm~\ref{algorithm:SPriFed-OMP-GRAD}, we let the server pass the private model to the clients, and let the clients directly compute the residues and the local gradients (\emph{i.e., } the gradient computed on line \textit{4} in Algorithm~\ref{algorithm:SPriFed-OMP-GRAD}). Then, we aggregate the local gradients in a private manner. We expect that Algorithm~\ref{algorithm:SPriFed-OMP-GRAD} will see a significant performance benefit due to the effect of clipping. Note that clipping can control the sensitivity of individual data items, and thus has a directly impact on the magnitude of DP noise needed. Since, gradients typically reduce in magnitude as training proceeds, we can expect to clip the gradients more aggressively without affecting performance. In contrast, the correlations in Algorithm~\ref{algorithm:SPriFed-OMP} will stay at large values during the entire training, which is not amenable to clipping. We report experimental effects of clipping on gradients and correlation in section~\ref{section:empirical-results} to visualize their varying impact.

\begin{algorithm}[htb!]
\caption{NOISY-SMPC}\label{algorithm:noisySMPC}
\begin{algorithmic}[1]
\State \textbf{Input Parameters:} \textbf{Vectors to be multiplied:} $\bm{X_k}, \bm{X_j} 
 \text{ (or } \bm{y}) \in \mathbb{R}^n$, \textbf{Number of clients:} $n$, \textbf{Noise Variance:} $\sigma_0^2$% \textbf{Number of samples per clients:} $1$.
\State \textbf{Output:} $\bm{X_k^TX_j + \eta_{*}}; \bm{\eta_{*}} \sim \mathcal{N}(0, \sigma^2_0)$
\Procedure{NOISY-SMPC}{$\bm{X_k}, \bm{X_j}, n, \sigma_0$}
\State Server broadcasts noise variance $\sigma_0^2$ to all clients
\For{client $i \in [n]$}
    \State $\eta_{i} \sim \mathcal{N}(0, \frac{\sigma_0^2}{n}$)
    \State Compute $q_i \leftarrow \bm{X}_{k, i} \bm{X}_{j, i} + \eta_i$
\EndFor
\LineComment{\textbf{Vanilla SMPC}}
\State Compute $q_{total} \leftarrow \sum_{i=1}^n q_i$ using SMPC \citep{bonawitz2017practical}
\LineComment{The resulting private sum satisfies the following $ \rightarrow q_{total} = \sum_{i=1}^n \bm{X}_{k, i}, \bm{X}_{j, i} + \eta_i = \bm{X_{k}^T}\bm{X_{j}} + \sum_{i=1}^n \eta_i = \bm{X_{k}^T}\bm{X_{j}} + \eta_{*}$; $\eta_{*} \sim \mathcal{N}(0, n \cdot \frac{\sigma_0^2}{n}) (= \mathcal{N}(0, \sigma_0^2))$}  
\State Server obtains and returns $q_{total}$ to all clients
\EndProcedure
\end{algorithmic}
\end{algorithm}

\begin{algorithm}[htb!]
\caption{PRIVATE-OLS}\label{algorithm:private-OLS}
\begin{algorithmic}[1]
\State \textbf{Input Parameters:} \textbf{Data:} $\bm{X}=\{\bm{X_1, X_2, ..., X_p}\} \in \mathbb{R}^{n \times p}$, where $\bm{X_i}$ is the $i^{th}$ feature. \textbf{Response Variable:} $ y \in \mathbb{R}^n$, 
\textbf{Feature Set:} $S_{\mathcal{A}}$, \textbf{New Feature to be privatized:} $l^{*}$, \textbf{Privacy Parameter:} $\sigma_2$, \textbf{True Model Support Cardinality:} $s$.
\State \textbf{Output:} \textbf{Private Model:} $\tilde{\alpha} = (X_{S_{\mathcal{A}}}^T X_{S_{\mathcal{A}}} + \eta_{\beta_{S_{\mathcal{A}}}})^{-1} (X_{S_{\mathcal{A}}}^T y + \eta_2); \eta_{\beta_{S_{\mathcal{A}}}}$ and $\eta_{\gamma_{S_{\mathcal{A}}}}$ \textit{both have iid elements with distribution} $\mathcal{N}(0, \sigma^2_2 s)$.
\Procedure{PRIVATE-OLS}{$\bm{X}, \bm{y}, S_{\mathcal{A}}, l^{*}, \sigma_2, s$}
\State $\bm{\gamma_{S_{\mathcal{A}}}}[l] \leftarrow \textbf{NOISY-SMPC}(\bm{X_{l^{*}}}, y, n, \sigma_2\sqrt{s})$ \SideComment{\textit{Note, that total private correlation over all previous rounds is} $ \bm{\gamma_{S_{\mathcal{A}}} = X_{S_{\mathcal{A}}}^Ty + \eta_{\gamma_{S_{\mathcal{A}}}}}; \bm{\eta_{\gamma_{S_{\mathcal{A}}}}} \in \mathbb{R}^{l \times 1}$ \textit{has iid elements from} $\mathcal{N}(0, \sigma_2^2s)$}
    \For{$k=0:l$}
    \State $\bm{\beta_{S_{\mathcal{A}}}}[l, k] \leftarrow   \textbf{NOISY-SMPC}(\bm{X_{l^{*}}, X_{k}}, n, \sigma_2\sqrt{s})$
    \EndFor \SideComment{\textit{Note that, as} $\bm{\beta_{S_{\mathcal{A}}}}$ \textit{is symmetric, we have} $\bm{\beta_{S_{\mathcal{A}}}}[k, l] = \bm{\beta_{S_{\mathcal{A}}}}[l, k]$ \textit{and thus total private covariance is} $\bm{\beta_{S_{\mathcal{A}}} \triangleq \widehat{X^T{S_{\mathcal{A}}} X_{S_{\mathcal{A}}}} + \eta_{\beta_{S_{\mathcal{A}}}}}; \bm{\eta_{\beta_{S_{\mathcal{A}}}}} \in \mathbb{R}^{s \times s}$ \textit{has iid elements from} $\mathcal{N}(0, \sigma_2^2s)$}
    \State Return $\bm{(\beta_{S_{\mathcal{A}}})^{-1} \gamma_{S_{\mathcal{A}}}}$
\EndProcedure
\end{algorithmic}
\label{algo:main}
\end{algorithm}

\begin{algorithm}[htb!]
\caption{SPriFed-OMP: Private Orthogonal Matching Pursuit}\label{algorithm:SPriFed-OMP}
\begin{algorithmic}[1]
\State \textbf{Input Parameters:} \textbf{Data:} $\bm{X}=\{\bm{X_1, X_2, ..., X_p}\} \in \mathbb{R}^{n \times p}$, where $\bm{X_i}$ is the $i^{th}$ feature. \textbf{Response Variable:} $ y \in \mathbb{R}^n$, 
\textbf{Feature Set:} $\Omega \triangleq \{1, 2, ..., p\}$, \textbf{True Model Support Cardinality:} $s$, \textbf{GDP Privacy Parameters:} $\mu_p, \mu_s (\mu_p > \mu_s)$, \textbf{Noise standard deviation:} $\sigma_1 = \frac{1}{\mu_p}, \sigma_2 = \frac{1}{\mu_s}$.

%\textbf{Global Privacy Parameters:} $\epsilon_T, \delta_{T} > 0$, \textbf{Per-step Privacy Parameters:} $\delta_{step} = \frac{\delta_T}{s+2}, \epsilon_{step} = \frac{-k_{0} + \sqrt{k_{0}^2 + 8\epsilon_{T}}}{2\sqrt{s+1}};\kappa= \sqrt{2\text{ln}(\frac{1.25}{\delta_{step}})}, \sigma_{1} = \frac{\kappa}{\epsilon^{step}_{OMP-1}}, \sigma_{2} = \frac{\kappa}{\epsilon^{step}_{OMP-2}}$ \textit{(values derived in Thm.~\ref{theorem:privacy-SPriFed-OMP})} 
\State \textbf{Output:} Predicted Support $S_{\mathcal{A}}$, Predicted sparse model $\hat{\alpha}$ over predicted support $S_{\mathcal{A}}$
\State \textbf{Initialize:} $S_{\mathcal{A}} = \emptyset, \bm{X_{S_{\mathcal{A}}}} = \bar{0}, l = 0$
\For{$k=0:p-1$} \SideComment{\textbf{Server and Clients privately compute data-response correlation}}
\State $(\bm{\gamma_{0}^{-}})[k] \leftarrow \textbf{NOISY-SMPC}(\bm{X_k}, \bm{y}, n, \sigma_1\sqrt{p}) $
\EndFor \SideComment{\textit{Note that the total private correlation overall $k$ is} $ \bm{\gamma_{0}^{-} \triangleq \widehat{X^Ty} = X^Ty + \eta_{\gamma}}; \bm{\eta_{\gamma}} \in \mathbb{R}^{p \times 1}$ \textit{has iid elements from} $\mathcal{N}(0, \sigma_1^2p)$}
\While{$    |S_{\mathcal{A}}| \leq s$}
    \State $l^{*} \leftarrow \argmax_{j \in \Omega / S_{\mathcal{A}}} \text{ }|(\gamma_{l}^{-})_{j}|$ \SideComment{\textbf{Server privately extracts new highest-correlated feature}}\State $S_{\mathcal{A}} \leftarrow S_{\mathcal{A}} \cup {l^{*}}$ 
    \For{$k=0:p-1$} \SideComment{\textbf{Server and Clients privately compute data covariance for newly extracted feature}}
    \State $\bm{\beta}[k, l] \leftarrow \textbf{NOISY-SMPC}(\bm{X_k, X_{l^{*}}}, n, \sigma_1\sqrt{p})$  
    \EndFor \SideComment{\textit{Note that, the total private covariance over all previous rounds is } $\bm{\beta = \widehat{X^TX_{S_{\mathcal{A}}}} \triangleq X^TX_{S_{\mathcal{A}}} + \eta_{\beta}}; \bm{\eta_{\beta}} \in \mathbb{R}^{p \times (l+1)}$ \textit{has iid elements from} $\mathcal{N}(0, \sigma_1^2p)$}
    \LineComment{ \textbf{Server privately updates residual with newly chosen feature}}
    % \State $\bm{\gamma_{S_{\mathcal{A}}}}[l] \leftarrow \textbf{NOISY-SMPC}(\bm{X_{l^{*}}}, y, n, \sigma_2\sqrt{s})$ \SideComment{\textit{Note that total private correlation over all previous rounds is} $ \bm{\gamma_{S_{\mathcal{A}}} = X_{S_{\mathcal{A}}}^Ty + \eta_{\gamma_{S_{\mathcal{A}}}}}; \bm{\eta_{\gamma_{S_{\mathcal{A}}}}} \in \mathbb{R}^{l \times 1}$ \textit{has iid elements from} $\mathcal{N}(0, \sigma_2^2s)$}
    % \For{$k=0:l$}
    % \State $\bm{\beta_{S_{\mathcal{A}}}}[l, k] \leftarrow   \textbf{NOISY-SMPC}(\bm{X_{l^{*}}, X_{k}}, n, \sigma_2\sqrt{s})$
    % \EndFor \SideComment{\textit{Note that, as} $\bm{\beta_{S_{\mathcal{A}}}}$ \textit{is symmetric, we have} $\bm{\beta_{S_{\mathcal{A}}}}[k, l] = \bm{\beta_{S_{\mathcal{A}}}}[l, k]$ \textit{and thus total private covariance is} $\bm{\beta_{S_{\mathcal{A}}} \triangleq \widehat{X^T{S_{\mathcal{A}}} X_{S_{\mathcal{A}}}} + \eta_{\beta_{S_{\mathcal{A}}}}}; \bm{\eta_{\beta_{S_{\mathcal{A}}}}} \in \mathbb{R}^{s \times s}$ \textit{has iid elements from} $\mathcal{N}(0, \sigma_2^2s)$}
    \State $\tilde{\alpha} \leftarrow$ \textbf{PRIVATE-OLS}($\bm{X, y}, S_{\mathcal{A}}, l^{*}, \sigma_2, s$)
    \State $\bm{\gamma_{l}^{-} \leftarrow \gamma_{0}^{-} - \beta\tilde{\alpha}}$
    % \State $\bm{\gamma_{l}^{-} \leftarrow \gamma_{0}^{-} - \beta(\beta_{S_{\mathcal{A}}})^{-1} \gamma_{S_{\mathcal{A}}}}$ 
    \State $l = l + 1$
\EndWhile
\State Return $S_{\mathcal{A}}$, $\bm{\hat{\alpha} \leftarrow  (\beta_{S_{\mathcal{A}}})^{-1} \gamma_{S_{\mathcal{A}}}}$ \SideComment{\textbf{Server privately publishes sparse basis and model overall extracted features}}
\end{algorithmic}
\label{algo:main}
\end{algorithm}

\begin{algorithm}[htb!]
\caption{SPriFed-OMP-GRAD: Private Orthogonal Matching Pursuit}\label{algorithm:SPriFed-OMP-GRAD}
\begin{algorithmic}[1]
\State \textbf{Input Parameters:} \textbf{Data:} $\bm{X}=\{\bm{X_1, X_2, ..., X_p}\} \in \mathbb{R}^{n \times p}$, where $\bm{X_i}$ is the $i^{th}$ feature. \textbf{Response Variable:} $ y \in \mathbb{R}^n$, 
\textbf{Feature Set:} $\Omega \triangleq \{1, 2, ..., p\}$, \textbf{True Model Support Cardinality:} $s$, \textbf{GDP Privacy Parameters:} $\mu_p, \mu_s (\mu_p > \mu_s)$, \textbf{Noise standard deviation:} $\sigma_1 = \frac{1}{\mu_p}, \sigma_2 = \frac{1}{\mu_s}$, \textbf{ Privacy Constants:} $X_M = y_M = 1, B_C =  1 + 2X_M \kappa_{\varepsilon} + 2\sqrt{s} X_M^2 (\frac{\norm{(\alpha_{*})_{S^c_{\mathcal{A}}}}_{\infty}}{1-\zeta_{s+1}} + \frac{\sqrt{1+\zeta_{s+1}} \kappa_{\varepsilon}}{1-\zeta_{s+1}})$ \small{(refer Lemma~\ref{lemma:l2-max-abs-correlation2} and Section~\ref{section:empirical-results} for further details about setting $B_C$ in practice)}. 

% B_C = X_My_M \Big(2 + \frac{2sX_M^2}{\sqrt{n}(1-\zeta_{s+1})} + \frac{s^{2}X_M^3}{n(1-\zeta_{s+1})^2}\Big) + \frac{2X_M^2}{1-\zeta_{s+1}} (2 + \frac{X_M^2}{(1-\zeta_{s+1})})$
\While{$\abs{S_{\mathcal{A}}} \leq s$}
    \For{$j = 0:p$}
        \State $\tilde{\gamma_l}[j] \leftarrow \textbf{NOISY-SMPC}(\bm{X_j}, \bm{y - X_{S_{\mathcal{A}}} \tilde{\alpha}}, n, \sigma_1 B_C \sqrt{p})$ 
    \EndFor
    \State $l^{*} \leftarrow \argmax_{j \in \Omega / S_{\mathcal{A}}} \abs{(\tilde{\gamma_l})_j}$
    \State $S_{\mathcal{A}} \leftarrow
    S_{\mathcal{A}} \cup l^{*}$
\LineComment{ \textbf{Server privately updates model with newly chosen feature}}
% \State $\bm{\gamma_{S_{\mathcal{A}}}}[l] \leftarrow \textbf{NOISY-SMPC}(\bm{X_{l^{*}}}, y, n, \sigma_2\sqrt{s})$ \SideComment{\textit{Note that total private correlation over all previous rounds is} $ \bm{\gamma_{S_{\mathcal{A}}} = X_{S_{\mathcal{A}}}^Ty + \eta_{\gamma_{S_{\mathcal{A}}}}}; \bm{\eta_{\gamma_{S_{\mathcal{A}}}}} \in \mathbb{R}^{l \times 1}$ \textit{has iid elements from} $\mathcal{N}(0, \sigma_2^2s)$}
%\For{$k=0:l$}
%    \State $\bm{\beta_{S_{\mathcal{A}}}}[l, k] \leftarrow   \textbf{NOISY-SMPC}(\bm{X_{l^{*}}, X_{k}}, n, \sigma_2\sqrt{s})$
%\EndFor \SideComment{\textit{Note that, as} $\bm{\beta_{S_{\mathcal{A}}}}$ \textit{is symmetric, we have} $\bm{\beta_{S_{\mathcal{A}}}}[k, l] = \bm{\beta_{S_{\mathcal{A}}}}[l, k]$ \textit{and thus total private covariance is} $\bm{\beta_{S_{\mathcal{A}}} \triangleq \widehat{X^T{S_{\mathcal{A}}} X_{S_{\mathcal{A}}}} + \eta_{\beta_{S_{\mathcal{A}}}}}; \bm{\eta_{\beta_{S_{\mathcal{A}}}}} \in \mathbb{R}^{s \times s}$ \textit{has iid elements from} $\mathcal{N}(0, \sigma_2^2s)$}
% \State $\tilde{\alpha} \leftarrow (\beta_{S_{\mathcal{A}}})^{-1} \gamma_{S_{\mathcal{A}}}$
\State $\tilde{\alpha} \leftarrow$ \textbf{PRIVATE-OLS}($\bm{X, y}, S_{\mathcal{A}}, l^{*}, \sigma_2, s$)
\State $l = l + 1$
\EndWhile
\end{algorithmic}
\label{algo:main}
\end{algorithm}

\section{Privacy Analysis}\label{section:Privacy Analysis}

This section includes the details for the privacy cost incurred by running Algorithms~\ref{algorithm:SPriFed-OMP} and \ref{algorithm:SPriFed-OMP-GRAD}. 

Since we need to account for the overall DP guarantee by composing multiple DP mechanisms, in this paper, we will use Gaussian Differential Privacy (GDP) \citep{dong2019gaussian} (a variant of DP). We choose GDP for its superior composition performance compared to other methods, such as advanced composition \citep{kairouz2015composition}, Renyi DP \citep{mironov2017renyi}, and concentrated DP \citep{bun2016concentrated}. Further, it can be easily converted to DP. Before stating the Gaussian mechanism, we first formally define the $L_2$-sensitivity of a function.

\begin{definition}[$L_2$ sensitivity of a function]\label{definition:L2} Given a deterministic function $f$, consider all possible pairs of 
neighboring datasets $\bm{X}$ and $\bm{X^{'}}$ in the domain of $f$ that differ on a single row. Then, we denote $l_2$-sensitivity of $f$ as
\begin{align}
    \Delta_2 f \triangleq \max_{\bm{X}, \bm{X^{'}} \in dom(f)} ||f(\bm{X}) - f(\bm{X^{'}})||_2 \nonumber
\end{align}
\end{definition}

Thus, $\Delta_2 f$ records the maximum possible change in $f$ due to a change in a single row. Often, each row belongs to a different client, and thus, $\Delta_2 f$ will record the maximum change due to a change in a client or due to the inclusion or removal of a client from the dataset. Below, we state the Gaussian mechanism from \citet{dong2019gaussian} required to achieve the GDP guarantee.

\begin{lemma}\label{lemma:GDP-Gaussian}\textbf{[GDP mechanism (Theorem 2.7 from \citet{dong2019gaussian})]} Let $f: \mathbb{R}^{n \times p} \rightarrow \mathbb{R}^p$ be some function computed over the dataset $\bm{X} \in \mathbb{R}^{n \times p}$. Then, the randomized Gaussian mechanism $\mathcal{M}(\bm{X}) = f(\bm{X}) + \eta$ is $\mu-GDP$ where $\eta \sim \mathcal{N}(0, \frac{(\Delta_2 f)^2}{\mu^2} \mathbb{I}_p)$ and $\Delta_2 f$ is the $L_2$- sensitivity of $f$ (Definition~\ref{definition:L2}). 
\end{lemma}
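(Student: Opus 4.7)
The plan is to prove the lemma by unpacking the definition of $\mu$-GDP through its trade-off function characterization and then reducing the multivariate Gaussian hypothesis test to a scalar test along the direction $f(\bm{X}) - f(\bm{X}')$. Recall that a mechanism $\mathcal{M}$ is $\mu$-GDP precisely when, for every pair of neighboring datasets $\bm{X}, \bm{X}'$, the Neyman--Pearson trade-off function $T(\mathcal{M}(\bm{X}), \mathcal{M}(\bm{X}'))$ pointwise dominates the canonical Gaussian trade-off function $G_\mu := T(\mathcal{N}(0,1), \mathcal{N}(\mu,1))$. So I would first rewrite the GDP goal as the concrete requirement $T(\mathcal{M}(\bm{X}), \mathcal{M}(\bm{X}')) \geq G_\mu$ uniformly over neighboring pairs.

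Next, fix arbitrary neighboring $\bm{X}, \bm{X}'$ and note that $\mathcal{M}(\bm{X}) \sim \mathcal{N}(f(\bm{X}), \sigma^2 \mathbb{I}_p)$ and $\mathcal{M}(\bm{X}') \sim \mathcal{N}(f(\bm{X}'), \sigma^2 \mathbb{I}_p)$ with $\sigma = \Delta_2 f / \mu$. Let $v = f(\bm{X}') - f(\bm{X})$ and take the unit vector $u = v/\norm{v}_2$. The scalar projection $\langle u, \mathcal{M}(\cdot)\rangle$ is a sufficient statistic for distinguishing the two Gaussian location hypotheses (since the two densities have identical isotropic covariance, the log-likelihood ratio depends on the observation only through this projection). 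Applying the data-processing inequality for trade-off functions (the basic monotonicity lemma in the GDP framework, which says that any post-processing of $(\mathcal{M}(\bm{X}),\mathcal{M}(\bm{X}'))$ can only increase the trade-off function), together with the Neyman--Pearson-type converse for sufficient statistics, gives the exact identity $T(\mathcal{M}(\bm{X}),\mathcal{M}(\bm{X}')) = T(\mathcal{N}(0,\sigma^2), \mathcal{N}(\norm{v}_2,\sigma^2)) = G_{\norm{v}_2/\sigma}$.

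Finally, invoke the sensitivity bound $\norm{v}_2 = \norm{f(\bm{X})-f(\bm{X}')}_2 \leq \Delta_2 f$, which together with $\sigma = \Delta_2 f/\mu$ yields $\norm{v}_2/\sigma \leq \mu$. Since $\mu \mapsto G_\mu$ is pointwise non-increasing (a larger effective signal-to-noise ratio makes the two hypotheses easier to distinguish, strictly lowering the trade-off curve), this gives $G_{\norm{v}_2/\sigma} \geq G_\mu$. As the pair of neighbors was arbitrary, the mechanism satisfies $\mu$-GDP.

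The main obstacle, as I see it, is not the computation itself but the clean invocation of the two structural facts above: the data-processing inequality for trade-off functions (which is what justifies collapsing the $p$-dimensional problem to a one-dimensional one through the sufficient statistic) and the monotonicity of $G_\mu$ in $\mu$. Both are proved in \citet{dong2019gaussian}, so the cleanest route is to cite them as black boxes; without them, one would have to re-derive a Neyman--Pearson-type optimality statement for the isotropic Gaussian likelihood ratio, which is routine but tedious. An alternative route via the $(\epsilon,\delta)$-profile conversion would instead require analyzing the privacy loss random variable of the Gaussian mechanism and checking it matches the profile induced by $G_\mu$, but this is strictly longer than the trade-off-function argument above.
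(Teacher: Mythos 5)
Your proposal is correct, and it reconstructs essentially the argument in the cited source: the paper does not prove this lemma itself but imports it verbatim as Theorem 2.7 of \citet{dong2019gaussian}, whose proof is exactly the trade-off-function route you describe (reduction to a one-dimensional Gaussian test via the sufficient statistic along $f(\bm{X})-f(\bm{X}')$, the data-processing inequality, the sensitivity bound, and the monotonicity of $G_\mu$). Nothing further is needed.
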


Note that GDP guarantees can be easily and optimally composed over multiple iterations via Corollary 3.3 in \citet{dong2019gaussian}. Furthermore, we can convert between a $\mu-GDP$ guarantee and the 
$(\varepsilon, \delta)$-DP guarantee losslessly via Corollary 2.13 in \citet{dong2019gaussian}. However, we only need to convert $\mu$-GDP to DP for our purpose. Below, we present both of these lemmas.

\begin{lemma}\textbf{[Conversion from $\mu$-GDP to $(\varepsilon, \delta)$-DP]\label{lemma:conversion} (Corollary 2.13 in \citet{dong2019gaussian})}
    A mechanism is $(\varepsilon, \delta(\varepsilon))$-DP for all $\varepsilon \geq 0$ and 
    \begin{align*}
        \delta(\varepsilon) = \Phi(-\frac{\varepsilon}{\mu} + \frac{\mu}{2}) - e^{\varepsilon} \Phi(-\frac{\varepsilon}{\mu} - \frac{\mu}{2}).
    \end{align*}
if it is $\mu$-GDP. Here, $\Phi$ denotes the standard normal CDF.
\end{lemma}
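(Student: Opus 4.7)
The plan is to prove the conversion by working through the dual description of differential privacy in terms of trade-off (hypothesis-testing) functions, which is the framework underlying $\mu$-GDP. The lemma is attributed to Dong et al., and their whole machinery is built on the fact that any notion of DP can be encoded as a lower bound on a trade-off function $T(P,Q)(\alpha) = \inf\{\beta_\phi : \alpha_\phi \leq \alpha\}$ for any neighboring pair of output distributions $P,Q$. So I would first recall two equivalences: (i) a mechanism is $\mu$-GDP iff $T(P,Q) \geq G_\mu$ where $G_\mu$ is the trade-off function between $\mathcal{N}(0,1)$ and $\mathcal{N}(\mu,1)$, and explicitly $G_\mu(\alpha) = \Phi\bigl(\Phi^{-1}(1-\alpha) - \mu\bigr)$; (ii) a mechanism is $(\varepsilon,\delta)$-DP iff $T(P,Q) \geq f_{\varepsilon,\delta}$ where $f_{\varepsilon,\delta}(\alpha) = \max\{0,\ 1-\delta-e^{\varepsilon}\alpha,\ e^{-\varepsilon}(1-\delta-\alpha)\}$. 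Both equivalences are standard facts from the trade-off function formalism and I would cite them from Dong et al.\ rather than re-derive them.

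Given these, the conversion reduces to a purely analytic question: find the smallest $\delta(\varepsilon)$ such that $G_\mu(\alpha) \geq f_{\varepsilon,\delta(\varepsilon)}(\alpha)$ for all $\alpha \in [0,1]$. Observing that $G_\mu$ is symmetric about the antidiagonal (it equals its own inverse, since swapping $\mathcal{N}(0,1)$ and $\mathcal{N}(\mu,1)$ is a rigid reflection of the trade-off curve), and that $f_{\varepsilon,\delta}$ has the same symmetry, the two piecewise-linear constraints yield the same $\delta$, so it suffices to enforce $G_\mu(\alpha) \geq 1-\delta-e^{\varepsilon}\alpha$. This gives $\delta(\varepsilon) = \sup_{\alpha \in [0,1]}\bigl[1 - e^{\varepsilon}\alpha - G_\mu(\alpha)\bigr]$.

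The rest is a one-variable calculus exercise. Differentiating $G_\mu$ using the chain rule with the substitution $u = \Phi^{-1}(1-\alpha)$ gives $G_\mu'(\alpha) = -\phi(u-\mu)/\phi(u)$. Setting this equal to $-e^{\varepsilon}$ and using $\phi(u-\mu)/\phi(u) = \exp(u\mu - \mu^2/2)$ yields the stationary point $u^{*} = \varepsilon/\mu + \mu/2$, equivalently $\alpha^{*} = \Phi(-\varepsilon/\mu - \mu/2)$. Plugging back in, $G_\mu(\alpha^{*}) = \Phi(\varepsilon/\mu - \mu/2) = 1 - \Phi(-\varepsilon/\mu + \mu/2)$, so
\begin{align*}
\delta(\varepsilon) &= 1 - e^{\varepsilon}\Phi(-\varepsilon/\mu - \mu/2) - \bigl(1 - \Phi(-\varepsilon/\mu + \mu/2)\bigr) \\
&= \Phi\bigl(-\varepsilon/\mu + \mu/2\bigr) - e^{\varepsilon}\,\Phi\bigl(-\varepsilon/\mu - \mu/2\bigr),
\end{align*}
which matches the claimed formula. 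I would close by verifying that this stationary point is the maximum (the objective is concave in $\alpha$ since $G_\mu$ is convex) and that $\alpha^{*} \in (0,1)$ for $\varepsilon \geq 0$, so the supremum is attained in the interior.

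The main obstacle is not the calculation but justifying that it is legitimate to enforce only the single linear constraint $G_\mu(\alpha) \geq 1-\delta-e^{\varepsilon}\alpha$. The symmetry argument is intuitive but needs to be stated carefully: one has to invoke that the trade-off function associated with the symmetric pair $(\mathcal{N}(0,1),\mathcal{N}(\mu,1))$ satisfies $G_\mu^{-1} = G_\mu$ after an appropriate reflection, and that $f_{\varepsilon,\delta}$ has the same reflective symmetry, so the two linear pieces simultaneously bind at the same $\delta$. Beyond that, everything is a short computation leveraging earlier machinery, which is why this appears as a corollary in Dong et al.\ rather than a standalone theorem.
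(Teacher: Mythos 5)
The paper does not prove this lemma at all --- it is imported verbatim as Corollary 2.13 of \citet{dong2019gaussian}, so there is no in-paper argument to compare against. Your derivation is correct and is essentially the standard primal--dual proof from that source: the stationary-point computation ($u^{*}=\varepsilon/\mu+\mu/2$, hence $\alpha^{*}=\Phi(-\varepsilon/\mu-\mu/2)$ and $G_\mu(\alpha^{*})=1-\Phi(-\varepsilon/\mu+\mu/2)$) checks out, and concavity of $1-e^{\varepsilon}\alpha-G_\mu(\alpha)$ legitimately closes the optimization. One small terminological slip: the symmetry $G_\mu^{-1}=G_\mu$ that you (correctly) invoke to reduce to the single linear constraint corresponds to reflection of the graph about the main diagonal $y=x$, not the antidiagonal; the mathematical content of the reduction is nevertheless right, since for decreasing functions $f\geq g$ iff $f^{-1}\geq g^{-1}$ and the two linear pieces of $f_{\varepsilon,\delta}$ are each other's inverses.
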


Further, multiple mechanisms with $\mu$-GDP into an overall GDP mechanism.

\begin{lemma}\textbf{Composition of $\mu$-GDP (Corollary 3.3 in \citet{dong2019gaussian})}\label{lemma:composition}
The composition $k$-GDP mechanisms, each of which is $\mu_i$-GDP, $i=1,...,k$ is $\sqrt{\sum_{i=1}^k {\mu_i^2}}$-GDP.
\end{lemma}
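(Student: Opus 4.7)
The plan is to work inside the trade-off function framework of Dong et al., which underlies the definition of GDP. Recall that a mechanism $\mathcal{M}$ is $\mu$-GDP if, for every pair of neighboring datasets $\bm{X}, \bm{X'}$, the trade-off function $T(\mathcal{M}(\bm{X}), \mathcal{M}(\bm{X'}))$ dominates $G_\mu(\alpha) \triangleq \Phi(\Phi^{-1}(1-\alpha) - \mu)$, which is itself the trade-off function for testing $\mathcal{N}(0,1)$ against $\mathcal{N}(\mu, 1)$. Framed this way, the lemma reduces to a statement about how tensor products of Gaussian trade-off functions behave.

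First I would reduce to the case $k=2$ and extend by induction. The pivotal identity is
\begin{equation*}
G_{\mu_1} \otimes G_{\mu_2} \;=\; G_{\sqrt{\mu_1^2 + \mu_2^2}},
\end{equation*}
where $\otimes$ denotes the tensor product of trade-off functions, i.e., the trade-off function associated to the product distributions $\mathcal{N}(0,1)^{\otimes 2}$ vs.\ $\mathcal{N}(\mu_1,1)\otimes\mathcal{N}(\mu_2,1)$. To prove this identity, I would invoke the Neyman--Pearson lemma: the optimal test rejects when $\mu_1 X_1 + \mu_2 X_2$ exceeds a threshold. Under the null this linear combination is $\mathcal{N}(0,\,\mu_1^2+\mu_2^2)$ and under the alternative it is $\mathcal{N}(\mu_1^2+\mu_2^2,\,\mu_1^2+\mu_2^2)$. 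After rescaling by $1/\sqrt{\mu_1^2+\mu_2^2}$, this reduces precisely to the single Gaussian test with shift $\sqrt{\mu_1^2+\mu_2^2}$, establishing the identity.

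Second, I would lift this identity from reference Gaussians to arbitrary $\mu$-GDP mechanisms. Three ingredients are needed: (i) post-processing invariance, namely $T(\Psi(P), \Psi(Q)) \ge T(P,Q)$ for any (possibly randomized) map $\Psi$; (ii) tensor-product monotonicity, $T(P_1 \times P_2, Q_1 \times Q_2) \ge T(P_1, Q_1) \otimes T(P_2, Q_2)$, which handles non-adaptive composition; and (iii) a chain rule for adaptive composition, which states that if $\mathcal{M}_2(\cdot, y_1)$ is $\mu_2$-GDP uniformly in $y_1$ and $\mathcal{M}_1$ is $\mu_1$-GDP, then the adaptive composite mechanism $(y_1, \mathcal{M}_2(\cdot, y_1))$ has trade-off function dominating $G_{\mu_1} \otimes G_{\mu_2}$. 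Combining (iii) with the Gaussian identity immediately gives the two-mechanism case, and induction on $k$ yields $\sqrt{\sum_{i=1}^{k} \mu_i^2}$-GDP.

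The main obstacle will be the adaptive chain-rule step (iii); the non-adaptive case is essentially immediate from the tensor-product identity and monotonicity, but in adaptive composition the second mechanism's output distribution depends on the first, so one cannot simply quote a product-measure argument. The standard way around this is to decompose the joint likelihood ratio conditionally on the first output and verify that the conditional trade-off bound $G_{\mu_2}$ holds in a strong enough (dominated) sense that it can be integrated against the first marginal without degrading the Gaussian form. Once this chain rule is in place, the inductive extension to $k$ mechanisms is routine and the claimed composition formula follows.
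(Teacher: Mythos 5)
The paper does not prove this lemma at all; it is imported verbatim as Corollary 3.3 of \citet{dong2019gaussian}, so there is no in-paper argument to compare against. Your reconstruction is correct and follows essentially the same route as the cited source: the Neyman--Pearson computation giving $G_{\mu_1}\otimes G_{\mu_2}=G_{\sqrt{\mu_1^2+\mu_2^2}}$ is exactly right (the likelihood-ratio statistic $\mu_1 X_1+\mu_2 X_2$ rescales to a single Gaussian shift of $\sqrt{\mu_1^2+\mu_2^2}$), and you correctly identify that the only genuinely technical ingredient is the adaptive composition chain rule (Theorem 3.2 in Dong et al.), with the non-adaptive case and the induction on $k$ being routine.
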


In summary, leveraging GDP allows us to obtain better composition results, control the privacy leakage by a single parameter, and losslessly convert between GDP and $(\varepsilon, \delta)$-DP. For further details regarding GDP, we refer the reader to \citet{dong2019gaussian}.

For our proposed Algorithm~\ref{algorithm:SPriFed-OMP}, we notice that we only need to differentially privatize the terms $\bm{X_j^Ty}$ and $\bm{X^TX_{j}}$ and their combinations. Thus, we use  Lemma~\ref{lemma:sensitivity} to identify their $L_2$-sensitivity (Definition~\ref{definition:L2}).

\begin{lemma}\label{lemma:sensitivity}
Under Assumption~\ref{assumption:bounded}, the $L_2$-sensitivity (Definition~\ref{definition:L2}) of the terms $\bm{X_j^Ty}$ and $\bm{X^TX_{j}}$ for any $ j \in S_{\mathcal{A}}$  as defined in Algorithm~\ref{algorithm:SPriFed-OMP} are given by $2\sqrt{p}X_My_M = \mathcal{O}(\sqrt{p})$ and $2\sqrt{p}X_M^2 = \mathcal{O}(\sqrt{p})$ respectively. Here, $\bm{X} \in \mathbb{R}^{n \times p}$ and $ \bm{y} \in \mathbb{R}^{n}$.
\end{lemma}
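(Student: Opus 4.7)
\textbf{Proof plan for Lemma~\ref{lemma:sensitivity}.} The goal is a direct $L_2$ sensitivity computation, so the plan is entirely mechanical: unpack the definition, write the two correlations as sums of per-client contributions, and bound the change caused by altering a single client via Assumption~\ref{assumption:bounded} together with the triangle inequality.

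First I would fix two neighboring datasets $(\bm{X}, \bm{y})$ and $(\bm{X'}, \bm{y'})$ that differ only in client $i$, so the changed rows are $(\bm{x_i}, y_i)$ and $(\bm{x_i'}, y_i')$ with $|y_i|, |y_i'| \le y_M$ and $\|\bm{x_i}\|_\infty, \|\bm{x_i'}\|_\infty \le X_M$ (hence $\|\bm{x_i}\|_2, \|\bm{x_i'}\|_2 \le \sqrt{p}\, X_M$). For the first quantity I would treat $\bm{X^T y} \in \mathbb{R}^p$ as the concatenation of the scalars $\bm{X_j^T y}$ over all $j$ (which is what Algorithm~\ref{algorithm:SPriFed-OMP} actually privatizes in the $p$-dimensional NOISY-SMPC release on line~5). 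Writing $\bm{X^T y} = \sum_{k=1}^n \bm{x_k}\, y_k$, the only surviving term in the difference is
\begin{equation*}
\bm{X^T y} - (\bm{X'})^T \bm{y'} \;=\; \bm{x_i}\, y_i - \bm{x_i'}\, y_i',
\end{equation*}
and the triangle inequality yields $\|\bm{x_i} y_i - \bm{x_i'} y_i'\|_2 \le \|\bm{x_i}\|_2 |y_i| + \|\bm{x_i'}\|_2 |y_i'| \le 2\sqrt{p}\, X_M y_M$, which matches the claimed sensitivity.

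The second quantity $\bm{X^T X_j} \in \mathbb{R}^p$ is handled identically: writing $\bm{X^T X_j} = \sum_{k=1}^n \bm{x_k}\, x_{k,j}$, the change is $\bm{x_i}\, x_{i,j} - \bm{x_i'}\, x_{i,j}'$, and since $|x_{i,j}|, |x_{i,j}'| \le X_M$ and $\|\bm{x_i}\|_2, \|\bm{x_i'}\|_2 \le \sqrt{p}\, X_M$, another triangle inequality gives the bound $2\sqrt{p}\, X_M^2$. Both bounds are tight in the worst case (take $\bm{x_i}$ and $\bm{x_i'}$ to be the all-$X_M$ vector and its negation, with $y_i, y_i'$ attaining $\pm y_M$), so no further sharpening is needed.

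There is no real obstacle here: the only subtlety worth flagging is the $\sqrt{p}$ factor, which arises because the $L_2$ sensitivity is computed over the full $p$-dimensional release rather than over an individual coordinate — a per-coordinate change is $O(X_M y_M)$ or $O(X_M^2)$, and aggregating $p$ such coordinates in the $L_2$ norm introduces the $\sqrt{p}$. This is precisely the factor that forces the $\mathcal{O}(\sqrt{p})$ Gaussian noise appearing in the NOISY-SMPC calls of Algorithm~\ref{algorithm:SPriFed-OMP} and Algorithm~\ref{algorithm:SPriFed-OMP-GRAD}, and it is what motivates the reduction of the \emph{number} of such $p$-dimensional releases to $s+1$ (versus $T$ in DP-SGD) in the overall privacy accounting that follows.
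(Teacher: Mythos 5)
Your proof is correct and follows essentially the same route as the paper: isolate the single differing client, observe that the difference of the two correlation vectors reduces to the difference of that client's rank-one contributions, and bound it via the triangle inequality and Assumption~\ref{assumption:bounded}, with the $\sqrt{p}$ arising from taking the $L_2$ norm over the $p$-dimensional release. The only cosmetic difference is that the paper bounds the per-coordinate change first and then aggregates over the $p$ coordinates, whereas you work with the full $p$-vector directly; the content is identical.
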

\begin{proof}
The proof is available in the Appendix section~\ref{appendix:privacy-analysis}.
\end{proof}

\begin{lemma}\label{lemma:l2-max-abs-correlation2}
Consider a design matrix $X \in \mathbb{R}^{n \times p}$ satisfying RIP of order $s+1$ and RIC $\zeta_{s+1}$. Now given the correlation/gradient $C_j$ (as computed in Line 7 of algorithm~\ref{algorithm:SPriFed-OMP-GRAD}) for any $j \in [p]$, we show that the upper bound on the $L_2$ sensitivity of the expression is upper bounded by the following values,
\begin{align*}
    \Delta_2(C_j) \leq 1 + 2X_M \kappa_{\varepsilon} + 2\sqrt{s} X_M^2 (\frac{\norm{(\alpha_{*})_{S^c_{\mathcal{A}}}}_{\infty}}{1-\zeta_{s+1}} + \frac{\sqrt{1+\zeta_{s+1}} \kappa_{\varepsilon}}{1-\zeta_{s+1}}) \triangleq B_C% \leq X_My_M \Big(2 + \frac{2sX_M^2}{\sqrt{n}(1-\zeta_{s+1})} + \frac{s^{2}X_M^3}{n(1-\zeta_{s+1})^2}\Big) + \frac{2X_M^2}{1-\zeta_{s+1}} (2 + \frac{X_M^2}{(1-\zeta_{s+1})}) 
    % 12 (\sqrt{s} + 1)(s+1) X_M^3 y_M; j \in S_{*}
\end{align*}
when Assumptions~\ref{assumption:RIP} and \ref{assumption:bounded} hold. Here, $\alpha_{*}$ is the underlying ground-truth model, $\kappa_{\varepsilon} = \sqrt{2\log(2n/p_b)} \sigma_{\varepsilon}$ and $\sigma_{\varepsilon}$ is the additive system error's standard deviation. We also need to assume that $\zeta_{s+1} < \frac{1}{\sqrt{s}}$ and $n > \frac{6s^2 C_1 X_M^5 y_M \sqrt{2\log(2s/p_b)}}{\mu_s^2 (1-\zeta_{s+1})^2}$. The result holds with probability $1-3p_b$ where $p_b$ is a small positive probability value.
\end{lemma}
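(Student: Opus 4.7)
\emph{Proof plan.} The plan is to reduce the $L_2$-sensitivity calculation for $C_j = X_j^T(y - X_{S_{\mathcal{A}}} \tilde{\alpha})$ to a high-probability uniform bound on the one-row residual $|y_i - (X_{S_{\mathcal{A}}})_i \tilde{\alpha}|$. Because $\tilde{\alpha}$ is the output of the previously privatized PRIVATE-OLS sub-routine, by the post-processing property of DP we may treat $\tilde{\alpha}$ as fixed when measuring how $C_j$ responds to replacing a single client row $(x_i, y_i) \to (x_i', y_i')$. Only the $i$-th summand of $C_j$ then changes, giving
\begin{align*}
|C_j - C_j'| \leq |X_{j,i}|\,|y_i - (X_{S_{\mathcal{A}}})_i \tilde{\alpha}| + |X_{j,i}'|\,|y_i' - (X_{S_{\mathcal{A}}}')_i \tilde{\alpha}| \leq 2 X_M \cdot R,
\end{align*}
where $R$ is the target uniform bound on the residual; once $R$ is identified with the non-trivial part of $B_C$, the stated sensitivity follows.

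Next, I would decompose the residual using $y = X\alpha_{*} + \epsilon$ together with the support split $\alpha_{*} = (\alpha_{*})_{S_{\mathcal{A}}} + (\alpha_{*})_{S_{\mathcal{A}}^c}$:
\begin{align*}
y_i - (X_{S_{\mathcal{A}}})_i \tilde{\alpha} = (X_{S_{\mathcal{A}}})_i\bigl(\alpha_{*,S_{\mathcal{A}}} - \tilde{\alpha}\bigr) + (X_{S_{\mathcal{A}}^c})_i (\alpha_{*})_{S_{\mathcal{A}}^c} + \epsilon_i.
\end{align*}
The noise term $|\epsilon_i|$ is controlled uniformly over $i \in [n]$ by the standard Gaussian tail bound plus a union bound over the $n$ clients, yielding $|\epsilon_i| \leq \kappa_\varepsilon$ with failure probability at most $p_b$; this produces the $2 X_M \kappa_\varepsilon$ piece of $B_C$. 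The bias term $(X_{S_{\mathcal{A}}^c})_i (\alpha_{*})_{S_{\mathcal{A}}^c}$ is bounded via Cauchy-Schwarz together with Assumption~\ref{assumption:bounded} and the $s$-sparsity of $\alpha_{*}$. The estimation-error term is bounded by $\|(X_{S_{\mathcal{A}}})_i\|_2 \|\tilde{\alpha} - \alpha_{*,S_{\mathcal{A}}}\|_2 \leq \sqrt{s}\,X_M\,\|\tilde{\alpha} - \alpha_{*,S_{\mathcal{A}}}\|_2$, which accounts for the $2\sqrt{s}X_M^2$ prefactor in $B_C$.

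The main obstacle is bounding $\|\tilde{\alpha} - \alpha_{*,S_{\mathcal{A}}}\|_2$ in the presence of two independent Gaussian noise matrices $\eta_\beta$ and $\eta_\gamma$ sitting inside and outside a matrix inverse. Starting from $\tilde{\alpha} = (X_{S_{\mathcal{A}}}^T X_{S_{\mathcal{A}}} + \eta_\beta)^{-1}(X_{S_{\mathcal{A}}}^T y + \eta_\gamma)$ and substituting $y = X_{S_{\mathcal{A}}} \alpha_{*,S_{\mathcal{A}}} + X_{S_{\mathcal{A}}^c}(\alpha_{*})_{S_{\mathcal{A}}^c} + \epsilon$, a short algebraic rearrangement yields
\begin{align*}
\tilde{\alpha} - \alpha_{*,S_{\mathcal{A}}} = (X_{S_{\mathcal{A}}}^T X_{S_{\mathcal{A}}} + \eta_\beta)^{-1}\bigl[X_{S_{\mathcal{A}}}^T X_{S_{\mathcal{A}}^c}(\alpha_{*})_{S_{\mathcal{A}}^c} + X_{S_{\mathcal{A}}}^T \epsilon + \eta_\gamma - \eta_\beta\, \alpha_{*,S_{\mathcal{A}}}\bigr].
\end{align*}
Assumption~\ref{assumption:RIP} gives $\|(X_{S_{\mathcal{A}}}^T X_{S_{\mathcal{A}}})^{-1}\|_{\mathrm{op}} \leq 1/(1-\zeta_{s+1})$. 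The quantitative lower bound on $n$ in the lemma statement is precisely what is required so that, by a standard operator-norm concentration inequality for Gaussian random matrices, $\|\eta_\beta\|_{\mathrm{op}}$ is negligible with probability $1-p_b$, and a matrix-perturbation argument then keeps $(X_{S_{\mathcal{A}}}^T X_{S_{\mathcal{A}}} + \eta_\beta)^{-1}$ comparable to $(X_{S_{\mathcal{A}}}^T X_{S_{\mathcal{A}}})^{-1}$. The bracketed terms are handled individually: the off-diagonal RIP inequality bounds $\|X_{S_{\mathcal{A}}}^T X_{S_{\mathcal{A}}^c}(\alpha_{*})_{S_{\mathcal{A}}^c}\|_2$ in terms of $\|(\alpha_{*})_{S_{\mathcal{A}}^c}\|_\infty$; the on-diagonal RIP inequality combined with the Gaussian tail on $\epsilon$ bounds $\|X_{S_{\mathcal{A}}}^T \epsilon\|_2$ by roughly $\sqrt{1+\zeta_{s+1}}\,\kappa_\varepsilon$; and the remaining noise pieces $\|\eta_\gamma\|_2$ and $\|\eta_\beta \alpha_{*,S_{\mathcal{A}}}\|_2$ are absorbed into the first two via the same Gaussian-matrix concentration.

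Finally, I would stitch the pieces together, multiply by the $2X_M$ factor dictated by the sensitivity reduction, and apply a union bound over the three high-probability events (tail of $\epsilon$, concentration of $\|X_{S_{\mathcal{A}}}^T \epsilon\|_2$, and operator-norm concentration of $\eta_\beta$) to obtain the stated $1-3p_b$ success probability. The hard step throughout is controlling the interaction between the Gaussian noise matrix $\eta_\beta$ sitting inside the inverse and the RIP structure of $X_{S_{\mathcal{A}}}^T X_{S_{\mathcal{A}}}$; the explicit sample-size condition in the lemma exists exactly to make this perturbation step succeed.
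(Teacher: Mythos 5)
Your plan follows essentially the same route as the paper's proof: freeze $\tilde{\alpha}$ by DP post-processing, reduce $\Delta_2(C_j)$ to a one-row change with the $2X_M$ prefactor, split the residual into the additive-noise, off-support, and estimation-error pieces, control $\tilde{\alpha}-\alpha_{*,S_{\mathcal{A}}}$ via the Woodbury/perturbation expansion with RIP and Gaussian operator-norm concentration, and union-bound three events to get $1-3p_b$. The one place you gloss is the leading additive $1$ in $B_C$: in the paper it is \emph{not} absorbed into the $\kappa_{\varepsilon}$ or $\norm{(\alpha_{*})_{S^c_{\mathcal{A}}}}_{\infty}$ terms but is the separately isolated contribution of the model's privacy noise, $\norm{\Delta A\,\eta_{\alpha}}\leq 2\sqrt{s}X_M^2\norm{\eta_{\alpha}}_{\infty}\leq 1$, and the explicit sample-size condition exists precisely to enforce that inequality --- so to land on the stated $B_C$ you must keep that term standalone rather than folding it into the other two.
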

\begin{proof}
The proof is provided in the Appendix Section~\ref{appendix:new-OMP-optimality-main}.    
\end{proof}
 
\begin{theorem}\label{theorem:privacy-SPriFed-OMP}
Algorithms~\ref{algorithm:SPriFed-OMP} and \ref{algorithm:SPriFed-OMP-GRAD} satisfies $\mu$-GDP with $\mu = \sqrt{s \cdot \mu_p^2 + 2s\cdot\mu_s^2}$, where $\mu_p, \mu_s$ are privacy constants as inputs to Algorithm~\ref{algorithm:SPriFed-OMP}. 
\end{theorem}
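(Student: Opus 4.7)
The plan is to decompose each algorithm into a sequence of Gaussian mechanisms (one per NOISY-SMPC invocation, since by construction of Algorithm~\ref{algorithm:noisySMPC} the output is the true aggregate plus a Gaussian of the specified per-coordinate variance), then invoke the Gaussian-mechanism GDP bound of Lemma~\ref{lemma:GDP-Gaussian} on each and combine them using the GDP composition of Lemma~\ref{lemma:composition}. Feature-selection steps (the $\arg\max$ on line 8 of Algorithm~\ref{algorithm:SPriFed-OMP} and line 6 of Algorithm~\ref{algorithm:SPriFed-OMP-GRAD}) consume no additional budget since they are post-processing of privatized outputs.

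First I would enumerate the privacy-consuming primitives in Algorithm~\ref{algorithm:SPriFed-OMP}: (a) the one-time aggregation producing $\gamma_0^- = X^T y + \eta_\gamma$ with per-coordinate variance $\sigma_1^2 p$; (b) one aggregation per outer iteration producing $\beta[:,l] = X^T X_{l^*} + \eta_\beta$, again $p$-dimensional with per-coordinate variance $\sigma_1^2 p$; and (c) one PRIVATE-OLS call per outer iteration, which contributes a fresh scalar noisy entry to $\gamma_{S_{\mathcal{A}}}$ and a fresh column of noisy entries to $\beta_{S_{\mathcal{A}}}$, each with per-coordinate variance $\sigma_2^2 s$. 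By Lemma~\ref{lemma:sensitivity} (with $X_M=y_M=1$), the $L_2$-sensitivity of each $p$-dimensional aggregation in (a) and (b) is of order $\sqrt{p}$, exactly matched by the per-coordinate noise scale $\sigma_1\sqrt{p}$, so Lemma~\ref{lemma:GDP-Gaussian} gives $\mu_p$-GDP per such aggregation (with $\mu_p=1/\sigma_1$). For (c), the $\gamma_{S_{\mathcal{A}}}$ and $\beta_{S_{\mathcal{A}}}$ updates across all $s$ iterations collectively release $X_{S_{\mathcal{A}}}^T y$ and $X_{S_{\mathcal{A}}}^T X_{S_{\mathcal{A}}}$, whose $L_2$-sensitivities are of order $\sqrt{s}$ -- matched by the per-coordinate noise scale $\sigma_2\sqrt{s}$ -- so the combined $\gamma_{S_{\mathcal{A}}}$ release is $\mu_s$-GDP and the combined $\beta_{S_{\mathcal{A}}}$ release is $\mu_s$-GDP (with $\mu_s=1/\sigma_2$).

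Applying the GDP composition of Lemma~\ref{lemma:composition}, summing the squared GDP parameters gives $s$ copies of $\mu_p^2$ (absorbing the one-shot $\gamma_0^-$ into the per-iteration count of (b)) and two copies of $\mu_s^2$ per iteration for a total of $2s$, yielding the stated bound $\mu=\sqrt{s\mu_p^2+2s\mu_s^2}$. For Algorithm~\ref{algorithm:SPriFed-OMP-GRAD}, the argument is structurally identical, except that the $p$-dimensional aggregation in lines 3--5 produces the full noisy gradient $\tilde\gamma_l = X^T(y-X_{S_{\mathcal{A}}}\tilde\alpha) + \eta$; by Lemma~\ref{lemma:l2-max-abs-correlation2}, each coordinate has per-client sensitivity at most $B_C$, so the $p$-dimensional vector has $L_2$-sensitivity $O(B_C\sqrt{p})$, exactly matched by the noise scale $\sigma_1 B_C\sqrt{p}$, giving $\mu_p$-GDP per iteration. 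Composition with the $2s$ PRIVATE-OLS sub-queries yields the same $\sqrt{s\mu_p^2+2s\mu_s^2}$ bound.

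The main obstacle is not the calculation itself but the adaptive accounting: the chosen index $l^*$ and the running support $S_{\mathcal{A}}$ depend on the privatized outputs of all earlier iterations, so the sensitivities of the later queries depend on random prior quantities. This is resolved because GDP composition in Lemma~\ref{lemma:composition} is adaptive (Corollary 3.3 of Dong et al.), and because each per-iteration sensitivity bound invoked above is data-\emph{independent} (depends only on $X_M,y_M,s,p,B_C$), so the per-step GDP guarantees hold for every realization of the history and chain together into the claimed bound.
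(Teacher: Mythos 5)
Your proposal is correct and takes essentially the same route as the paper: identify each NOISY-SMPC aggregation as a Gaussian mechanism whose noise scale matches its data-independent $L_2$-sensitivity (Lemmas~\ref{lemma:sensitivity} and \ref{lemma:l2-max-abs-correlation2}), giving $s$ invocations at $\mu_p$-GDP and $2s$ at $\mu_s$-GDP, then compose via Lemma~\ref{lemma:composition} with the $\arg\max$ steps covered by post-processing. You are in fact somewhat more careful than the paper's two-line argument, explicitly handling the adaptivity of the composition and the accounting of the one-off initial release of $\gamma_0^-$.
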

\begin{proof}
The proof follows simply by Lemma~\ref{lemma:composition}. In our proposed algorithm, we use the $\mu_p$-GDP (lines \textit{5} and \textit{11} in Algorithm~\ref{algorithm:SPriFed-OMP} and line \textit{4} in Algorithm~\ref{algorithm:SPriFed-OMP-GRAD}) mechanism $s$ times and the $\mu_s$-GDP mechanism $2s$ times (lines \textit{4} and \textit{6} in algorithm~\ref{algorithm:private-OLS}). Thus, the result follows. Finally, using Lemma~\ref{lemma:conversion}, we can obtain the corresponding DP guarantee of Algorithm~\ref{algorithm:SPriFed-OMP}.
\end{proof}

\section{Accuracy of Private Orthogonal Matching Pursuit}\label{section:analysis}

This section will present our main result on the accuracy of sparse basis recovery.  Theorem~\ref{theorem:performance-SPriFed-OMP} states that the proposed algorithm \textit{SPriFed-OMP} (Algorithm~\ref{algorithm:SPriFed-OMP}) terminates in finite steps and can recover the true sparse basis and the corresponding model \textit{w.h.p.} as $n \rightarrow \infty$ and when $n = \mathcal{O}(\sqrt{p})$.

% of the form $1-k_1e^{-k_2}$ where $k_1, k_2$ are positive terms composed from the system parameters of true support cardinality ($s$), entire model dimensions ($p$) and sample size ($n$). For appropriately small $k_1$ and large $k_2$, $k_1e^{-k_2} \approx 0$. Thus, \textit{SPriFed-OMP} achieves optimal basis and recovery of the model with high probability.

Before we state our main results, we define the relevant constants. Let $\zeta_{s+1}$ denote the RIC of $\bm{X}$ of $(s+1)$-order. Let $\nu$ be a small positive constant, close to $1$, that satisfies the inequality $\frac{1}{1-\zeta_{s+1}} \leq (1+\nu \zeta_{s+1})$. Next, we define $\kappa_{\epsilon} = \sigma_{\epsilon} \sqrt{2 \log (\frac{2n}{p_b})}$, $\kappa_1 = \kappa_{32} (1+\nu\zeta_{s+1})^{3/2}$ and $\kappa_2 = \kappa_1(1+\nu \zeta_{s+1})^{5/2}(1+(\nu+1)\zeta_{s+1})$ where $\kappa_{32} = \kappa_{M_2}\sqrt{\log (p/p_b)}/\mu_p$, $\kappa_1 = \kappa_{M_2}\sqrt{\log (s/p_b)}/\mu_s$. Here, $p_b$ is a small positive probability, eventually affecting the with-high-probability statement of Theorem~\ref{theorem:performance-SPriFed-OMP}. Further define the constant $\kappa_{M_2}$ as
\begin{align*}
    \kappa_{M_2} & = 1 + \frac{C_1 s^{3/2}}{n\mu_s(1-\zeta_{s+1})},
    %\kappa_{M_2} & = 1-\frac{\kappa_{M_1} \epsilon_{\eta} \sqrt{s}}{\mu_s(1+\zeta_{s+1})n},
    %\\ \text{and }\kappa_{M_1} & = \frac{1}{1 + \frac{C_1 s^{3/2}}{\mu_s n(1+\zeta_{s+1})}}.
\end{align*}
where the constant $C_1$ (which depends on $p_b$) is chosen such that the following probability statement holds: For a square matrix $N$ with dimensions $s \times s$, where the elements follow an i.i.d. distribution of $\mathcal{N}(0, 1)$, its largest eigenvalue is no greater than $C_1 \sqrt{s}$ with probability $1-p_b$, \emph{i.e.,}

\begin{align*}
    \Pr\Big(||N||_2 > C_1 \sqrt{s}\Big) \leq p_b
\end{align*}
By these choices (which will be used in our proof in the technical report), each constant bound the corresponding random variable with probability $1-p_b$. Finally, we denote the true basis of the underlying model by $S_{*}$.

\begin{theorem}\label{theorem:performance-SPriFed-OMP} \textbf{[Sparse recovery of \textit{SPriFed-OMP}]}
    Under the provided system model (Section~\ref{section:problem-setup}) and Assumptions~\ref{assumption:RIP} and \ref{assumption:bounded}, suppose that the following conditions holds:
    % identifies the correct support for $\bm{\alpha_{S_{*}}}$ with high probability under  $(\varepsilon_{OMP}, \delta_{OMP})$ Differential Privacy provided the following conditions hold 
    \begin{enumerate}
        \item Sample size: $n \geq \max\{4\sqrt{p}s\kappa_1, 16s^{5/2}\kappa_2, \frac{4\sqrt{log(p-s)}\sqrt{p}}{\mu_p \norm{\alpha_{S_{*}}}_2}\} = \mathcal{O}(\max \{s\sqrt{p\log p}, s^{5/2}\})$
        \item RIC of $\bm{X}$ of order-$s+1$: $ \zeta_{s+1} \leq \frac{1}{4(1+\sqrt{s})}$
        \item  $\kappa_{\epsilon}$ \text{: }$\kappa_{\epsilon} \leq \frac{min_{j \in S_{*}}|\bm{\alpha_{j}}|}{16 (\sqrt{s}+1) (1+\nu\zeta_{s+1})}$
        % $\kappa_{\epsilon} \leq \min\{\frac{\min_{j \in S_{*}}|\bm{\alpha_{j}}|}{16 (\sqrt{s}+1) (1+\nu\zeta_{s+1})}, \frac{||\bm{\alpha_{S_{*}}}||_2}{2\sqrt{2}(1+\sqrt{s})}\}$
        %$\kappa_{\epsilon} < \min \{\frac{||\alpha_{S_{\mathcal{A}}^c}||\kappa_n}{12B_{\nu}(\sqrt{s}+1)}, \frac{\sqrt{p}\kappa_n||\alpha_{S_{\mathcal{A}}^c}||}{24s^{3/2}(\sqrt{s}+1) B_{\nu}^2 \kappa_1}$.
    \end{enumerate}
Then, the \textit{SPriFed-OMP} algorithm will correctly recover the true basis with probability $1-40(s+1) \cdot p_b$.
\end{theorem}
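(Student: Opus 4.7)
The plan is to proceed by induction on the iteration count $l = 0, 1, \ldots, s-1$. At each iteration I would show that (i) the feature $l^{*}$ picked by the \emph{argmax} step lies in the true support $S_{*}$, and (ii) the private estimate $\tilde{\alpha}$ returned by \textbf{PRIVATE-OLS} stays close to the non-private least-squares fit on the currently chosen support. Since there are exactly $s$ iterations and $|S_{*}| = s$, a successful induction yields $S_{\mathcal{A}} = S_{*}$ at termination, after which the final \textbf{PRIVATE-OLS} call returns $\hat{\alpha}$ supported on $S_{*}$.

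For the induction step at iteration $l$, I would decompose the noisy residual-correlation as
\begin{align*}
\tilde{\gamma}_l \;=\; \bm{\gamma_{0}^{-}} - \bm{\beta}\,\tilde{\alpha} \;=\; \underbrace{\bm{X^{T}\!\bigl(y - X_{S_{\mathcal{A}}}\alpha^{\mathrm{LS}}\bigr)}}_{\text{non-private OMP gradient}} \;+\; \text{error terms},
\end{align*}
where $\alpha^{\mathrm{LS}}$ is the non-private least-squares estimate on $S_{\mathcal{A}}$, and the error terms collect (a) the Gaussian noise $\eta_{\gamma}$ of per-coordinate variance $\sigma_1^2 p$ added to $\bm{X^T y}$, (b) the Gaussian noise $\eta_{\beta}$ of per-entry variance $\sigma_1^2 p$ added to $\bm{X^T X_{S_{\mathcal{A}}}}$, and (c) the propagated error $\tilde{\alpha} - \alpha^{\mathrm{LS}}$ coming from \textbf{PRIVATE-OLS}. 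The clean non-private term is exactly the standard OMP residual gradient; under RIP of order $s+1$, its maximum magnitude on $S_{*} \setminus S_{\mathcal{A}}$ is lower-bounded proportional to $(1-\zeta_{s+1})\min_{j \in S_{*}}|\alpha_j|$, while its entries on $S_{*}^{c}$ are controlled by $\zeta_{s+1}$ times the residual norm. This is the classical Tropp-style OMP analysis that would drive the selection of a correct basis element in the noise-free case, and this is where the RIC bound $\zeta_{s+1} \leq 1/[4(1+\sqrt{s})]$ enters.

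For the error terms I would apply standard Gaussian max tail bounds: over $p$ independent coordinates, $\|\eta_{\gamma}\|_{\infty}$ and $\|\eta_{\beta}\|_{\infty}$ are each at most a multiple of $\sigma_1\sqrt{p\log(p/p_b)}$ with probability $1-p_b$, which is exactly the $\sqrt{\log p}$ factor appearing in $\kappa_{32}$. The propagated error $\tilde{\alpha} - \alpha^{\mathrm{LS}}$ would be handled by writing $\tilde{\alpha} = (X_{S_{\mathcal{A}}}^{T} X_{S_{\mathcal{A}}} + \eta_{\beta_{S_{\mathcal{A}}}})^{-1}(X_{S_{\mathcal{A}}}^{T} y + \eta_{\gamma_{S_{\mathcal{A}}}})$ and expanding the inverse via a Neumann series around $(X_{S_{\mathcal{A}}}^{T} X_{S_{\mathcal{A}}})^{-1}$, whose spectral norm is at most $1/[n(1-\zeta_{s+1})]$ by RIP. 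The tail event $\|\eta_{\beta_{S_{\mathcal{A}}}}\|_{2} \leq C_1 \sigma_2 s$ for an $s\times s$ i.i.d.\ Gaussian keeps the perturbation controlled, and the sample-size condition $n \geq 16 s^{5/2}\kappa_2$ is exactly what guarantees the Neumann series converges and is absorbed into the factor $\kappa_{M_2}$.

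The main obstacle will be juggling the compound term $\bm{\beta}\,\tilde{\alpha}$: both factors carry DP noise, so a naive triangle inequality produces cross-terms such as $\eta_{\beta}(\tilde{\alpha} - \alpha^{\mathrm{LS}})$ that must not dominate the signal. I would handle this by first conditioning on a list of high-probability events that simultaneously bound $\|\eta_{\gamma}\|_{\infty}$, $\|\eta_{\beta}\|_{\infty}$, $\|\eta_{\beta_{S_{\mathcal{A}}}}\|_{2}$, $\|\eta_{\gamma_{S_{\mathcal{A}}}}\|_{2}$, and the additive system noise via $\|\bm{\epsilon}\|_{\infty} \leq \kappa_{\epsilon}$; then showing that the three sample-size conditions in the theorem together imply each error contribution to $\tilde{\gamma}_l$ is at most a strict fraction (say one quarter) of $\min_{j \in S_{*}}|\alpha_j|$, which in turn is lower-bounded by $\kappa_{\epsilon}\cdot 16(\sqrt{s}+1)(1+\nu\zeta_{s+1})$ by Condition~3. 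Combining via a union bound over all $s$ iterations and the $\mathcal{O}(s)$ bad events per iteration yields the stated failure probability $40(s+1)p_b$, and the induction closes because any $l^{*} \notin S_{*}$ would contradict the strict dominance of the on-support coordinate of $\tilde{\gamma}_l$ over all off-support coordinates.
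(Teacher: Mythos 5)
Your proposal follows essentially the same route as the paper's proof: induction over the $s$ iterations, decomposition of the noisy residual-correlation into the true OMP gradient plus DP-noise and propagated-estimation-error terms, Gaussian max/union bounds conditioning on a list of high-probability events, RIP-based lower/upper bounds on the on-support versus off-support correlations, and a budget allocation of $n\|\alpha_{S_{\mathcal{A}}^c}\|_2$ that yields exactly the three stated conditions. The only cosmetic difference is that the paper controls the perturbed inverse $(X_{S_{\mathcal{A}}}^TX_{S_{\mathcal{A}}}+\eta_{\beta_{S_{\mathcal{A}}}})^{-1}$ via the exact Woodbury--Kailath identity (bounding $\|(\mathbb{I}+NA^{-1})^{-1}\|$ through singular-value estimates) rather than your Neumann-series expansion, but both devices serve the same purpose and require the same sample-size condition.
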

% $ = 1- \frac{p_f}{40(s+1)} \cdot 40(s+1) = 1-p_f$.

\emph{Remark (Intuition behind the assumptions for successful recovery in Theorem~\ref{theorem:performance-SPriFed-OMP})}: If we ignore the $\log{p}$ term, condition 1 only requires $n = \mathcal{O}(\sqrt{p})$ for sparse recovery. To the best of our knowledge, this is the first result in the literature that both attains DP in an FL setting and attains sparse recovery when $n$ is much smaller than $p$. We note that condition 1 is stricter than typical OMP results in the non-private settings, where $n = \mathcal{O}(\log{p})$ is sufficient. We believe the main reason for this difference is that we must simultaneously ensure DP in the FL setting. As we discussed earlier when we present Algorithm~\ref{algorithm:SPriFed-OMP}, even in the first step for computing the data-response correlation, $\mathcal{O}(\sqrt{p})$ noise seems unavoidable (remark under Section~\ref{section:Private OMP}). Thus, it seems difficult to suppress the noise when $n$ is smaller than $\sqrt{p}$. 

Condition 2 is related to the strength of the RIP assumption. The RIC is an important attribute of the RIP. The smaller the RIC, the closer $\bm{X}$ is to a unitary matrix, and thus, the corresponding condition 2 becomes more demanding. Our condition 2 provides a simple upper bound for the RIC. As expected, we observe that, with the increasing value of $s$, our RIC bound reduces, implying that sparse recovery gets more challenging as the model cardinality increases. This intuition is in line with that in the non-private OMP literature.

Condition 3 on the additive error in the training data is also intuitive. If the minimum true-model column norms are large enough (\emph{i.e., } the signals due to true features are sufficiently strong), then we expect that the impact of the additive error in the system will be minuscule. This is reflected by the term $\min_{j \in S_{*}}|\bm{\alpha_{j}}|$ in the numerator. We also expect that model recovery gets more complex with rising model cardinality (\emph{i.e., } rising $s$). Thus, the impact of the error in the training data will rise as reflected by the $\sqrt{s}+1$ term in the denominator. Similar to Theorem~\ref{theorem:performance-SPriFed-OMP-GRAD}, we can state the recovery guarantee of \textit{SPriFed-OMP-GRAD} below. 

% added[id=AKM]{Finally, if we consider the error bound relationship with the true model components in the last step of \textit{SPriFed-OMP} (\emph{i.e., } right before the final true feature is selected), we realize that the error variance $\kappa_{\epsilon}$ needs to be of the order (ignoring constants and logarithms) $\mathcal{O}(\frac{||\alpha_{S_{\mathcal{A}}}^c||_2 s^2}{n(1+\sqrt{s})}) = \mathcal{O}(||\alpha_{S_{\mathcal{A}}}^c||_2\frac{s^{3/2}}{\sqrt{p}(1+\sqrt{s})})$ and thus similarly to RIC we require that the sparsity of the model proportionately rise with the number of model dimensions. But here, alongside sparsity, we even have the model values that can control the maximum possible error bound. Thus, if the signal is strong enough, then even a more significant error rate will be fine. Even for the second upper bound on the error rate, we observe that similar to RIC, it depends on the model sparsity $s$, and the bound decreases with increases $s$. However, if the model $L_2$ norm increases as much with rising $s$, then clearly, the second error variance to signal will be close to a constant and have no real impact on the error rate. Thus, compared to the RIC, the conditions on the error variance are lenient and dependant on the model norm, $s$, as well as $p$ instead of just $s$ and $p$.}

\begin{theorem}\textbf{[Sparse Recovery of SPriFed-OMP-Grad]}\label{theorem:performance-SPriFed-OMP-GRAD} Under the provided system model (Section~\ref{section:problem-setup}) and Assumptions~\ref{assumption:RIP} and \ref{assumption:bounded} suppose that the following conditions hold,

\begin{enumerate}
    \item \textit{Sample Size:} $n \geq \max \Bigg\{\frac{4(\sqrt{s}+1) (1 + \sqrt{s} (\kappa_{\alpha} + 1)) \sqrt{p} \kappa_p^{'}}{\mu_p\norm{\alpha_{S_{\mathcal{A}}^c}}_2}, \frac{4\sqrt{\log(p-s)} \sqrt{p}}{\mu_p \norm{\alpha_{S_{*}}}_2} \Bigg\} = \mathcal{O}(s\sqrt{p\log p})$
    % \item \textit{Sample Size:} $n \geq \max \Bigg\{\frac{2(\sqrt{s}+1)\sqrt{p} B_c \kappa_p^{'}}{\mu_p\norm{\alpha_{S_{\mathcal{A}}^c}}_2}, \frac{4\sqrt{\log(p-s)} \sqrt{p}}{\mu_p \norm{\alpha_{S_{*}}}_2} \Bigg\} = \mathcal{O}(\sqrt{ps\log p})$
    \item \textit{RIC of} $X$ \textit{of order } $s+1$: $\zeta_{s+1} \leq \frac{1}{\sqrt{s}+1}$
    \item $\kappa_{\varepsilon} < 
    \frac{\min_{j \in S_{*}} \abs{\alpha_{j}}}{6(1+\nu \zeta_{s+1})^{3/2} \Big(\sqrt{s} + 1 + \frac{C_1 \sqrt{s} \kappa_s^{'} \sqrt{1+\zeta_{s+1}}}{n\mu_s}\Big)}$
    % \min \Bigg\{\frac{ \min_{j \in S_{*}} \abs{\alpha_{j}}}{6(1+\nu \zeta_{s+1})^{3/2} \Big(\sqrt{s} + 1 + \frac{C_1 \sqrt{s} \kappa_s^{'} \sqrt{1+\zeta_{s+1}}}{n\mu_s}\Big)}, \frac{\norm{\alpha_{S}_{*}}_2}{2\sqrt{2} (1+\sqrt{s})} \Bigg\}$
\end{enumerate}
Then the \textit{SPriFed-OMP-GRAD} algorithm will correctly recover the true basis with probability $1-8(s+1)p_b$. Note that here, $\kappa_{\alpha} = \frac{\sqrt{1+\zeta_{s+1}} (\sqrt{1+\zeta_{s+1}} \norm{\alpha_{*}} + \kappa_{\varepsilon})}{1-\zeta_{s+1}}, \kappa_p^{'} = \sqrt{2 \log (\frac{2p}{p_b})}, \kappa_{s}^{'} = \sqrt{2 \log (\frac{2s}{p_b})}$ and $B_C = 1 + 2X_M \kappa_{\varepsilon} + 2\sqrt{s} X_M^2 (\frac{\norm{(\alpha_{*})_{S^c_{\mathcal{A}}}}_{\infty}}{1-\zeta_{s+1}} + \frac{\sqrt{1+\zeta_{s+1}} \kappa_{\varepsilon}}{1-\zeta_{s+1}})$ and $p_b$ is a small positive probability value similar to Theorem~\ref{theorem:performance-SPriFed-OMP}.
\end{theorem}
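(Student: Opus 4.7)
The plan is to proceed by induction on the iteration index $l$, proving that at each step the algorithm selects a feature from the true support $S_{*}$ with high probability; after $s$ steps this yields $S_{\mathcal{A}} = S_{*}$ exactly. The induction hypothesis at step $l$ is: $S_{\mathcal{A}}^{(l)} \subseteq S_{*}$ with $|S_{\mathcal{A}}^{(l)}| = l$, and the privatized model $\tilde{\alpha}$ returned by PRIVATE-OLS is close to the true restricted OLS estimate $\bm{\alpha}_{S_{\mathcal{A}}}^{\text{OLS}} = (\bm{X}_{S_{\mathcal{A}}}^T \bm{X}_{S_{\mathcal{A}}})^{-1} \bm{X}_{S_{\mathcal{A}}}^T \bm{y}$. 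Under the hypothesis, the noisy gradient at step $l$ is $\tilde{\gamma}_l[j] = \bm{X}_j^T(\bm{y} - \bm{X}_{S_{\mathcal{A}}}\tilde{\alpha}) + \eta_j$ where $\eta_j \sim \mathcal{N}(0, \sigma_1^2 B_C^2 p)$; by Lemma~\ref{lemma:l2-max-abs-correlation2} the sensitivity bound $B_C$ is valid (this validity already requires correct support recovery so far, which is why the induction must be done jointly).

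The key analytical step is comparing the magnitudes of $\tilde{\gamma}_l[j]$ on $S_{*} \setminus S_{\mathcal{A}}$ versus $\Omega \setminus S_{*}$. For the support side, I would lower-bound $\max_{j \in S_{*} \setminus S_{\mathcal{A}}} |\bm{X}_j^T(\bm{y} - \bm{X}_{S_{\mathcal{A}}}\tilde{\alpha})|$ by applying RIP of order $s+1$ to the residual: the residual still carries signal $\bm{X}_{S_{*}\setminus S_{\mathcal{A}}} \bm{\alpha}_{S_{*}\setminus S_{\mathcal{A}}}$ up to an approximation error controlled by $\|\tilde{\alpha} - \bm{\alpha}_{S_{\mathcal{A}}}^{\text{OLS}}\|_2$ and $\bm{\varepsilon}$. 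For the non-support side, RIP-based orthogonality gives $\max_{j \in \Omega \setminus S_{*}} |\bm{X}_j^T(\cdot)| \le \zeta_{s+1}$ times a signal term plus cross-terms from $\tilde{\alpha}$-error and $\bm{\varepsilon}$. Condition 2 ($\zeta_{s+1} \le 1/(\sqrt{s}+1)$) ensures the support-side lower bound dominates the non-support-side upper bound by a positive gap of order $\|\bm{\alpha}_{S_{*}\setminus S_{\mathcal{A}}}\|_2/\sqrt{s}$. The maximum of $p$ Gaussian noise terms $\eta_j$ concentrates on scale $B_C \sqrt{p\log p}/\mu_p$; Condition 1 on $n$ then guarantees this noise envelope is smaller than the gap, so the $\argmax$ lies in $S_{*}$.

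Parallel to this I would bound $\|\tilde{\alpha} - \bm{\alpha}_{S_{\mathcal{A}}}^{\text{OLS}}\|_2$. Writing $\tilde{\alpha} = (\bm{X}_{S_{\mathcal{A}}}^T \bm{X}_{S_{\mathcal{A}}} + \bm{\eta}_\beta)^{-1}(\bm{X}_{S_{\mathcal{A}}}^T \bm{y} + \bm{\eta}_\gamma)$ with both noise matrices having iid $\mathcal{N}(0,\sigma_2^2 s)$ entries, I apply a first-order Neumann expansion: the spectral norm $\|\bm{\eta}_\beta\|_2 \le C_1 s \sigma_2$ holds with probability $1-p_b$ by the standard Gaussian matrix tail bound defining $C_1$. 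Combined with $\|\bm{X}_{S_{\mathcal{A}}}^T \bm{X}_{S_{\mathcal{A}}}\|_2 \ge n(1-\zeta_{s+1})$ from RIP, this yields $\|\tilde{\alpha} - \bm{\alpha}_{S_{\mathcal{A}}}^{\text{OLS}}\|_2 = \mathcal{O}(\sqrt{s}\kappa_s'/(n\mu_s))$ under the stated sample-size condition, accounting exactly for the $\kappa_\alpha$ factor appearing in Condition 1.

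The main obstacle will be handling the circular dependence between the sensitivity constant $B_C$ and the induction: $B_C$ is only a valid sensitivity bound when previous steps have selected correctly, but the proof of correct selection at step $l$ uses noise calibrated to $B_C$. I would resolve this by carrying both claims (correct selection and bounded $\tilde{\alpha}$-error) jointly in the induction and applying a single union bound across all $s+1$ iterations. Per iteration, roughly eight error events must be controlled: (i) the Gaussian max-over-$p$ deviation of $\bm{\eta}_\gamma$ in the gradient, (ii) the spectral bound on the PRIVATE-OLS noise matrix, (iii) the Gaussian bound on the PRIVATE-OLS right-hand noise, (iv)--(vi) the concentration of $\bm{X}^T\bm{\varepsilon}$ on three relevant subspaces, and (vii)--(viii) auxiliary RIP-based cross-term bounds. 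Each contributes $p_b$ probability, producing the $1 - 8(s+1)p_b$ overall success probability. Condition 3 on $\kappa_{\varepsilon}$ then certifies that the signal-to-noise ratio survives all of these contributions in the final comparison.
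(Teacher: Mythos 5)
Your proposal follows essentially the same route as the paper's proof: induction over the $s$ iterations, a Case-A/Case-B comparison of the (lower-bounded) support-side gradient magnitude against the (upper-bounded) non-support-side magnitude using RIP near-orthogonality, a Woodbury/Neumann-type expansion of $(\bm{X}_{S_{\mathcal{A}}}^T\bm{X}_{S_{\mathcal{A}}}+\bm{\eta}_\beta)^{-1}$ controlled by the Rudelson--Vershynin spectral bound defining $C_1$, Gaussian max-concentration for the $\mathcal{O}(B_C\sqrt{p\log p}/\mu_p)$ gradient-noise envelope, and a union bound over the per-iteration failure events yielding $1-8(s+1)p_b$; your resolution of the $B_C$-circularity by carrying the selection and $\tilde{\alpha}$-accuracy claims jointly through the induction is also how the paper (via Lemma~\ref{lemma:l2-max-abs-correlation2} and DP post-processing of the fixed released model) handles it. No substantive gap.
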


\textit{Remark (Benefits of Algorithm~\ref{algorithm:SPriFed-OMP-GRAD} over Algorithm~\ref{algorithm:SPriFed-OMP}}:
Readers can see that the conditions of Theorem~\ref{theorem:performance-SPriFed-OMP-GRAD} are on the same order as that of Theorem~\ref{theorem:performance-SPriFed-OMP}. However, this is because we assume the DP noise magnitudes are the same in both cases. In practice and as we discussed earlier, due to the decreasing magnitude of the gradients, Algorithm~\ref{algorithm:SPriFed-OMP-GRAD} can benefit from more aggressive clipping and lower DP noise magnitude. We will present numerical results to demonstrate the benefits of these enhancement in Section~\ref{section:empirical-results}.

% Although the analysis for both algorithms follows a fundamentally similar idea (of picking the feature with the highest correlation), we can see that the analysis of Algorithm~\ref{algorithm:SPriFed-OMP-GRAD} demonstrates clear benefits. First, we observe that Algorithm~\ref{algorithm:SPriFed-OMP-GRAD} guarantees sparse basis recovery guarantees similar to Algorithm~\ref{algorithm:SPriFed-OMP} while requiring a slightly less-restrictive RIP guarantee (a constant factor difference). Secondly, the new algorithm's analysis is greatly simplified (as shown in the proof of Theorem~\ref{theorem:performance-SPriFed-OMP-GRAD}) since it avoids the most complicated noise interactions between the various matrix-based correlations. The third benefit of the approach lies in its flexibility and the possibility of a more optimized implementation based on clipping the correlation or gradient values. We will study this in detail in Section~\ref{section:empirical-results}. Note that we do not require gradient clipping in our analysis of our originally proposed algorithm since we bound the $L_2$ sensitivity of the correlation based only on the Assumptions~\ref{assumption:RIP} and \ref{assumption:bounded}. Although the current analysis does not vary too much in its sample size order (from the original \textit{SPriFed-OMP} analysis), we will during the experiments; we are able to get away with clipping the gradients much lower to achieve better performance guarantees with low sample sizes.}

Based on our main Theorems~\ref{theorem:performance-SPriFed-OMP} and \ref{theorem:performance-SPriFed-OMP-GRAD}, next, we provide results on the parameter estimation error and empirical risk. Both of these results are optimal as they are in the same order as several previous results for both the parameter estimation error \citep{meinshausen2009lasso, wasserman2009high} and the empirical risk analysis \citep{kifer2012private, bassily2014private} even under the assumption that they have already known the correct basis before-hand. In particular, these error bounds depend on $s$ but are independent of the dimension $p$ since we have already identified the correct sparse basis as in Theorems~\ref{theorem:performance-SPriFed-OMP} and \ref{theorem:performance-SPriFed-OMP-GRAD}. 

\begin{theorem}\label{theorem:estimation-error} \textbf{[Estimation Error of \textit{SPriFed-OMP} and \textit{SPriFed-OMP-Grad}]} Under the same system model and assumptions as Theorem~\ref{theorem:performance-SPriFed-OMP}, with a high probability of $1-p_b$, the estimation error satisfies, 
\begin{align*}
    \Delta \bm{\alpha} & \triangleq ||\bm{\hat{\alpha} - \alpha_{S_{*}}}||_2 \nonumber
    \\ & \leq \frac{(1-\zeta_{s+1})\sqrt{2s\log(2s/p_b))}\sigma_{\epsilon} X_M}{\sqrt{n}} + \frac{ s\sqrt{2\log(\frac{2s}{p_b})}}{n\mu_s(1-\zeta_{s+1})} 
    \\ & + \frac{s^{3/2} \kappa_M \sqrt{2\log(\frac{2s^2}{p_b})}}{\mu_s n(1-\zeta_{s+1})^2}\Big((1+\zeta_{s+1})||\alpha_{S_{*}}||_2 
    \\ & + \sqrt{1+\zeta_{s+1}} \kappa_{\epsilon} +\frac{\sqrt{s}\sqrt{2\log(\frac{2s}{p_b})}}{\mu_s n} \Big) 
    \\ & = \mathcal{O}\Big(\sqrt{\frac{s\log(s)}{n}} \Big)
    % \\ & = \mathcal{O}(\frac{s^{3/2}\sqrt{\log(s)}||\alpha_{S_{*}}||_2}{n}) \nonumber
\end{align*}
where $\bm{\hat{\alpha}}$ is the predicted model from the \textit{SPriFed-OMP} algorithm, $\bm{\alpha_{S_{*}}}$ is the true ground-truth model, $\zeta_{s+1}$ is the RIC constant for the matrix $\bm{X}$ and $\mu_s$ is as defined in Algorithm~\ref{algorithm:SPriFed-OMP}. Our result holds with a high probability of $1-4p_b$. 
\end{theorem}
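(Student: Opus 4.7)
}

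The plan is to first reduce the problem to the ``basis-correct'' case, then perform a matrix-perturbation expansion on the noisy OLS solution. By Theorems~\ref{theorem:performance-SPriFed-OMP} (or \ref{theorem:performance-SPriFed-OMP-GRAD}), with high probability $S_{\mathcal{A}} = S_{*}$, so $\bm{\hat{\alpha}}$ is the output of \textbf{PRIVATE-OLS} on the true support. Writing $A \triangleq \bm{X}_{S_{*}}^T \bm{X}_{S_{*}}$ and $B \triangleq \bm{X}_{S_{*}}^T \bm{y}$, we have $\bm{\hat{\alpha}} = (A + \eta_{\beta})^{-1}(B + \eta_{\gamma})$, where $\eta_\beta, \eta_\gamma$ have iid $\mathcal{N}(0, s\sigma_2^2)$ entries. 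I will decompose
\begin{align*}
    \bm{\hat{\alpha}} - \bm{\alpha}_{S_{*}} = \underbrace{(\bm{\hat{\alpha}} - A^{-1}B)}_{\text{DP perturbation}} + \underbrace{(A^{-1}B - \bm{\alpha}_{S_{*}})}_{\text{non-private OLS error}},
\end{align*}
and bound the two contributions separately.

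For the non-private piece, substituting $\bm{y} = \bm{X}_{S_{*}} \bm{\alpha}_{S_{*}} + \bm{\epsilon}$ gives $A^{-1}B - \bm{\alpha}_{S_{*}} = A^{-1} \bm{X}_{S_{*}}^T \bm{\epsilon}$. Assumption~\ref{assumption:RIP} implies $\lambda_{\min}(A) \geq (1-\zeta_{s+1})$ (with whatever column normalisation is inherited from the bound $X_M$), so $\|A^{-1}\|_{\mathrm{op}} \leq 1/(1-\zeta_{s+1})$. A Gaussian tail bound on $\bm{X}_{S_{*}}^T \bm{\epsilon}$ together with Assumption~\ref{assumption:bounded} yields $\|\bm{X}_{S_{*}}^T \bm{\epsilon}\|_2 \lesssim X_M \sqrt{n s \log(s/p_b)}\sigma_\epsilon$ with probability $\geq 1-p_b$, which, divided by $\lambda_{\min}(A)$ and the appropriate $\sqrt{n}$-scaling, produces the first term $\frac{(1-\zeta_{s+1})\sqrt{2s \log(2s/p_b)}\sigma_\epsilon X_M}{\sqrt{n}}$ in the theorem's bound.

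For the DP piece I will use the identity
\begin{align*}
    (A + \eta_\beta)^{-1}(B + \eta_\gamma) - A^{-1} B = (A + \eta_\beta)^{-1}\bigl(\eta_\gamma - \eta_\beta A^{-1} B\bigr),
\end{align*}
together with the triangle inequality, which gives $\|\bm{\hat{\alpha}} - A^{-1}B\|_2 \leq \|(A+\eta_\beta)^{-1}\|_{\mathrm{op}}\bigl(\|\eta_\gamma\|_2 + \|\eta_\beta\|_{\mathrm{op}}\|A^{-1}B\|_2\bigr)$. The operator-norm bound $\|(A+\eta_\beta)^{-1}\|_{\mathrm{op}} \leq 1/(1-\zeta_{s+1} - \|\eta_\beta\|_{\mathrm{op}})$ follows from Weyl's inequality, and will simplify to $\sim 1/(1-\zeta_{s+1})$ once the sample-size condition of Theorem~\ref{theorem:performance-SPriFed-OMP} ensures $\|\eta_\beta\|_{\mathrm{op}}$ is dominated. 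I will use standard concentration of Gaussian matrices, $\|\eta_\beta\|_{\mathrm{op}} \leq \sigma_2 \sqrt{s} \cdot C_1 \sqrt{s} = C_1 s \sigma_2$ with probability $1-p_b$ (exactly the $C_1 \sqrt{s}$ event defined just before Theorem~\ref{theorem:performance-SPriFed-OMP}), and the Gaussian tail $\|\eta_\gamma\|_2 \leq \sigma_2 \sqrt{s} \cdot \sqrt{2s \log(2s/p_b)}$ with probability $1-p_b$. Finally, $\|A^{-1}B\|_2 \leq \|\bm{\alpha}_{S_{*}}\|_2 + \|A^{-1}\bm{X}_{S_{*}}^T \bm{\epsilon}\|_2$ is bounded exactly as in the non-private step, injecting the $(1+\zeta_{s+1})\|\bm{\alpha}_{S_{*}}\|_2 + \sqrt{1+\zeta_{s+1}}\kappa_\epsilon$ factor visible in the stated expression. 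Collecting these pieces, substituting $\sigma_2 = 1/\mu_s$, and dividing by $n$ where the sensitivity scaling is absorbed yields the remaining terms of the bound; the final $\mathcal{O}(\sqrt{s \log s / n})$ follows because the dominant term is the non-private OLS error while all DP contributions scale at worst as $s^{3/2}/(n \mu_s)$, which is lower order under the sample-size condition of Theorem~\ref{theorem:performance-SPriFed-OMP}.

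The main obstacle is neither the decomposition nor the individual tail bounds, which are all standard, but rather the careful bookkeeping of the perturbation of $A^{-1}$ into $(A + \eta_\beta)^{-1}$ and the propagation of this perturbation through the product with $A^{-1} B$; in particular, one must verify that the required lower bound on $n$ in Theorem~\ref{theorem:performance-SPriFed-OMP} is strong enough for the Neumann expansion (equivalently, Weyl's-inequality argument) to close, so that $\|(A + \eta_\beta)^{-1}\|_{\mathrm{op}}$ can indeed be treated as $\Theta(1/(1-\zeta_{s+1}))$. A final union bound over the basis-recovery event and the four Gaussian tail/matrix-norm events yields the claimed $1 - 4 p_b$ probability (subsumed by the larger recovery-event probability already controlled).
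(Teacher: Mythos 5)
Your proposal is correct and follows essentially the same route as the paper: condition on exact support recovery, expand the noisy OLS estimator via a matrix-inversion perturbation identity (your resolvent identity $(A+\eta_\beta)^{-1}-A^{-1}=-(A+\eta_\beta)^{-1}\eta_\beta A^{-1}$ is algebraically the same Woodbury/Kailath step the paper uses), and bound the three resulting pieces with RIP eigenvalue bounds, the Gaussian max-concentration for $\eta_\gamma$ and $X_{S_*}^T\epsilon$, and the Rudelson--Vershynin singular-value bound for $\eta_\beta$, closing with a union bound over four events. The only cosmetic difference is that you bound $\|(A+\eta_\beta)^{-1}\|$ directly via Weyl where the paper keeps the factor $(\mathbb{I}+\eta_\beta A^{-1})^{-1}$ separate (its $\kappa_M$ lemma), and your third term carries $A^{-1}B$ rather than $A^{-1}(B+\eta_\gamma)$, which merely drops a lower-order term.
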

\textbf{Proof:} The proof is omitted due to page limits. % provided in Appendix section~\ref{appendix:risk-estimation-error}.

\begin{theorem}\label{theorem:risk-analysis} \textbf{[Risk Analysis of \textit{SPriFed-OMP}]} Under the same system model and assumptions as Theorem~\ref{theorem:performance-SPriFed-OMP}, with probability $1-4p_b$ the risk $\Delta R$ from algorithm \textit{SPriFed-OMP} is bounded by,
\begin{align*}
    \Delta R & \triangleq R(\bm{\hat{\alpha}}; \bm{X}, \bm{y}, n) \\ & = \frac{1}{n} \sum_{i=1}^n \Big((\bm{x_i} \bm{\hat{\alpha}} - y_i)^2 - (\bm{x_i} \bm{\alpha_{S_{*}}} - y_i)^2\Big)  
    \\ & \leq 2\Big(\frac{(1+\zeta_{s+1}) s \sqrt{2\log(\frac{2s}{p_b})}}{\mu_s n(1-\zeta_{s+1})}\Big)^2 
    \\ & + 2\Big(\frac{(1+\zeta_{s+1})\sqrt{2ns\log(2s/p_b))}\sigma_{\epsilon} X_M }{n(1-\zeta_{s+1})}\Big)^2 
    \\ & + 2\Big(\frac{s^{3/2} \kappa_M\sqrt{2\log(\frac{2s^2}{p_b})}}{ \mu_s n(1-\zeta_{s+1})^2}\Big((1+\zeta_{s+1})||\alpha_{S_{*}}||_2
    \\ & + \sqrt{1+\zeta_{s+1}} \kappa_{\epsilon} +\frac{ \sqrt{s}\sqrt{2\log(\frac{2s}{p_b})}}{ \mu_s n} \Big)\Big)^2
    \\ & = \mathcal{O}\Big(\frac{2s\log(s))}{n}\Big)
    % \\ & = \mathcal{O}(\frac{s^3||\alpha_{S_{*}}||^2 \log (s)}{n^2})
\end{align*}
where $\bm{\hat{\alpha}}$ is the predicted model from the \textit{SPriFed-OMP} algorithm, $\bm{\alpha_{S_{*}}}$ is the true ground-truth model, $\zeta_{s+1}$ is the RIC constant for the matrix $\bm{X}$ and $\mu_s$ is as defined is defined in Algorithm~\ref{algorithm:SPriFed-OMP}. $\bm{X, y}$ are the data matrix and the label vector while $n$ is the number of clients. 
\end{theorem}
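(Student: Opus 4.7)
I would first condition on the high-probability event from Theorem~\ref{theorem:performance-SPriFed-OMP} that the recovered support equals the true support $S_{*}$, so that both $\hat{\bm{\alpha}}$ and $\bm{\alpha_{S_{*}}}$ are supported on the same $s$-set and thus $\hat{\bm{\alpha}}-\bm{\alpha_{S_{*}}}$ is $s$-sparse. Plugging in $y_i = \bm{x_i}\bm{\alpha_{S_{*}}} + \epsilon_i$ and expanding the two squared residuals, the risk collapses to the exact identity
\[
\Delta R \;=\; \frac{1}{n}\|\bm{X}(\hat{\bm{\alpha}}-\bm{\alpha_{S_{*}}})\|_2^2 \;-\; \frac{2}{n}\bm{\epsilon}^T \bm{X}(\hat{\bm{\alpha}}-\bm{\alpha_{S_{*}}}),
\]
which splits the risk into a quadratic piece and a cross piece that I would bound separately.

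\textbf{Bounding the two pieces.} For the quadratic term, $s$-sparsity of $\hat{\bm{\alpha}}-\bm{\alpha_{S_{*}}}$ and Assumption~\ref{assumption:RIP} give $\|\bm{X}(\hat{\bm{\alpha}}-\bm{\alpha_{S_{*}}})\|_2^2 \leq (1+\zeta_{s+1})\|\hat{\bm{\alpha}}-\bm{\alpha_{S_{*}}}\|_2^2$, yielding the $(1+\zeta_{s+1})$ factor appearing in each of the three summands of the stated bound. For the cross term, I would apply Cauchy--Schwarz on the common support, $|\bm{\epsilon}^T\bm{X}(\hat{\bm{\alpha}}-\bm{\alpha_{S_{*}}})| \leq \|\bm{X}_{S_{*}}^T\bm{\epsilon}\|_2 \cdot \|\hat{\bm{\alpha}}-\bm{\alpha_{S_{*}}}\|_2$, and then bound $\|\bm{X}_{S_{*}}^T\bm{\epsilon}\|_2$ by Gaussian concentration: each coordinate $(\bm{X}_{S_{*}}^T\bm{\epsilon})_j$ is centered Gaussian with variance at most $nX_M^2\sigma_\epsilon^2$ (using Assumption~\ref{assumption:bounded}), so a standard max-of-$s$-Gaussians tail plus union bound yield $\|\bm{X}_{S_{*}}^T\bm{\epsilon}\|_\infty \leq \sigma_\epsilon X_M \sqrt{2n\log(2s/p_b)}$ and therefore $\|\bm{X}_{S_{*}}^T\bm{\epsilon}\|_2 \leq \sigma_\epsilon X_M \sqrt{2ns\log(2s/p_b)}$ with probability at least $1-p_b$. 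Note that this tail bound does not require independence between $\hat{\bm{\alpha}}$ and $\bm{\epsilon}$; it only uses the marginal Gaussianity of $\bm{\epsilon}$ against the fixed matrix $\bm{X}$. This step produces exactly the $\sqrt{2ns\log(2s/p_b)}\,\sigma_\epsilon X_M$ factor that drives the middle term of the stated bound.

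\textbf{Assembling the bound and the main obstacle.} At this point both pieces are controlled by $\|\hat{\bm{\alpha}}-\bm{\alpha_{S_{*}}}\|_2$, which is in turn bounded by Theorem~\ref{theorem:estimation-error} as a sum of three contributions: a statistical OLS error, a DP-noise term from $\bm{\eta_{\gamma}}$, and a DP-noise term from $\bm{\eta_{\beta}}$. I would substitute this decomposition into the RIP-bounded quadratic piece, apply $(u+v)^2 \leq 2u^2 + 2v^2$ to split the quadratic, and apply $2ab \leq \lambda a^2 + \lambda^{-1}b^2$ to the cross term with $\lambda$ chosen to absorb the OLS-statistical contribution into the $\sqrt{2ns\log(2s/p_b)}\sigma_\epsilon X_M$ factor already isolated above. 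This pairing is what produces the three separate squared summands (rather than nine cross-products) in the theorem statement. A final union bound over the good events of Theorems~\ref{theorem:performance-SPriFed-OMP} and \ref{theorem:estimation-error} and the Gaussian tail step delivers the stated $1-4p_b$ probability. The main obstacle is not conceptual but bookkeeping: one must match each of the three summands in the final risk bound to the correct combination of an estimation-error piece, the RIP factor $(1+\zeta_{s+1})$, and the cross-term factor, so that the $(1-\zeta_{s+1})$-denominators and the $\sqrt{\log(\cdot)}$ factors line up as written; it is this tedious but routine algebra that the paper defers for page reasons.
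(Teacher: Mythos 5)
Your proposal is correct in substance but follows a genuinely different route from the paper's own proof. You start from the exact identity $\Delta R = \frac{1}{n}\norm{\bm{X}(\hat{\bm{\alpha}}-\bm{\alpha_{S_{*}}})}_2^2 - \frac{2}{n}\bm{\epsilon}^T\bm{X}(\hat{\bm{\alpha}}-\bm{\alpha_{S_{*}}})$, bound the quadratic piece by RIP, handle the cross term by Cauchy--Schwarz against $\norm{X_{S_{*}}^T\bm{\epsilon}}_2$ (which is exactly the paper's Lemma~\ref{lemma:X-error-product}), and then invoke Theorem~\ref{theorem:estimation-error} as a black box for $\norm{\hat{\bm{\alpha}}-\bm{\alpha_{S_{*}}}}_2$. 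The paper never forms this decomposition: it substitutes the closed form $\hat{\bm{\alpha}} = (X_{S_{*}}^TX_{S_{*}}+\eta_{\beta})^{-1}(X_{S_{*}}^Ty+\eta_{\gamma_{S_{*}}})$, expands via the Woodbury/Kailath identity, and uses the fact that $X_{S_{*}}\hat{\bm{\alpha}}-y$ equals $-(\mathbb{I}-P)\bm{\epsilon}$ plus a vector lying in the column space of $X_{S_{*}}$ (with $P$ the projection onto that column space); the $\bm{\epsilon}$-dependence is then carried as the projected term $P\bm{\epsilon}=X_{S_{*}}(X_{S_{*}}^TX_{S_{*}})^{-1}X_{S_{*}}^T\bm{\epsilon}$ inside a single squared norm, which is split into the three squared summands of the statement --- this is why no explicit cross term appears in the paper's final bound, and why the paper re-derives the same intermediate quantities as the estimation-error proof (each premultiplied by $X_{S_{*}}/\sqrt{n}$) rather than citing Theorem~\ref{theorem:estimation-error} directly. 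Your route is more modular and makes the dependence between $\hat{\bm{\alpha}}$ and $\bm{\epsilon}$ explicit (you correctly note that Cauchy--Schwarz sidesteps it); the paper's route is what produces the three summands exactly as written. The only caveat for your version is constants: splitting $(T_1+T_2+T_3)^2$ costs a factor $3$ rather than $2$, and absorbing the cross term via $2ab\le \lambda a^2+\lambda^{-1}b^2$ shifts where the $(1\pm\zeta_{s+1})$ factors land, so you recover the theorem only up to universal constants while matching the stated $\mathcal{O}\big(s\log(s)/n\big)$ order --- which, as you say, is bookkeeping rather than a gap.
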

\textbf{Proof:} The proof is omitted due to page limits. % in Appendix section~\ref{appendix:risk-estimation-error}.

\section{Proof Sketch of Theorems~\ref{theorem:performance-SPriFed-OMP} and \ref{theorem:performance-SPriFed-OMP-GRAD}}\label{section:proof-sketch-perf}

This section identifies the key steps required to prove Theorems~\ref{theorem:performance-SPriFed-OMP} and \ref{theorem:performance-SPriFed-OMP-GRAD}. From Algorithms~\ref{algorithm:SPriFed-OMP} and \ref{algorithm:SPriFed-OMP-GRAD}, we notice that in the very first round, the server picks the feature with a maximum absolute data-response correlation over the entire feature set. To ensure that we only pick features from the true feature set, we thus require expression~\ref{eqn:step1-eqn} below to hold.
\begin{align}\label{eqn:step1-eqn}
    \min_{j \in S_{*}} |\bm{X_j^Ty + \eta_{\gamma_j}}| \geq \max_{j \notin S_{*}} |\bm{X_j^Ty + \eta_{\gamma_j}}|.
\end{align}
In other words, for correct basis recovery to occur in the first round, the minimum absolute feature correlation over the true set should exceed the maximum absolute feature correlation over all the features not in this true set. The strategy for proving \ref{eqn:step1-eqn} is to bound the randomness-inducing terms (\emph{i.e., } noise terms) with high probability. This high-probability event will also dictate the relationship between the sample size, the restricted isometry constant, and the rest of the system constants. Using the restricted isometry property over the dataset contributed by all clients, we can then show that the above inequality holds. The remaining rounds are handled similarly over the updated residuals. The complete proof is available in the Appendix sections~\ref{appendix:SPriFed-OMP-optimality-main} and \ref{appendix:new-OMP-optimality-main}.

\textit{Remark:} Here, we re-iterate that both algorithms' analyses have subtle but critical differences. In Algorithm~\ref{algorithm:SPriFed-OMP}, the noise is added to column correlations (that are collaboratively computed by the clients), and then the private model and the aggregate correlation between columns and the residual (\emph{i.e., } the gradient) are both computed by the server. In Algorithm~\ref{algorithm:SPriFed-OMP-GRAD}, the clients first compute their individual gradients and then collaboratively find the average private gradient. The server computes the private model similarly to in Algorithm~\ref{algorithm:SPriFed-OMP} but then shares this model with the individual clients. Here, the clients then recompute their gradients for the next step. Algorithm~\ref{algorithm:SPriFed-OMP} adds the noise to the correlations of the gradient, which are then multiplied. On the other hand, in 
Algorithm~\ref{algorithm:SPriFed-OMP-GRAD}, the bulk of the noise value is added outside of the correlations, directly to the gradient.

Although our analysis idea is similar to several OMP proofs in the literature (such as \citet{chen2012orthogonal}), we have to redo the analysis because the way with which we add noise to the system (to ensure DP) is completely different. Notably, \citet{chen2012orthogonal} adds noise to the measurement matrix $\bm{X}$, which has several issues. First, it does not even protect the privacy of the response $y$. Second, the noise variance is of the order-$np$, which is too high and likely leads to poor estimation accuracy. Indeed, the estimation error bound (Corollary 8) in \citet{chen2012orthogonal} is given by $||\bm{\alpha_{*}} - \bm{\hat{\alpha}}||_2 = \mathcal{O}(\frac{(\sigma_w + \sigma_w^2) ||\bm{\alpha_{*}}||_2 \sqrt{s\log p}}{n})$, where $\sigma_w$ is the standard deviation of the noise matrix $\bm{W^TW}$ such that $\bm{W}$ is the additive noise added to the original matrix $\bm{X}$. From their analysis, the term $\sigma_w + \sigma_w^2 = \mathcal{O}(p^2)$ when $\sigma_w$ is set to the DP-compliant noise value. The above error bound, therefore, will be huge when $p \gg n$, negating its usefulness in the high-dimensional regime. The experimental results in \citet{chen2012orthogonal} assume small values of $\sigma_w$, not at the level of $\mathcal{O}(p)$ needed for achieving DP. Third, \citet{chen2012orthogonal} does not provide a condition on the RIC. They assume that the matrix elements are subgaussian i.i.d.; thus, their model implicitly determines the RIC constant in their results. In contrast, we allow an arbitrary design matrix $\bm{X}$ (irrespective of the matrix's data distribution) that satisfies an RIP condition. Thus, it is important to quantify the condition of RIC to guarantee sparse recovery. 

\section{Empirical Results}\label{section:empirical-results}

In this section, we compare our proposed algorithm \textit{SPriFed-OMP} with a version of the \textit{DP-SGD} algorithm \citep{abadi2016deep} that uses $L_1$ regularization \citep{zhao2006model}. The first set of experiments is based on a synthetic data set so that we can freely vary the sample size $n$, the model dimension $p$, and the model sparsity $s$. Given $p$ and $n$, we first generate $\bm{X}$ with \emph{i.i.d.} elements, each of which is from $\mathcal{N}(0,1)$. Given the model sparsity $s$, we then generate the non-zero coefficients of the model parameters $\bm{\alpha_{*}}$, each of which is from the distribution $\mathcal{N}(2, 1)$. We then add additive error with mean zero and standard deviation $\sigma_\epsilon=0.001$ to generate the response $y$. 
We then clip (to $1$ or $-1$) every element in $\bm{X}$ and $\bm{y}$ with magnitude greater than $1$ and then re-scale the whole matrix/vector appropriately to maintain their unit variance as required by the RIP condition \ref{assumption:RIP}.

\subsection{Warm-up Experiments: Intuition behind our proposed enhancements}

Before moving on to the main experimental results, we revisit the enhancements noted in Section~\ref{section:Private OMP}. Note that both experiments below use the synthetic data setup above.

\textbf{Importance of Enhancement 1: Adding lower noise to selected features}

Here, we test the sparse basis recovery capabilities of Algorithm~\ref{algorithm:SPriFed-OMP} on a synthetic toy dataset. We compare Algorithm~\ref{algorithm:SPriFed-OMP} with another modified version, where Enhancement 1 is removed, i.e., we directly use the artifacts privatized by the higher order noise in lines \textit{5} and \textit{11} to estimate the private model. Thus, the private model will now have noise of order $\mathcal{O}(\sqrt{p})$ instead of $\mathcal{O}(\sqrt{s})$. In Table~\ref{table:enhancement-1}, we observe worse test MSE and poor sparse basis recovery performance when enhancement 1 is removed. Thus, we conclude that we need to re-privatize, \emph{i.e., } add lower-order noise to the correlations/gradients computed on the chosen feature set to obtain competitive sparse basis recovery in the DP-FL case.

\begin{table}[h!]
\centering
% \rowcolors{2}{gray!25}{white}
\scalebox{0.7}{
\begin{tabular}{|c|c|c|c|c|} \hline
\multirow{2}{*}{Samples} &
\multicolumn{2}{c}{Test MSE} &
\multicolumn{2}{c|}{Number of samples recovered}  \\
\cline{2-5}
& \textit{SPriFed-OMP} (low noise) & \textit{SPriFed-OMP} (high noise) & \textit{SPriFed-OMP} (low noise) & \textit{SPriFed-OMP} (high noise) \\ \hline
$n=2000$ & 0.83 & 335.84 & 1 & 0 \\
\rowcolor[gray]{.8}
$n=4000$ & 0.52 & 6.45 & 2.67 & 1.67 \\
$n=8000$ & 0.35 & 0.77 & 7.67 & 5.67 \\ \hline
\end{tabular}
}
\caption{\textbf{Enhancement 1 Performance Improvement:} Comparison of Test MSE and basis recovery performance for when Algorithm~\ref{algorithm:SPriFed-OMP} runs with and without enhancement 1. The experiments are run over 3 randomized trials over the synthetic setup (above) with $p=10000$, $s=10$, and a privacy guarantee of $(5.74, 10^{-4})$ DP.}
\label{table:enhancement-1}
\end{table}

\textbf{Importance of Enhancement 2: Privatizing the gradient}

In this experiment, we look closely at the two kinds of artifacts released by Algorithms~\ref{algorithm:SPriFed-OMP} and \ref{algorithm:SPriFed-OMP-GRAD}, \emph{i.e.} the correlations and gradients respectively. We measure the change in the artifacts' value over iterations by measuring and plotting the total $l_2$ norm values for each artifact. To do so, we run a non-private version of Algorithm~\ref{algorithm:SPriFed-OMP} and plot the changing correlation norms and gradient $l_2$ norms over multiple iterations. Here, in each iteration we use the correlation value with respect to the current newly chosen feature $j$ (referred in Figure~\ref{figure:enhancement-2}'s legend). We observe from Figure~\ref{figure:enhancement-2} that the gradient norm values decrease over time while most of the correlation norms are constant. As expected from the way OMP is designed, only the value of the correlation norm $\norm{\bm{X_j^Ty}}_2$ decreases over iterations since the newly chosen features have lower correlation values than previous features. Thus, we conclude that we can employ more aggressive clipping in Algorithm~\ref{algorithm:SPriFed-OMP-GRAD} than in Algorithm~\ref{algorithm:SPriFed-OMP}. That is, for gradients, a lower clipping bound should work well, especially toward the final rounds, while such lower clipping bounds will significantly affect the correlations irrespective of the iteration.

 \begin{figure*}[htb!]
        \centering
        \includegraphics[ width=0.6\textwidth]{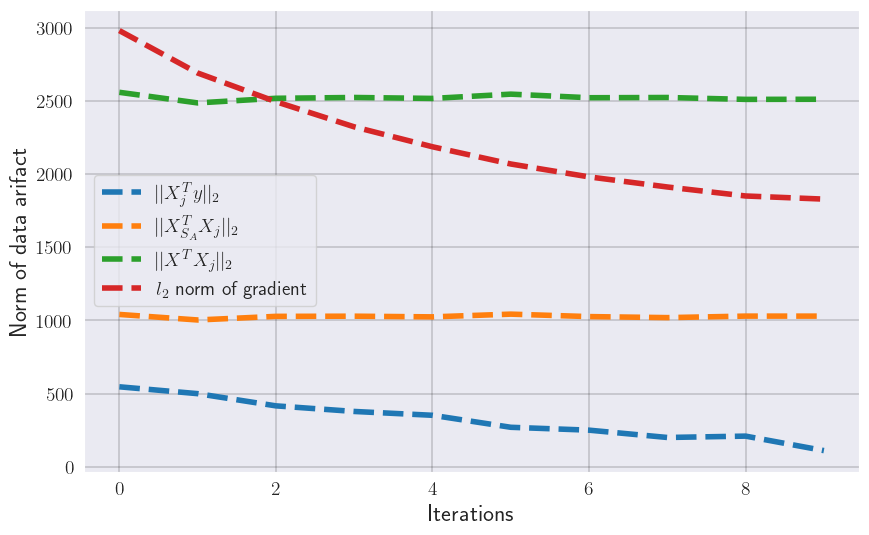}
    % \captionsetup{labelformat=empty}
    \caption{\textbf{Enhancement 2 Performance Intuition:} Change in size of artifacts (correlations and gradient) measured by their norm over multiple iterations. } \label{figure:enhancement-2}
\end{figure*}

\begin{figure*}[htb!]
    \centering
    \begin{subfigure}[b]{0.48\textwidth}
        \centering
        \includegraphics[ width=\textwidth]{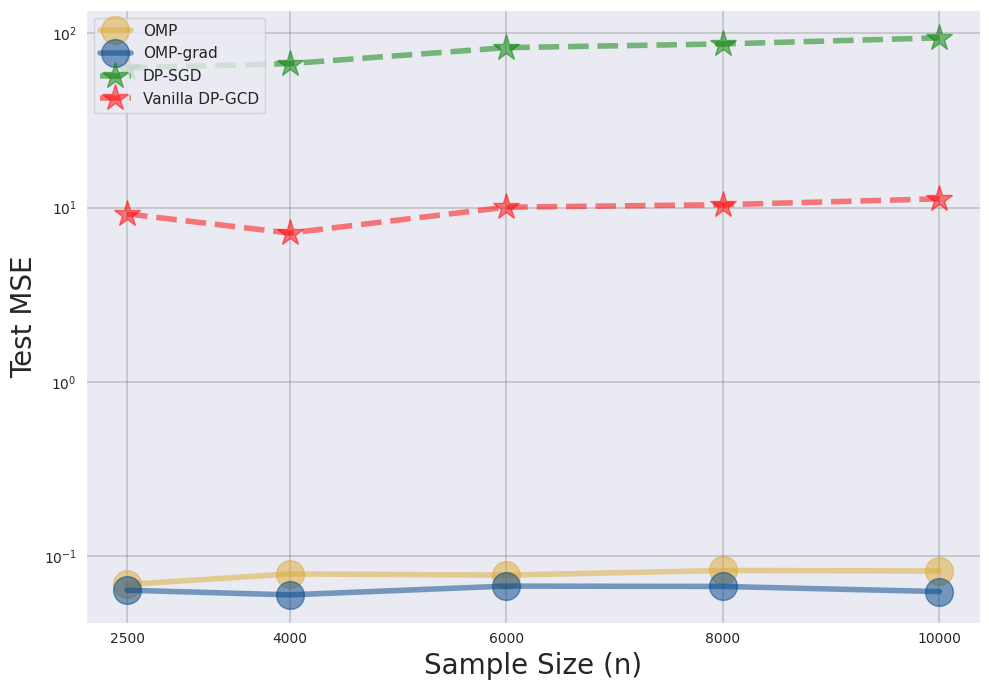}
        \caption{fix $s = 5, n=2000$, vary $p$}
    \end{subfigure}
    \hfill
    \begin{subfigure}[b]{0.48\textwidth}
        \centering
        \includegraphics[width=\textwidth]{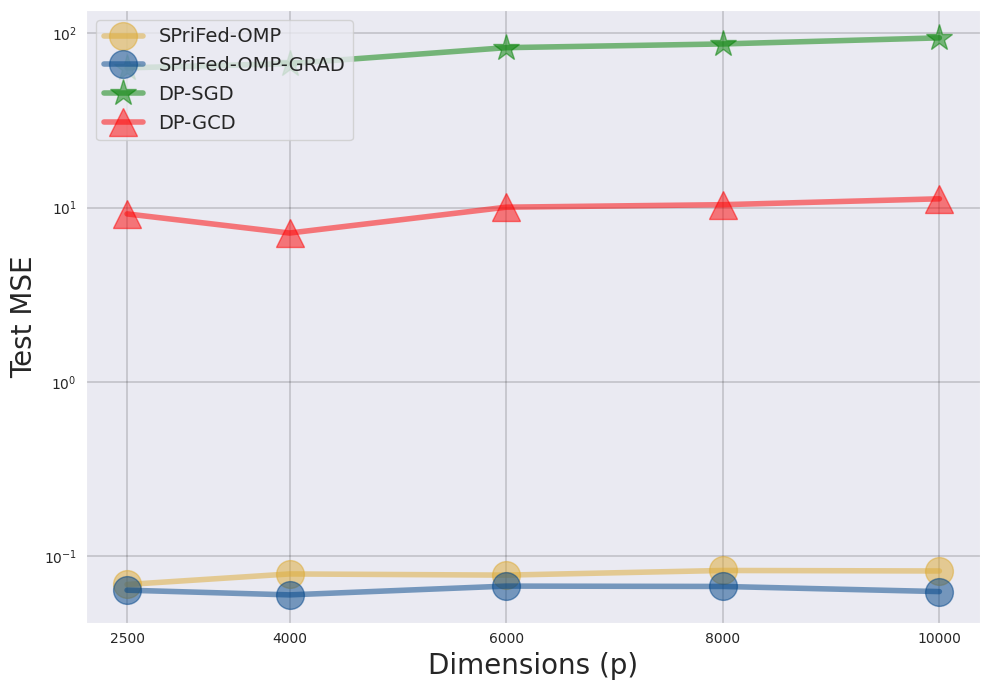}
        \caption{fix $s = 5, p=2500$, vary $n$}
    \end{subfigure}
    % \captionsetup{labelformat=empty}
    \caption{Test MSE is shown for both \textit{SPriFed-OMP} and \textit{DP-SGD} for privacy parameters $(4.94, 10^{-4})$. Figure (a) fixes the sample size and varies the model dimensions; Figure (b) fixes the model dimensions and varies the sample size. Measurements averaged over $3$ randomized simulation runs.}
    \label{figure:perf}
\end{figure*}

\begin{figure*}[htb!]
    \centering
    \begin{subfigure}[b]{0.48\textwidth}
        \centering
        \includegraphics[ width=\textwidth]{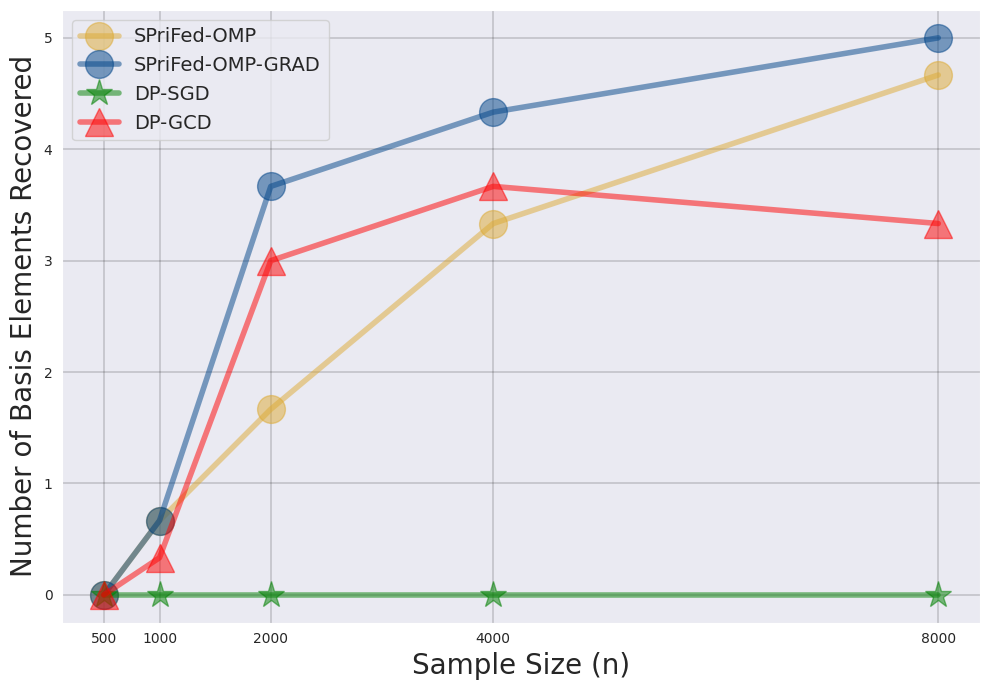}
        \caption{fix $s=5, p=2500$, vary $n$}
    \end{subfigure}
    %\bigbreak
    \hfill
    \begin{subfigure}[b]{0.48\textwidth}
        \centering
        \includegraphics[width=\textwidth]{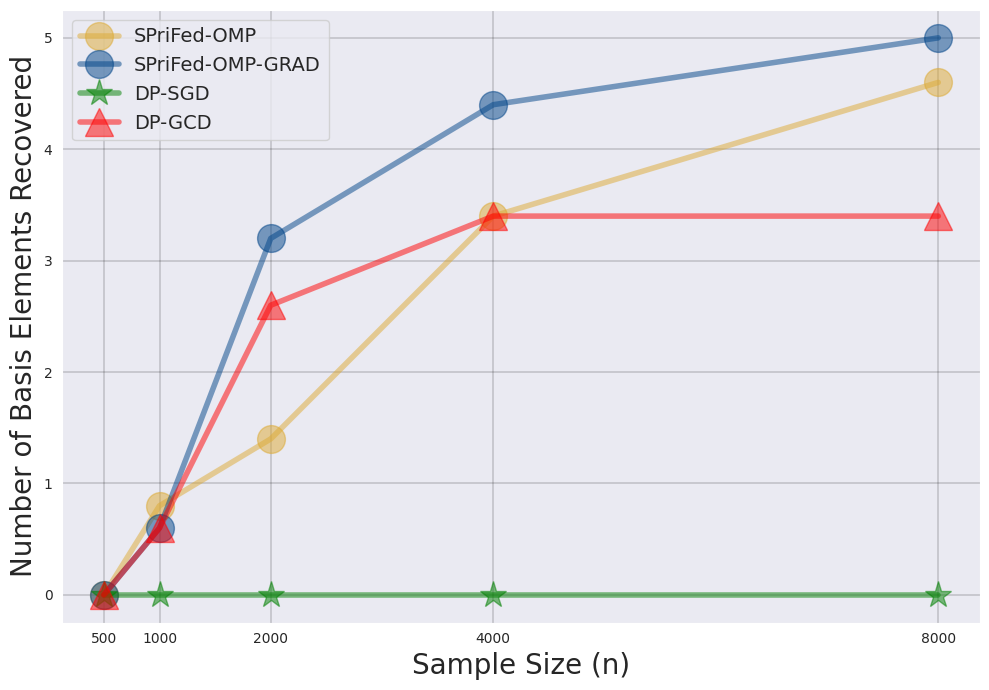}
        \caption{fix $s=5, p=10000$, vary $n$}
    \end{subfigure}
    % \captionsetup{labelformat=empty}
    \caption{The number of basis correctly recovered by \textit{SPriFed-OMP} for privacy parameters $(4.94, 10^{-4})$. We choose $s=5$. Figure (a) demonstrates basis recovery for model sparsity $p=2500$ over varying sample sizes. Figure (b) demonstrates basis recovery for model sparsity $p=10000$ over varying sample sizes. Measurements averaged over $3$ randomized simulation runs.}
    \label{figure:recovery_s5}
\end{figure*}

% GAP

\begin{figure*}[htb!]
    \centering
    \begin{subfigure}[b]{0.32\textwidth}
        \centering
        \includegraphics[ width=\textwidth]{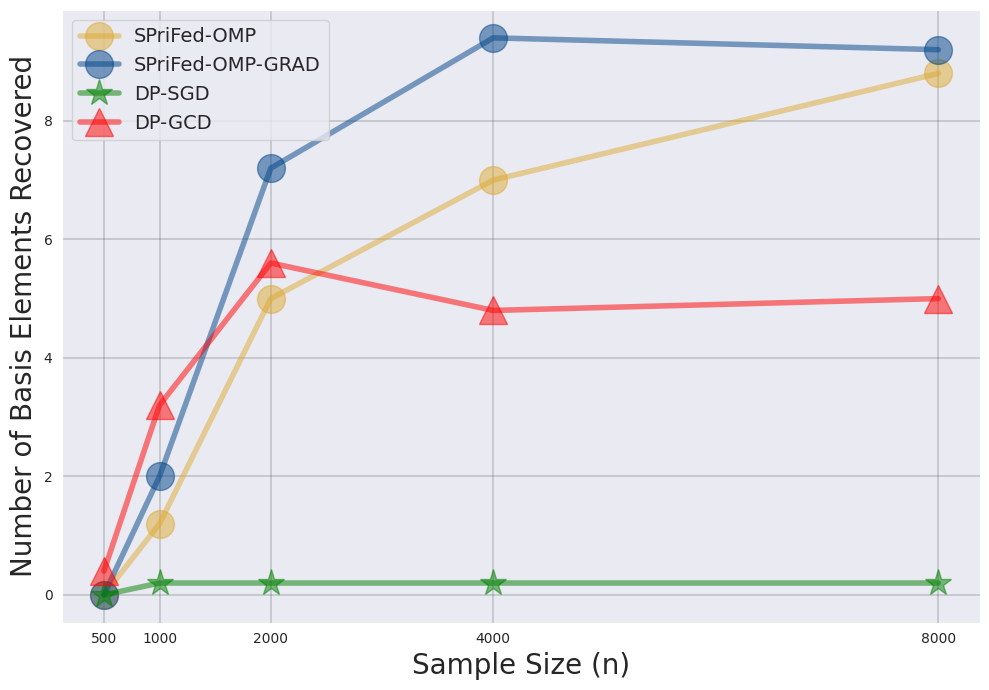}
        \caption{fix $s=10, p=2500$, vary $n$}
    \end{subfigure}
    %\bigbreak
    \hfill
    \begin{subfigure}[b]{0.32\textwidth}
        \centering
        \includegraphics[width=\textwidth]{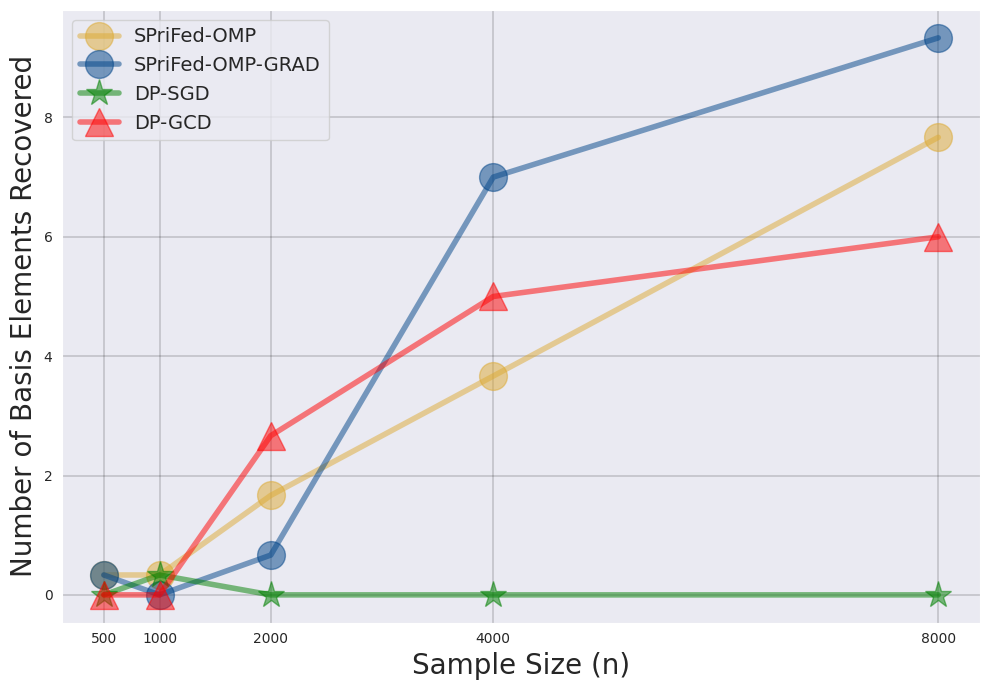}
        \caption{fix $s=10, p=10000$, vary $n$}
    \end{subfigure}
    \begin{subfigure}[b]{0.32\textwidth}
        \centering
        \includegraphics[width=\textwidth]{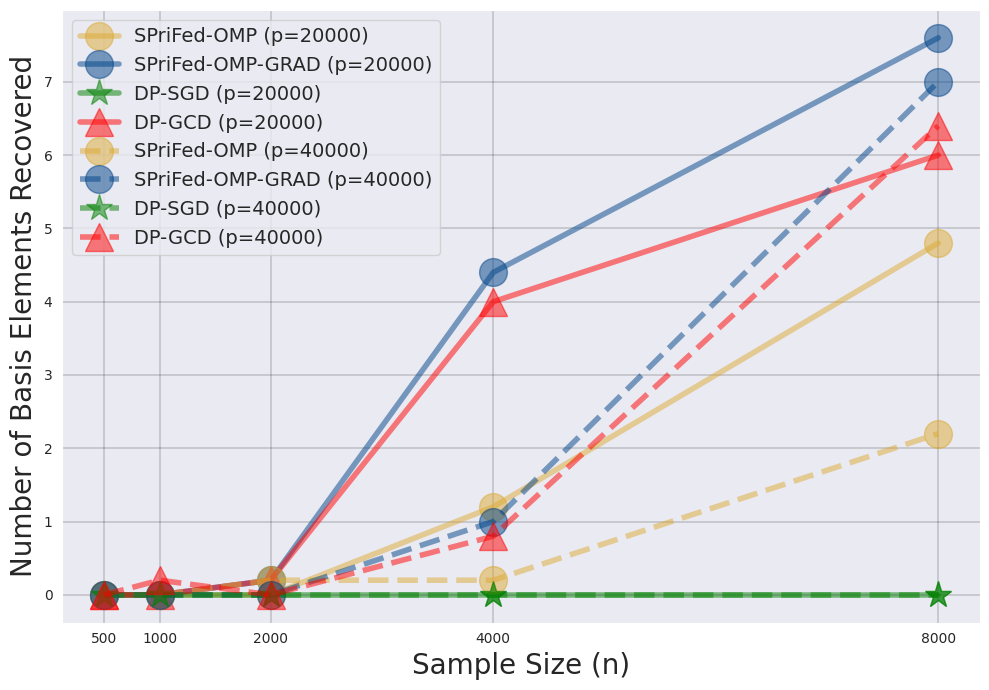}
        \caption{fix $s=10$, vary $p$ and $n$}
    \end{subfigure}
    % \captionsetup{labelformat=empty}
    \caption{The number of basis correctly recovered by \textit{SPriFed-OMP} for privacy parameters $(5.34, 10^{-4})$. We choose $s=10$, and figures (a) and (b) have an additive error with standard deviation $\sigma_{\varepsilon} = 0.001$. Figure (a) demonstrates basis recovery for model sparsity $p=2500$ over varying sample sizes. Figure (b) demonstrates basis recovery for model sparsity $p=10000$ over varying sample sizes. Figure (c) varies $p=20000$ and $p=40000$ while increasing the additive error's standard deviation to $\sigma_{\varepsilon} = 0.1$. Measurements averaged over $3$ randomized simulation runs.}
    \label{figure:recovery_s10}
\end{figure*}

\subsection{Performance Comparison: Synthetic Data Sets}

In the rest of the section, we will compare \textit{SPriFed-OMP} with various baselines. We first discuss the experimental setup and the baselines. In \textit{SPriFed-OMP} (Algorithm~\ref{algorithm:SPriFed-OMP}), we set the parameter $\mu_p$ to be either $0.4$ (for $s=10$) or $0.543$ (for $s=5$) and set the parameter $\mu_s$ as $0.02$. The cumulative privacy cost $(\epsilon, \delta)$ for all the experiments is set to at most $(5.34, 10^{-4})$ (for DP) or equivalently $1.32$ (GDP). Note that \textit{SPriFed-OMP} does not require hyperparameter tuning besides choosing the privacy parameters $\mu_s$ and $\mu_p$. Since $p \gg s$, we can easily tolerate $\mu_s$ much smaller than $\mu_p$. Thus, the $\mu_p$ parameter will primarily dominate the privacy cost.

The first baseline \textit{DP-SGD} is implemented to minimize the $L_1$-regularized mean square error. After the algorithm is terminated, we pick the top-$s$ elements and leverage those to compute the final model and the test MSE. One requirement for DP-SGD is that we also need to choose the hyper-parameters, which are the coefficient for the $L_1$-regularization term and the learning rate. Although hyper-parameter tuning may incur an additional privacy cost for DP-SGD, here, to benefit DP-SGD, we perform the hyper-parameter tuning non-privately using the Optuna library\citep{akiba2019optuna}. To maintain a standardized notion of privacy definition, we also study the composed privacy cost of DP-SGD using GDP. In particular, for achieving GDP in DP-SGD, we set the privacy parameter $\mu_{DP-SGD}$ as either $0.4$ (when $s=10$) or $0.543$ (when $s=5$). $\mu_{DP-SGD}$ matches the $\mu_p$ parameter from \textit{SPriFed-OMP}, which is also being used to privatize $p$-dimensional vectors. DP-SGD stops training when the overall privacy budget matches that of \textit{SPriFed-OMP}.\footnote{Although $\mu$-GDP alone provides an intuitive privacy definition, for completeness, we also convert the final $\mu$-GDP privacy parameter to the approximate DP parameters using the AutoDP Python library \citep{autoDP}.}  

The second baseline DP-GCD is based on a modified DP-FL compatible version of the algorithm presented in \cite{mangold2023high}. Algorithm 1 in \cite{mangold2023high} primarily follows from the Gradient Coordinate Descent (GCD) algorithm. Like DP-SGD, although DP-GCD does not provide performance guarantees on sparse basis recovery, it remains a natural algorithm when used along with $l_1$-regularization objectives. However, DP-GCD in \cite{mangold2023high} considered DP only in the server setting but not the DP-FL setting as in this paper. Thus, we make an effort to compare with it by first converting it to a DP-FL compatible version. We first present our converted DP-GCD algorithm (Algorithm~\ref{algorithm:modified-DP-GCD}). Here, in each iteration, the maximum gradient is picked privately in lines \textit{5-8}, and the model takes a step in only the direction of this chosen gradient element in lines \textit{10-11}. Such gradient-based optimization steps are repeated until the model converges or the privacy budget is exhausted. 

In our modified version presented in Algorithm~\ref{algorithm:modified-DP-GCD}, we also include two enhancements to the DP-GCD algorithm from \cite{mangold2023high}. First, we convert DP-GCD to be DP-FL compatible by leveraging the NOISY-SMPC mechanism while identifying the feature with the maximum gradient (\textit{lines 5-8}). We further ensure that much lower noise is added while following the gradient-based approach in DP-GCD by separately re-privatizing (in \textit{line 10}) the feature gradient chosen via the maximum gradient mechanism. Thus, in summary, our proposed modification of DP-GCD (Algorithm~\ref{algorithm:modified-DP-GCD}) significantly enhances the private version in \cite{mangold2023high} by making it DP-FL compatible while also ensuring that it has traits suitable for sparse basis recovery. Similar to DP-SGD, we again perform hyper-parameter tuning non-privately using the Optuna library \citep{akiba2019optuna}. For both DP-SGD and DP-GCD, we do not include the cost of tuning these hyperparameters in the privacy cost, and hence, the results we report below are more optimistic for these two algorithms. Below we discuss our empirical findings.

\begin{algorithm}[htb!]
\caption{Enhanced DP-GCD (DP-FL)}\label{algorithm:modified-DP-GCD}
\begin{algorithmic}[1]
\State \textbf{Input Parameters:} \textbf{Data:} $\bm{X = \{X_1, X_2, ..., X_p\}} \in \mathbb{R}^{n \times p}$ where $X_i$ is the $i^{th}$ feature, \textbf{Response:} $\bm{y} \in \mathbb{R}^n$, \textbf{Learning Step Size:} $\gamma$, \textbf{Total Steps:} $T$, \textbf{Privacy Parameters:} $\sigma_1 = \frac{1}{\mu_p}$, \textbf{Initialize Model:} $\tilde{\alpha} = \bar{0}$, \textbf{Initialize Basis:} $S = \emptyset$, \textbf{Gradient Bound:} $C$ (true bound over all gradients or bound achieved via gradient clipping).
\State \textbf{Output:} Predicted Sparse Model $\tilde{\alpha}$, Predicted Sparse Basis $\S$
\Procedure{DP-GCD}{$\bm{X}, \bm{y}, n, T$}
\For{$t = 1:T$}
\For{$j=0:p$}
\State $g_j \leftarrow \text{\textbf{NOISY-SMPC}}(X_j, y-X_{S_{\mathcal{A}}}\tilde{\alpha}, n, \sigma_1 C \sqrt{p})$
\EndFor
\State $j^{*} \leftarrow \argmax_j |g_j|$
\State $S \leftarrow S \cup \{j^{*}\}$
\State $g_{j^{*}}^{'} \leftarrow \text{\textbf{NOISY-SMPC}}(X_{j^{*}}, y-X_{S_{\mathcal{A}}}\tilde{\alpha}, n, \sigma_1 C)$
\State $\tilde{\alpha}[j^{*}] = \tilde{\alpha}[j^{*}] - \gamma g_{j^{*}}^{'}$
\EndFor
\State Return $\tilde{\alpha}, S$
\EndProcedure
\end{algorithmic}
\end{algorithm}

% Unlike our proposed method, DP-GCD might require multiple steps to switch between 

\textbf{Fix $n$, Change $p$:} In Fig.~\ref{figure:perf}(a), we fix $s=5$ and $n = 2000$ and vary $p \in \{2500, 4000, 6000, 8000, 10000\}$. With rising $p$, we observe that the test MSE loss increases much more dramatically for DP-SGD (note the logarithmic y-axis) than \textit{SPriFed-OMP}. In fact, \textit{SPriFed-OMP}'s performance is almost constant as $p$ varies since its empirical risk (Theorem~\ref{theorem:risk-analysis}) is mostly dependent on $s$. Further, although not shown in the figure, \textit{SPriFed-OMP} extracts most of the basis elements $(\geq 3)$ even when $p=10000$. Finally, even though the sample size can be quite small in comparison to the model dimension, we obtain highly accurate models for \textit{SPriFed-OMP} with test MSE close to zero, which is in line with our theory (see Theorems~\ref{theorem:estimation-error} and \ref{theorem:risk-analysis}). Similarly, even though DP-GCD performs better than DP-SGD in terms of test MSE (thanks to the lower noise that our enhanced DP-GCD algorithm adds), its test MSE is still much worse than SPriFed-OMP. We will discuss this limitation of DP-SGD in the latter part of this section. 

\textbf{Fix $p$, Change $n$:} In  Fig.~\ref{figure:perf}(b), we set $s=5$ and $p = 2500$ and vary $n \in \{400, 800, 1200, 1600, 2000\}$. We observe that when $p \gg n$, the performance of all algorithms is impacted. However, increasing the sample size improves performance more dramatically for \textit{SPriFed-OMP} and \textit{SPriFed-OMP-GRAD} (again, note the logarithmic scale on the y-axis). Notably, \textit{SPriFed-OMP} and \textit{SPriFed-OMP-GRAD} significantly outperform DP-SGD and DP-GCD for all choices of $n$. We specifically note that the test loss for both versions of \textit{SPriFed-OMP} gets very close to zero (as $n$ increases), which matches our theoretical results (Theorems~\ref{theorem:estimation-error} and \ref{theorem:risk-analysis}). 

\textbf{Fix $s$, Change $p$ and $n$:} In Figs.~\ref{figure:recovery_s5} and \ref{figure:recovery_s10}, we report the number of non-zero basis in $\alpha_{*}$ that are correctly recovered by the proposed algorithms. We show two plots, Fig.~\ref{figure:recovery_s5} for $s=5$ and Fig.~\ref{figure:recovery_s10} for $s=10$. The sample sizes are varied over $\{500, 1000, 2000, 4000, 8000\}$ and the model dimensions are varied over $\{2500, 100000\}$ for both $s=5$ and $s=10$. Additionally, we also vary p over $\{20000, 400000\}$ for the $s=10$ case. First, we can see for correctly recovering majority of the sparse basis, the value of $n$ increases much slower than $p$. Let us examine more closely the cases when we vary $s$ as either $5$ or $10$. We first look at the case when $s=5$. For $p=2500$ by setting $n = 2000$, we manage to recover $3+$ out of $5$ basis elements with \textit{SPriFed-OMP-Grad}. It turns out that even when $p$ is increased to 10000, using $n=2000$ is still sufficient to recover 3 out of 5 basis. When $s=10$, we consider the number of samples required to recover $7+$ out of $10$ basis elements. For $p=2500$, $n = 2000$ suffices. But when $p$ is increased $4$ times to $10000$, we only need $n$ to be doubled to $4000$.  Finally, when $p$ is either $20000$ or $40000$, we only need $n=8000$ samples to recover the same number of basis elements. In summary, we find that, as the model dimension and the model sparsity increase, the sample size required for sparse recovery increases as well, which is consistent with our main results, Theorems~\ref{theorem:performance-SPriFed-OMP} and \ref{theorem:performance-SPriFed-OMP-GRAD}. More interestingly, the value of $n$ increases much slower than that of $p$. This aligns with our theoretical result that $n$ only needs to be order-$\sqrt{p}$.

Second, as seen while comparing  Figures~\ref{figure:recovery_s10}\textit{(a-b)} with Figure~\ref{figure:recovery_s10}\textit{(c)}, increasing the additive error variability affects \textit{SPriFed-OMP} the most, while both the gradient-based methods \textit{SPriFed-OMP-Grad} and DP-GCD are minimally affected. Note that the results shown in Figures~\ref{figure:recovery_s10}\textit{(a-b)} have system error standard deviation of $0.001$. Here, we can see that the performance of \textit{SPriFed-OMP} is quite comparable to both \textit{SPriFed-OMP-GRAD} and \textit{DP-GCD}. \textit{SPriFed-OMP} even manages to outperform \textit{DP-GCD} for the higher sample size range. However, Figure~\ref{figure:recovery_s10}\textit{(c)} uses a standard deviation of $0.1$. Here, the performance of \textit{SPriFed-OMP} dips significantly in comparison to both \textit{SPriFed-OMP-GRAD} and \textit{DP-GCD}. Thus, error variability affects \textit{SPriFed-OMP} more than the other methods. These results are in line with our theory (refer Theorems~\ref{theorem:performance-SPriFed-OMP} and \ref{theorem:performance-SPriFed-OMP-GRAD}) that shows that \textit{SPriFed-OMP} has stricter system error requirements compared to \textit{SPriFed-OMP-GRAD}.

Let us now discuss the sparse basis recovery performance of the two baselines. Note that both DP-GCD and DP-SGD converge to a private model by leveraging gradient-based optimization. However, DP-GCD adds much lower noise than DP-SGD since, at each round, it only updates a single dimension rather than all dimensions as done by DP-SGD. This explains why we see a better performance for DP-GCD than DP-SGD. However, DP-GCD still adds significant noise to the model compared to both \textit{SPriFed-OMP} and \textit{SPriFed-OMP-GRAD} due to the following two reasons. First, due to its gradient-based optimization, DP-GCD may take more than one step to optimize any given direction fully. In contrast, for each new basis, SPriFed-OMP only collects new correlation values (or gradients) once. 
Thus, depending on the number of steps taken, the amount of noise added by DP-GCD per dimension is often much higher. For instance, if the same dimension is picked $t$ times, then the noise added will be of $t$ orders higher for DP-GCD than the \textit{SPriFed-OMP} class of methods. The remark in this section covers further details. Second, unlike \textit{SPriFed-OMP} and \textit{SPriFed-OMP-GRAD}, DP-GCD does not compute the exact true value of the model (using the OLS formula) in each round rather than choosing to add noise to the previously predicted model continually and thus the noise added in previous rounds might affect the predicted model significantly. In contrast, private OMP estimates the model parameters $\alpha$ directly using the OLS formula (Algorithm~\ref{algorithm:private-OLS}) based on the re-privatized correlation values in Lines \textit{(14))} and \textit{(9)} of Algorithms~\ref{algorithm:SPriFed-OMP} and \ref{algorithm:SPriFed-OMP-GRAD} respectively, which have much lower noise. Thus, this noise accumulation effect is greatly reduced.

% We observe that a major drawback of DP-GCD is that its sparse basis recovery flattens out after the first few basis elements are collected. The modified DP-GCD fails to recover most basis values since it relies on regular gradient optimization methods that may require more steps to converge than the number of underlying sparse basis cardinality ($s$). We can also confirm that the lack of theoretical sparse basis recovery guarantees affects DP-GCD significantly, rendering it less useful compared to the \textit{SPriFed-OMP-GRAD} and \textit{SPriFed-OMP} algorithms. Finally, DP-SGD with $L_1$ regularization cannot recover any basis elements correctly due to the huge amounts of noise added across each dimension.

\textit{Remark (Significant difference of Test MSE between DP-SGD, DP-GCD and both flavors of \textit{SPriFed-OMP}):} From Figures~\ref{figure:perf}(a) and \ref{figure:perf}(b), we observe that the test MSE of \textit{SPriFed-OMP} and DP-SGD are of different orders (note the logarithmic scale on the y-axis). This difference is because, in \textit{SPriFed-OMP}, we only introduce noise with a variance of order-$s$ to the final model. In contrast, DP-SGD adds noise of order-$p$ variance to all its model dimensions. Therefore, even though DP-SGD selects a subset of these dimensions for the final model, the considerable noise significantly impacts the resulting model coefficients. As a result, the test MSE for DP-SGD is much worse than the one we obtain from \textit{SPriFed-OMP}. Once more, we note that due to such a high noise variance, DP-SGD cannot recover any of the true basis in our experiments. For the case of DP-GCD, the noise added to the model is of order $s$ as well. However, DP-GCD follows a gradient-based iterative approach where noise is added to the gradient. Then, this noise is passed to the model based on the feature selected in that step, the learning rate, and the number of total steps taken. Unlike DP-GCD, the \textit{SPriFed-OMP} flavors add noise to the exact value of the true model based on the currently selected features. Thus, overall, \textit{SPriFed-OMP} only adds noise via the privacy mechanism. On the other hand, DP-GCD adds a similar noise via the privacy mechanism while also including a bias term based on the difference between the DP-GCD model and the optimal linear model for the chosen feature set.

\subsection{Performance Comparison: More Realistic Data Sets}

In the second set of experiments, we test our proposed algorithms on the synthetic MADELON dataset from the NIPS 2003 challenge \citep{guyon2004result}, a popular dataset used in the sparse recovery literature. The original MADELON dataset has $20$ true features, but only $5$ of these are true raw features, and the remaining $15$ are linear combinations of these raw features. Its total model dimension is $p=500$, with $n=2000$ training samples. However, since we are only interested in the $p > n$ case, we randomly sub-sample $300$ samples in each trial. We run $5$ randomized trials and report the average results. Here, we set $\mu_p = 0.45, \mu_s = 0.09$ for \textit{SPriFed-OMP} and $\mu_{DP-SGD} = 0.45$ for DP-SGD. The corresponding DP guarantee is $(8.38, 10^{-3})$, and the GDP guarantee is $1.72$. For \textit{SPriFed-OMP}, we clip the elements of $\bm{X}$ to $0.12$ and elements of $\bm{y}$ to $0.36$. We clip the gradient elements to $1$. As expected, DP-SGD performs the worst with a high Test MSE of \textit{105.73}. DP-GCD performs reasonably well with Test MSE of  \textit{1.23}.  Finally, \textit{SPriFed-OMP} has a Test MSE of \textit{1.55} while \textit{SPriFed-OMP-GRAD} handily outperforms DP-GCD with a Test MSE of only \textit{0.27}. We attribute the relatively better performance of DP-GCD for this data set to two reasons, (1) proper hyperparameter tuning of DP-GCD; thus, we expect that the performance of DP-GCD would be worse if the privacy cost of hyperparameter tuning must also be accounted for; and (2) due to the synthetic nature of MADELON, DP-GCD might perform well even when it does not pick the exact best features (since MADELON has 15 features that are linear combinations of the true 5 features). 

In the third set of experiments, we show results on two real high-dimensional datasets from the repository presented in \cite{survset}. Here, we consider the high-dimensional datasets chop and gse1992. The chop dataset contains $414$ total samples with $3836$ features, and gse1992 contains $124$ samples with $15537$ features. Similar to MADELON, we sub-sample the dimensions of chop to be $2000$ and that of gse1992 to be $500$ in each trial. The rest of the privacy setup is identical to MADELON. We present the results for these datasets in Table~\ref{table:perf}. We note that both \textit{SPriFed-OMP} and \textit{SPriFed-OMP-GRAD} outperforms all other algorithms over both datasets with \textit{SPriFed-OMP-GRAD} providing the best utility. As previously noted in the remark above, we attribute the success of both versions of \textit{SPriFed-OMP} to their ability to add significantly lower order noise to the true model value while leveraging the true value of the model using the ordinary least squares formula. Note that DP-GCD also adds lower-order noise to the model. However, unlike both flavors of \textit{SPriFed-OMP}, DP-GCD does not add noise to the exact value of the true model, instead relying on a gradient-based approach that iteratively adds noise to the gradient. The gradient-based noise is then passed to the model value based on the feature selected and the learning rate.

\begin{table}[h!]
\centering
% \rowcolors{2}{gray!25}{white}
\begin{tabular}{|c|c|c|c|} \hline
\rowcolor[gray]{.55}
% \hline
Algorithm & chop & gse1992 \\ \hline
\textit{SPriFed-OMP }& 0.101 & 1.90 \\
\rowcolor[gray]{.8}
\textit{SPriFed-OMP-GRAD} & 0.154 & 1.15 \\
DP-GCD & 18.79 & 525.60 \\
\rowcolor[gray]{.8}
DP-SGD & 546.76 & 988.72 \\ \hline
\end{tabular}
\caption{\textbf{Test MSE Performance:} We observe that without model/basis selection, both DP-SGD and DP-GCD are required to add substantial noise in the models, leading to much higher Test MSE than both \textit{SPriFed-OMP} and \textit{SPriFed-OMP-GRAD}. However, both \textit{SPriFed-OMP} and \textit{SPriFed-OMP-GRAD} enjoy close to optimal test MSE.}
\label{table:perf}
\end{table}

\section{Conclusion}\label{section:conclusion}

In this work, we propose two new private sparse basis recovery algorithms called \textit{SPriFed-OMP} and \textit{SPriFed-OMP-GRAD} for the DP-FL setting based on the Orthogonal Matching Pursuit (OMP) algorithm. Specifically, we prove analytically that both algorithms can recover the sparse basis in a finite number of steps. Further, we bound the privacy cost while leveraging the obtained low-dimensional model to obtain dimension-free model estimation error and empirical risk. Here,  \textit{SPriFed-OMP} is our first attempt at leveraging OMP for sparse basis recovery in the DP-FL setting. In contrast, \textit{SPriFed-OMP-GRAD} improves upon \textit{SPriFed-OMP} (both analytically and empirically) by requiring a simpler DP-FL mechanism, fewer samples, and a less complicated analysis. To the best of our knowledge, these are the first DP-FL algorithms that successfully manage sparse recovery under the RIP assumption while only requiring $n = \mathcal{O}(\sqrt{p})$ samples. Our experimental results on both synthetic and real datasets strongly corroborate our theoretical analysis. For future work, we will explore DP-FL algorithms that can perform sparse recovery even without the RIP assumption.

% \subsubsection*{Broader Impact Statement}
% \textbf{TODO} In this optional section, TMLR encourages authors to discuss possible repercussions of their work,
% notably any potential negative impact that a user of this research should be aware of. 
% Authors should consult the TMLR Ethics Guidelines available on the TMLR website
% for guidance on how to approach this subject.

\bibliography{main}
\bibliographystyle{tmlr}

\appendix 
\section{Differential Privacy}\label{appendix:DP}

Differential privacy was introduced to privatize changes caused by individual clients \citep{dwork2014algorithmic}. For a given function, differential privacy obscures the maximum possible change in the function's output due to the inclusion or removal of a single client's data. Here, obscuring is allowed by introducing noise and, thus, a form of deniability in the function's output. Deniability allows an individual client to deny its presence in the dataset and disallows an adversary from linking the client or its data to a non-privatized record. Differential privacy is significantly stronger than previous anonymization methods as it works even in the worst-case scenario when the entire dataset is publicly released except for a single client's data. Here, we look at the precise mathematical definition of differential privacy, simple techniques to introduce privacy, and its additional properties.

\begin{definition}\label{definition:approximate-DP}\textbf{[Approximate Differential Privacy]} For any two neighboring datasets $X, X{'} \subseteq \mathcal{S}$ differing over a single sample, we say that the randomized mechanism $\mathcal{M}: \mathcal{S} \rightarrow \mathcal{R}$ is $(\epsilon, \delta)$ differentially private if, 
\begin{align*}
    \text{Pr}[\mathcal{M}(X) \in \mathcal{R}] \leq e^{\epsilon} \cdot \text{Pr}[\mathcal{M}(X^{'}) \in \mathcal{R}] + \delta
\end{align*}
\end{definition}

\begin{definition}[$L_2$ sensitivity of a function]\label{definition:L2} Given a deterministic function $f$, consider all possible pairs of 
conformable neighboring datasets in the domain of $f$ differing on a single row. Then we denote $f$'s $L_2$ sensitivity as
\begin{align}
    \Delta_2 f \coloneqq \max_{x, x \in dom(f), ||x} ||f(x) - f(x')||_2 \nonumber
\end{align}
Thus, $\Delta_2 f$ records the maximum possible change in $f$ due to a change in a single row. Often each row belongs to a different client and thus, $\Delta_2 f$ will record the maximum change due to a change in a client or the inclusion or removal of a client from the dataset.
\end{definition}

We can achieve differential privacy by the Laplacian, Exponential, and Gaussian mechanisms \citep{dwork2014algorithmic}. 
Here, we use the Gaussian mechanism as it performs better for multi-step (or composed) privacy algorithms. In particular, we leverage the Gaussian Differential Privacy (GDP) composition method for optimally tracking the privacy loss over multiple steps \citep{dong2019gaussian}. Although, other composition methods such as advanced composition \citep{kairouz2015composition}, Renyi DP \citep{mironov2017renyi}, and concentrated DP \citep{bun2016concentrated} exist, we chose GDP for its superior composition performance for Gaussian mechanisms. For completeness, we state the connection between the Gaussian mechanism and the GDP privacy constant.

\begin{lemma}\label{lemma:GDP-Gaussian}\textit{[GDP Gaussian mechanism (Theorem 2.7 from \citep{dong2019gaussian})]} Let statistic $f(\cdot)$ be computed over the dataset $\bm{X}$. Then, the randomized Gaussian mechanism $\mathcal{M}(\bm{X}) = f(\bm{X}) + \eta$ where $\eta \sim \mathcal{N}(0, \frac{\Delta_2 f^2}{\mu^2})$ is $\mu-GDP$.
\end{lemma}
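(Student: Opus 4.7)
The plan is to verify $\mu$-GDP directly from its definition in terms of trade-off functions. Recall that $\mathcal{M}$ is $\mu$-GDP iff for every pair of neighboring datasets $\bm{X}, \bm{X'}$ the trade-off function $T(\mathcal{M}(\bm{X}), \mathcal{M}(\bm{X'}))$ pointwise dominates $G_\mu := T(\mathcal{N}(0,1), \mathcal{N}(\mu,1))$. So the task reduces to lower-bounding the trade-off function between two multivariate Gaussians that differ only in their means.

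First I would write $\mathcal{M}(\bm{X}) \sim \mathcal{N}(f(\bm{X}), \sigma^2 \bm{I}_p)$ and $\mathcal{M}(\bm{X'}) \sim \mathcal{N}(f(\bm{X'}), \sigma^2 \bm{I}_p)$ with $\sigma = \Delta_2 f/\mu$, and set $\bm{v} = f(\bm{X}) - f(\bm{X'})$; by Definition~\ref{definition:L2}, $\|\bm{v}\|_2 \le \Delta_2 f$. Next I would reduce the $p$-dimensional hypothesis test to a one-dimensional one: the Gaussian log-likelihood ratio between the two laws is an affine function of the scalar projection $\langle z, \bm{v}\rangle$, so by Neyman--Pearson and sufficiency the trade-off function equals the trade-off function of the univariate problem $\mathcal{N}(0,\sigma^2)$ versus $\mathcal{N}(\|\bm{v}\|_2, \sigma^2)$.

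Then, using location/scale invariance of trade-off functions, $T(\mathcal{N}(0,\sigma^2), \mathcal{N}(\|\bm{v}\|_2, \sigma^2)) = T(\mathcal{N}(0,1), \mathcal{N}(\|\bm{v}\|_2/\sigma, 1))$. Combining the well-known fact that $T(\mathcal{N}(0,1), \mathcal{N}(t, 1))$ is pointwise nonincreasing in $t \ge 0$ with the bound $\|\bm{v}\|_2/\sigma \le \Delta_2 f/\sigma = \mu$ yields the desired pointwise domination $T(\mathcal{M}(\bm{X}), \mathcal{M}(\bm{X'})) \ge G_\mu$, which by definition is $\mu$-GDP.

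The main obstacle is the multivariate-to-univariate reduction: one must carefully argue that every level-$\alpha$ test for the $p$-dimensional location problem can be realized, without loss of power, by a test depending only on $\langle z, \bm{v}\rangle$. This follows from the rotational symmetry of $\mathcal{N}(\bm{0}, \sigma^2 \bm{I}_p)$ about the subspace orthogonal to $\bm{v}$, together with the completeness-sufficiency of the projection statistic for a shifted Gaussian family along a single direction. Once this reduction is in place, the remaining steps are elementary Gaussian calculations and pointwise monotonicity of $G_\mu$ in $\mu$.
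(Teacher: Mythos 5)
Your argument is correct, and it is essentially the standard proof of Theorem 2.7 in \citet{dong2019gaussian}: the paper itself states this lemma as an imported result and supplies no proof of its own, so there is nothing to diverge from. The reduction of the $p$-dimensional testing problem to the univariate one via the Neyman--Pearson likelihood-ratio statistic (which depends only on the projection onto $f(\bm{X})-f(\bm{X'})$), followed by the pointwise monotonicity of $G_\mu$ in the mean shift and the bound $\|f(\bm{X})-f(\bm{X'})\|_2 \le \Delta_2 f$, is exactly how the cited source establishes the result.
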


\begin{lemma}\label{lemma:GDP-Gaussian}\textit{[Restated Theorem 2.7 from \citet{dong2019gaussian}]} Let statistic $f(\cdot)$ be computed over the dataset $\bm{X}$. Then, the randomized Gaussian mechanism $\mathcal{M}(\bm{X}) = f(\bm{X}) + \eta$ where $\eta \sim \mathcal{N}(0, \frac{\Delta_2 f^2}{\mu^2})$ is $\mu-GDP$.
\end{lemma}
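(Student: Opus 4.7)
The plan is to unpack the definition of $\mu$-GDP via trade-off functions and reduce to a one-dimensional Gaussian hypothesis test. Recall that $\mathcal{M}$ is $\mu$-GDP if, for every pair of neighboring datasets $\bm{X}, \bm{X'}$, the optimal trade-off function $T(\mathcal{M}(\bm{X}), \mathcal{M}(\bm{X'}))$ between the distributions of $\mathcal{M}(\bm{X})$ and $\mathcal{M}(\bm{X'})$ dominates (pointwise) the canonical Gaussian trade-off function $G_\mu(\alpha) := \Phi(\Phi^{-1}(1-\alpha) - \mu)$, which is exactly the trade-off function $T(N(0,1), N(\mu,1))$. So my goal reduces to showing this pointwise domination for arbitrary neighbors.

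First, I would fix neighboring $\bm{X}, \bm{X'}$ and write $\bm{u} := f(\bm{X'}) - f(\bm{X})$, so $\|\bm{u}\|_2 \le \Delta_2 f$ by the sensitivity assumption. The outputs $\mathcal{M}(\bm{X}), \mathcal{M}(\bm{X'})$ are multivariate Gaussians with the same isotropic covariance $\sigma^2 I$, where $\sigma = \Delta_2 f/\mu$, and means differing by $\bm{u}$. By translation invariance of trade-off functions, I may replace the pair by $N(\bm{0}, \sigma^2 I)$ versus $N(\bm{u}, \sigma^2 I)$.

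Next I would invoke a sufficiency/rotation argument: for two Gaussians with equal isotropic covariance, the likelihood ratio is a monotone function of the projection onto the direction $\bm{u}/\|\bm{u}\|$. By Neyman--Pearson, the optimal tests are based on this projection, so the trade-off function collapses to the one-dimensional trade-off between $N(0, \sigma^2)$ and $N(\|\bm{u}\|, \sigma^2)$, which by rescaling equals $G_{\|\bm{u}\|/\sigma}$. Finally, I would use that $G_t$ is pointwise non-increasing in $t$ (larger mean gap makes distinguishing strictly easier), combined with $\|\bm{u}\|/\sigma \le \Delta_2 f/\sigma = \mu$, to conclude $T(\mathcal{M}(\bm{X}), \mathcal{M}(\bm{X'})) = G_{\|\bm{u}\|/\sigma} \ge G_\mu$, which is exactly $\mu$-GDP.

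The only non-routine step is the sufficiency/projection reduction to one dimension; everything else is bookkeeping. Since the statement is explicitly marked as a restatement of Theorem 2.7 of \citet{dong2019gaussian}, a fully rigorous write-up could simply cite that result, but the plan above gives a self-contained argument by combining (i) the definition of $\mu$-GDP via trade-off functions, (ii) translation invariance, (iii) Neyman--Pearson with the Gaussian likelihood ratio being monotone in a one-dimensional projection, and (iv) monotonicity of $G_t$ in the gap parameter $t$.
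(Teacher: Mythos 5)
Your proposal is correct and follows essentially the same route as the source the paper relies on: the paper gives no proof of its own for this lemma, citing Theorem~2.7 of \citet{dong2019gaussian}, and that theorem is proved there by exactly your chain of (i) the trade-off-function definition of $\mu$-GDP, (ii) translation invariance, (iii) Neyman--Pearson reduction along the mean-difference direction to the one-dimensional pair $N(0,\sigma^2)$ vs.\ $N(\norm{\bm{u}},\sigma^2)$, and (iv) monotonicity of $G_t$ in $t$ together with $\norm{\bm{u}}_2/\sigma \leq \mu$. No gaps; the argument is a faithful self-contained reconstruction of the cited result.
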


An important property of DP, known as post-processing maintains that no amount of processing on a differentially private system will cause the new output to leak any more information than the unprocessed input. This is expressed clearly in Lemma~\ref{lemma:DP-post-processing}.

\begin{lemma}[Differential Privacy Post-Processing]\label{lemma:DP-post-processing}
    Consider a randomized mechanism $M$ that is $(\varepsilon, \delta)$ differentially private. Now, let $M_{post}$ be another randomized mechanism that can be applied to the output of $M$. Then, we state that the output of the combination of $M_{post}(M(\cdot))$ function group has at least differential privacy guarantee of $(\varepsilon, \delta)$ or higher.
\end{lemma}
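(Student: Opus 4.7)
The plan is to prove the post-processing guarantee in two stages: first for deterministic post-processing maps, and then extend to randomized ones by conditioning on the auxiliary randomness of $M_{post}$. Throughout, let $\mathcal{R}$ denote the output space of $M$ and $\mathcal{R}'$ the output space of $M_{post}$.

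First, assume $M_{post}: \mathcal{R} \to \mathcal{R}'$ is deterministic. For any measurable $T \subseteq \mathcal{R}'$, define the preimage $S = M_{post}^{-1}(T) \subseteq \mathcal{R}$, which is measurable since $M_{post}$ is. Then for neighboring datasets $X, X'$,
\begin{align*}
\Pr[M_{post}(M(X)) \in T] &= \Pr[M(X) \in S] \\
&\leq e^{\varepsilon}\Pr[M(X') \in S] + \delta \\
&= e^{\varepsilon}\Pr[M_{post}(M(X')) \in T] + \delta,
\end{align*}
where the middle inequality is exactly the $(\varepsilon,\delta)$-DP guarantee of $M$ applied to the set $S$. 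This establishes the deterministic case.

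Next, I would handle a randomized $M_{post}$ by viewing it as a deterministic function of its input together with an independent source of randomness $R$, drawn from some distribution $\mu_R$ independently of $M$. For each fixed realization $r$, the map $x \mapsto M_{post}(x, r)$ is deterministic, so by the previous paragraph
\begin{align*}
\Pr[M_{post}(M(X), r) \in T] \leq e^{\varepsilon}\Pr[M_{post}(M(X'), r) \in T] + \delta.
\end{align*}
Integrating both sides against $\mu_R$, and using independence of $R$ from $M$ on both sides along with Fubini/Tonelli to swap the order,
\begin{align*}
\Pr[M_{post}(M(X)) \in T] &= \int \Pr[M_{post}(M(X), r) \in T]\, d\mu_R(r) \\
&\leq e^{\varepsilon}\int \Pr[M_{post}(M(X'), r) \in T]\, d\mu_R(r) + \delta \\
&= e^{\varepsilon}\Pr[M_{post}(M(X')) \in T] + \delta,
\end{align*}
which is the required $(\varepsilon,\delta)$-DP bound for the composition.

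The argument has no real analytical obstacle; the only subtlety is formal, namely ensuring that $M_{post}$ can be represented as a deterministic function of its input and an independent random seed, and that the preimage $M_{post}^{-1}(T)$ is measurable so the DP inequality for $M$ applies. Both hold for any standard randomized algorithm on Borel spaces, so the verification is essentially a one-line application of DP to a preimage followed by an integration over the post-processing randomness.
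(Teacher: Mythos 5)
Your proof is correct, and it is the canonical argument: reduce to deterministic post-processing via the preimage $S = M_{post}^{-1}(T)$, apply the $(\varepsilon,\delta)$-DP guarantee of $M$ to $S$, and then handle randomized $M_{post}$ by writing it as a mixture (over an independent seed $R$) of deterministic maps and integrating the pointwise inequality, which preserves both the $e^{\varepsilon}$ factor and the additive $\delta$. The paper itself states this lemma without proof, treating it as a standard fact from the differential privacy literature (it is Proposition 2.1 in Dwork and Roth's monograph, which the paper cites), so there is no in-paper argument to compare against; your write-up supplies exactly the proof that reference gives, including the only genuine subtleties, namely measurability of the preimage and representability of $M_{post}$ as a deterministic function of its input and an independent random seed.
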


\section{Compressed Sensing Theory}\label{appendix:compressed-sensing}

Let $\Omega = \{1, 2, ..., p\}$ be the overall feature index set and the support $S \subseteq \Omega$ of $\beta^{*}$ be defined such that, $S = \{i | i \in \Omega, \beta^{*}_{i} \neq 0\}$. By definition, $|S| \leq s$ (where $s$ is the sparsity index of the dataset). For a vector $v \in \mathbb{R}^{p}$, let us define $v_{W}$ as the restriction of $v$ to the indices in $W$ where $W \subseteq \Omega$. Similarly, for a matrix $Q \in \mathbb{R}^{n \times p}$ let us define the sub-matrix $Q_{W} \in \mathbb{R}^{n \times |W|}$ s.t. it only contains columns indexed by $W (\subseteq \Omega)$. %Assume that $X, S, n$, and $p$ are defined in the same way as defined above. We provide definitions for terms that we will use in this paper.
Our primary assumption for the data matrix follows the Restricted Isometry Property (RIP) as detailed in Assumption~\ref{assumption:RIP}.

% \begin{assumptionp}{A}\label{assumption:RIP}
%\textbf{Restricted Isometry Property (RIP):} A measurement matrix $X \in \mathbb{R}^{n \times p}$ is said to satisfy \textit{RIP} of order $K$ if there exists a constant $\zeta \in [0, 1]$ s.t. 
%\begin{align*}
%    &(1-\zeta) ||v||_{2}^2 \leq ||Xv||_{2}^2 \leq (1+\zeta) ||v||_{2}^2
%\end{align*}
%for all vectors $v \in \mathbb{R}^{p}, ||v||_{0} \leq K$. Note that $\zeta$ is referred to as the isometric constant. Furthermore, the Restricted Isometry Constant (\textit{RIC}) is defined as the \textit{infimum} of all possible $\zeta$ values that satisfy RIP of order $K$ for a given measurement matrix $X \in \mathbb{R}^{n \times p}$. Thus for a given RIP of order $K$, 
%\begin{align}
%    \zeta_{K} &= \text{inf}[\zeta | (1-\zeta)||v||_{2}^2 \leq ||Xv||_{2}^{2} \leq (1+\zeta)||v||_{2}^2]; \nonumber
%    \\ & \text{for all } v \in \mathbb{R}^p, ||v||_{0} \leq K \nonumber
%\end{align}
%\end{assumptionp}

Intuitively, assumption~\ref{assumption:RIP} for $X$, a matrix satisfying RIP of order $K$ states that the eigenvalues for all $K$-dimensional (or smaller) sub-matrices derived from $X$, lie in a tight interval around $1$. Interestingly, the assumption holds for several popular sub-Gaussian random matrices. Construction of such random RIP matrices is detailed in Theorem 5.65 from \citet{vershynin2010introduction}. We note that although we use the random matrix construction as a reference for our primary analysis, we only require a matrix (even deterministic) normalized by $\sqrt{n}$ that satisfies the RIP condition and and it should contain (probably) bounded values. Further details are omitted due to page limits.

\begin{lemma}\label{lemma:RIP-matrix-construction}\textbf{[Construction of Random Matrices satisfying RIP]} (Theorem 5.65 from \citet{vershynin2010introduction}) Let $A \in \mathbb{R}^{n \times p}$ be a subgaussian random matrix with independent and isotropic rows. Then the normalized matrix $\frac{A}{\sqrt{n}}$ satisfies the following for every sparsity level $1 \leq k \leq p$ and every number $\delta \in (0, 1)$,
\begin{align}
    if n \geq \frac{Ck\log(\frac{ep}{k})}{\delta^2} \rightarrow \delta_k(\frac{A}{\sqrt{n}}) \leq \delta \nonumber
\end{align}
with a probability at least $1-2e^{-c\delta^2 n}$. Here, $C=C_K, c=c_K > 0$ depend only on the subgaussian norm $K = \max_{i}||A_i||_{\Psi_2}$ of the rows of $A$. Here, $k$ is the RIP order, and $\delta_{k}(\cdot)$ is the RIC. The exact construction is demonstrated below.
\end{lemma}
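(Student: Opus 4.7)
The plan is to verify the RIP inequality on all $k$-sparse vectors by combining a one-support matrix-concentration estimate with a union bound over all $\binom{p}{k}$ possible supports. First, I would fix a support $T \subseteq \{1,\dots,p\}$ with $|T| = k$. For any vector $v$ supported on $T$, one has $\|Av\|_2^2 = v_T^\top (A_T^\top A_T) v_T$, so the RIP inequality on vectors supported in $T$ is equivalent to
\[ \Big\| \tfrac{1}{n} A_T^\top A_T - I_k \Big\|_{2} \leq \delta. \]
Since $A$ has independent isotropic subgaussian rows with subgaussian norm at most $K$, the submatrix $A_T$ inherits the same properties in $\mathbb{R}^k$, so this is a statement about the empirical covariance of $n$ i.i.d.\ isotropic subgaussian vectors in $\mathbb{R}^k$.

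The second step is to invoke the standard subgaussian empirical-covariance concentration bound (e.g., the argument behind Theorem 5.39 / Corollary 5.50 in Vershynin). Using an $\varepsilon$-net over the unit sphere of $\mathbb{R}^k$ together with Bernstein's inequality for subexponential sums applied to $\frac{1}{n}\sum_{i=1}^n \langle A_{i,T}, u\rangle^2 - 1$, one obtains, for each fixed $T$ and any $t > 0$,
\[ \Pr\!\Big[\big\| \tfrac{1}{n} A_T^\top A_T - I_k \big\|_2 > C_K\!\left(\sqrt{k/n} + t\right)\Big] \;\leq\; 2\exp\!\big(-c_K n t^2\big). \]
Choosing $t = \delta/(2C_K)$ and requiring $C_K \sqrt{k/n} \leq \delta/2$, i.e.\ $n \geq 4 C_K^2 k/\delta^2$, the failure probability for this single $T$ is at most $2\exp(-c'_K n \delta^2)$.

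Third, I would take a union bound over the $\binom{p}{k} \leq (ep/k)^k$ choices of $T$. The total failure probability is bounded by
\[ 2 \binom{p}{k} \exp(-c'_K n \delta^2) \;\leq\; 2 \exp\!\Big(k \log(ep/k) - c'_K n \delta^2\Big). \]
Provided $n \geq C k \log(ep/k)/\delta^2$ with $C = C_K$ large enough, the $k\log(ep/k)$ term is absorbed and the failure probability reduces to $2\exp(-c n \delta^2)$ for a new constant $c = c_K$ depending only on the subgaussian norm $K$. Taking the infimum over all such failure-probability bounds then yields $\delta_k(A/\sqrt{n}) \leq \delta$.

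The main obstacle is making the dependence on the subgaussian parameter $K$ explicit in both the concentration constant $c_K$ and the net constant $C_K$; this is hidden in the Vershynin statement but must be tracked carefully through Bernstein's inequality for $\psi_2$ random variables and through the $\varepsilon$-net discretization of the operator norm. A secondary technical point is that the union bound is only efficient because the single-support tail decays exponentially in $n\delta^2$ rather than polynomially --- otherwise the $k\log(ep/k)$ cost of enumerating supports would not be affordable. Both of these are standard but require care in bookkeeping.
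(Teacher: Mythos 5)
Your proposal is correct and is essentially the standard proof of the cited result: the paper does not prove this lemma at all (it imports it verbatim as Theorem 5.65 of Vershynin, only adding a recipe for constructing such matrices), and Vershynin's own argument is exactly the route you describe — reduce RIP on a fixed support $T$ to the operator-norm deviation $\|\tfrac{1}{n}A_T^\top A_T - I_k\|_2 \leq \delta$, control it by the subgaussian empirical-covariance bound ($\varepsilon$-net plus Bernstein), and union-bound over the $\binom{p}{k} \leq (ep/k)^k$ supports, absorbing the $k\log(ep/k)$ entropy term into the exponential tail via the sample-size condition. No gaps; the only care needed is the bookkeeping of the $K$-dependent constants, which you correctly flag.
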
 One instance of such a random matrix satisfying RIP, $X \in \mathbb{R}^{n \times p}$, can be constructed with the following steps,
\begin{enumerate}
    \item For a given row $v$ in $X$, independently sample the row's elements from a sub-Gaussian distribution s.t. each row is isotropic. $v \in \mathbb{R}^p$ is an isotropic vector  if $E[vv^T] = \mathbb{I}$.
    \item The standard normal distribution satisfies the isotropic property. Therefore, assume $X_{ij} \sim \mathcal{N}(0, 1), \text{for all }i \in [n], j \in [p]$. We note that alternative distributions as detailed in \citet{vershynin2010introduction} also satisfy the isotropy property. 
    \item Normalize $X \rightarrow \frac{X}{\sqrt{n}}$. 
\end{enumerate}

Given an RIP satisfying matrix $X \in \mathbb{R}^{n \times p}$ satisfying RIP of order $s$ with an RIC of constant $\zeta_s$, we note some useful properties (Lemmas~\ref{lemma:monotonicity}-\ref{lemma:error_transpose}) derived from the RIP condition (\citep{candes2005decoding}).

Consider the set of vectors $\mathcal{W} \coloneqq \{w \in \mathbb{R}^{p}$ | $||w||_{0} \leq s\}$. The support for vectors in $\mathcal{W}$ is $\mathcal{S}$. For a vector $w \in \mathcal{W}$ and by RIP and using $||X_Sw_S||_2 = ||Xw||2$,
\begin{align}
    (1-\zeta_{s}) ||w||_{2}^2 &\leq ||X_{S}w_{S}||_{2}^2 \leq (1+\zeta_{s}) ||w||_{2}^2 \nonumber
    \\ \rightarrow (1-\zeta_{s}) ||w||_{2}^2 &\leq ||Xw||_{2}^2 \leq (1+\zeta_{s}) ||w||_{2}^2 \nonumber 
    \\ \rightarrow (1-\zeta_{s}) &\leq \frac{w^TX^TXw}{w^Tw} \leq (1+\zeta_s)
    \end{align}
% Therefore, by Equations~\ref{equation:spectral-norm2}-\ref{equation:spectral-norm4}, \ref{equation:min-eigenvalue6} and \ref{equation:RIP-eigen},
Thus, the following bound immediately follows,
\begin{align}
    (1-\zeta_s) &\leq \sigma_{min} (X_S^TX_S), \sigma_{\max} (X_S^TX_S) \leq (1+\zeta_s) \nonumber
\end{align}

\begin{lemma}\label{lemma:monotonicity}
    \textbf{[Monotonocity of the Isometric Constant]}
    If a measurement matrix $X \in \mathbb{R}^{n \times p}$ satisfies RIP of orders $K_{1}$ and $K_{2}$ s.t. $K_{1} \leq K_{2}$ then their corresponding RICs follow the same order \emph{i.e., } $\zeta_{K_{1}} \leq \zeta_{K_{2}}$.\end{lemma}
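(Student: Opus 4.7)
The plan is to exploit the fact that the sparsity constraint in the definition of RIP is monotone in the order $K$: the set of $K_1$-sparse vectors is contained in the set of $K_2$-sparse vectors whenever $K_1 \leq K_2$. Hence any constant $\zeta$ that witnesses RIP uniformly over the larger sparse set automatically witnesses RIP over the smaller sparse set. This containment, combined with the infimum characterization of the RIC in Assumption~\ref{assumption:RIP}, will give the result in one line.

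More concretely, I would first fix the measurement matrix $\bm{X}$ and define the sparse cones $\mathcal{V}_K \triangleq \{\bm{v} \in \mathbb{R}^p : \|\bm{v}\|_0 \leq K\}$. Since $K_1 \leq K_2$ implies $\mathcal{V}_{K_1} \subseteq \mathcal{V}_{K_2}$, any admissible constant for order $K_2$, i.e., any $\zeta \in [0,1]$ satisfying
\begin{align*}
(1-\zeta)\|\bm{v}\|_2^2 \;\leq\; \|\bm{X}\bm{v}\|_2^2 \;\leq\; (1+\zeta)\|\bm{v}\|_2^2 \quad \text{for all } \bm{v} \in \mathcal{V}_{K_2},
\end{align*}
is also admissible for order $K_1$, because the inequality then trivially holds over the subset $\mathcal{V}_{K_1}$. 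In particular, $\zeta_{K_2}$ itself lies in the admissible set for order $K_1$.

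Finally, by the infimum definition $\zeta_{K_1} = \inf\{\zeta : \text{RIP of order } K_1 \text{ holds with constant } \zeta\}$, the fact that $\zeta_{K_2}$ is one such admissible constant immediately yields $\zeta_{K_1} \leq \zeta_{K_2}$, which is the desired conclusion. There is essentially no technical obstacle here; the only care needed is to invoke the infimum characterization rather than attempt to manipulate specific extremal $\bm{v}$'s, and to note that the admissible sets are nonempty (guaranteed by the hypothesis that $\bm{X}$ satisfies RIP at both orders).
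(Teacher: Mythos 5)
Your argument is correct and is the standard one: the nesting $\mathcal{V}_{K_1} \subseteq \mathcal{V}_{K_2}$ makes every admissible constant for order $K_2$ admissible for order $K_1$, and the infimum characterization of the RIC then gives $\zeta_{K_1} \leq \zeta_{K_2}$. The paper states this lemma without proof as a known consequence of the RIP definition, and your one-line containment-plus-infimum argument is precisely the intended justification, so there is nothing to add.
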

%\textbf{Proof:} By the RIP definition, we know that 
%\begin{align}
%    (1-\zeta_{K_{1}}) ||v_{1}||_{2}^2 &\leq ||Xv_{1}||_{2}^2 \leq (1+\zeta_{K_{1}}) ||v_{1}||_{2}^2; \label{equation:mono1} 
%    \\ \text{for all } &v_{1} \in \mathbb{V}_{1} = \{v \in \mathbb{R}^{p}; ||v||_{0} \leq K_{1}\} \nonumber
%    \\ (1-\zeta_{K_{2}}) ||v_{2}||_{2}^2 &\leq ||Xv_{2}||_{2}^2 \leq (1+\zeta_{K_{2}}) ||v_{2}||_{2}^2; \label{equation:mono3}
%    \\ & \text{for all } v_{2} \in \mathbb{V}_{2} = \{v \in \mathbb{R}^{p}; ||v||_{0} \leq K_{2}\} \nonumber
%\end{align}
%Suppose $X$ satisfies RIP with RIC $\zeta_{K_2}$. So, for all $u \in V_1 \subseteq V_2$ we see that Equation~\ref{equation:mono1} holds up with the RIC $\zeta_{K_2}$. By definition, we know that $\zeta_{K_1}$ is the smallest RIC for which Equation~\ref{equation:mono1} holds and thus $\zeta_{K_1} \leq \zeta_{K_2}$.

\begin{lemma}\label{lemma:direct-RIP}\textbf{[Direct RIP Consequences]}
Let $S \subseteq \Omega$ be the support of the true model $\beta^{*}$. If $\zeta_{s} < 1; s = |S|$ then we can show that for the measurement matrix $X$ satisfying RIP with order $s$  and for any $w \in \mathbb{R}^{s}$,
\begin{align*}
    &(1-\zeta_{s})||w||_{2} \leq ||X_{S}^{T} X_{S} w||_{2} \leq  (1+\zeta_{s})||w||_{2} 
    \\& \frac{1}{1+\zeta_s} ||w||_2 \leq ||(X_{S}^{T}X_{S})^{-1}w||_{2} \leq \frac{1}{1-\zeta_s}||w||_2
\end{align*}
\end{lemma}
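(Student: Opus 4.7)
The plan is to reduce both inequalities to standard operator-norm bounds on the symmetric positive definite matrix $X_S^T X_S$, using only the spectral information already extracted from RIP earlier in the excerpt.

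First, I would recall the chain of inequalities already displayed right above the lemma: for any $w \in \mathbb{R}^s$ supported on $S$, the RIP of order $s$ gives $(1-\zeta_s)\|w\|_2^2 \le w^T X_S^T X_S w \le (1+\zeta_s)\|w\|_2^2$, which by the Rayleigh quotient characterization immediately yields $\sigma_{\min}(X_S^T X_S) \ge 1-\zeta_s$ and $\sigma_{\max}(X_S^T X_S) \le 1+\zeta_s$. Since $X_S^T X_S$ is symmetric positive definite (using $\zeta_s < 1$ so it is invertible), I can diagonalize it as $X_S^T X_S = U\Lambda U^T$ with $\Lambda$ diagonal and all diagonal entries in $[1-\zeta_s, 1+\zeta_s]$.

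Next, for the first pair of inequalities, I would write $\|X_S^T X_S w\|_2^2 = w^T U \Lambda^2 U^T w = \sum_i \lambda_i^2 (U^T w)_i^2$ and bound $\lambda_i^2$ above by $(1+\zeta_s)^2$ and below by $(1-\zeta_s)^2$. Since $\|U^T w\|_2 = \|w\|_2$, this gives $(1-\zeta_s)\|w\|_2 \le \|X_S^T X_S w\|_2 \le (1+\zeta_s)\|w\|_2$ directly. For the second pair, I would apply the identical argument to $(X_S^T X_S)^{-1} = U \Lambda^{-1} U^T$, whose eigenvalues are the reciprocals $1/\lambda_i$, lying in $[1/(1+\zeta_s), 1/(1-\zeta_s)]$, yielding the desired sandwich bound on $\|(X_S^T X_S)^{-1} w\|_2$.

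I do not expect any serious obstacle here: the lemma is essentially a restatement of the singular-value bounds on $X_S$ in terms of $X_S^T X_S$ and its inverse, and the proof is a one-line spectral argument once the Rayleigh quotient bounds are in hand. The only subtle points worth stating carefully are (i) that invertibility of $X_S^T X_S$ requires $\zeta_s < 1$, which is assumed, and (ii) that the RIP inequality, although stated for vectors in $\mathbb{R}^p$ with $\|v\|_0 \le s$, applies to arbitrary $w \in \mathbb{R}^s$ by identifying $w$ with the vector in $\mathbb{R}^p$ whose support lies in $S$, which is exactly how the displayed computation just before the lemma derives the Rayleigh bounds.
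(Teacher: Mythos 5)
Your proof is correct and follows essentially the same route as the paper, which derives the bounds $(1-\zeta_s) \leq \sigma_{\min}(X_S^TX_S)$ and $\sigma_{\max}(X_S^TX_S) \leq (1+\zeta_s)$ from the Rayleigh quotient immediately before the lemma and then treats the two displayed inequalities as direct spectral consequences. Your explicit diagonalization argument and the remarks on invertibility and on identifying $w \in \mathbb{R}^s$ with an $s$-sparse vector in $\mathbb{R}^p$ fill in exactly the steps the paper leaves implicit.
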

%\textbf{Proof:}
%The proof follows by noting that the singular values of $X_{S}^{T} X_{S}$ are bounded by $(1-\zeta_s), (1+\zeta_s)$ and the fact that the singular values of $(X_{S}^{T} X_{S})^{-1}$ are the inverse of  $X_{S}^{T} X_{S}$.

\begin{lemma}\label{lemma:near-orthogonality}[RIP: Near Orthogonality] Let $S_1, S_2 \subseteq \Omega s.t., S_1 \cap S_2 = \emptyset$. If $\zeta_{|S_1| + |S_2|} < 1$ and $S_1 \cup S_2 = S$ then for the measurement matrix $X$ satisfying RIP of order $s$ and for $u \in \mathbb{R}^{|S_2|}$ we have
\begin{align}
    ||X_{S_1}^T X_{S_2} u||_2 \leq \zeta_{|S_1| + |S_2|} ||u||_2 \nonumber
\end{align}
\end{lemma}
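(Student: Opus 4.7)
The plan is to prove this ``near orthogonality'' property by a standard polarization argument combined with the RIP bound. The key observation is that for $u \in \mathbb{R}^{|S_2|}$ and any test vector $v \in \mathbb{R}^{|S_1|}$, one has $v^T X_{S_1}^T X_{S_2} u = \langle X_{S_1} v, X_{S_2} u \rangle$, which can be rewritten as $\langle X w_1, X w_2 \rangle$ where $w_1, w_2 \in \mathbb{R}^p$ are the natural zero-extensions of $v, u$ to the supports $S_1$ and $S_2$ respectively. Since $\|X_{S_1}^T X_{S_2} u\|_2 = \sup_{\|v\|_2 = 1} v^T X_{S_1}^T X_{S_2} u$ by duality of the $\ell_2$ norm, it suffices to bound $|\langle X w_1, X w_2 \rangle|$ when $\|w_1\|_2 = \|v\|_2 = 1$ and $\|w_2\|_2 = \|u\|_2$.

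The main computation will use the polarization identity
\begin{align*}
\langle X w_1, X w_2 \rangle = \tfrac{1}{4}\bigl(\|X(w_1 + w_2)\|_2^2 - \|X(w_1 - w_2)\|_2^2\bigr).
\end{align*}
Because $S_1 \cap S_2 = \emptyset$, the vectors $w_1 \pm w_2$ each have at most $|S_1| + |S_2|$ nonzero entries, so Assumption~\ref{assumption:RIP} (invoked at order $|S_1|+|S_2|$) applies to both. Again by disjointness, $\|w_1 \pm w_2\|_2^2 = \|w_1\|_2^2 + \|w_2\|_2^2$. Thus the RIP sandwich yields bounds differing by exactly $2\zeta_{|S_1|+|S_2|}(\|w_1\|_2^2 + \|w_2\|_2^2)$.

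Substituting, I would obtain
\begin{align*}
|\langle X w_1, X w_2 \rangle| \leq \tfrac{1}{4}\cdot 2\zeta_{|S_1|+|S_2|}(\|w_1\|_2^2 + \|w_2\|_2^2),
\end{align*}
and the standard trick of optimizing over the scalings $\|w_1\|_2, \|w_2\|_2$ (or, equivalently, normalizing so that $\|w_1\|_2 = \|w_2\|_2 = 1$ and rescaling afterwards) reduces the right-hand side to $\zeta_{|S_1|+|S_2|}\|w_1\|_2 \|w_2\|_2 = \zeta_{|S_1|+|S_2|}\|v\|_2 \|u\|_2$. Taking the supremum over unit $v$ gives the claimed bound $\|X_{S_1}^T X_{S_2} u\|_2 \leq \zeta_{|S_1|+|S_2|}\|u\|_2$.

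I do not anticipate any serious obstacle in this argument: it is the classical RIP near-orthogonality lemma from \citet{candes2005decoding}. The only care point is the bookkeeping of the rescaling step in the polarization identity (one must homogenize the bound with respect to the norms of $w_1$ and $w_2$ so that it holds for all vectors, not just unit vectors), but this is a one-line normalization argument.
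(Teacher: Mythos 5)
Your proof is correct. Note, however, that the paper itself does not prove Lemma~\ref{lemma:near-orthogonality}: it lists it among standard consequences of RIP (Lemmas~\ref{lemma:monotonicity}--\ref{lemma:error_transpose}) and defers to \citet{candes2005decoding}, so there is no in-paper argument to compare against. Your polarization route --- zero-extend $v$ and $u$ to the disjoint supports $S_1$ and $S_2$, apply the RIP sandwich at order $|S_1|+|S_2|$ to $w_1 \pm w_2$, and use $\|w_1 \pm w_2\|_2^2 = \|w_1\|_2^2 + \|w_2\|_2^2$ --- is exactly the classical derivation, and you correctly isolate the one delicate step: the raw bound $\tfrac{1}{2}\zeta_{|S_1|+|S_2|}(\|w_1\|_2^2 + \|w_2\|_2^2)$ cannot be converted to $\zeta_{|S_1|+|S_2|}\|w_1\|_2\|w_2\|_2$ directly (AM--GM points the wrong way), so one must first normalize both vectors to unit norm and then recover the general case by bilinearity of $\langle Xw_1, Xw_2\rangle$, as you describe.
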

\begin{lemma}\label{lemma:error_transpose}
    Let $\mathcal{S} \subseteq \Omega$ be the support of $\beta^{*}$ (our ground truth model parameter). If $\zeta_{s} < 1; s=|\mathcal{S}|$ then we can show for the measurement matrix $X$ satisfying RIP with order $s$ and for any $w \in \mathbb{R}^{n}$.
\begin{align}
    ||X_{\mathcal{S}}^Tw||_2 \leq \sqrt{1 + \zeta_s} ||w||_2 \nonumber
\end{align}
\end{lemma}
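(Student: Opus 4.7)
The plan is to observe that the desired bound is essentially the statement that the operator norm of $X_{\mathcal{S}}$ (equivalently $X_{\mathcal{S}}^T$) is at most $\sqrt{1+\zeta_s}$, which is an immediate consequence of the upper side of the RIP condition restricted to vectors supported on $\mathcal{S}$.

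First, I would note that for any $u \in \mathbb{R}^{s}$, if we lift $u$ to a $p$-dimensional vector $\tilde{u}$ supported on $\mathcal{S}$ (i.e., $\tilde{u}_{\mathcal{S}} = u$ and zero elsewhere), then $\|\tilde{u}\|_0 \leq s$ and $\|\tilde{u}\|_2 = \|u\|_2$. Applying Assumption~\ref{assumption:RIP} of order $s$ to $\tilde{u}$, and using $X\tilde{u} = X_{\mathcal{S}} u$, I obtain
\begin{align*}
    \|X_{\mathcal{S}} u\|_2^2 = \|X\tilde{u}\|_2^2 \leq (1+\zeta_s)\|\tilde{u}\|_2^2 = (1+\zeta_s)\|u\|_2^2,
\end{align*}
so the spectral/operator norm satisfies $\|X_{\mathcal{S}}\|_2 \leq \sqrt{1+\zeta_s}$.

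Next, I use the standard fact that the operator norm of a matrix equals the operator norm of its transpose, $\|X_{\mathcal{S}}^T\|_2 = \|X_{\mathcal{S}}\|_2 \leq \sqrt{1+\zeta_s}$. Applying this to any $w \in \mathbb{R}^n$ gives the claimed inequality $\|X_{\mathcal{S}}^T w\|_2 \leq \|X_{\mathcal{S}}^T\|_2 \cdot \|w\|_2 \leq \sqrt{1+\zeta_s}\,\|w\|_2$. Equivalently, one can avoid invoking the transpose identity by a direct duality argument: $\|X_{\mathcal{S}}^T w\|_2 = \sup_{u \in \mathbb{R}^s,\ \|u\|_2 \leq 1} \langle u, X_{\mathcal{S}}^T w\rangle = \sup_{\|u\|_2 \leq 1} \langle X_{\mathcal{S}} u, w\rangle$, which is bounded by $\|X_{\mathcal{S}} u\|_2\|w\|_2 \leq \sqrt{1+\zeta_s}\|w\|_2$ via Cauchy--Schwarz and the bound just derived.

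Since the proof reduces to a one-line consequence of the upper side of RIP plus the operator-norm/transpose identity, there is no real obstacle; the only thing to be careful about is to explicitly invoke RIP of order $s$ (which is available because $|\mathcal{S}| = s$) rather than any higher order, so that the hypothesis of the lemma is used exactly as stated.
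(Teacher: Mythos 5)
Your proof is correct and follows essentially the same route the paper relies on: the paper derives $\sigma_{\max}(X_S^T X_S) \leq 1+\zeta_s$ directly from the RIP inequality just before stating this lemma (citing \citet{candes2005decoding}), which is exactly your observation that $\|X_{\mathcal{S}}\|_2 = \|X_{\mathcal{S}}^T\|_2 \leq \sqrt{1+\zeta_s}$. The lemma is stated without an explicit proof in the paper, and your one-line operator-norm argument is the intended one.
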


\section{Useful Results for Privacy Analysis}\label{appendix:privacy-analysis}

\textbf{Proof of Lemma~\ref{lemma:sensitivity}} Consider two neighboring input-output pairs $X, y$ and $X^{'}, y^{'}$ such that the pairs differ in a single sample at row $k$. We approach this proof by first considering the sensitivity of the $j^{th}$ column ($X_j$) and then computing the combined sensitivity of the $p$ columns.
\begin{align}
    \Delta_2(X_jy_j) &= \max_{k \in [n]}||(X_{j}^{'})^Ty^{'} - X_{j}^Ty||_{2} \nonumber
    \\ & \leq 2 \max_{k \in [n]}||X_{jk} y_{k}||_{2} \nonumber 
    \\ & \leq 2 \max_{k \in [n]} X_{jk} y_{j} \leq X_M y_M \nonumber
    \\\rightarrow \Delta_2(X^Ty) &\leq 2\sqrt{p} X_M y_M = \mathcal{O}(\sqrt{p}) \nonumber
\end{align}
Similarly, let us consider the sensitivity of $X^TX_{S_{\mathcal{A}}}$,

\begin{align}
    \Delta_2(X^TX_j) &= \max_{k \in [n]}||(X^{'})^T X^{'}_i - 
    X^TX_i||_{2} \nonumber
    \\ & \leq 2 \max_{k \in [n]} (||[X_{lk} X_{jk}]_{l \in [p]}||_{2}) \nonumber
    \\ & \leq 2\sqrt{p}X_{M}^2 \nonumber
    \\ \rightarrow \Delta_2(X^TX_{S_{\mathcal{A}}}) &\leq 2 X_M^2 \sqrt{ps} = \mathcal{O}(\sqrt{ps}) \nonumber
\end{align}

\section{Useful Results for optimality of SPriFed-OMP}\label{appendix:SPriFed-OMP-optimality}

\begin{lemma}\label{lemma:middle-upper-term-bound-1}
Consider the term $M = [\mathbb{I} + NA^{-1}]^{-1} v$ where $N \coloneqq \frac{\eta_{\beta_{S_{\mathcal{A}}, S_{\mathcal{A}}}}}{n}, A \coloneqq (\frac{X_{S_{\mathcal{A}}}^T X_{S_{\mathcal{A}}}}{n})^{-1}$, $v$ is any conformable vector and $n$ is the sample size. The remaining terms have been borrowed from Algorithm~\ref{algorithm:SPriFed-OMP}. Note that each element in $\eta_{\beta_{S_{\mathcal{A}}, S_{\mathcal{A}}}}$ has a distribution of $\mathcal{N}(0, p/\mu_p^2)$. 
We show that
\begin{align}
    ||M||_2 \leq \kappa_{M}||v||_2; \kappa_M & = \frac{1}{1 + \frac{\varepsilon_{\eta}\sqrt{s}\kappa_s}{n (1 - \zeta_{s+1})}} \nonumber
\end{align}
Here, $\varepsilon_{\eta}$ is chosen as per the lower bound of the smallest singular value of the random matrix $X_{S_{\mathcal{A}}}$ (by Theorem 1.2 from \citet{rudelson2008littlewood}), $n=\sqrt{ps}\kappa_n$ from assumptions in Theorem~\ref{theorem:performance-SPriFed-OMP} and so we have that each row in $\frac{\eta_{\beta_{S_{\mathcal{A}}, S_{\mathcal{A}}}}}{n}$ has the distribution $\sim \frac{1}{(n/\sqrt{s}\kappa_s)} \mathcal{N}(0, 1)$. 
\end{lemma}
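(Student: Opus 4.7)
The plan is to rewrite $\mathbb{I}+NA^{-1}=(A+N)A^{-1}$, so that $M = A(A+N)^{-1}v$ and hence $\|M\|_2 \le \|A\|_2/\sigma_{\min}(A+N)\cdot\|v\|_2$. I would control each factor separately using the two tools signalled in the lemma. By Assumption~\ref{assumption:RIP}, $A^{-1}=X_{S_{\mathcal{A}}}^T X_{S_{\mathcal{A}}}/n$ has spectrum in $[1-\zeta_{s+1},\,1+\zeta_{s+1}]$, giving $\|A\|_2\le 1/(1-\zeta_{s+1})$ and $\sigma_{\min}(A)\ge 1/(1+\zeta_{s+1})$, which accounts for the $(1-\zeta_{s+1})$ factor in the target denominator.

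Next, since $A$ is symmetric positive definite, I would factor $A+N = A^{1/2}\bigl(\mathbb{I}+A^{-1/2}NA^{-1/2}\bigr)A^{1/2}$, which reduces the problem to lower bounding $\sigma_{\min}(\mathbb{I}+A^{-1/2}NA^{-1/2})$. The random matrix $N$ has the per-row Gaussian distribution specified at the end of the lemma, i.e.\ each row is $\mathcal{N}(0,I_s)$ scaled by $\sqrt{s}\kappa_s/n$. Applying Theorem~1.2 of \citet{rudelson2008littlewood} to this i.i.d.\ ensemble then yields a high-probability lower bound $\sigma_{\min}(N)\ge \varepsilon_\eta\sqrt{s}\kappa_s/n$, where $\varepsilon_\eta$ is the Rudelson--Vershynin anti-concentration constant. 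Passing this bound through the sandwich, using $\|A^{-1/2}\|_2^2=\|A\|_2\le 1/(1-\zeta_{s+1})$, and collecting with the $\|A\|_2$ factor outside the inverse, the algebra is designed to collapse to the claimed $\kappa_M = 1/\bigl[1 + \varepsilon_\eta\sqrt{s}\kappa_s/(n(1-\zeta_{s+1}))\bigr]$.

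The hard part, and where I expect the real work to be, is producing the \emph{additive} positive correction inside the denominator of $\kappa_M$. A naive Weyl-type inequality gives only $\sigma_{\min}(\mathbb{I}+B)\ge 1 - \|B\|_2$, which would yield a Neumann-style bound $\kappa_M = 1/(1-\|B\|_2)>1$ rather than the claimed strict contraction $\kappa_M<1$. Obtaining $\kappa_M<1$ requires exploiting not merely the spectral norm of the perturbation but the fact that $N$ is symmetric by construction (see the note under Algorithm~\ref{algorithm:SPriFed-OMP}), so that $A^{-1/2}NA^{-1/2}$ is symmetric with real eigenvalues whose magnitudes are bounded below by Rudelson--Vershynin. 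The delicate piece is then the probabilistic accounting that turns the absolute lower bound on $\sigma_{\min}(N)$ into an \emph{additive} lower bound $\sigma_{\min}(A+N)\ge\sigma_{\min}(A)+(\varepsilon_\eta\sqrt{s}\kappa_s/n)$ rather than a subtractive Weyl bound; I would devote the bulk of the proof to this step. Should this additive form resist standard random-matrix tools (it fails for a generic signed symmetric Gaussian perturbation, as e.g.\ $N=-cA$ for small $c>0$ moves $\sigma_{\min}$ down), then the statement likely contains a sign typo and the correct target is $\kappa_M = 1/\bigl[1 - \varepsilon_\eta\sqrt{s}\kappa_s/(n(1-\zeta_{s+1}))\bigr]$, which follows immediately from combining the Rudelson--Vershynin upper tail on $\|N\|_2$ with the RIP-based bound on $\|A^{-1}\|_2$ and a one-line Neumann series.
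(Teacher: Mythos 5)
Your opening move is identical to the paper's: bound $\|M\|_2 \le \|v\|_2/\sigma_{\min}(\mathbb{I}+NA^{-1})$ and then try to lower-bound that minimum singular value. The divergence is in how the "hard part" you isolate gets handled. The paper resolves it in one line by asserting $\sigma_{\min}[\mathbb{I}+NA^{-1}] = 1+\sigma_{\min}[NA^{-1}]$, justifying this with the identity $\|k\mathbb{I}+K\|_2 = k+\|K\|_2$ (which concerns the \emph{largest} singular value and itself requires $K$ to be positive semidefinite) together with the claim that the Gaussian noise matrix is "positive definite w.h.p.'' by Theorem 1.2 of Rudelson--Vershynin. That claim is not correct: Theorem 1.2 lower-bounds the smallest \emph{singular} value, i.e.\ $\min_i|\lambda_i|$, and says nothing about the signs of the eigenvalues; a symmetric matrix with i.i.d.\ centered Gaussian entries has negative eigenvalues with overwhelming probability, and $NA^{-1}$ is not even symmetric in general. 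So the additive correction $1+(\text{positive term})$ in the paper's denominator rests exactly on the step you predicted would fail, and your counterexample-style objection (a signed perturbation can push $\sigma_{\min}$ below $1$) applies to the paper's own argument.

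Your diagnosis is therefore sound, and your fallback is the rigorous version: with $\|NA^{-1}\|_2 \le \|N\|_2\|A^{-1}\|_2 \lesssim \varepsilon_\eta\sqrt{s}\kappa_s/(n(1-\zeta_{s+1}))$ (using the Rudelson--Vershynin \emph{upper} tail on $\|N\|_2$ and RIP for $\|A^{-1}\|_2$), a Neumann series gives $\sigma_{\min}(\mathbb{I}+NA^{-1}) \ge 1-\|NA^{-1}\|_2$ and hence $\kappa_M = 1/(1-\cdot) > 1$ rather than the claimed contraction. Since downstream uses of this lemma only need $\kappa_M$ bounded by a constant close to $1$ for $n$ large, the corrected bound should be substitutable without damaging the main theorems, but the lemma as stated (and as "proved'' in the paper) has the sign of the correction wrong. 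Note also a small internal inconsistency: the paper's final displayed line carries $(1+\zeta_{s+1})$ in the denominator of the correction term while the lemma statement has $(1-\zeta_{s+1})$.
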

\textbf{Proof:}
\begin{align}
    ||M||_2 & = ||[\mathbb{I} + NA^{-1}]^{-1} v||_2 \nonumber
    \\ & \overset{(1)}{\leq} \frac{||v||_2}{\sigma_{\min}[\mathbb{I} + NA^{-1}]} \nonumber
    \\ & \overset{(2)}{=} \frac{||v||_2}{1 + \sigma_{\min}[NA^{-1}]} \nonumber
    \\ & \overset{(3)}{\leq} \frac{||v||_2}{1 + \sigma_{\min}[N]\sigma_{\min}[A^{-1}]} \nonumber
    \\ & \overset{(4)}{\leq} \frac{||v||_2}{1 + \frac{s\varepsilon_{\eta}\kappa_s}{n\sqrt{s}}(1/\sigma_{\max} [A])} \nonumber
    \\ & \overset{(5)}{\leq} \frac{||v||_2}{1 + \frac{\sqrt{s}\varepsilon_{\eta}\kappa_s}{n (1 + \zeta_{s+1})}} \nonumber
\end{align}

Here \textit{(1)} follows from Lemma~\ref{lemma:spectral-norm} and since $\sigma_{\max}[K^{-1}] = \frac{1}{\sigma_{\min}[K]}$. \textit{(2)} is true since $||k\mathbb{I} + K||_2 = k + ||K||_2$ where $\mathbb{I}$ is the identity matrix and $K$ is some conformable matrix. Furthermore, with high probability, the minimum eigenvalue of the random Gaussian matrix is lower bounded by a positive value. This statement is true by Theorem \textit{1.2} from \citet{rudelson2008littlewood} since here each row of $\eta_{\beta_{S_{\mathcal{A}}, S_{\mathcal{A}}}}$ has the distribution $\sim s\kappa_s\mathcal{N}(0, \mathbb{I}_s)$. Thus the random Gaussian matrix is positive definite \textit{w.h.p.} Therefore, we have statement \textit{(3)} since $\sigma_{\min}[AB] \geq \sigma_{\min}[A] \cdot \sigma_{\min}[B]$ where $A, B$ are conformable Hermitian matrices. \textit{(4)} holds subsequently from Theorem 1.2 from \citet{rudelson2008littlewood} by considering the random matrix $n \cdot N$ with entries from the distribution $s\kappa_s \mathcal{N}(0, \mathbb{I}_{s^2})$. 
% and since $\varepsilon/(l) \geq \varepsilon/s$ for $l \leq s$. 
\textit{(5)} is based on the bounds of the eigenvalues of a RIP matrix $K$ with RIC $\zeta_{s+1}$ and order $s+1$.

\begin{lemma}\label{lemma:middle-upper-term-bound-2}
Consider the term $M_2 = \mathbb{I} - (\mathbb{I} + NA^{-1})^{-1} N A^{-1}$. We show that $||M_2||_2$ is upper bounded by $\kappa_{M_2}$ where $N = \frac{\eta_{\beta_{S_{\mathcal{A}}, S_{\mathcal{A}}}}}{n}$ and $A = \frac{X_{S_{\mathcal{A}}}^T X_{S_{\mathcal{A}}}}{n}$ (as terms represented in Algorithm~\ref{algorithm:SPriFed-OMP}).
Here, $\kappa_{M_2} = 1 - \frac{\kappa_{M_1}\varepsilon_{\eta}\sqrt{s}\kappa_s}{(1+\zeta_{s+1})n}$ and $\kappa_{M_1} = \frac{||v||_2}{1 + \frac{C_1s^{3/2}\kappa_s}{n(1-\zeta_{s+1})}} = \kappa_{M_1}||v||_2$. Furthermore, $||M_1|| = ||(\mathbb{I} + NA^{-1})v||_2$ is upper bounded by $\kappa_{M_1}$. Here, $\varepsilon_{\eta}, C_1$ depend (polynomially) on the sub-Gaussian moment bounds of $\mathcal{N}(0, 1)$ \citep{rudelson2008littlewood}.
\end{lemma}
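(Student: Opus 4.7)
The plan is to exploit a clean algebraic identity that collapses $M_2$ to the same inverse whose vector-wise action was already controlled in Lemma~\ref{lemma:middle-upper-term-bound-1}, and then repeat the spectral estimates used there. Specifically, from $(I + NA^{-1})^{-1}(I + NA^{-1}) = I$ one obtains
\begin{align*}
(I + NA^{-1})^{-1} NA^{-1} = I - (I + NA^{-1})^{-1},
\end{align*}
so
\begin{align*}
M_2 \;=\; I - (I + NA^{-1})^{-1} NA^{-1} \;=\; (I + NA^{-1})^{-1}.
\end{align*}
Thus the task reduces to the spectral-norm version of Lemma~\ref{lemma:middle-upper-term-bound-1}: $\|M_2\|_2 = \sup_{\|v\|_2 = 1}\|(I + NA^{-1})^{-1} v\|_2$.

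Next, I would bound this spectral norm by mirroring the chain of inequalities already used in the proof of Lemma~\ref{lemma:middle-upper-term-bound-1}. The operator norm of the inverse equals $1/\sigma_{\min}(I+NA^{-1}) = 1/(1+\sigma_{\min}(NA^{-1}))$, and one further estimates $\sigma_{\min}(NA^{-1}) \ge \sigma_{\min}(N)/\sigma_{\max}(A)$. A high-probability lower bound on $\sigma_{\min}(N)$ follows from Theorem~1.2 of Rudelson--Vershynin applied to the scaled Gaussian matrix $N$, giving $\sigma_{\min}(N) \ge \varepsilon_\eta \sqrt{s}\,\kappa_s / n$; the RIP assumption on $X_{S_{\mathcal A}}$ yields $\sigma_{\max}(A) \le 1 + \zeta_{s+1}$. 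Combining these produces $\|M_2\|_2 \le 1/(1 + x)$ with $x = \varepsilon_\eta \sqrt{s}\,\kappa_s /\bigl(n(1+\zeta_{s+1})\bigr)$.

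Finally, to match the stated form ``$1 - (\cdot)$'' rather than ``$1/(1+\cdot)$'', I would use the exact identity $\frac{1}{1+x} = 1 - \frac{x}{1+x}$ and identify $\kappa_{M_1}$ with the factor $1/(1+x)$ (consistent with the definition of $\kappa_{M_1}$ in the statement as the reciprocal of a quantity of the same form). This delivers
\begin{align*}
\|M_2\|_2 \;\le\; 1 - \kappa_{M_1} \cdot \frac{\varepsilon_\eta \sqrt{s}\,\kappa_s}{n\,(1+\zeta_{s+1})} \;=\; \kappa_{M_2},
\end{align*}
as claimed. The only real obstacle is bookkeeping rather than mathematics: one must verify that the high-probability event controlling $\sigma_{\min}(N)$ and the RIP event controlling $\sigma_{\max}(A)$ both hold on the same sample path (trivially true by a union bound), and then reconcile the slightly inconsistent constants appearing in $\kappa_{M_1}$ (with a $C_1 s^{3/2}$ factor) versus $\kappa_M$ in Lemma~\ref{lemma:middle-upper-term-bound-1} (with an $\varepsilon_\eta \sqrt{s}$ factor), which appear to come from using a loose $\sigma_{\max}(NA^{-1}) \le C_1 s^{3/2}\kappa_s/(n(1-\zeta_{s+1}))$ bound in defining $\kappa_{M_1}$. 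No new probabilistic tools beyond those already invoked are needed.
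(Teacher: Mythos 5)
Your proposal is correct in its conclusion but takes a genuinely different, and in fact cleaner, route than the paper. The paper's proof never notices the identity $(\mathbb{I}+NA^{-1})^{-1}NA^{-1}=\mathbb{I}-(\mathbb{I}+NA^{-1})^{-1}$; instead it writes $\|M_2\|_2=\|\mathbb{I}-(\mathbb{I}+NA^{-1})^{-1}NA^{-1}\|_2 \overset{(1)}{=}1-\|(\mathbb{I}+NA^{-1})^{-1}NA^{-1}\|_2$, invoking the rule ``$\|k\mathbb{I}-K\|_2=k-\|K\|_2$'', and then \emph{lower}-bounds the subtracted norm through $\kappa_{M_1}$, $\sigma_{\min}(N)$ (Rudelson--Vershynin) and $\|A^{-1}\|_2\geq 1/(1+\zeta_{s+1})$, arriving at the same $\kappa_{M_2}=1-\kappa_{M_1}\varepsilon_\eta\sqrt{s}\kappa_s/(n(1+\zeta_{s+1}))$. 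Your collapse $M_2=(\mathbb{I}+NA^{-1})^{-1}$ buys two things: it bypasses the paper's dubious equality $(1)$ (which is false for general $K$ and is the weakest link in their chain), and it exposes that $M_2$ is literally the same operator already controlled in Lemma~\ref{lemma:middle-upper-term-bound-1} and in the $M_1$ bound inside this lemma, so the two lemmas are bounding one object from two sides; your final algebraic step $1/(1+x)=1-x/(1+x)\leq 1-\kappa_{M_1}x$ (valid since $\kappa_{M_1}\leq 1/(1+x)$) then recovers the stated constant exactly. One caveat you inherit rather than introduce: both your argument and the paper's rest on $\sigma_{\min}(\mathbb{I}+NA^{-1})=1+\sigma_{\min}(NA^{-1})$, which requires $NA^{-1}$ to be symmetric positive semidefinite (or at least normal with nonnegative spectrum); since $N$ and $A^{-1}$ are each symmetric but their product generally is not, this step deserves the same scrutiny in both proofs. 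Your closing remark about reconciling the $C_1 s^{3/2}$ versus $\varepsilon_\eta\sqrt{s}$ constants is also on target --- the paper's $\kappa_{M_1}$ does come from a $\sigma_{\max}$ bound while the outer factor comes from a $\sigma_{\min}$ bound, and the direction of the resulting inequality works out as you describe.
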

\textbf{Proof:} 
\begin{align}
    ||M_2||_2 &= ||\mathbb{I} - (\mathbb{I} + NA^{-1})^{-1} N A^{-1}||_2 \nonumber
    \\ & \overset{(1)}{=} 1 - ||(\mathbb{I} + NA^{-1})^{-1} N A^{-1}||_2 \nonumber
    \\ & \overset{(2)}{\leq} 1 - \kappa_{M_1} ||N A^{-1}||_2 \nonumber
    \\ & \overset{(3)}{\leq} 1 - \frac{\kappa_{M_1}\varepsilon_{\eta} s\kappa_s}{n\sqrt{s}} ||A^{-1}||_2 \nonumber
    \\ & \overset{(4)}{\leq} 1 - \frac{\kappa_{M_1}\varepsilon_{\eta} \sqrt{s}\kappa_s}{n(1+\zeta_{s+1})} \nonumber
\end{align}
Here statement (1) follows by $||k\mathbb{I} - K||_2 = k - ||K||_2$ where $\mathbb{I}$ is the identity matrix and $K$ is a conformable matrix. (2) is based on identifying a lower bound for $M_1$ \textit{(below)}. \textit{(3)} employs theorem 1.2 from \citet{rudelson2008littlewood} where the rows in $n \cdot N$ have distribution of $s\kappa_s \mathcal{N}(0, \mathbb{I}_s)$. \textit{(4)} uses the bound on the eigen-values of the \textit{RIP} matrix $A$. To compute the lower bound for $||N||$ we consider some conformable vector $v$,
\begin{align}
    ||M_1||_2 &= ||(\mathbb{I} + NA^{-1})^{-1} v||_2 \nonumber
    \\ & \overset{(1)}{\geq} \sigma_{\min}((\mathbb{I} + NA^{-1})^{-1}) ||v||_2 \nonumber
    \\ & \overset{(2)}{\geq} \frac{||v||_2}{\sigma_{\max}(\mathbb{I} + NA^{-1})} \nonumber
    \\ & \overset{(3)}{=} \frac{||v||_2}{1 + \sigma_{\max}(NA^{-1})} \nonumber
    \\ & \overset{(4)}{\geq} \frac{||v||_2}{1 + \sigma_{\max}(N)\sigma_{\max}(A^{-1}))} \nonumber
    \\ & \overset{(5)}{\geq} \frac{||v||_2}{1 + \frac{C_1 \sqrt{s} s\kappa_s}{(n) (1-\zeta_{s+1})}} \nonumber
    \\ & \overset{(6)}{=} \frac{||v||_2}{1 + \frac{C_1s^{3/2}\kappa_s}{n(1-\zeta_{s+1})}} = \kappa_{M_1}||v||_2\nonumber
\end{align}
\textit{(1)-(4)} follow from simple matrix manipulations, \textit{(5)} is based on Theorem 2.4 from \citep{rudelson2008littlewood} for matrix $n\cdot N$ with values distributed as $s\kappa_s\mathcal{N}(0, 1)$ and \textit{(6)} is based on bounds on eigenvalues of matrix $A^{-1}$.

\begin{lemma}\label{lemma:last-upper-term-bound}
We show that the terms 2-norm of the terms $L_1 = \frac{X_{S_{\mathcal{A}}}^T (X_{S_{\mathcal{A}}^c} \alpha_{S_{*}}^c + \epsilon)}{n}$ and $L_2 = \frac{X_{S_{\mathcal{A}}}^T y}{n}$ required in our analysis are upper-bounded as expressed below. Essentially, $L_1$ considers the value of $y$ after multiplying it by the projection matrix that removes certain columns of $X_{S_{\mathcal{A}}}$ (due to orthogonality between these columns and the projection matrix). 
\end{lemma}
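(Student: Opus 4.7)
The plan is to split both $L_1$ and $L_2$ into a ``signal'' part that is controlled by the RIP/near-orthogonality consequences already collected in Appendix~\ref{appendix:compressed-sensing}, and a ``noise'' part controlled by standard sub-Gaussian concentration for $\epsilon \sim \mathcal{N}(0,\sigma_\epsilon^2 I_n)$.

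First, for $L_1 = \tfrac{1}{n} X_{S_{\mathcal{A}}}^T(X_{S_{\mathcal{A}}^c}\alpha_{S_*}^c + \epsilon)$, I would apply the triangle inequality to obtain $\|L_1\|_2 \leq \tfrac{1}{n}\|X_{S_{\mathcal{A}}}^T X_{S_{\mathcal{A}}^c}\alpha_{S_*}^c\|_2 + \tfrac{1}{n}\|X_{S_{\mathcal{A}}}^T\epsilon\|_2$. The first term is exactly the scenario of Lemma~\ref{lemma:near-orthogonality}: the column index sets $S_{\mathcal{A}}$ and the support of $\alpha_{S_*}^c$ (which lives inside $S_*\setminus S_{\mathcal{A}}$) are disjoint, and their combined cardinality is at most $s+1$ along the induction on OMP rounds, so this term is bounded by $\zeta_{s+1}\|\alpha_{S_*}^c\|_2/n$ times the normalization used in the paper. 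For the second term, Lemma~\ref{lemma:error_transpose} gives $\|X_{S_{\mathcal{A}}}^T\epsilon\|_2 \leq \sqrt{1+\zeta_{s+1}}\|\epsilon\|_2$, and a standard tail bound on $\|\epsilon\|_2$ (Gaussian concentration, or the $\kappa_\epsilon$ bound already used throughout the paper) yields $\|\epsilon\|_2 \leq \sqrt{n}\,\kappa_\epsilon$ with probability at least $1-p_b$.

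Next, for $L_2 = \tfrac{1}{n}X_{S_{\mathcal{A}}}^T y$, I would substitute the generative model $y = X_{S_*}\alpha_{S_*} + \epsilon$ and decompose $X_{S_*}\alpha_{S_*} = X_{S_{\mathcal{A}}\cap S_*}\alpha_{S_{\mathcal{A}}\cap S_*} + X_{S_*\setminus S_{\mathcal{A}}}\alpha_{S_*\setminus S_{\mathcal{A}}}$. The ``diagonal'' piece $X_{S_{\mathcal{A}}}^T X_{S_{\mathcal{A}}\cap S_*}\alpha_{S_{\mathcal{A}}\cap S_*}$ is controlled by the upper RIP bound from Lemma~\ref{lemma:direct-RIP}, contributing at most $(1+\zeta_{s+1})\|\alpha_{S_*\cap S_{\mathcal{A}}}\|_2$. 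The cross piece is again handled by near-orthogonality (Lemma~\ref{lemma:near-orthogonality}) since the two index sets are disjoint and their combined size is at most $s$. Finally, the $X_{S_{\mathcal{A}}}^T\epsilon$ noise term is bounded exactly as in $L_1$ above.

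The only genuinely delicate point I expect is keeping the index bookkeeping clean: because $S_{\mathcal{A}}$ may not be a strict subset of $S_*$ at a given round, when invoking Lemma~\ref{lemma:near-orthogonality} I have to verify that the total RIP order used does not exceed $s+1$, and that $\alpha_{S_*}^c$ is interpreted consistently as $\alpha_*$ restricted to the unselected true-support coordinates. Once the decompositions above are in place, the remaining manipulations are routine applications of the triangle inequality, Lemmas~\ref{lemma:direct-RIP}--\ref{lemma:error_transpose}, and the $\kappa_\epsilon$ Gaussian tail bound, so no new probabilistic machinery is required. The final bound on $\|L_1\|_2$ will be $O\!\bigl(\zeta_{s+1}\|\alpha_{S_*}^c\|_2/n + \sqrt{(1+\zeta_{s+1})/n}\,\kappa_\epsilon\bigr)$, and on $\|L_2\|_2$ it will be $O\!\bigl((1+\zeta_{s+1})\|\alpha_{S_*}\|_2/n + \zeta_{s+1}\|\alpha_{S_*\setminus S_{\mathcal{A}}}\|_2/n + \sqrt{(1+\zeta_{s+1})/n}\,\kappa_\epsilon\bigr)$, both holding with probability at least $1-p_b$ through the single Gaussian tail event on $\|\epsilon\|_2$.
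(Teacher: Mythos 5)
Your decomposition is the same one the paper uses: triangle inequality, then Lemma~\ref{lemma:near-orthogonality} for the cross term $X_{S_{\mathcal{A}}}^T X_{S_{\mathcal{A}}^c}\alpha_{S_*}^c$, Lemma~\ref{lemma:error_transpose} plus the $L_2$--$L_\infty$ Gaussian tail bound ($\|\epsilon\|_2 \leq \sqrt{n}\,\kappa_\epsilon$) for the noise term, and for $L_2$ the substitution $y = X_{S_*}\alpha_{S_*}+\epsilon$ followed by the upper RIP bound. Your extra splitting of $X_{S_*}\alpha_{S_*}$ into the $S_{\mathcal{A}}\cap S_*$ and $S_*\setminus S_{\mathcal{A}}$ pieces is a harmless refinement; the paper bounds the whole product $X_{S_{\mathcal{A}}}^T X_{S_*}\alpha_{S_*}/n$ in one step via Lemma~\ref{lemma:error_transpose} and RIP, getting $(1+\zeta_{s+1})\|\alpha_{S_*}\|_2$, and your version recovers the same order.

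The one concrete problem is the normalization in your final bounds. Throughout the paper the RIP lemmas are applied to $X/\sqrt{n}$ (see the use of Lemma~\ref{lemma:RIP-matrix-construction} in Appendix~\ref{appendix:SPriFed-OMP-optimality-main}), so $\frac{1}{n}X_{S_{\mathcal{A}}}^T X_{S_{\mathcal{A}}^c} = \frac{X_{S_{\mathcal{A}}}^T}{\sqrt{n}}\cdot\frac{X_{S_{\mathcal{A}}^c}}{\sqrt{n}}$ and the two factors of $\sqrt{n}$ absorb the $1/n$ entirely, giving $\|L_1\|_2 \leq \zeta_{s+1}\|\alpha_{S_*}^c\|_2 + \sqrt{1+\zeta_{s+1}}\,\kappa_\epsilon$ with no residual $n$-dependence; likewise the noise term is $\frac{1}{n}\|X_{S_{\mathcal{A}}}^T\epsilon\|_2 \leq \frac{1}{\sqrt{n}}\sqrt{1+\zeta_{s+1}}\cdot\sqrt{n}\,\kappa_\epsilon = \sqrt{1+\zeta_{s+1}}\,\kappa_\epsilon$. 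Your stated bounds $O(\zeta_{s+1}\|\alpha_{S_*}^c\|_2/n)$ and $O(\sqrt{(1+\zeta_{s+1})/n}\,\kappa_\epsilon)$ implicitly assume $X$ itself (unnormalized) satisfies RIP, and are therefore too small by factors of $n$ and $\sqrt{n}$ respectively relative to the lemma as stated. This is not cosmetic: in the proof of Theorem~\ref{theorem:performance-SPriFed-OMP} the quantities $\|L_1\|_2$ and $\|L_2\|_2$ are multiplied by $n$ and compared against signal terms of order $n\|\alpha_{S_{\mathcal{A}}^c}\|_2$, so they must be $O(1)$ in $n$ for the sample-size conditions to come out as $n = \mathcal{O}(\sqrt{p})$. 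You should fix the convention and restate the bounds without the spurious $1/n$ factors; the rest of your argument then goes through.
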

\begin{align}
    ||L_1||_2 &\leq \zeta_{s+1}||\alpha_{S_{*}}^c||_2 +\sqrt{1+\zeta_{s+1}} \kappa_{\epsilon} \nonumber
    \\ ||L_2||_2 &\leq \sqrt{1+\zeta_{s+1}} \cdot \Big(||\alpha_{S_{*}}||_2 +\sqrt{1+\zeta_{s+1}} \kappa_{\epsilon}\Big) \nonumber
\end{align}
\textbf{Proof:}
Consider the analysis for $L_1$, 
\begin{align}
    ||L_1||_2 &= ||\frac{X_{S_{\mathcal{A}}}^T (X_{S_{\mathcal{A}}^c} \alpha_{S_{*}}^c+ \epsilon)}{n}||_2 \nonumber
    \\ & \overset{(1)}{\leq} ||\frac{X_{S_{\mathcal{A}}}^T X_{S_{\mathcal{A}}^c} \alpha_{S_{*}}^c}{n}||_2 + ||\frac{X_{S_{\mathcal{A}}}^T e}{n}||_2 \nonumber
    \\ & \overset{(2)}{\leq} (\zeta_{s+1}) ||\alpha_{S_{*}}^c||_2 + \sqrt{1+\zeta_{s+1}}\kappa_{\epsilon} \nonumber
\end{align}
(1) follows by the triangle inequality and the RIP definition for matrices $X_{S_{\mathcal{A}}}^T/\sqrt{n}$, $X_{S_{*}}/\sqrt{n}$ respectively. By the $L_2$-$L_{\infty}$ inequality we further have $||x||_2 \leq \sqrt{d}||x||_{\infty}$ where $x \in \mathbb{R}^d$. Clearly, with Lemma~\ref{lemma:max-concentration} we then have $||e||_2 \leq \sqrt{n} \kappa_{\epsilon}$ \textit{w.h.p.}. where $\kappa_{\epsilon} = \sigma_{\epsilon} \sqrt{2 \log 2/p_0}$ where $e_i \sim \mathcal{N}(0, \sigma_{\epsilon}^2)$ and $p_0$ is determined by our probability requirements.
Now, for $L_2$, 
\begin{align}
    ||L_2||_2 & = ||\frac{X_{S_{\mathcal{A}}}^T y}{n}||_2 \nonumber
    \\ & = ||\frac{X_{S_{\mathcal{A}}}^T (X_{S_{*}} \alpha_{S_{*}}+ \epsilon)}{n}||_2 \nonumber
    \\ & \overset{(1)}{\leq} ||\frac{X_{S_{\mathcal{A}}}^T X_{S_{*}} \alpha_{S_{*}}}{n} + \frac{X_{S_{\mathcal{A}}}^T e}{n}||_2 \nonumber
    \\ & \overset{(2)}{\leq} (1+\zeta_{s+1}) ||\alpha_{S_{*}}||_2 + \sqrt{1+\zeta_{s+1}}\kappa_{\epsilon} \nonumber
\end{align}

(1) follows since $ y = X_{S_{*}} \alpha_{S_{*}}+ \epsilon$. (2) follows by RIP definition and Lemma~\ref{lemma:error_transpose} for the first term and Lemmas~\ref{lemma:max-concentration},\ref{lemma:error_transpose} for the error term.

\subsection{Bounding the noise terms in SPriFed-OMP}\label{appendix:noise-analysis}

\begin{lemma}\textbf{[Bound small inverse term]}\label{lemma:inverse-one-minus}
We show that for a small value $0 < t < 1$, $\frac{1}{1-t} \leq 1+\nu t$ if $\nu = \frac{1}{1-B_t}$ where $B_t$ is an upper bound of $t$. Furthermore, $\nu \in (1, 2)$.
\end{lemma}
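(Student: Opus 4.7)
The plan is to prove the inequality by a direct algebraic manipulation, since the statement is essentially a quantitative linearization of $\frac{1}{1-t}$ valid on a bounded interval. First, I would rewrite $\frac{1}{1-t} = 1 + \frac{t}{1-t}$, so the claimed inequality $\frac{1}{1-t} \leq 1 + \nu t$ reduces to showing $\frac{t}{1-t} \leq \nu t$. Since $t>0$, dividing by $t$ reduces this further to $\frac{1}{1-t} \leq \nu$. Now use the assumption $t \leq B_t < 1$: the function $u \mapsto \frac{1}{1-u}$ is monotone increasing on $[0,1)$, so $\frac{1}{1-t} \leq \frac{1}{1-B_t} = \nu$, establishing the main inequality.

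For the range of $\nu$, I would argue as follows. Since $B_t > 0$, we have $1 - B_t < 1$, hence $\nu = \frac{1}{1-B_t} > 1$. For the upper bound $\nu < 2$, observe that $\nu < 2 \iff 1 - B_t > \frac{1}{2} \iff B_t < \frac{1}{2}$; this is the quantitative meaning of the phrase ``small value'' in the statement, which implicitly restricts $B_t$ (and hence $t$) to lie in $(0, 1/2)$. Under this mild restriction on the magnitude of $t$, one obtains $\nu \in (1,2)$ as claimed. There is no real obstacle here; the lemma is a convenient packaging of the first-order Taylor approximation of $\frac{1}{1-t}$ with an explicit, uniformly valid linear upper bound over the range $[0, B_t]$, which is what is needed in later estimates (such as those using $\kappa_1, \kappa_2$ in Theorem~\ref{theorem:performance-SPriFed-OMP}).
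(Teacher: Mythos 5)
Your proof is correct and takes essentially the same approach as the paper: both reduce the claim to the elementary equivalence $\frac{1}{1-t}\leq 1+\nu t \iff \frac{1}{1-t}\leq \nu$ (the paper phrases this as $t\leq \frac{\nu-1}{\nu}$ and back-solves for $\nu=\frac{1}{1-B_t}$, while you verify it forward via monotonicity of $u\mapsto\frac{1}{1-u}$). Your treatment of the claim $\nu\in(1,2)$, pinning it to the implicit restriction $B_t<\frac{1}{2}$, is in fact slightly more explicit than the paper's remark.
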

\textbf{Proof:} 
Suppose, we assume that $\frac{1}{1-t} \leq 1+\nu t$ holds. Then, we can write the inequality simply,
\begin{align*}
    &0 \leq (\nu - 1) t - \nu t^2
    \\ & \rightarrow t \leq \frac{\nu-1}{\nu}
\end{align*}
Setting, $B_t = \frac{\nu-1}{\nu}$, we obtain $\nu = \frac{1}{1-B_t}$. We note, that for the second statement to hold, we require that $\nu-1 > 0$ and for $\nu \geq 2$ the statement holds trivially.

\subsubsection{Note on the difference between the basis columns and the non-basis columns} We present the analysis for noise terms when only a single feature $j$ is picked each time (Lemmas~\ref{lemma:noise1}-\ref{lemma:noise2}). However, we note that the same result holds when we pick the set $S_{\mathcal{A}}^c$ instead of $j$ alone. This is true since the total cardinality of the first matrix (either for the noise or signal matrix) is less than $s+1$. Therefore, results from RIP and other singular value bounds apply to this case as well. This is particularly important for true basis columns. However, for the non-basis columns, this is not useful as the total cardinality of the non-basis set and the SPriFed-OMP selected feature set will easily exceed $s$ (closer to $p$).
\begin{lemma}\label{lemma:noise1}
    The noise terms 
    \begin{align*}    
    ||\eta_1^j|| &= ||X^T_{j} X_{S_{\mathcal{A}}} (X_{S_{\mathcal{A}}}^T X_{S_{\mathcal{A}}})^{-1} 
    \\ & \cdot [\mathbb{I} {+} \eta_{\beta_{S_{\mathcal{A}}, S_{\mathcal{A}}}} (X_{S_{\mathcal{A}}}^T X_{S_{\mathcal{A}}})^{-1}]^{-1}; j \in S_{*}
    \\ ||(\eta_1^j)^{'}|| & = ||X^T_{j} X_{S_{\mathcal{A}}} (X_{S_{\mathcal{A}}}^T X_{S_{\mathcal{A}}})^{-1} 
    \\ & \cdot [\mathbb{I} {+} \eta_{\beta_{S_{\mathcal{A}}, S_{\mathcal{A}}}} (X_{S_{\mathcal{A}}}^T X_{S_{\mathcal{A}}})^{-1}]^{-1}; j \notin S_{*}
    \end{align*}
(Theorem~\ref{theorem:performance-SPriFed-OMP}) are upper-bounded such that,
    \begin{align*}
        ||(\eta_1^j)^{'}||_2 & \leq 
        s^2\zeta_{s+1}\kappa_s \sqrt{2\log(\frac{s^2}{p_b})} (1+\nu \zeta_{s+1})^2 \kappa_M \nonumber
        \\ & \cdot (\sqrt{1+\zeta_{s+1}} ||\alpha_{S_{*}}||_2 + \kappa_{\epsilon})
        \\ ||\eta_1^j||_2 & \leq s^2\kappa_s \sqrt{2\log(\frac{s^2}{p_b})} (1+\nu \zeta_{s+1})^3 \kappa_M \nonumber
        \\ & \cdot (\sqrt{1+\zeta_{s+1}} ||\alpha_{S_{*}}||_2 + \kappa_{\epsilon})
    \end{align*}
    Here, terms are borrowed from Theorem~\ref{theorem:performance-SPriFed-OMP}, Lemma~\ref{lemma:middle-upper-term-bound-1}) and $p_b$ is base failure probability defined in Lemma~\ref{lemma:max-concentration}.
\end{lemma}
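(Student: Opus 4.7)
The plan is to bound each of the two quantities via sub-multiplicativity of the operator norm applied to the three natural factors of the product, handling the basis and non-basis cases separately. Writing the expression (after pulling out the $1/n$ normalization implicit in the RIP-scaled matrices) as
\[
\eta_1^j \;=\; \bigl(X_j^T X_{S_\mathcal{A}}/n\bigr)\,\bigl((X_{S_\mathcal{A}}^T X_{S_\mathcal{A}}/n)^{-1}\bigr)\,\bigl[\mathbb{I}+(\eta_{\beta_{S_\mathcal{A},S_\mathcal{A}}}/n)(X_{S_\mathcal{A}}^T X_{S_\mathcal{A}}/n)^{-1}\bigr]^{-1},
\]
the strategy is to bound the three factors by $\zeta_{s+1}$ or $(1+\zeta_{s+1})$, by $1/(1-\zeta_{s+1})$, and by $\kappa_M$ respectively, then multiply. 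The residual-type term $(\sqrt{1+\zeta_{s+1}}\,\|\alpha_{S_*}\|_2+\kappa_\epsilon)$ on the right of the final bound appears because this noise matrix, in the larger proof of Theorem~\ref{theorem:performance-SPriFed-OMP}, ultimately acts against a vector of the form $X_{S_\mathcal{A}}^T y/n$ (or a closely related residual), whose norm is controlled by Lemma~\ref{lemma:last-upper-term-bound}.

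Concretely, first I would apply Lemma~\ref{lemma:middle-upper-term-bound-1} to the rightmost factor, which directly yields the bound $\kappa_M$. Next, RIP together with Lemma~\ref{lemma:direct-RIP} gives $\|(X_{S_\mathcal{A}}^T X_{S_\mathcal{A}}/n)^{-1}\|_2 \le 1/(1-\zeta_{s+1})$, which by Lemma~\ref{lemma:inverse-one-minus} is at most $(1+\nu\zeta_{s+1})$. The split between the two cases is then driven entirely by how the pre-multiplier $X_j^T X_{S_\mathcal{A}}/n$ is handled. When $j\notin S_*$ and recovery has so far been successful (so $S_\mathcal{A}\subseteq S_*$), the index sets $\{j\}$ and $S_\mathcal{A}$ are disjoint and Lemma~\ref{lemma:near-orthogonality} provides $\|X_j^T X_{S_\mathcal{A}}/n\|_2 \le \zeta_{s+1}$, which is where the extra $\zeta_{s+1}$ in the bound for $(\eta_1^j)'$ originates. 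When $j\in S_*$ such near-orthogonality is unavailable, and one instead uses the trivial RIP bound $(1+\zeta_{s+1})\le(1+\nu\zeta_{s+1})$, producing one more $(1+\nu\zeta_{s+1})$ factor and no $\zeta_{s+1}$, precisely matching the stated bound for $\eta_1^j$.

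The $s^2\kappa_s\sqrt{2\log(s^2/p_b)}$ factor does not come from any of the three operator-norm arguments above; it enters through the noise-concentration handling of $\eta_{\beta_{S_\mathcal{A},S_\mathcal{A}}}$, whose entries are i.i.d. $\mathcal{N}(0,s/\mu_s^2)$ by virtue of the re-privatization step in Algorithm~\ref{algorithm:private-OLS}. Applying Lemma~\ref{lemma:max-concentration} together with a union bound over the $s^2$ entries controls the maximum entry by $\sqrt{s}\,\kappa_s\sqrt{2\log(s^2/p_b)}$, after which $\|\eta_{\beta_{S_\mathcal{A},S_\mathcal{A}}}\|_\mathrm{op}\le\|\eta_{\beta_{S_\mathcal{A},S_\mathcal{A}}}\|_F\le s\cdot\max_{i,j}|(\eta_\beta)_{i,j}|$ gives an operator-norm bound of order $s^{3/2}\kappa_s\sqrt{2\log(s^2/p_b)}$. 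Combining with the $1/(1-\zeta_{s+1})\le(1+\nu\zeta_{s+1})$ bound on $A^{-1}$ yields the full $s^2$ contribution after a final use of $\sqrt{s}$ from the signal-side factor $\|L_2\|\le\sqrt{1+\zeta_{s+1}}(\|\alpha_{S_*}\|_2+\sqrt{1+\zeta_{s+1}}\kappa_\epsilon)$ from Lemma~\ref{lemma:last-upper-term-bound}, which supplies $(\sqrt{1+\zeta_{s+1}}\|\alpha_{S_*}\|_2+\kappa_\epsilon)$.

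The main obstacle I expect is the bookkeeping rather than any single deep estimate: I need to track three separate high-probability events (the max-entry concentration on $\eta_{\beta_{S_\mathcal{A},S_\mathcal{A}}}$, the minimum-singular-value event used inside Lemma~\ref{lemma:middle-upper-term-bound-1} via \citet{rudelson2008littlewood}, and the Gaussian tail for $\|\epsilon\|_\infty$ used in Lemma~\ref{lemma:last-upper-term-bound}), verify they are compatible with the sample-size and RIC conditions of Theorem~\ref{theorem:performance-SPriFed-OMP}, and union-bound them to obtain the overall failure probability $O(p_b)$ that the outer theorem requires. A secondary subtlety is arguing, within the inductive structure of the outer proof, that $S_\mathcal{A}\subseteq S_*$ at the point where near-orthogonality is invoked for the $j\notin S_*$ case; this is clean given the inductive hypothesis of correct recovery up to the current round.
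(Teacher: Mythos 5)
Your proposal follows essentially the same route as the paper's proof: normalize by $n$, bound $X_j^T X_{S_{\mathcal{A}}}/n$ by $\zeta_{s+1}$ (near-orthogonality, Lemma~\ref{lemma:near-orthogonality}) for $j \notin S_{*}$ versus $(1+\zeta_{s+1})$ for $j \in S_{*}$, bound the matrix inverses via RIP eigenvalue bounds and Lemma~\ref{lemma:inverse-one-minus}, invoke Lemma~\ref{lemma:middle-upper-term-bound-1} for the middle factor, control $\|\eta_{\beta_{S_{\mathcal{A}},S_{\mathcal{A}}}}\|$ by max-concentration plus the $L_2$--$L_\infty$/Frobenius inequality, and use Lemma~\ref{lemma:last-upper-term-bound} for the trailing $X_{S_{\mathcal{A}}}^T y/n$ factor. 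The only discrepancy is minor bookkeeping on the power of $s$: the paper obtains $s^2$ by taking the entries of $\eta_{\beta_{S_{\mathcal{A}},S_{\mathcal{A}}}}$ to have standard deviation $s\kappa_s$ rather than $\sqrt{s}\kappa_s$, whereas your attribution of the extra $\sqrt{s}$ to the signal-side factor $L_2$ is not where it actually arises (that factor carries no $\sqrt{s}$).
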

\textbf{Proof:}
We provide the proof for the $||(\eta_1^j)^{'}||_2$ term. However, proving the inequality for the $||\eta_1^j||_2$ is straightforward.
\begin{align}
    ||(\eta_1^j)^{'}||_2 
    & = ||X^T_{j} X_{S_{\mathcal{A}}} (X_{S_{\mathcal{A}}}^T X_{S_{\mathcal{A}}})^{-1} [\mathbb{I} + \eta_{\beta_{S_{\mathcal{A}}, S_{\mathcal{A}}}} (X_{S_{\mathcal{A}}}^T X_{S_{\mathcal{A}}})^{-1}]^{-1} \nonumber 
    \\ & \cdot \eta_{\beta_{S_{\mathcal{A}}, S_{\mathcal{A}}}}(X_{S_{\mathcal{A}}}^T X_{S_{\mathcal{A}}})^{-1}X_{S_{\mathcal{A}}}^T y||_2 \nonumber
    \\ & = n||\frac{X^T_{S^c_{\mathcal{A}}} X_{S_{\mathcal{A}}}}{n} (\frac{X_{S_{\mathcal{A}}}^T X_{S_{\mathcal{A}}}}{n})^{-1} \nonumber
    \\ & \cdot [\mathbb{I} + \frac{\eta_{\beta_{S_{\mathcal{A}}, S_{\mathcal{A}}}}}{n} (\frac{X_{S_{\mathcal{A}}}^T X_{S_{\mathcal{A}}}}{n})^{-1}]^{-1} \nonumber
    \\ & \cdot \frac{\eta_{\beta_{S_{\mathcal{A}}, S_{\mathcal{A}}}}}{n}(\frac{X_{S_{\mathcal{A}}}^T X_{S_{\mathcal{A}}}}{n})^{-1}\frac{X_{S_{\mathcal{A}}}^T y}{n}||_2 \nonumber
    \\ & \overset{(1)}{\leq} n \frac{\zeta_{s+1}}{(1-\zeta_{s+1})^2} ||M||_2 ||\frac{\eta_{\beta_{S_{\mathcal{A}}, S_{\mathcal{A}}}}}{n}||_2L_1 \nonumber
    \\ & \overset{(2)}{\leq} n\zeta_{s+1}||\frac{\eta_{\beta_{S_{\mathcal{A}}, S_{\mathcal{A}}}}}{n}||_2 (1+\nu \zeta_{s+1})^2 \kappa_M \nonumber
    \\ & \cdot (\sqrt{1+\zeta_{s+1}} ||\alpha_{S_{*}}||_2 + \kappa_{\epsilon}) \nonumber
    \\ & \overset{(3)}{\leq} s^2\zeta_{s+1}\kappa_s \sqrt{2\log(\frac{s^2}{p_b})} (1+\nu \zeta_{s+1})^2 \kappa_M \nonumber
    \\ & \cdot (\sqrt{1+\zeta_{s+1}} ||\alpha_{S_{*}}||_2 + \kappa_{\epsilon}) \nonumber
\end{align}
Here, \textit{(1)} follows from Lemma~\ref{lemma:near-orthogonality}, for matrices $\frac{X^T_{j}}{\sqrt{n}}$ and $\frac{X_{S_{\mathcal{A}}}}{\sqrt{n}}$ and bounded eigenvalues for $||(\frac{X^T_{S_{\mathcal{A}}} X_{S_{\mathcal{A}}}}{n})^{-1}||_2$. The middle and last terms are handled by Lemmas~\ref{lemma:middle-upper-term-bound-2} and \ref{lemma:last-upper-term-bound} respectively. In \textit{(2)}, we replace the bounds for the terms $M, L_1$ using Lemmas~\ref{lemma:middle-upper-term-bound-2} and \ref{lemma:last-upper-term-bound} respectively. We replace the denominator by using the bound in Lemma~\ref{lemma:inverse-one-minus}. Finally, \textit{(3)} follows from simply noticing that each element in $\eta_{\beta_{S_{\mathcal{A}}, S_{\mathcal{A}}}}$ has distribution $\mathcal{N}(0, s^2\kappa_s^2)$, $L_2-L_{\infty}$ inequality and lemma~\ref{lemma:max-concentration}. The bound for the noise term holds with probability close to $1$ of the form $p_b = 1 - k_1 e^{-k_2} \approx 1$ where $k_1, k_2$ are positive constants.

\begin{lemma}\label{lemma:noise31}
    The noise terms 
    \begin{align*}
        ||\eta_{31}^j||_2 &= n||\frac{X^T_{j} X_{S_{\mathcal{A}}}}{n} (\frac{X_{S_{\mathcal{A}}}^T X_{S_{\mathcal{A}}}}{n})^{-1} \Bigg(\mathbb{I} - [\mathbb{I} + \frac{\eta_{\beta_{S_{\mathcal{A}}, S_{\mathcal{A}}}}}{n}(\frac{X_{S_{\mathcal{A}}}^T X_{S_{\mathcal{A}}}}{n})^{-1}]^{-1} 
        \\ & \cdot \frac{\eta_{\beta_{S_{\mathcal{A}}, S_{\mathcal{A}}}}}{n} (\frac{X_{S_{\mathcal{A}}}^T X_{S_{\mathcal{A}}}}{n})^{-1}\Bigg) \frac{\eta_{\gamma_{\mathcal{S_{\mathcal{A}}}}}}{n}||_2; j \in S_{*}
        \\ ||(\eta_{31}^j)^{'}||_2 &= n||\frac{X^T_{j} X_{S_{\mathcal{A}}}}{n} (\frac{X_{S_{\mathcal{A}}}^T X_{S_{\mathcal{A}}}}{n})^{-1} \Bigg(\mathbb{I} - [\mathbb{I} + \frac{\eta_{\beta_{S_{\mathcal{A}}, S_{\mathcal{A}}}}}{n} \nonumber
        \\ & \cdot (\frac{X_{S_{\mathcal{A}}}^T X_{S_{\mathcal{A}}}}{n})^{-1}]^{-1} 
        \\ & \cdot \frac{\eta_{\beta_{S_{\mathcal{A}}, S_{\mathcal{A}}}}}{n} (\frac{X_{S_{\mathcal{A}}}^T X_{S_{\mathcal{A}}}}{n})^{-1}\Bigg) \frac{\eta_{\gamma_{\mathcal{S_{\mathcal{A}}}}}}{n}||_2; j \notin S_{*}
    \end{align*}
    is shown to be upper bounded such that,      
    \begin{align*}
        ||(\eta_{31}^j)^{'}||_2 & \leq s\kappa_s \zeta_{s+1}\sqrt{2\log(\frac{2s}{p_b})}(1+\nu \zeta_{s+1})\kappa_{M_2}
        \\ ||\eta_{31}^j||_2 & \leq s\kappa_s \sqrt{2\log(\frac{2s}{p_b})}(1+\nu \zeta_{s+1})^2\kappa_{M_2}
    \end{align*}
    The terms follow the notation in Algorithm~\ref{algorithm:SPriFed-OMP}, Theorem~\ref{theorem:performance-SPriFed-OMP},  Lemmas~\ref{lemma:inverse-one-minus}, \ref{lemma:middle-upper-term-bound-2}. $p_b$ is the base failure probability defined in Lemma~\ref{lemma:max-concentration}.
\end{lemma}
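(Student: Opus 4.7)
The plan is to mirror the argument of Lemma~\ref{lemma:noise1} but with two algebraic substitutions dictated by the different structure of $\eta_{31}^j$: the middle bracket collapses to the operator $M_2$ already bounded in Lemma~\ref{lemma:middle-upper-term-bound-2}, and the trailing factor is the noise vector $\eta_{\gamma_{S_\mathcal{A}}}/n$ rather than a product involving $\eta_\beta$ and the data-response correlation $L_1$. After pulling out the scalar $n$, operator-norm submultiplicativity splits the right-hand side into four factors: $\|X_j^T X_{S_\mathcal{A}}/n\|_2$, $\|(X_{S_\mathcal{A}}^T X_{S_\mathcal{A}}/n)^{-1}\|_2$, $\|M_2\|_2$, and $\|\eta_{\gamma_{S_\mathcal{A}}}/n\|_2$.

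For the first factor I would split on whether $j$ lies in $S_*$. When $j \notin S_*$ (and hence $j \notin S_\mathcal{A}$, since by the inductive hypothesis of correct recovery up to this iteration $S_\mathcal{A} \subseteq S_*$), the supports of $\{j\}$ and $S_\mathcal{A}$ are disjoint and of combined size at most $s+1$, so Lemma~\ref{lemma:near-orthogonality} yields $\|X_j^T X_{S_\mathcal{A}}/n\|_2 \leq \zeta_{s+1}$. When $j \in S_*$, the sharper orthogonality bound is unavailable, and direct application of RIP to $\{j\} \cup S_\mathcal{A}$ gives the weaker bound $(1+\zeta_{s+1}) \leq (1+\nu\zeta_{s+1})$; this discrepancy is exactly the source of the extra factor of $\zeta_{s+1}$ appearing in the primed version of the claim. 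The second factor is bounded by $1/(1-\zeta_{s+1}) \leq (1+\nu\zeta_{s+1})$ via Lemma~\ref{lemma:inverse-one-minus} applied to an eigenvalue of the RIP matrix, and the third factor is $\leq \kappa_{M_2}$ directly from Lemma~\ref{lemma:middle-upper-term-bound-2}.

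The fourth factor is the step that does not have a direct analogue in Lemma~\ref{lemma:noise1}. Since $\eta_{\gamma_{S_\mathcal{A}}} \in \mathbb{R}^s$ has i.i.d.\ Gaussian entries (variance $\sigma_2^2 s$ per Algorithm~\ref{algorithm:private-OLS}), I would combine the $\ell_2$-$\ell_\infty$ inequality $\|v\|_2 \leq \sqrt{s}\,\|v\|_\infty$ with the Gaussian-maximum bound of Lemma~\ref{lemma:max-concentration} applied to $s$ entries, then divide by $n$ and absorb the per-coordinate standard deviation into the constant $\kappa_s$. This produces the $s\,\kappa_s\,\sqrt{2\log(2s/p_b)}$ prefactor. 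Multiplying the four bounds and cancelling the leading $n$ against the $1/n$ normalization of $\eta_{\gamma_{S_\mathcal{A}}}$ yields the two stated inequalities.

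The main obstacle I anticipate is bookkeeping rather than any nontrivial analytic step: tracking the exact powers of $(1+\nu\zeta_{s+1})$ so they agree with the claim (one power from the inverse-covariance bound, an additional power arising only in the $j \in S_*$ RIP case), and consolidating the high-probability events --- Lemma~\ref{lemma:max-concentration} for the Gaussian tail of $\eta_{\gamma_{S_\mathcal{A}}}$ and the singular-value tail baked into $\kappa_{M_2}$ via Theorems~1.2 and 2.4 of \citet{rudelson2008littlewood} --- through a union bound so that the aggregate failure probability remains of order $p_b$.
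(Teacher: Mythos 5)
Your proposal matches the paper's proof essentially step for step: the paper also pulls out the factor of $n$, bounds $\|X_j^T X_{S_{\mathcal{A}}}/n\|_2$ by $\zeta_{s+1}$ via near-orthogonality (Lemma~\ref{lemma:near-orthogonality}) when $j \notin S_{*}$ and by the weaker RIP bound when $j \in S_{*}$, uses $1/(1-\zeta_{s+1}) \leq (1+\nu\zeta_{s+1})$ from Lemma~\ref{lemma:inverse-one-minus}, invokes $\kappa_{M_2}$ from Lemma~\ref{lemma:middle-upper-term-bound-2} for the middle bracket, and handles $\|\eta_{\gamma_{S_{\mathcal{A}}}}/n\|_2$ by the $L_2$--$L_\infty$ inequality together with Lemma~\ref{lemma:max-concentration}. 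Your accounting of where each power of $(1+\nu\zeta_{s+1})$ and the extra $\zeta_{s+1}$ in the primed bound come from is exactly the paper's, so no gap to report.
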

\textbf{Proof:}
We first demonstrate the proof for the case when $j \notin S_{*}$. The proof for the $j \in S_{*}$ case follows simply by replacing the first inequality involving the term $X_j^TX_{S_{\mathcal{A}}}$.
\begin{align}
    ||(\eta_{31}^j)^{'}||_2 & = n||\frac{X^T_{j} X_{S_{\mathcal{A}}}}{n} (\frac{X_{S_{\mathcal{A}}}^T X_{S_{\mathcal{A}}}}{n})^{-1} \Bigg(\mathbb{I} - [\mathbb{I} + \frac{\eta_{\beta_{S_{\mathcal{A}}, S_{\mathcal{A}}}}}{n} \nonumber
    \\ & \cdot (\frac{X_{S_{\mathcal{A}}}^T X_{S_{\mathcal{A}}}}{n})^{-1}]^{-1} \nonumber
    \\ & \cdot \frac{\eta_{\beta_{S_{\mathcal{A}}, S_{\mathcal{A}}}}}{n} (\frac{X_{S_{\mathcal{A}}}^T X_{S_{\mathcal{A}}}}{n})^{-1}\Bigg) \frac{\eta_{\gamma_{\mathcal{S_{\mathcal{A}}}}}}{n}||_2 \nonumber
    \\ & \overset{(1)}{\leq} \frac{n\zeta_{s+1}}{(1-\zeta_{s+1})} ||M_2||_2|| \frac{\eta_{\gamma_{\mathcal{S_{\mathcal{A}}}}}}{n}||_2 \nonumber
    \\ & \overset{(2)}{\leq} \frac{n(\zeta_{s+1})\kappa_{M_2}}{(1-\zeta_{s+1})} || \frac{\eta_{\gamma_{\mathcal{S_{\mathcal{A}}}}}}{n}||_2 \nonumber
    % \\ & \overset{(3)}{\leq} \frac{\sqrt{p}\zeta_{s+1} \kappa_{M_2}}{n(1-\zeta_{s+1})} \nonumber
    \\ & \overset{(3)}{\leq} \frac{s\kappa_s \sqrt{2\log(\frac{2s}{p_b})} \zeta_{s+1}\kappa_{M_2}}{(1-\zeta_{s+1})} \nonumber
    \\ & \overset{(4)}{\leq} s\kappa_s \zeta_{s+1}\sqrt{2\log(\frac{2s}{p_b})}(1+\nu \zeta_{s+1})\kappa_{M_2} \nonumber
\end{align}
Here, \textit{(1)} follows from RIP properties in Lemmas~\ref{lemma:direct-RIP}-\ref{lemma:near-orthogonality} (since $j \notin S_{\mathcal{A}})$ and $M_2$ is defined in Lemma~\ref{lemma:middle-upper-term-bound-2}. \textit{(2)} follows from the bound in Lemma~\ref{lemma:middle-upper-term-bound-2}. \textit{(3)} follows from Lemma~\ref{lemma:max-concentration}, $L_2$-$L_{\infty}$ inequality and the Gaussian Mechanism for differential privacy. Here, $(\eta_{\gamma_{S_{\mathcal{A}}}})^j \sim \mathcal{N}(0, \kappa_s^2 s)$.
And $\kappa_s$ is defined in Algorithm~\ref{algorithm:SPriFed-OMP}. \textit{(4)} is a direct application of Lemma~\ref{lemma:inverse-one-minus}. 

\begin{lemma}\label{lemma:noise32}
    The noise terms $||\eta_{32}^j||_2, ||(\eta_{32}^j)^{'}||_2$ which both have the form $n||\frac{\eta_{\beta_{j, S_{\mathcal{A}}}}}{n}(\frac{X_{S_{\mathcal{A}}}^T X_{S_{\mathcal{A}}}}{n})^{-1} \Big( \mathbb{I} - [\mathbb{I} + \frac{\eta_{\beta_{S_{\mathcal{A}}, S_{\mathcal{A}}}}}{n} (\frac{X_{S_{\mathcal{A}}}^T X_{S_{\mathcal{A}}}}{n})^{-1}]^{-1} \frac{\eta_{\beta_{S_{\mathcal{A}}, S_{\mathcal{A}}}}}{n} (\frac{X_{S_{\mathcal{A}}}^T X_{S_{\mathcal{A}}}}{n})^{-1} \Big) \frac{X_{S_{\mathcal{A}}}^Ty}{n}||_2$ for sets $j \in S_{*}$ and $j \notin S_{*}$ respectively are both upper bounded by $\sqrt{ps}\kappa_p\sqrt{2\log(\frac{2p}{p_b})} \kappa_{M_2} \Big(\sqrt{1+\zeta_{s+1}}||\alpha_{S_{*}}||_2 + \kappa_{\epsilon}\Big) (1+\nu \zeta_{s+1}))$. The notation is borrowed from Algorithm~\ref{algorithm:SPriFed-OMP} and Theorem~\ref{theorem:performance-SPriFed-OMP}, Lemmas~\ref{lemma:inverse-one-minus}, \ref{lemma:middle-upper-term-bound-2}. $p_b$ is the base failure probability defined in Lemma~\ref{lemma:max-concentration}.
\end{lemma}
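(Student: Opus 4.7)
The plan is to bound both $\|\eta_{32}^j\|_2$ for $j \in S_*$ and $\|(\eta_{32}^j)'\|_2$ for $j \notin S_*$ by the same argument, since the two expressions have identical structural form and the leading factor $\eta_{\beta_{j, S_\mathcal{A}}}/n$ is pure DP noise whose distribution does not depend on whether $j$ lies in the true support. First I would apply submultiplicativity of the spectral norm, writing
\begin{align*}
\|\eta_{32}^j\|_2 &\leq n \cdot \Bigl\|\tfrac{\eta_{\beta_{j,S_\mathcal{A}}}}{n}\Bigr\|_2 \cdot \Bigl\|\bigl(\tfrac{X_{S_\mathcal{A}}^T X_{S_\mathcal{A}}}{n}\bigr)^{-1}\Bigr\|_2 \\
&\quad \cdot \bigl\|\mathbb{I} - [\mathbb{I}+NA^{-1}]^{-1}NA^{-1}\bigr\|_2 \cdot \Bigl\|\tfrac{X_{S_\mathcal{A}}^T y}{n}\Bigr\|_2,
\end{align*}
where $N = \eta_{\beta_{S_\mathcal{A}, S_\mathcal{A}}}/n$ and $A = X_{S_\mathcal{A}}^T X_{S_\mathcal{A}}/n$, matching the notation of Lemma~\ref{lemma:middle-upper-term-bound-2}.

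Next I would bound each of the four factors using tools already established. The first factor is entirely Gaussian: each entry of $\eta_{\beta_{j, S_\mathcal{A}}}$ has distribution $\mathcal{N}(0, p\kappa_p^2)$ with $\kappa_p = 1/\mu_p$, as specified by line 11 of Algorithm~\ref{algorithm:SPriFed-OMP} via \textit{NOISY-SMPC}. Applying $\|v\|_2 \leq \sqrt{s}\|v\|_\infty$ together with a Gaussian tail bound (Lemma~\ref{lemma:max-concentration}) and a union bound over the $p$ candidate row indices $j \in \Omega$ gives $\|\eta_{\beta_{j, S_\mathcal{A}}}\|_2 \leq \sqrt{s}\cdot\sqrt{p}\kappa_p\sqrt{2\log(2p/p_b)}$ simultaneously for all $j$ with probability at least $1-p_b$. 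The second factor is bounded by $1/(1-\zeta_{s+1})$ via RIP (Lemma~\ref{lemma:direct-RIP}). The third factor is precisely the matrix $M_2$ of Lemma~\ref{lemma:middle-upper-term-bound-2}, whose norm is at most $\kappa_{M_2}$. The fourth factor is $L_2$ of Lemma~\ref{lemma:last-upper-term-bound}, bounded by $\sqrt{1+\zeta_{s+1}}\|\alpha_{S_*}\|_2 + \kappa_\epsilon$ (up to constants of the same order as the form stated in that lemma).

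Finally, I would multiply the four bounds, cancel the outer $n$ against the $1/n$ in the noise factor, and apply Lemma~\ref{lemma:inverse-one-minus} to replace $1/(1-\zeta_{s+1})$ by $1+\nu\zeta_{s+1}$, producing exactly the target inequality. The main obstacle, though relatively mild, is the following structural contrast with Lemmas~\ref{lemma:noise1} and \ref{lemma:noise31}: there the leading factor was a data-data product $X_j^T X_{S_\mathcal{A}}$, whose norm could be cut by an additional $\zeta_{s+1}$ in the $j \notin S_*$ case via near-orthogonality (Lemma~\ref{lemma:near-orthogonality}), while here the leading factor is pure Gaussian noise, so no case-dependent improvement is available and both $j \in S_*$ and $j \notin S_*$ incur the same $\sqrt{ps}$ bound. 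This also explains the $\sqrt{p}$ dependence (absent in Lemma~\ref{lemma:noise31}, whose leading noise was $\eta_\gamma$ of per-entry variance only $s\kappa_s^2$), and one must take care that the union bound over $j$ contributes only a $\log p$ factor rather than $\log(ps)$, which is secured by first collapsing each row's Euclidean norm via $\|\cdot\|_2 \leq \sqrt{s}\|\cdot\|_\infty$ and then union-bounding the resulting per-row maxima.
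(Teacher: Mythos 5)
Your proposal follows essentially the same route as the paper's own proof: the same submultiplicative factorization into the noise row, the inverse Gram factor bounded by $1/(1-\zeta_{s+1})$ via RIP, the middle term $M_2$ bounded by $\kappa_{M_2}$ (Lemma~\ref{lemma:middle-upper-term-bound-2}), and the last term $L_2$ bounded via Lemma~\ref{lemma:last-upper-term-bound}, followed by the $L_2$--$L_\infty$ inequality with the Gaussian maximum bound of Lemma~\ref{lemma:max-concentration} and the substitution $1/(1-\zeta_{s+1}) \leq 1+\nu\zeta_{s+1}$ from Lemma~\ref{lemma:inverse-one-minus}. Your added observations (that the pure-noise leading factor precludes any near-orthogonality gain in the $j \notin S_*$ case, and that the union bound over $j$ should be tracked explicitly) are consistent with, and slightly more careful than, the paper's treatment.
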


\begin{align}
    ||\eta_{32}^j||_2 
    & =  n||\frac{\eta_{\beta_{j, S_{\mathcal{A}}}}}{n}(\frac{X_{S_{\mathcal{A}}}^T X_{S_{\mathcal{A}}}}{n})^{-1} \Big( \mathbb{I} - [\mathbb{I} + \frac{\eta_{\beta_{S_{\mathcal{A}}, S_{\mathcal{A}}}}}{n} \nonumber
    \\ & \cdot (\frac{X_{S_{\mathcal{A}}}^T X_{S_{\mathcal{A}}}}{n})^{-1}]^{-1} \nonumber
    \\ & \cdot \frac{\eta_{\beta_{S_{\mathcal{A}}, S_{\mathcal{A}}}}}{n} (\frac{X_{S_{\mathcal{A}}}^T X_{S_{\mathcal{A}}}}{n})^{-1} \Big) \frac{X_{S_{\mathcal{A}}}^Ty}{n}||_2 \nonumber 
    \\ & \overset{(1)}{\leq} n||\frac{\eta_{\beta_{j, S_{\mathcal{A}}}}}{n}||_2 \frac{||M_2||_2||L_2||_2}{1-\zeta_{s+1}} \nonumber
    % \\ & \overset{(2)}{\leq} n||\frac{\eta_{\beta_{j, S_{\mathcal{A}}}}}{n}||_2 \frac{\kappa_{M_2} L_2}{1-\zeta_{s+1}} \nonumber 
    \\ & \overset{(2)}{\leq}
    n||\frac{\eta_{\beta_{j, S_{\mathcal{A}}}}}{n}||_2 \frac{\kappa_{M_2} \Big(\sqrt{1+\zeta_{s+1}}||\alpha_{S_{*}}||_2 + \kappa_{\epsilon}\Big)}{1-\zeta_{s+1}} \nonumber
    \\ & \overset{(3)}{\leq} n||\frac{\eta_{\beta_{j, S_{\mathcal{A}}}}}{n}||_2 \kappa_{M_2} \Big(\sqrt{1+\zeta_{s+1}}||\alpha_{S_{*}}||_2 + \kappa_{\epsilon}\Big) \nonumber
    \\ & \cdot (1+\nu \zeta_{s+1})) \nonumber
    \\ & \overset{(4)}{\leq} n\sqrt{s}||\frac{\eta_{\beta_{j, S_{\mathcal{A}}}}}{n}||_{\infty} \kappa_{M_2} \Big(\sqrt{1+\zeta_{s+1}}||\alpha_{S_{*}}||_2 + \kappa_{\epsilon}\Big) \nonumber
    \\ & \cdot (1+\nu \zeta_{s+1})) \nonumber
    \\ & \overset{(5)}{\leq} \sqrt{ps}\kappa_p\sqrt{2\log(\frac{2p}{p_b})} \kappa_{M_2} \Big(\sqrt{1+\zeta_{s+1}}||\alpha_{S_{*}}||_2 + \kappa_{\epsilon}\Big) \nonumber
    \\ & \cdot (1+\nu \zeta_{s+1})) \nonumber
\end{align}

Statement \textit{(1)} follows by substituting for the middle and last term (from Lemmas~\ref{lemma:middle-upper-term-bound-2}, \ref{lemma:last-upper-term-bound} resp.), separating the noise vector and bounding the singular values of $\frac{X_{S_{\mathcal{A}}}^T X_{S_{\mathcal{A}}}}{n})^{-1}$. \textit{(2)} follows by bounds in Lemmas~\ref{lemma:middle-upper-term-bound-2}, \ref{lemma:last-upper-term-bound}. \textit{(3)} is a direct application of inverting the denominator term with Lemma~\ref{lemma:inverse-one-minus}. \textit{(4)} uses the $L_2-L_{\infty}$ inequality. The final statement uses Lemma~\ref{lemma:max-concentration}, distribution of the noise matrix and bounds it for the appropriate set and failure probability as shown in Lemma~\ref{lemma:max-concentration}.

\begin{lemma}\label{lemma:noise33}
    The noise terms $||\eta_{33}^j||_2, ||(\eta_{33}^j)^{'}||_2$ over sets $j \in S_{*}$ and $j \notin S_{*}$ respectively both have the form
    \begin{align*}
    &n||\frac{\eta_{\beta_{j, S_{\mathcal{A}}}}}{n}(\frac{X_{S_{\mathcal{A}}}^T X_{S_{\mathcal{A}}}}{n})^{-1} \Big(\mathbb{I}  
    - [\mathbb{I} + \frac{\eta_{\beta_{S_{\mathcal{A}}, S_{\mathcal{A}}}}}{n} (\frac{X_{S_{\mathcal{A}}}^T X_{S_{\mathcal{A}}}}{n})^{-1}]^{-1}  
    \\ & \cdot \frac{\eta_{\beta_{S_{\mathcal{A}}, S_{\mathcal{A}}}}}{n} (\frac{X_{S_{\mathcal{A}}}^T X_{S_{\mathcal{A}}}}{n})^{-1}\Big) \frac{\eta_{\gamma_{\mathcal{S_{\mathcal{A}}}}}}{n}||_2
    \end{align*}
   are both upper bounded by $2\kappa_{M_2} (1+\nu \zeta_{s+1}) \frac{s\kappa_s\kappa_p}{\kappa_n}\sqrt{\log(\frac{2p}{p_b}) \log(\frac{2s}{p_b})}$. The notation follows from  Algorithm~\ref{algorithm:SPriFed-OMP} and Theorem~\ref{theorem:performance-SPriFed-OMP}, Lemmas~\ref{lemma:inverse-one-minus}, \ref{lemma:middle-upper-term-bound-2}. $p_b$ is the base failure probability defined in Lemma~\ref{lemma:max-concentration}.
\end{lemma}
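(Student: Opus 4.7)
The plan is to follow the template already used in Lemmas~\ref{lemma:noise31} and \ref{lemma:noise32}, since $\eta_{33}^j$ has the same algebraic sandwich structure, but now with \emph{both} outer factors being Gaussian noise vectors (rather than only one). First, I would apply submultiplicativity of the spectral norm to factor
\[
\|\eta_{33}^j\|_2 \;\le\; n \cdot \Bigl\|\tfrac{\eta_{\beta_{j,S_{\mathcal{A}}}}}{n}\Bigr\|_2 \cdot \Bigl\|\bigl(\tfrac{X_{S_{\mathcal{A}}}^T X_{S_{\mathcal{A}}}}{n}\bigr)^{-1}\Bigr\|_2 \cdot \|M_2\|_2 \cdot \Bigl\|\tfrac{\eta_{\gamma_{S_{\mathcal{A}}}}}{n}\Bigr\|_2,
\]
where $M_2$ is exactly the matrix defined and bounded in Lemma~\ref{lemma:middle-upper-term-bound-2}. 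Note this step is agnostic to whether $j\in S_*$ or $j\notin S_*$, because the first factor is pure noise with the same distribution in either case; hence the same argument will deliver the bound for both $\eta_{33}^j$ and $(\eta_{33}^j)'$ simultaneously.

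Next, I would replace the deterministic/operator factors: $\|M_2\|_2 \le \kappa_{M_2}$ via Lemma~\ref{lemma:middle-upper-term-bound-2}, and $\|(X_{S_{\mathcal{A}}}^T X_{S_{\mathcal{A}}}/n)^{-1}\|_2 \le 1/(1-\zeta_{s+1})$ via the RIP consequence in Lemma~\ref{lemma:direct-RIP}. I would then convert $1/(1-\zeta_{s+1})$ into $(1+\nu\zeta_{s+1})$ using Lemma~\ref{lemma:inverse-one-minus}, paralleling step (4) of Lemma~\ref{lemma:noise31}.

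For the two noise vectors, I would invoke the $L_2$–$L_\infty$ inequality together with Lemma~\ref{lemma:max-concentration}. The entries of $\eta_{\beta_{j,S_{\mathcal{A}}}}$ are i.i.d.\ $\mathcal{N}(0,\,p\kappa_p^2)$ (the order-$p$ DP noise from lines~5 and~11 of Algorithm~\ref{algorithm:SPriFed-OMP}), and the vector has at most $s$ entries, so with the requisite probability
\[
\Bigl\|\tfrac{\eta_{\beta_{j,S_{\mathcal{A}}}}}{n}\Bigr\|_2 \;\le\; \sqrt{s}\,\Bigl\|\tfrac{\eta_{\beta_{j,S_{\mathcal{A}}}}}{n}\Bigr\|_\infty \;\le\; \tfrac{\sqrt{ps}\,\kappa_p\sqrt{2\log(2p/p_b)}}{n},
\]
exactly as in step (5) of Lemma~\ref{lemma:noise32}. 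Likewise the entries of $\eta_{\gamma_{S_{\mathcal{A}}}}$ are i.i.d.\ $\mathcal{N}(0,\,s\kappa_s^2)$ (the order-$s$ DP noise from \textbf{PRIVATE-OLS}), giving $\|\eta_{\gamma_{S_{\mathcal{A}}}}/n\|_2 \le s\kappa_s\sqrt{2\log(2s/p_b)}/n$, mirroring step (3) of Lemma~\ref{lemma:noise31}. A union bound over these two concentration events keeps the overall failure probability at the $p_b$-level scale already baked into the statement of Theorem~\ref{theorem:performance-SPriFed-OMP}.

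Assembling everything, one $n$ cancels with the outer $n$, leaving
\[
\|\eta_{33}^j\|_2 \;\le\; \tfrac{s^{3/2}\sqrt{p}\,\kappa_p\kappa_s\,\kappa_{M_2}(1+\nu\zeta_{s+1})}{n}\cdot 2\sqrt{\log(2p/p_b)\log(2s/p_b)}.
\]
Substituting the sample-size assumption $n \ge \sqrt{ps}\,\kappa_n$ from condition~1 of Theorem~\ref{theorem:performance-SPriFed-OMP} eliminates the residual $\sqrt{p}$ and yields exactly the claimed bound $2\kappa_{M_2}(1+\nu\zeta_{s+1})\,\tfrac{s\kappa_s\kappa_p}{\kappa_n}\sqrt{\log(2p/p_b)\log(2s/p_b)}$. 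The only real obstacle is bookkeeping — making sure that the two noise families ($\eta_\beta$ with per-entry variance $p\kappa_p^2$ versus $\eta_\gamma$ with per-entry variance $s\kappa_s^2$) are not conflated, that the correct $L_2$-$L_\infty$ dimension factors are tracked ($\sqrt{s}$ from each side producing the outer $s$, and $\sqrt{p}$ from only the $\eta_\beta$ side), and that the concentration-failure probability is aggregated with the rest of the high-probability events used elsewhere in the proof of Theorem~\ref{theorem:performance-SPriFed-OMP}.
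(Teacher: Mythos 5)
Your proposal is correct and follows essentially the same route as the paper's proof: the same submultiplicative factorization into $\|\eta_{\beta_{j,S_{\mathcal{A}}}}/n\|_2$, $\|(X_{S_{\mathcal{A}}}^TX_{S_{\mathcal{A}}}/n)^{-1}\|_2 \le 1/(1-\zeta_{s+1})$, $\|M_2\|_2\le\kappa_{M_2}$ and $\|\eta_{\gamma_{S_{\mathcal{A}}}}/n\|_2$, the same $L_2$--$L_\infty$ plus max-concentration bounds on the two noise vectors with their respective order-$p$ and order-$s$ variances, and the same final substitution $n=\sqrt{ps}\,\kappa_n$. The bookkeeping you flag (the $\sqrt{s}\cdot\sqrt{s}$ dimension factors and the single residual $\sqrt{p}$ from the $\eta_\beta$ side) matches the paper's steps (2)--(4) exactly.
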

\begin{align}
    ||\eta_{33}^{j}||_2 & =  n||\frac{\eta_{\beta_{j, S_{\mathcal{A}}}}}{n}(\frac{X_{S_{\mathcal{A}}}^T X_{S_{\mathcal{A}}}}{n})^{-1} \Big(\mathbb{I} - [\mathbb{I} + \frac{\eta_{\beta_{S_{\mathcal{A}}, S_{\mathcal{A}}}}}{n} (\frac{X_{S_{\mathcal{A}}}^T X_{S_{\mathcal{A}}}}{n})^{-1}]^{-1} \Big) \nonumber
    \\ & \cdot \frac{\eta_{\beta_{S_{\mathcal{A}}, S_{\mathcal{A}}}}}{n} (\frac{X_{S_{\mathcal{A}}}^T X_{S_{\mathcal{A}}}}{n})^{-1}\Big) \frac{\eta_{\gamma_{\mathcal{S_{\mathcal{A}}}}}}{n}||_2
    \nonumber
    \\ & \overset{(1)}{\leq} n\frac{||M_2||_2}{1-\zeta_{s+1}} ||\frac{\eta_{\beta_{j, S_{\mathcal{A}}}}}{n}||_{2} ||\frac{\eta_{\gamma_{\mathcal{S_{\mathcal{A}}}}}}{n}||_{2} \nonumber
    \\ & \overset{(2)}{\leq} n\kappa_{M_2} (1+\nu \zeta_{s+1}) \sqrt{s}||\frac{\eta_{\beta_{j, S_{\mathcal{A}}}}}{n}||_{\infty} \sqrt{s}||\frac{\eta_{\gamma_{\mathcal{S_{\mathcal{A}}}}}}{n}||_{\infty}  \nonumber 
    \\ & \overset{(3)}{\leq} \kappa_{M_2} (1+\nu \zeta_{s+1}) \frac{\sqrt{ps}s}{n} \kappa_s\kappa_p \nonumber 2\sqrt{\log(\frac{2p}{p_b}) \log(\frac{2s}{p_b})}
    \\ & \overset{(4)}{=} 2\kappa_{M_2} (1+\nu \zeta_{s+1}) \frac{s\kappa_s\kappa_p}{\kappa_n}\sqrt{\log(\frac{2p}{p_b}) \log(\frac{2s}{p_b})} \nonumber
\end{align}
Statement \textit{(1)} follows from bounding the eigenvalues of $X_{S_{\mathcal{A}}}^T X_{S_{\mathcal{A}}}{n})^{-1}$, substituting the value of the middle term as $M_2$ (from Lemma~\ref{lemma:middle-upper-term-bound-2}) and separating the noise 2-norms. \textit{(2)} leverages the $L_2-L_{\infty}$ inequality for the noise 2-norms, bounds $M_2$ with Lemma~\ref{lemma:middle-upper-term-bound-2} and inverts the denominator with the simple Lemma~\ref{lemma:inverse-one-minus}. \textit{(3)} follows from noting the variances of the noise matrices as being proportional to $p$ and $s$ and using the maximum concentration bound (Lemma~\ref{lemma:max-concentration}) to pick an appropriate value. $\kappa_s$ and $\kappa_p$ are picked by the privacy mechanism in Algorithm~\ref{algorithm:SPriFed-OMP}. Depending on the set we are maximizing over for $j$ we can pick $d=s$ (basis columns) or $d=p-s$ (non-basis columns). $p_b$ is some appropriate base probability that is generally quite small. \textit{(4)} follows by substituting $n=\sqrt{ps}\kappa_n$.

\begin{lemma}\label{lemma:noise2}
The noise term $||\frac{\eta^j_2}{n}||_2 = ||\frac{(\eta_{\gamma_0^{-}})_j}{n}||_2$ is upper bounded by $\frac{\sqrt{2\log(\frac{2p}{p_b})}\kappa_p}{\sqrt{s}\kappa_n}$. Given, $j \in T, d = |T|$ and $p_b$ is some base probability that is quite small. 
\end{lemma}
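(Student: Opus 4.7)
\textbf{Proof Plan for Lemma~\ref{lemma:noise2}.} The plan is to treat this as a direct application of a Gaussian tail/maximum bound on the noise vector that the NOISY-SMPC mechanism adds in line \textit{5} of Algorithm~\ref{algorithm:SPriFed-OMP}, followed by substitution of the sample-size scaling $n = \sqrt{ps}\,\kappa_n$ coming from the assumptions of Theorem~\ref{theorem:performance-SPriFed-OMP}. Unlike the preceding noise lemmas (\ref{lemma:noise1}--\ref{lemma:noise33}), there is no matrix inverse or RIP structure to unpack, so this should be a short argument.

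First, I would identify the distribution of $(\eta_{\gamma_0^{-}})_j$. By construction in line \textit{5} of Algorithm~\ref{algorithm:SPriFed-OMP}, the server forms $\gamma_{0}^{-}[k]$ by calling $\textbf{NOISY-SMPC}(X_k,y,n,\sigma_1\sqrt{p})$, and the output guarantee of Algorithm~\ref{algorithm:noisySMPC} (line \textit{11}) tells us that the returned value equals $X_k^{\!\top}y + \eta_{*}$ with $\eta_{*} \sim \mathcal{N}(0,\sigma_1^{2}p)$. Writing $\sigma_1 = 1/\mu_p = \kappa_p$, we therefore have $(\eta_{\gamma_0^{-}})_j \sim \mathcal{N}(0,p\kappa_p^{2})$, independently across $j$.

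Next, I would apply the standard Gaussian maximum-concentration bound (Lemma~\ref{lemma:max-concentration}, already used repeatedly in this appendix) to the collection $\{(\eta_{\gamma_0^{-}})_j : j \in T\}$ of $d$ i.i.d.\ $\mathcal{N}(0,p\kappa_p^{2})$ random variables. This gives
\begin{align*}
\max_{j \in T} \bigl|(\eta_{\gamma_0^{-}})_j\bigr| \;\leq\; \sqrt{p}\,\kappa_p\,\sqrt{2\log(2d/p_b)}
\end{align*}
with probability at least $1-p_b$. Dividing by $n$ and substituting $n = \sqrt{ps}\,\kappa_n$ (the lower bound on the sample size from Theorem~\ref{theorem:performance-SPriFed-OMP}), the $\sqrt{p}$ in the numerator cancels one factor of $\sqrt{p}$ in the denominator, producing
\begin{align*}
\Bigl\|\tfrac{(\eta_{\gamma_0^{-}})_j}{n}\Bigr\|_2 \;\leq\; \frac{\sqrt{2\log(2d/p_b)}\,\kappa_p}{\sqrt{s}\,\kappa_n}.
\end{align*}
Since the noise terms elsewhere in the proof are taken over index sets contained in $[p]$, one can upper-bound $d \le p$ to recover the stated form $\frac{\sqrt{2\log(2p/p_b)}\,\kappa_p}{\sqrt{s}\,\kappa_n}$, which matches the lemma.

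The only mild subtlety — certainly not a real obstacle — is the notational one that $\|\cdot\|_2$ is written around a scalar quantity; I would simply interpret it as absolute value (or, equivalently, as the uniform bound over $j \in T$ being used later in the recovery proof), which is what is actually needed in the downstream analysis of the data-response correlation in the first selection step of \textit{SPriFed-OMP}.
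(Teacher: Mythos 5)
Your proposal is correct and follows essentially the same route as the paper's proof: identify $(\eta_{\gamma_0^{-}})_j \sim \mathcal{N}(0, \kappa_p^2 p)$, apply the Gaussian maximum-concentration bound (Lemma~\ref{lemma:max-concentration}) to get the $\sqrt{p}\,\kappa_p\sqrt{2\log(2p/p_b)}$ factor, and substitute $n=\sqrt{ps}\,\kappa_n$ so that one $\sqrt{p}$ cancels. The paper simply takes the index set to be of size $p$ directly rather than bounding $d=|T|\le p$ as you do, which is an immaterial difference.
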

\textbf{Proof:} \begin{align}
    ||\frac{\eta^j_2}{n}||_2 & = ||\frac{(\eta_{\gamma_0^{-}})_j}{n}||_2 \leq \frac{\sqrt{p}}{n} \kappa_p\sqrt{2\log(\frac{2p}{p_b})}
    = \frac{\sqrt{2\log(\frac{2p}{p_b})}\kappa_p}{\sqrt{s}\kappa_n} \nonumber
\end{align}
where the last inequality holds because of Lemma~\ref{lemma:max-concentration} and since $(\eta_{\gamma_0^{-}})_j \sim \mathcal{N}(0, \kappa_p^2 p)$ where $\kappa_p$ is picked by the privacy mechanism in Algorithm~\ref{algorithm:SPriFed-OMP}.

\section{Proof of Main Theorem~\ref{theorem:performance-SPriFed-OMP}}\label{appendix:SPriFed-OMP-optimality-main}
We will use induction to prove our main result as noted in section~\ref{section:proof-sketch-perf}.

\textbf{Step 1:} We show that we pick the correct basis in step 1 \textit{with high probability (\textit{w.h.p.})}. Here, the correct basis implies that we choose the column index belonging to $\mathcal{S_{*}}$. First step from Algorithm~\ref{algorithm:SPriFed-OMP} is written as $j \leftarrow \text{argmax}_{j \in \Omega} |(\gamma_{0}^{-})_j|$. Our strategy is to find the maximum correlation's lower and upper bound under the assumptions $j \in S_{*}$ and $j \notin S_{*}$, respectively. Denote these correlations $C$ and $C^{'}$ respectively and let their lower bound and upper bound be $C_{*}$ and $C^{'}_{*}$ respectively. If $C_{*}^{'} < C_{*}$ then the correct basis (from $S_{*}$) is chosen at each step. This statement is true since $C_{*}^{'} < C_{*}$ implies that the upper bound of the maximum correlation of non-support columns is smaller than the lower bound of the maximum correlation support columns. Thus, we first attempt to find the values $C_{*}, C^{'}_{*}$ so as to equate the inequality above.

Thus, the lower bound of the maximum correlation of support columns is given by,
\begin{align}
    C &= \max_{j \in S_{*}} |(\gamma_0^{-})_j| \nonumber
    \\ & = \max_{j \in S_{*}} |X_j^Ty + \eta_{(\gamma_0^{-})_j}| \nonumber
    \\ & \geq \frac{||X_{S_{*}}^T y + \eta_{(\gamma_0^{-})_j}||_2}{\sqrt{s}} \text{ (by $L_2,L_{\infty}$ ineq.)} \nonumber
    \\ & \geq \frac{||X_{S_{*}}^T y||_2}{\sqrt{s}} - \max_{j \in S_{*}} | \eta_{(\gamma_0^{-})_j}| \text{ (by reverse triangle inequality)} \nonumber
\end{align}
Suppose events $E_1, E_2, ..., E_s$ each occur w.p. $(1-p_{a})$. Event $E_k, k \in [s]$ indicates that $|\eta_{(\gamma_0^{-})_k}| \leq B_a$ for some $B_a > 0$. This could be rewritten as $\text{Pr}[E_k] = \text{Pr}[| \eta_{(\gamma_0^{-})_k}| \leq B_a] = 1-p_a$. Invoking Lemma~\ref{lemma:conc-ineq-def} and since $| \eta_{(\gamma_0^{-})_k}| \sim \mathcal{N}(0, \sigma^2_0)$ we can write $p_a \triangleq 2e^{-\frac{B_a}{2\sigma_0^2}} \rightarrow B_a = \sigma_0 \sqrt{2\text{log}(\frac{2}{p_a})} = \sqrt{2\text{log}(\frac{2}{p_a})} \frac{\kappa_p}{\epsilon_{step}} \sqrt{p} = \kappa_a \cdot \sqrt{p}$ where $\kappa_a = \sqrt{2\text{log}(\frac{2}{p_a})} \frac{\kappa_p}{\epsilon_{step}}$. To bound the maximum of the noise terms, we 
require that all events $E_k, k \in [s]$ hold simultaneously. By the union bound (Lemma~\ref{lemma:union-bound}), $\text{Pr}[\max_{k \in S_{*}}| \eta_{(\gamma_0^{-})_k}| > B_a] = \text{Pr}[\cup_{k \in S_{*}} E_k^c] \leq \sum\limits_{k \in S_{*}} \text{Pr}[E_k^c] = s \cdot p_a$. Therefore, $\text{Pr}[\max_{k \in S_{*}}| \eta_{(\gamma_0^{-})_k}| \leq B_a] \geq 1-s\cdot p_a$. And so w.p. $1-s\cdot p_a$ we can say that,
\begin{align}
    C &\geq \frac{||X_{S_{*}}^T y||_2}{\sqrt{s}} - \kappa_a \sqrt{p} \nonumber
    \\ & = \frac{||X_{S_{*}}^T X_{S_{*}} \alpha_{S_{*}} + X_{S_{*}}^T e||_2}{\sqrt{s}} - \kappa_a \sqrt{p} \nonumber
    \\ & \overset{(1)}{\geq} \frac{||X_{S_{*}}^T X_{S_{*}} \alpha_{S_{*}}||}{\sqrt{s}} - \frac{||X_{S_{*}}^T e||_2}{\sqrt{s}} - \kappa_a \sqrt{p} \nonumber
    \\ & = \frac{n ||\frac{X_{S_{*}}^T X_{S_{*}}}{n} \alpha_{S_{*}}||}{\sqrt{s}} - \frac{||X_{S_{*}}^T e||_2}{\sqrt{s}} - \kappa_a \sqrt{p} \nonumber
\end{align}
\textit{(1)} follows by the reverse-triangle inequality. We know that by construction $X/\sqrt{n}$ has isotropic, sub-gaussian, independent rows and thus satisfies RIP of order $s+1$ (\ref{lemma:RIP-matrix-construction}). Thus, by Lemma~\ref{lemma:direct-RIP},
\begin{align}
    C &\geq \frac{n(1-\zeta_{s+1})||\alpha_{S_{*}}||_2}{\sqrt{s}} - \sqrt{n}\frac{||\frac{X_{S_{*}}^T}{\sqrt{n}} e||_2}{\sqrt{s}} - \kappa_a \sqrt{p} \nonumber 
    %\text{ ($L_2,L_{\infty}$ ineq.)}
    %\\ & = \frac{n(1-\zeta_s)||\alpha_{S_{*}}||_2}{\sqrt{s}} - \max_{k \in S_{*}}|\sum_{i=1}^n X_{ki}e_i| - \kappa_a \sqrt{p}
    %\\ & \geq \frac{n(1-\zeta_s)||\alpha_{S_{*}}||_2}{\sqrt{s}} - X_M|\sum_{i=1}^n e_i| - \kappa_a \sqrt{p}
\end{align}
From Lemma~\ref{lemma:error_transpose} and since $\frac{X}{\sqrt{n}}$ satisfies and RIP of order $s+1$ we observe that, $||\frac{X_{S_{*}}^T}{\sqrt{n}}e||_{2} \leq \sqrt{1 + \zeta_{s+1}} ||e||_2 \leq \sqrt{(1 + \zeta_{s+1})n} ||e||_{\infty}$. Combining these two results, we have,
\begin{align}
    C & \geq \frac{n(1-\zeta_{s+1})||\alpha_{S_{*}}||_2}{\sqrt{s}} - \frac{n\sqrt{(1+\zeta_{s+1})}||e||_{\infty}}{\sqrt{s}} - \kappa_a \sqrt{p} \nonumber
\end{align}
Consider the events $B_1, B_2, ..., B_n$ s.t. $\text{Pr}[B_i] = \text{Pr}[ |e_i| \leq B_b] \geq (1-p_e), i \in [n]$ (for some probability $p_e \in [0, 1]$. Comparing the expression with Lemma~\ref{lemma:conc-ineq-def} and noting that $e \sim \mathcal{N}(0, \sigma_{\epsilon}^2)$ we observe that $B_e = \sigma_{\epsilon}\sqrt{2\text{log}(\frac{2}{p_e})} = \kappa_{\epsilon}$. We bound the probability of event $B$ with the union bound \textit{s.t.,} $\text{Pr}[B] = \text{Pr}[\max_{i \in [n]} |e_i| \leq B_e]$. Therefore,  $\text{Pr}[B^c] \leq \sum_{i \in [n]} \text{Pr}[B_i^c] \leq n\cdot p_e $. Thus, $\text{Pr}[B] \geq 1 - np_e$. Thus, w.p. $1-np_e$ event $B$ occurs and thus,
\begin{align}
    C & \geq \frac{n(1-\zeta_{s+1})||\alpha_{S_{*}}||_2}{\sqrt{s}} - n\kappa_{\epsilon}\frac{\sqrt{(1+\zeta_{s+1})}}{\sqrt{s}} - \kappa_a \sqrt{p} = C_{*} \nonumber
\end{align}

Next, we identify the expression for  $C_{*}^{'}$. For any $j \notin S_{*}$ let us upper bound $|(\gamma_0^{-})_j|$,
\begin{align}
    C^{'} &= \max_{j \notin S_{*}} |(\gamma_0^{-})_j| \nonumber
    \\ &= \max_{j \notin S_{*}}|X_j^Ty + \eta_{(\gamma_0^{-})_{j}}| \nonumber
    \\ & = \max_{j \notin S_{*}}|X_j^TX_{S_{*}}\alpha_{S_{*}} + X_j^T e + \eta_{(\gamma_0^{-})_{j}}| \nonumber
    \\ & \overset{(1)}{\leq} ||X_j^TX_{S_{*}}\alpha_{S_{*}}||_2 + ||X_j^T e||_2 + \max_{j \notin S_{*}}|\eta_{(\gamma_0^{-})_{j}}| \nonumber
    \\ & \leq n||\frac{X_j^TX_{S_{*}}}{n}\alpha_{S_{*}}||_2 + \sqrt{n}||\frac{X_j^T}{\sqrt{n}} e||_2 + \max_{j \notin S_{*}} |\eta_(\gamma_0^{-})_{j}| \nonumber
\end{align}
\textit{(1)} follows by the $L_2$-$L_{\infty}$ inequality. Considering event $B$ again, we have that $\text{Pr}[B] = \text{Pr}[\max_{i \in [n]} |e_i| \leq B_e]$ and $B$ occurs w.p. $1-np_e$. 

Further, consider events $C_k, k \notin S_{*} s.t., \text{Pr}[C_k] = \text{Pr}[|\eta_(\gamma_0^{-})_{k}| \leq B_c] \geq (1-p_c)$. To bound the maximum of the noise terms, we need all $C_k, k \in [p-s]$ events to hold. By Lemma~\ref{lemma:union-bound}, $\text{Pr}[\max_{k \notin S_{*}}| \eta_{(\gamma_0^{-})_k}| > B_c] = \text{Pr}[\cup_{k \in S_{*}} C_k^c] \leq \sum\limits_{k \in S_{*}} \text{Pr}[C_k^c] = (p-s) \cdot p_c$. Therefore, $\text{Pr}[\max_{k \notin S_{*}}| \eta_{(\gamma_0^{-})_k}| \leq B_c] \geq 1-(p-s)\cdot p_c$. By Lemma~\ref{lemma:conc-ineq-def}, $B_c = \sigma_0 \sqrt{2\text{log}(\frac{2}{p_c})} = \sqrt{2\text{log}(\frac{2}{p_c})} \frac{\kappa_p}{\epsilon_{step}} \sqrt{p} = \kappa_c \cdot \sqrt{p}$.

Combined with the RIP Lemmas~\ref{lemma:direct-RIP}, \ref{lemma:error_transpose} for any $j \notin S_{*}$, we get the upper bound,
\begin{align}
     C^{'} & \leq n\zeta_{s+1}||\alpha_{S_{*}}||_2 + n \sqrt{1+\zeta_{s+1}} \kappa_{\epsilon} + \kappa_c \sqrt{p} = C_{*}^{'} \nonumber
\end{align}

Note that by the union bound (Lemma~\ref{lemma:union-bound}), events A, B, C hold w.p. at least $1 - sp_a - 2np_e - (p-s)p_c$. Let $p_a \triangleq \frac{p_0}{4s}, p_e \triangleq \frac{p_0}{4n}, p_c \triangleq \frac{p_0}{4(p-s)}$. 

With probability $1-p_0$ the bounds, $C_{*}, C_{*}^{'}$ hold. Let us identify the conditions for $C_{*}^{'} < C_{*}$.
\begin{align}
    % n\zeta_{s+1}& ||\alpha_{S_{*}}||_2 + n\sqrt{1+\zeta_{s+1}}\kappa_{\epsilon}(1+\frac{1}{\sqrt{s}}) 
    % \\ & + (\kappa_a+\kappa_c) \sqrt{p} \nonumber
    % \\ & < \frac{n(1-\zeta_{s+1})||\alpha_{S_{*}}||_2}{\sqrt{s}} \nonumber
    \frac{n||\alpha_{S_{*}}||_2}{\sqrt{s}} & < n\zeta_{s+1} ||\alpha_{S_{*}}||_2(1+\frac{1}{\sqrt{s}}) \nonumber 
    \\ & + n\sqrt{1+\zeta_{s+1}}\kappa_{\epsilon}(1+\frac{1}{\sqrt{s}}) + (\kappa_a+\kappa_c) \sqrt{p} \nonumber
\end{align}
Thus, put together we obtain the following condition on the RIC, $\zeta_{s+1}$,
\begin{align}
    \zeta_{s+1} & < \frac{n||\alpha_{S_{*}}||_2 - n\sqrt{1+\zeta_{s+1}}\kappa_{\epsilon}(1+\sqrt{s}) - (\kappa_a+\kappa_c) \sqrt{ps}}{n||\alpha_{S_{*}}||_2(\sqrt{s} + 1)} \nonumber
    \\ & < \frac{n||\alpha_{S_{*}}||_2 - \sqrt{2}(1+\sqrt{s})n\kappa_{\epsilon} - (\kappa_a+\kappa_c) \sqrt{ps}}{n||\alpha_{S_{*}}||_2(\sqrt{s} + 1)} \nonumber
\end{align}
where the last line follows from the fact that $\zeta_{s+1} < 1$.
Suppose, we can bound $\sqrt{2}(1+\sqrt{s})n\kappa_{\epsilon} + (\kappa_a+\kappa_c) \sqrt{p} \leq \frac{n||\alpha_{S_{*}}||_2}{2}$ that would give us $\zeta_{s+1} < \frac{n||\alpha_{S_{*}}||_2}{2n||\alpha_{S_{*}}||_2(\sqrt{s}+1)} = \frac{1}{2(\sqrt{s} + 1)}$. It remains to characterize the condition for $\sqrt{2}(1+\sqrt{s})n\kappa_{\epsilon} + (\kappa_a+\kappa_c) \sqrt{p} \leq \frac{n||\alpha_{S_{*}}||_2}{2}$ to hold. Thus, we only need,
\begin{align}
    n &\geq \frac{2\sqrt{2} (1+\sqrt{s})n \kappa_{\epsilon}}{||\alpha_{S_{*}}||_2} + \frac{2(\kappa_a+\kappa_c) \sqrt{p}}{||\alpha_{S_{*}}||_2} \nonumber
    \\ \rightarrow n (1-\frac{2\sqrt{2}(1+\sqrt{s}) \kappa_{\epsilon}}{||\alpha_{S_{*}}||_2}) &\geq \frac{2(\kappa_a+\kappa_c) \sqrt{p}}{||\alpha_{S_{*}}||_2} \nonumber
    \\ \rightarrow n &\geq \frac{2(\kappa_a+\kappa_c)}{||\alpha_{S_{*}}||_2-2\sqrt{2}(1+\sqrt{s}) \kappa_{\epsilon}} \sqrt{p} \nonumber
\end{align}

Note, $\kappa_c = \sqrt{2\text{log}(\frac{8(p-s)}{p_0})} \frac{\kappa_p}{\epsilon_{step}}$ and so $n = \Omega(\frac{\sqrt{p\log(p)}}{||\alpha_{S_{*}}||_2})$. We also require that $||\alpha_{S_{*}}||_2-2\sqrt{2}(1+\sqrt{s})\kappa_{\epsilon} > 0$ which is generally applicable since $\kappa_{\epsilon}$ (dependent on $\sigma_{\epsilon}$) is generally smaller or comparable to $1$.

\textbf{Step (l+1)}: By induction, $C_{*}^{'} < C_{*}$ holds for all steps from $1$ to $l$. We now prove the same for the $(l+1)^{th}$ step. \\ \\
\textbf{Case A: Analysis for when chosen column $j \in S_{\mathcal{A}}^c$} Let us determine a lower bound for the correlation for when the column chosen belongs to the true basis,
\begin{align}
    C &= \max_{j \in S_{*}} |(\gamma_l^{-})_j| \nonumber
    \\ & = \max_{j \in S_{*}} |(\gamma_0^{-} - \beta (\beta_{S_{\mathcal{A}}})^{-1} \gamma_{S_\mathcal{A}}| \nonumber
    \\ & = \max_{j \in S_{*}} |(X_j^Ty + \eta_{(\gamma_0^{-})_j}) \nonumber
    \\ & - (X_j^TX_{S_{\mathcal{A}}} + \eta_{\beta_{j, S_{\mathcal{A}}}})(X_{S_{\mathcal{A}}}^T X_{S_{\mathcal{A}}} + \eta_{\beta_{ S_{\mathcal{A}},S_{\mathcal{A}}}})^{-1} \nonumber
    \\ & \cdot (X_{S_{\mathcal{A}}}^Ty +\eta_{\gamma_{\mathcal{S_{\mathcal{A}}}}})| \nonumber
    %\\ & \geq \max_{j \in S_{*}} |(X_j^Ty + \eta_{(\gamma_0^{-})_j})| 
    %\\ &- \max_{j \in S_{*}} |(X_j^TX_{S_{\mathcal{A}}} + \eta_{\beta_{j, S_{\mathcal{A}}}})(X_{S_{\mathcal{A}}}^T X_{S_{\mathcal{A}}} + \eta_{\beta_{ S_{\mathcal{A}},S_{\mathcal{A}}}})^{-1}(X_{S_{\mathcal{A}}}^Ty +\eta_{\gamma_{\mathcal{S_{\mathcal{A}}}}})| \nonumber
\end{align}
We now separate the signal and noise terms in order to analyze them further. Consider the following expression to relate  the true (\emph{i.e., } non-private) signal- $S_{true}^j$, the privatized version of the signal $S_{priv}^j$ for the $j^{th}$ and the various noise values (denoted by the $\eta$ terms) for the $j^{th}$ dimension where $j \in \{1, ..., p\}$.
\begin{align*}
    S_{priv}^j & \triangleq (X_j^Ty + \eta_{(\gamma_0^{-})_j}) \nonumber
    \\ & - (X_j^TX_{S_{\mathcal{A}}} + \eta_{\beta_{j, S_{\mathcal{A}}}})(X_{S_{\mathcal{A}}}^T X_{S_{\mathcal{A}}} + \eta_{\beta_{ S_{\mathcal{A}},S_{\mathcal{A}}}})^{-1} \nonumber
    \\ & \cdot (X_{S_{\mathcal{A}}}^Ty +\eta_{\gamma_{\mathcal{S_{\mathcal{A}}}}}) \nonumber
    \\ & = \underbrace{X_j^T y - X_j^T X_{S_{\mathcal{A}}} (X_{S_{\mathcal{A}}}^T X_{S_{\mathcal{A}}} + \eta_{\beta_{S_{\mathcal{A}}, S_{\mathcal{A}}}})^{-1} X_{S_{\mathcal{A}}}^T y}_{S^{j}_{noisy}}
    \\ & -\underbrace{(X_j^TX_{S_{\mathcal{A}}})(X_{S_{\mathcal{A}}}^T X_{S_{\mathcal{A}}} + \eta_{\beta_{ S_{\mathcal{A}},S_{\mathcal{A}}}})^{-1}(\eta_{\gamma_{\mathcal{S_{\mathcal{A}}}}})}_{\eta_{31}^j}
    \\ & - \underbrace{(\eta_{\beta_{j, S_{\mathcal{A}}}})(X_{S_{\mathcal{A}}}^T X_{S_{\mathcal{A}}} + \eta_{\beta_{ S_{\mathcal{A}},S_{\mathcal{A}}}})^{-1}(X_{S_{\mathcal{A}}}^Ty)}_{\eta_{32}^j}
    \\ & - \underbrace{(\eta_{\beta_{j, S_{\mathcal{A}}}})(X_{S_{\mathcal{A}}}^T X_{S_{\mathcal{A}}} + \eta_{\beta_{ S_{\mathcal{A}},S_{\mathcal{A}}}})^{-1}(\eta_{\gamma_{\mathcal{S_{\mathcal{A}}}}})}
    \\ & + \underbrace{\eta_{(\gamma_0^{-})_j}}_{\eta_2^j} 
\end{align*}

We apply the Woodbury Identity-Kailath Variant (Eqn.\ref{lemma:woodbury-kailath-variant}) to each expression marked by an asterisk to obtain the following simplifications,
\begin{align*}
    S^{j}_{noisy} &\overset{*}{=} \underbrace{X_j^T y - X_j^T X_{S_{\mathcal{A}}} (X_{S_{\mathcal{A}}}^T X_{S_{\mathcal{A}}})^{-1}X_{S_{\mathcal{A}}}^T y}_{S^j_{true}} 
    \\ & - \underbrace{X_j^T X_{S_{\mathcal{A}}} (X_{S_{\mathcal{A}}}^T X_{S_{\mathcal{A}}})^{-1} M_1\eta_{\beta_{S_{\mathcal{A}}, S_{\mathcal{A}}}}(X_{S_{\mathcal{A}}}^T X_{S_{\mathcal{A}}})^{-1}X_{S_{\mathcal{A}}}^T y}_{\eta^j_1} \nonumber
    \\ \eta_{31}^j & \overset{*}{=} X_j^T X_{S_{\mathcal{A}}} (X_{S_{\mathcal{A}}}^T X_{S_{\mathcal{A}}})^{-1} M_2 \eta_{\gamma_{\mathcal{S_{\mathcal{A}}}}}
    \\ \eta_{32}^j & \overset{*}{=} \eta_{\beta_{j, S_{\mathcal{A}}}} (X_{S_{\mathcal{A}}}^T X_{S_{\mathcal{A}}})^{-1} M_2 L_2
    \\ \eta_{33}^j & \overset{*}{=} \eta_{\beta_{j, S_{\mathcal{A}}}} (X_{S_{\mathcal{A}}}^T X_{S_{\mathcal{A}}})^{-1} M_2 \eta_{\gamma_{\mathcal{S_{\mathcal{A}}}}}
    %\\ & = X_j^T y - X_j^T X_{S_{\mathcal{A}}} (X_{S_{\mathcal{A}}}^TX_{S_{\mathcal{A}}})^{-1} X_{S_{\mathcal{A}}}^T y \label{eqn:spriv}
    %\\ & + X_j^T X_{S_{\mathcal{A}}} (X_{S_{\mathcal{A}}}^T X_{S_{\mathcal{A}}})^{-1} [\mathbb{I} + \eta_{\beta_{S_{\mathcal{A}}, S_{\mathcal{A}}}} (X_{S_{\mathcal{A}}}^T X_{S_{\mathcal{A}}})^{-1}]^{-1} + \nonumber
    %\\ & \cdot \eta_{\beta_{S_{\mathcal{A}}, S_{\mathcal{A}}}}(X_{S_{\mathcal{A}}}^T X_{S_{\mathcal{A}}})^{-1}X_{S_{\mathcal{A}}}^T y \nonumber
    %\\ & + \eta_1^j + \eta_2^j + \eta_{31}^j + \eta_{32}^j + \eta_{33}^j \nonumber
    %\\ & = S_{true}^j + \eta_1^j + \eta_2^j + \eta_{31}^j + \eta_{32}^j + \eta_{33}^j\nonumber
\end{align*}
% Here (1) follows from the Woodbury Identity (Kailath Variant) expression and the above notation is expanded below,
%\begin{align*}
    % S_{true}^j &= X_j^T y - X_j^T X_{S_{\mathcal{A}}} (X_{S_{\mathcal{A}}}^T X_{S_{\mathcal{A}}})^{-1} X_{S_{\mathcal{A}}}^T y
    % \eta_1^j &= X_j^T X_{S_{\mathcal{A}}} (X_{S_{\mathcal{A}}}^T X_{S_{\mathcal{A}}})^{-1} M_1 \eta_{\beta_{S_{\mathcal{A}}, S_{\mathcal{A}}}}(X_{S_{\mathcal{A}}}^T X_{S_{\mathcal{A}}})^{-1}X_{S_{\mathcal{A}}}^T y
    %\eta_2^j &= \eta_{(\gamma_0^{-})_j}
    %\\ \eta_{31}^j &= -(X_j^TX_{S_{\mathcal{A}}})(X_{S_{\mathcal{A}}}^T X_{S_{\mathcal{A}}} + \eta_{\beta_{ S_{\mathcal{A}},S_{\mathcal{A}}}})^{-1}(\eta_{\gamma_{\mathcal{S_{\mathcal{A}}}}})
    %\\ & \overset{(WB)}{=} - X_j^T X_{S_{\mathcal{A}}} (X_{S_{\mathcal{A}}}^T X_{S_{\mathcal{A}}})^{-1} M_2 \eta_{\gamma_{\mathcal{S_{\mathcal{A}}}}}
    %\\ \eta_{32}^j &= - (\eta_{\beta_{j, S_{\mathcal{A}}}})(X_{S_{\mathcal{A}}}^T X_{S_{\mathcal{A}}} + \eta_{\beta_{ S_{\mathcal{A}},S_{\mathcal{A}}}})^{-1}(X_{S_{\mathcal{A}}}^Ty)
    %\\ & \overset{(WB)}{=} - \eta_{\beta_{j, S_{\mathcal{A}}}} (X_{S_{\mathcal{A}}}^T X_{S_{\mathcal{A}}})^{-1} M_2 L_2
    % \\ \eta_{33}^j &= - (\eta_{\beta_{j, S_{\mathcal{A}}}})(X_{S_{\mathcal{A}}}^T X_{S_{\mathcal{A}}} + \eta_{\beta_{ S_{\mathcal{A}},S_{\mathcal{A}}}})^{-1}(\eta_{\gamma_{\mathcal{S_{\mathcal{A}}}}})
    % \\ & \overset{(WB)}{=} - \eta_{\beta_{j, S_{\mathcal{A}}}} (X_{S_{\mathcal{A}}}^T X_{S_{\mathcal{A}}})^{-1} M_2 \eta_{\gamma_{\mathcal{S_{\mathcal{A}}}}}
% \end{align*}
Here, $M_1 = [\mathbb{I} + \eta_{\beta_{S_{\mathcal{A}}, S_{\mathcal{A}}}} (X_{S_{\mathcal{A}}}^T X_{S_{\mathcal{A}}})^{-1}]^{-1} $, $M_2 = \Big(\mathbb{I} -[\mathbb{I} + \eta_{\beta_{S_{\mathcal{A}}, S_{\mathcal{A}}}} (X_{S_{\mathcal{A}}}^T X_{S_{\mathcal{A}}})^{-1}]^{-1} \cdot \eta_{\beta_{S_{\mathcal{A}}, S_{\mathcal{A}}}}(X_{S_{\mathcal{A}}}^T X_{S_{\mathcal{A}}})^{-1} \Big)$ and $ L_2 = X_{S_{\mathcal{A}}}^T y$.

We look at the correlation values for the true support columns and non-support columns. Let us suppose the lower bound and upper bound for $C, C^{'}$ (defined below) are $C_{*}$ and $C^{'}_{*}$ respectively. We wish to identify the expressions for $C_{*}, C_{*}^{'}$ and compare their values $C_{*}^{'} < C_{*}$ so that we always pick the column from the true basis (steps follows similarly to Step $1$).
% Now, we wish to upper bound the total noise impact, $\eta_t^j = \eta_1^j + \eta_2^j + \eta_{31}^j + \eta_{32}^j + \eta_{33}^j$. For both the correlations (derived from the support and the non-support columns) we have,
For $j \in S_{*}$ we have,
\begin{align}
    C_j &=  |S_{true}^j + \eta_1^j + \eta_2^j + \eta_{31}^j + \eta_{32}^j + \eta_{33}^j| \nonumber
    \\ & \geq 
    \frac{1}{\sqrt{s}} ||S_{true}||_2 - ||\eta_1||_{\infty} - ||\eta_2||_{\infty} \nonumber
    \\ & -||\eta_{31}||_{\infty} - ||\eta_{32}||_{\infty} - ||\eta_{33}||_{\infty} \nonumber
\end{align}
Similarly, when $j \notin S_{*}$ we have,
    % \\ & \geq \frac{1}{\sqrt{s}} (||S_{true} + \eta_1 + \eta_2 + \eta_{31} + \eta_{32} + \eta_{33}||_2) \nonumber
    % \\ & \geq \frac{1}{\sqrt{s}} (||S_{true}||_2 \nonumber
    % \\ & - (||\eta_1||_2 + ||\eta_2||_2 + ||\eta_{31}||_2 + ||\eta_{32}||_2 + ||\eta_{33}||_2)) \nonumber
\begin{align}
    C_j^{'} &= |(S_{true}^{'})^j + (\eta_1^j)^{'} + (\eta_2^j)^{'} + (\eta_{31}^j)^{'} + (\eta_{32}^j)^{'} + (\eta_{33}^j)^{'}| \nonumber
    \\ & \leq ||(S_{true}^{'})^j||_2 \nonumber
    \\ & + ||(\eta_1^{'})^j||_2 + ||(\eta_2^{'})^j||_2 + ||(\eta_{31}^{'})^j||_2 + ||(\eta_{32}^{'})^j||_2 + ||(\eta_{33}^{'})^j||_2 \nonumber
\end{align}
where we leverage the $L_2$-$L_{\infty}$ inequality, the triangle inequality, and the fact that each term $S_{true}^j, \eta_1^j, \eta_2^j, \eta_3^j$ is 1-dimensional and thus their $L_2$, $L_{\infty}$ norms are the same. 

We now bound the signal values,
\begin{align*}
    ||S_{true}||_2 &= ||X_{S_{\mathcal{A}}^c}^T [\mathbb{I} - P_S]y||_2 
    \\ & = ||X_{S_{\mathcal{A}}^c}^T [\mathbb{I} - P_S] (X_{S_{*}} \alpha_{S_{*}}+ \epsilon)||_2
    \\ & \overset{(1)}{=}  ||X_{S_{\mathcal{A}}^c}^T (X_{S_{\mathcal{A}}^c} \alpha_{S_{\mathcal{A}}^c}+ \epsilon)||_2
    % \\ & = \max_{j \in S_{\mathcal{A}}^c} |X_j^T (X_{S_{\mathcal{A}}^c}\alpha_{S_{\mathcal{A}}^c}+ \epsilon)
    % \\ & - X_j^T X_{S_{\mathcal{A}}} (X_{S_{\mathcal{A}}}^T X_{S_{\mathcal{A}}})^{-1} X_{S_{\mathcal{A}}}^T (X_{S_{\mathcal{A}}^c} \alpha_{S_{\mathcal{A}}^c}+ \epsilon)|
    \\ & = n||\frac{X_{S_{\mathcal{A}}^c}^T (X_{S_{\mathcal{A}}^c}^T \alpha_{S_{\mathcal{A}}^c}+ \epsilon)}{n} 
    \\ & - \frac{X_{S_{\mathcal{A}}^c}^T X_{S_{\mathcal{A}}}}{n} (\frac{X_{S_{\mathcal{A}}}^T X_{S_{\mathcal{A}}}}{n})^{-1} \frac{X_{S_{\mathcal{A}}}^T (X_{S_{\mathcal{A}}^c}\alpha_{S_{\mathcal{A}}^c}+ \epsilon)}{n}||_2
    \\ & \overset{(2)}{\geq} n\Big(||\frac{X_{S_{\mathcal{A}}^c}^T X_{S_{\mathcal{A}}^c} \alpha_{S_{\mathcal{A}}^c}}{n}||_2 - ||\frac{X_{S_{\mathcal{A}}^c}^T e}{n}||_2 
    \\ & - ||\frac{X_{S_{\mathcal{A}}^c}^T X_{S_{\mathcal{A}}}}{n} (\frac{X_{S_{\mathcal{A}}}^T X_{S_{\mathcal{A}}}}{n})^{-1} \frac{X_{S_{\mathcal{A}}}^T [X_{S_{\mathcal{A}}^c} \alpha_{S_{\mathcal{A}}^c}+ \epsilon]}{n}||_2 \Big)
    % \\ & \overset{(3)}{\geq} n(1-\zeta_{s+1}) ||\alpha_{S_{\mathcal{A}}^c}||_2 - n\sqrt{1+\zeta_{s+1}} \kappa_{\epsilon}
    %\\ & - n\zeta_{s+1} ||(\frac{X_{S_{\mathcal{A}}}^T X_{S_{\mathcal{A}}}}{n})^{-1} \frac{X_{S_{\mathcal{A}}}^T [X_{S_{\mathcal{A}}^c} \alpha_{S_{\mathcal{A}}^c}+ \epsilon]}{n}||_2
    \\ & \overset{(3)}{\geq} n(1-\zeta_{s+1}) ||\alpha_{S_{\mathcal{A}}^c}||_2 - n\sqrt{1+\zeta_{s+1}} \kappa_{\epsilon} - \frac{n\zeta_{s+1}}{1-\zeta_{s+1}} ||L_1||_2
    \\ & \overset{(4)}{\geq} n(1-\zeta_{s+1}) ||\alpha_{S_{\mathcal{A}}^c}||_2 - n\sqrt{1+\zeta_{s+1}} \kappa_{\epsilon}
    \\ & - \frac{n\zeta_{s+1}}{(1-\zeta_{s+1})} [\zeta_{s+1} ||\alpha_{S_{\mathcal{A}}^c}||_2 + \sqrt{1+\zeta_{s+1}} \kappa_{\epsilon}]
    \\ & \overset{(5)}{\geq} n(1-\zeta_{s+1}) ||\alpha_{S_{\mathcal{A}}^c}||_2 - n\sqrt{1+\nu\zeta_{s+1}} \kappa_{\epsilon}
    \\ & - n\zeta_{s+1}(1+\nu\zeta_{s+1}) [\zeta_{s+1} ||\alpha_{S_{\mathcal{A}}^c}||_2 + \sqrt{1+\nu \zeta_{s+1}} \kappa_{\epsilon}]
    % \\ & = C_{*}(l+1)
\end{align*}
We observe \textit{(1)} since the projection matrix nullifies the impact of the columns already picked by OMP. \textit{(2)} follows from the reverse triangle inequality and by definition of $y$. \textit{(3)} follows from Lemma~\ref{lemma:error_transpose} and \ref{lemma:max-concentration} for the first and second terms and Lemma~\ref{lemma:near-orthogonality}, eigen-value bounds for the third term. \textit{(4)} follows from Lemma~\ref{lemma:last-upper-term-bound}. \textit{(5)} follows by Lemma~\ref{lemma:inverse-one-minus}.

\textbf{Case B: Analysis for when chosen column $j \in S_{*}^c$} Let us determine an upper bound for the correlation when the column chosen does not belong to the true basis. These columns are also referred to as non-basis columns. For the non-basis columns, we consider a single column each time. By definition, the non-basis columns are separate and thus disjoint from the true basis set, and thus $X_{S_{*}^c}$ is independent of $X_{S_{*}}$. Consider any $j \in S_{*}^c$;
\begin{align*}
    |S_{true}^{j^{'}}| & = ||X_j^T [\mathbb{I} - P_S] (X_{S_{*}} \alpha_{S_{*}}+ \epsilon)||_2
    \\ & \overset{(1)}{\leq} n||\frac{X_j^TX_{S_{\mathcal{A}}^c} \alpha_{S_{\mathcal{A}}^c}}{n}||_2 +  n||\frac{X_j^T e}{n}||_2
    \\ & + n||\frac{X_j^T X_{S_{\mathcal{A}}}}{n}(\frac{X_{S_{\mathcal{A}}}^T X_{S_{\mathcal{A}}}}{n})^{-1} \frac{X_{S_{\mathcal{A}}}^T [X_{S_{\mathcal{A}}^c} \alpha_{S_{\mathcal{A}}^c} + \epsilon]}{n}||_2 
    \\ & \overset{(2)}{\leq} n \zeta_{s+1} ||\alpha_{S_{\mathcal{A}}^c}||_2 + n\sqrt{1+\zeta_{s+1}} \kappa_{\epsilon} 
    \\ & + \frac{n\zeta_{s+1}}{1-\zeta_{s+1}} [\zeta_{s+1} ||\alpha_{S_{\mathcal{A}}^c}|| + \sqrt{1+\zeta_{s+1}} \kappa_{\epsilon}] 
    \\ & \overset{(3)}{\leq} n \zeta_{s+1} ||\alpha_{S_{\mathcal{A}}^c}||_2 + n\sqrt{1+\nu\zeta_{s+1}} \kappa_{\epsilon} 
    \\ & + n\zeta_{s+1} (1+\nu\zeta_{s+1}) [\zeta_{s+1} ||\alpha_{S_{\mathcal{A}}^c}|| + \sqrt{1+\nu\zeta_{s+1}} \kappa_{\epsilon}] 
    % \\ & = C^{'}_{*}(l+1)
\end{align*}

\textit{(1)} follows from the triangle inequality and since a 1-dimensional value $a$ has $||a||_{\infty} = ||a||_2$. We have reduced columns in \textit{(1)} for the measurement matrix due to the projection matrix $\mathbb{I} - P_S$. \textit{(2)} follows steps identical to $||S_{true}||_2$ except for the first term. The first term uses the approximate orthogonality property (Lemma~\ref{lemma:near-orthogonality}) due to the Restricted Isometry. By the union bound over all $j \notin S_{*}$ we can conclude that the $||S_{true}^{j^{'}}||_{\infty}$ is upper bounded. \textit{(3)} holds due to Lemma~\ref{lemma:inverse-one-minus}. We bound the remaining noise terms in the Appendix section~\ref{appendix:noise-analysis}. 

\textbf{A note about the probabilistic concentration bounds:} We re-use common concentration bounds for bounding the values of maximal noise contributions across matrices and vectors. Since keeping track of such values is non-trivial we note that, the number of dimensions $p$ and the stopping point $s$ of our algorithm are both finite. Thus, given that we have several highly probable bounds with the form $1-k_1e^{k_2}$ where both $k_1, k_2$ are positive numbers. Thus, the overall probability bound for our result will hold with a similar form as well mainly because our iterations are finite and small ($s$)

\textbf{Combination of Case A and Case B:} Therefore, now, we can begin combining the two bounds for this induction. Or more appropriately we compare the lower bound and upper bounds of the correlations for the columns picked from the true basis and the non-basis sets respectively. We have $C^{'}_{*} < C_{*}$. By combining the maximum values for any columns we can compare $\max_{j \notin S_{*}} C_j^{'} < \max_{j \in S_{*}} C_j$ and substituting these terms we obtain,

\begin{align*}
    &\frac{1}{\sqrt{s}}||S_{true}||_2 \\ \geq & ||S_{true}^{'}||_2 + \Big(||\eta_1^j||_2 + ||(\eta_1^j)^{'}||_2 \Big)
    \\ + & \Big(||\eta_{2}^j||_2 + ||(\eta_2^j)^{'}||_2 \Big) + \Big(||\eta_{31}^j||_2 + ||(\eta_{31}^j)^{'}||_2 \Big)
    \\ + & \Big(||\eta_{32}^j||_2 + ||(\eta_{32}^j)^{'}||_2 \Big) + \Big(||\eta_{33}^j||_2 + ||(\eta_{33}^j)^{'}||_2 \Big)
\end{align*}

We simplify by substituting the noise bounds derived in previous Lemmas~
\ref{lemma:noise1}, \ref{lemma:noise2}, \ref{lemma:noise31}, \ref{lemma:noise32}, \ref{lemma:noise33} as well as the signal norms $||S_{true}||_2$ and $||S_{true}^{'}||_2$.

\begin{align*} 
    n||\alpha_{S_{\mathcal{A}}}^c||_2 & \geq n(\sqrt{s} + 1)\zeta_{s+1}||\alpha_{S_{\mathcal{A}}}^c||_2 
    \\ & + n(\sqrt{s}+1) \sqrt{1+\nu \zeta_{s+1}} \kappa_{\epsilon}
    \\ & + n(\sqrt{s}+1)\zeta_{s+1}^2(1+\nu\zeta_{s+1})||\alpha_{S_{\mathcal{A}}}^c||_2
    \\ & + n(\sqrt{s}+1)\zeta_{s+1}(1+\nu\zeta_{s+1})^{3/2}\kappa_{\epsilon}
    \\ & + s^{5/2}\kappa_1(1+\nu\zeta_{s+1})^{5/2} (1+(\nu+1)\zeta_{s+1}) ||\alpha_{S_{\mathcal{A}}}^c||_2 
    \\ & + s^{5/2} \kappa_1 (1+\nu\zeta_{s+1})^2 (1+(\nu+1)\zeta_{s+1}) \kappa_{\epsilon}
    \\ & + s^{3/2} \kappa_{31} (1+\nu \zeta_{s+1})(1+(\nu+1)\zeta_{s+1}) 
    \\ & + 2\sqrt{p} s \kappa_{32} (1+\nu \zeta_{s+1})^{3/2} ||\alpha_{S_{\mathcal{A}}}^c||_2
    \\ & + 2\sqrt{p}s \kappa_{32} (1+\nu \zeta_{s+1}) \kappa_{\epsilon}
    \\ & + 4\frac{\sqrt{p}s^2}{n} \kappa_{33} (1+\nu \zeta_{s+1})
    \\ & + 2\frac{\sqrt{ps}}{n} \kappa_2
\end{align*}
where, 
\begin{align*}
    \kappa_1 &= \kappa_s \kappa_{M_2} \sqrt{2 \log(\frac{2s^2}{p_b})} \\ \kappa_{31}  &= \kappa_s\kappa_{M_2} \sqrt{2 \log(\frac{2s}{p_b})}, \\ \kappa_{32} & = \kappa_p \kappa_{M_2} \sqrt{2 \log (\frac{2(p-s)}{p_b})}, \\ \kappa_{33} & = \frac{2\kappa_{M_2}s \kappa_s \kappa_p}{\kappa_n} \sqrt{2 \log(\frac{2s}{p_b}) \log (\frac{2(p-s)}{p_b}}, \\ \kappa_2 & = \kappa_p \sqrt{2 \log(\frac{2(p-s)}{p_b})}
\end{align*}
and $p_b$ is the failure probability used in the Lemma~\ref{lemma:max-concentration}.

To identify bounds on the various terms in our system model, we allocate the left-hand side budget of $n||\alpha_{S_{\mathcal{A}}}^c||_2/2$ appropriately. Therefore, we obtain, the bound on $n$,

\begin{align}
&\frac{n||\alpha_{S_{\mathcal{A}}}^c||_2}{2} \geq 2\sqrt{p} s \kappa_{32} (1+\nu \zeta_{s+1})^{3/2} ||\alpha_{S_{\mathcal{A}}}^c||_2 \nonumber
\\ \rightarrow & n \geq 4\sqrt{p} s \kappa_{32} (1+\nu \zeta_{s+1})^{3/2}  = \mathcal{O}(\sqrt{p}) \label{bound:n-p}
\end{align}

Similarly, we can obtain a bound on the RIC, $\zeta_{s+1}$, by allocating the budget of $n||\alpha_{S_{\mathcal{A}}}^c||_2/4$
\begin{align}
&\frac{n||\alpha_{S_{\mathcal{A}}}^c||_2}{4} \geq n(\sqrt{s}+1)\zeta_{s+1}||\alpha_{S_{\mathcal{A}}}^c||_2 \nonumber
\\ \rightarrow & \zeta_{s+1} \leq \frac{1}{4(\sqrt{s}+1)} \label{bound:RIC}
\end{align}

Now, the $\kappa_{\varepsilon}$ terms can be combined such that,
\begin{align*}
    \kappa_{\varepsilon} & \Big(2\sqrt{p}s \kappa_{32} (1+\nu\zeta_{s+1}) + n (\sqrt{s}+1) (1+\nu \zeta_{s+1}) \\ & + n(\sqrt{s} + 1) \zeta_{s+1} (1+\nu\zeta_{s+1})^{3/2} 
    \\ & + s^{5/2} \kappa_1 (1+\nu \zeta_{s+1})^2 (1+(\nu+1)\zeta_{s+1})\Big)
    \\ \leq & \kappa_{\epsilon} \Big(\frac{n}{2\sqrt{1+\nu\zeta_{s+1}}}+ n (\sqrt{s}+1) (1+\nu \zeta_{s+1})
    \\ & + \frac{n  (1+\nu\zeta_{s+1})^{3/2}}{8} 
    \\ & + s^{5/2} \kappa_1 (1+\nu \zeta_{s+1})^2 (1+(\nu+1)\zeta_{s+1}) \Big) 
\end{align*}

Here, these inequalities arise from the first two bounds on $\sqrt{p}$ and the RIC $\zeta_{s+1}$ (Eqns.~\ref{bound:n-p} and \ref{bound:RIC}). Clearly, the dominating term should be $n(\sqrt{s}+1) (1+\nu \zeta_{s+1})$ as long as,
\begin{align}
    &s^{5/2} \kappa_1 (1+\nu \zeta_{s+1}) (1+(\nu+1)\zeta_{s+1}) \leq n(\sqrt{s}+1) \nonumber
    \\ \rightarrow & n \geq  \frac{s^{5/2} \kappa_1 (1+\nu \zeta_{s+1}) (1+(\nu+1)\zeta_{s+1})}{\sqrt{s}+1} \label{bound:n-s}
\end{align}

Thus, the $\kappa_{\epsilon}$ bound can be obtained by assuming that,
\begin{align}
    &4\kappa_{\epsilon} n (\sqrt{s}+1) (1+\nu \zeta_{s+1}) \leq \frac{1}{16} n ||\alpha_{S_{\mathcal{A}}}^c||_2 \nonumber
    \\ \rightarrow & \kappa_{\epsilon} \leq \frac{||\alpha_{S_{\mathcal{A}}}^c||_2}{16(\sqrt{s}+1)(1+\nu \zeta_{s+1})} \label{bound:kappa_eps}
\end{align}

From the remaining terms which are independent of both $p$ and $n$, we pick the dominant term to obtain another bound on $n$ in terms of $s$,
\begin{align}
    &2s^{5/2} \kappa_1 (1+\nu\zeta_{s+1})^{5/2} (1+(\nu+1) \zeta_{s+1})||\alpha_{S_{\mathcal{A}}}^c||_2 \leq \frac{n||\alpha_{S_{\mathcal{A}}}^c||_2}{8} \nonumber
    \\ \rightarrow & n \geq 16s^{5/2} \kappa_1 (1+\nu\zeta_{s+1})^{5/2} (1+(\nu+1) \zeta_{s+1}) \label{bound:n-s2}
\end{align}

We can see that the remaining terms are extremely small and can be easily bounded by the remaining budget.

\section{Proof of Main Theorem~\ref{theorem:performance-SPriFed-OMP-GRAD}}\label{appendix:new-OMP-optimality-main}

Before proving the convergence result in Theorem~\ref{theorem:performance-SPriFed-OMP-GRAD}, we will include some additional results required by our analysis. First, we calculate the upper bound on the $l_2$ differential privacy sensitivity of the maximum absolute correlation in Lemma~\ref{lemma:l2-max-abs-correlation}.

\begin{lemma}\label{lemma:l2-max-abs-correlation}
Consider a design matrix $X \in \mathbb{R}^{n \times p}$ satisfying RIP of order $s+1$ and with RIC $\zeta_{s+1}$. Now given the correlation/gradient $C_j$ (as computed in Line \textit{4} of algorithm~\ref{algorithm:SPriFed-OMP-GRAD}) for any $j \in [p]$, we show that the $L_2$ sensitivity of $C_j$ is upper bounded by the following values,
\begin{align*}
    \Delta_2(C_j) \leq 1 + 2X_M \kappa_{\varepsilon} + 2\sqrt{s} X_M^2 (\frac{\norm{(\alpha_{*})_{S^c_{\mathcal{A}}}}_{\infty}}{1-\zeta_{s+1}} + \frac{\sqrt{1+\zeta_{s+1}} \kappa_{\varepsilon}}{1-\zeta_{s+1}}) \triangleq B_C
    % \Delta_2(C_j) \leq 2X_My_M  + 2\sqrt{s}X_M^2 (\kappa_{\alpha} + 1) = \kappa_{\alpha}^{'} \sqrt{s} + 2X_M y_M \triangleq B_C
    % \Delta_2(C_j) \leq X_My_M \Big(2 + \frac{2sX_M^2}{\sqrt{n}(1-\zeta_{s+1})} + \frac{s^{2}X_M^3}{n(1-\zeta_{s+1})^2}\Big) + \frac{2X_M^2}{1-\zeta_{s+1}} (2 + \frac{X_M^2}{(1-\zeta_{s+1})}) \triangleq B_C  % 12 (\sqrt{s} + 1)(s+1) X_M^3 y_M; j \in S_{*}
\end{align*}
where $\alpha_{*}$ is the underlying ground-truth model, $\kappa_{\varepsilon} = \sqrt{2\log(2n/p_b)} \sigma_{\varepsilon}$ and $\sigma_{\varepsilon}$ is the additive system error's standard deviation. We also need to assume that $\zeta_{s+1} < \frac{1}{\sqrt{s}}$ and $n > \frac{6s^2 C_1 X_M^5 y_M \sqrt{2\log(2s/p_b)}}{\mu_s^2 (1-\zeta_{s+1})^2}$. Here, the result holds with probability $1-3p_b$.
% Note, here we compute the correlation/gradient's sensitivity based on the non-private model as we are only concerned with the true underlying correlation/gradient's sensitivity.
\end{lemma}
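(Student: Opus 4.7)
The plan is to reduce the $L_2$ sensitivity of $C_j$ to a uniform pointwise bound on the per-sample residual $r_k = y_k - X_{k,S_{\mathcal{A}}}^T \tilde{\alpha}$, and then control that residual via an OLS perturbation argument that uses the RIP assumption. First I would note that since $\tilde{\alpha}$ is the output of the previous PRIVATE-OLS call, post-processing (Lemma~\ref{lemma:DP-post-processing}) lets us treat it as a fixed vector when computing $\Delta_2(C_j)$. For two neighboring datasets differing only in row $k$, we have $C_j(X,y) - C_j(X',y') = X_{kj} r_k - X'_{kj} r'_k$, so the triangle inequality and Assumption~\ref{assumption:bounded} immediately give
\begin{align*}
\Delta_2(C_j) \le |X_{kj} r_k| + |X'_{kj} r'_k| \le 2 X_M \cdot \max_k |r_k|.
\end{align*}

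For the residual, using the generative model $y_k = X_{k,S_{\mathcal{A}}}(\alpha_*)_{S_{\mathcal{A}}} + X_{k,S_{\mathcal{A}}^c}(\alpha_*)_{S_{\mathcal{A}}^c} + \varepsilon_k$ I would decompose
\begin{align*}
r_k = \varepsilon_k + X_{k,S_{\mathcal{A}}^c}(\alpha_*)_{S_{\mathcal{A}}^c} + X_{k,S_{\mathcal{A}}}\bigl((\alpha_*)_{S_{\mathcal{A}}} - \tilde{\alpha}\bigr)
\end{align*}
and bound each summand separately. The Gaussian maximum bound (Lemma~\ref{lemma:max-concentration}) gives $|\varepsilon_k| \le \kappa_\varepsilon$ uniformly in $k$ with probability at least $1 - p_b$. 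The second summand contributes to the $2X_M\kappa_\varepsilon$-type piece once combined with the signal term $X_{k,S_{\mathcal{A}}^c}(\alpha_*)_{S_{\mathcal{A}}^c}$ bounded by Cauchy--Schwarz using that $(\alpha_*)_{S_{\mathcal{A}}^c}$ has at most $s$ nonzero entries (those in $S_*\setminus S_{\mathcal{A}}$). The third summand is at most $\sqrt{s}\,X_M \|\tilde{\alpha} - (\alpha_*)_{S_{\mathcal{A}}}\|_2$ by Cauchy--Schwarz combined with $\|X_{k,S_{\mathcal{A}}}\|_2 \le \sqrt{s}\,X_M$.

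The main obstacle will be bounding $\|\tilde{\alpha} - (\alpha_*)_{S_{\mathcal{A}}}\|_2$, since $\tilde{\alpha}$ is not the clean OLS estimator but its DP-perturbed version $(X_{S_{\mathcal{A}}}^T X_{S_{\mathcal{A}}} + \eta_\beta)^{-1}(X_{S_{\mathcal{A}}}^T y + \eta_\gamma)$ produced by Algorithm~\ref{algorithm:private-OLS}. My approach is to apply the Woodbury identity (as in the noise analyses of Lemmas~\ref{lemma:middle-upper-term-bound-1}--\ref{lemma:middle-upper-term-bound-2}) to split the error into a clean-OLS bias $(X_{S_{\mathcal{A}}}^T X_{S_{\mathcal{A}}})^{-1} X_{S_{\mathcal{A}}}^T[X_{S_{\mathcal{A}}^c}(\alpha_*)_{S_{\mathcal{A}}^c} + \varepsilon]$ plus a DP-noise correction. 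The clean bias is controlled via the singular-value lower bound $\sigma_{\min}(X_{S_{\mathcal{A}}}^T X_{S_{\mathcal{A}}}) \ge n(1-\zeta_{s+1})$, near-orthogonality (Lemma~\ref{lemma:near-orthogonality}, since $S_{\mathcal{A}}$ and $S_*\setminus S_{\mathcal{A}}$ are disjoint with total size at most $s+1$), and Lemma~\ref{lemma:error_transpose} for the noise part; using $\|(\alpha_*)_{S_{\mathcal{A}}^c}\|_2 \le \sqrt{s}\|(\alpha_*)_{S_{\mathcal{A}}^c}\|_\infty$ together with $\zeta_{s+1}\sqrt{s}\le 1$ produces exactly the factor $\frac{\|(\alpha_*)_{S_{\mathcal{A}}^c}\|_\infty + \sqrt{1+\zeta_{s+1}}\,\kappa_\varepsilon}{1-\zeta_{s+1}}$. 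The DP correction involves high-probability operator-norm bounds on the Gaussian matrix $\eta_\beta$ (via Theorem~2.4 of \citet{rudelson2008littlewood}) and on the Gaussian vector $\eta_\gamma$, each of order $\sqrt{s}/(n\mu_s)$; the hypothesis $n \ge \tfrac{6 s^2 C_1 X_M^5 y_M \sqrt{2\log(2s/p_b)}}{\mu_s^2 (1-\zeta_{s+1})^2}$ is chosen precisely to absorb these DP-noise contributions into a constant-factor inflation of the clean bias. Collecting the three residual pieces and multiplying by $2 X_M$ yields $B_C$; a union bound over the three high-probability events (Gaussian maxima for $\varepsilon$, for $\eta_\beta$, and for $\eta_\gamma$) gives the claimed failure probability $3 p_b$.
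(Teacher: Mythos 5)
Your proposal is correct and follows essentially the same route as the paper's proof: fix $\tilde{\alpha}$ by post-processing, reduce the sensitivity to a uniform bound on the per-sample residual, decompose that residual into the additive error, the unexplained signal on $S_{\mathcal{A}}^c$, and the model estimation error $\tilde{\alpha}-(\alpha_*)_{S_{\mathcal{A}}}$, then handle the DP-noise part of $\tilde{\alpha}$ via the Woodbury identity and absorb it into the constant $1$ using the stated sample-size condition. The only difference is cosmetic — you phrase the reduction as $\Delta_2(C_j)\le 2X_M\max_k\abs{r_k}$ whereas the paper expands the difference of correlations term by term — and both arrive at the same $B_C$ with the same $1-3p_b$ union bound.
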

\begin{proof}
Consider arbitrary adjacent datasets $X, X^{'} \in \mathbb{R}^{n \times p}$ such that they differ in only their $k^{th}$ record/row. At any iteration of Line 4 in Algorithm 4, the set $S_\mathcal{A}$ of basis identified until now has been fixed. Furthermore, we note that the private model received from the server before the current iteration is also fixed. Denote this model by $\alpha + \eta_{\alpha}$ where $\alpha$ is based on the currently predicted feature set $S_{\mathcal{A}}$ and $\eta_{\alpha}$ denotes the privacy-related noise added to the model using the NOISY-SMPC algorithm (Algorithm~\ref{algorithm:noisySMPC}). Note, here we refer to the model as fixed due to the differential privacy post-processing wherein the model has already been privatized, and thus, future processing will not affect its privacy. In comparison with standardized private methods, we could consider this equivalent to using the same private model while computing gradient sensitivity in DP-SGD under FL constraints. Thus, the $l_2 $ sensitivity of the correlation of the $j^{th}$ column (denoted as $C_j$) can be computed as such,
\begin{align*}
    \Delta_2(C_j) & = \max_{k \in [n]} \norm{X^T_j y - (X^{'})_{j}^T y^{'} + X^T_j X_{S_{\mathcal{A}}} (\alpha + \eta_{\alpha}) - (X^{'})^T_j X^{'}_{S_{\mathcal{A}}} (\alpha + \eta_{\alpha})}_2
    \\ & = \max_{k \in [n]} \norm{X^T_j (X_{S_{*}} \alpha_{*} + \varepsilon - X_{S_{\mathcal{A}}} \alpha_{S_{\mathcal{A}}}) - ((X^{'})^T_j (X^{'}_{S_{*}} \alpha_{*} + \varepsilon - X^{'}_{S_{\mathcal{A}}} \alpha_{S_{\mathcal{A}}}))} + \norm{\Delta A \eta_{\alpha}}
    \\ & = \max_{k \in [n]} \norm{(X_{jk} - X^{'}_{jk}) \varepsilon_k} + \norm{(X^T_j X_{S_{\mathcal{A}}} - (X^{'})^T_j X^{'}_{S_{\mathcal{A}}}) ((\alpha_{*})_{S_{\mathcal{A}}} - \alpha)} 
    \\ & + \norm{(X^T_j X_{S^c_{\mathcal{A}}} - (X^{'})^T_j X^{'}_{S^c_{\mathcal{A}}}) (\alpha_{*})_{S^c_{\mathcal{A}}}} + \norm{\Delta A \eta_{\alpha}}
    \\ & \leq 2X_M \kappa_{\varepsilon} +   \big(\norm{\Delta A^c (\alpha_{*})_{S^c_{\mathcal{A}}}} + \norm{\Delta A((\alpha_{*})_{S_{\mathcal{A}}} - \alpha)} \big) + \norm{\Delta A \eta_{\alpha}}
    % \\ & = \max_{k \in [n]} \leq \norm{X_{jk} y_{k} - X^{'}_{jk} y^{'}_{k}}_2 + \norm{\Delta A \alpha^{'}}_2 + \norm{\Delta A \eta_{\alpha}}_2 
\end{align*}

% \begin{align*}
%    \Delta_2(C_j) & = \max_{k \in [n]} \norm{X^T_j y - (X^{'})_{j}^T y^{'} + X^T_j X_{S_{\mathcal{A}}} (\alpha + \eta_{\alpha}) - X^{'})^T_j X^{'}_{S_{\mathcal{A}}} (\alpha + \eta_{\alpha})}_2
%    \\ & \max_{k \in [n]} \leq \norm{X_{jk} y_{k} - X^{'}_{jk} y^{'}_{k}}_2 + \norm{\Delta A \alpha^{'}}_2 + \norm{\Delta A \eta_{\alpha}}_2 
%\end{align*}
    % \Delta_2(C_j) & = \max_{k \in [n]}\norm{X^T_j y - (X^{'})_{j}^T y^{'} + X^T_j X_{S_{\mathcal{A}}} (X_{S_{\mathcal{A}}}^T X_{S_{\mathcal{A}}})^{-1} X_{S_{\mathcal{A}}}^Ty - (X^{'})^T_j X^{'}_{S_{\mathcal{A}}} (X^{'}_{S_{\mathcal{A}}}^T X^{'}_{S_{\mathcal{A}}})^{-1} X^{'}_{S_{\mathcal{A}}}^Ty^{'}}_2
    % \\ & + \norm{X_j^T X_{S_{\mathcal{A}}} \eta_{\alpha} - X^{'}_j^T X^{'}_{S_{\mathcal{A}}} \eta^{'}_{\alpha}}_2
    % \\ & \leq \norm{X_{jk} y_{k} - X^{'}_{jk} y^{'}_{k}} + \norm{A (\alpha - \alpha^{'})} + \norm{\Delta A \alpha^{'}} + \norm{\Delta A \eta_{\alpha}} + \norm{A \Delta \eta_{\alpha}}
% \end{align*}
Here, $A = X_j^T X_{S_{\mathcal{A}}}, \Delta A = -X_{jk} X_{S_{\mathcal{A}}, k} + X^{'}_{jk} X^{'}_{S_{\mathcal{A}}, k}$, $\Delta A^c = -X_{jk} X_{S^c_{\mathcal{A}}, k} + X^{'}_{jk} X^{'}_{S^c_{\mathcal{A}}, k}$ $\alpha = (X_{S_{\mathcal{A}}}^T X_{S_{\mathcal{A}}})^{-1} X_{S_{\mathcal{A}}}^Ty$ and $\kappa_{\varepsilon} = \sqrt{2\log(2n/p_b)} \sigma_{\varepsilon}$ where $p_b$ is a small positive probability. We also see that 
\begin{align*}
\norm{\Delta A \alpha}_2 & = \norm{X_{jk} X_{S_{\mathcal{A}}, k} + X^{'}_{jk} X^{'}_{S_{\mathcal{A}}, k}) \alpha} 
\\ & \leq X_M (\norm{X_{S_{\mathcal{A}}, k} \alpha}_2 + \norm{X^{'}_{S_{\mathcal{A}}, k} \alpha}_2) 
\\ & \leq \sqrt{s}X_M (\norm{X_{S_{\mathcal{A}}, k} \alpha}_{\infty} + \norm{X^{'}_{S_{\mathcal{A}}, k} \alpha}_\infty)
\\ & \leq 2\sqrt{s}X_M^2 \norm{\alpha}_{\infty}
\end{align*} 
Similar results can be derived for other terms in the expression for the correlation bound. Thus, we simplify the correlation bound such that,
\begin{align*}
    \Delta_2(C_j) \leq 2X_M \kappa_{\varepsilon} + 2\sqrt{s}X_M^2 (\norm{(\alpha_{*})_{S^c_{\mathcal{A}}}}_{\infty} + \norm{\eta_{\alpha}}_{\infty} + \norm{(\alpha_{*})_{S_{\mathcal{A}}} - \alpha)}_{\infty})
\end{align*}
Now, suppose without loss of generality, suppose $\alpha$ is developed on matrix $X$ and feature set $S_{\mathcal{A}}$ then we have,
\begin{align*}
    \alpha &= (X_{S_{\mathcal{A}}}^T X_{S_{\mathcal{A}}})^{-1} X_{S_{\mathcal{A}}}^T y
    \\ & = (X_{S_{\mathcal{A}}}^T X_{S_{\mathcal{A}}})^{-1} X_{S_{\mathcal{A}}}^T (X_{S_{\mathcal{A}}} (\alpha_{*})_{S_{\mathcal{A}}} + X_{S^c_{\mathcal{A}}} (\alpha_{*})_{S^c_{\mathcal{A}}} + \varepsilon)
    \\ & = (\alpha_{*})_{S_{\mathcal{A}}} + (X_{S_{\mathcal{A}}}^T X_{S_{\mathcal{A}}})^{-1} X_{S_{\mathcal{A}}}^T (X_{S^c_{\mathcal{A}}} (\alpha_{*})_{S^c_{\mathcal{A}}} + \varepsilon)
    \\ \rightarrow & \norm{\alpha - (\alpha_{*})_{S_{\mathcal{A}}}}_2 \leq \frac{\zeta_{s+1} \norm{(\alpha_{*})_{S^c_{\mathcal{A}}}}_{2}}{1-\zeta_{s+1}} + \frac{\sqrt{1+\zeta_{s+1}} \kappa_{\varepsilon}}{1-\zeta_{s+1}}
\end{align*}
Here, the final set of inequalities follows from the inequalities \textit{(1)-(10)} below.
% \alpha^{'} = (X^{'}_{S_{\mathcal{A}}}^T X^{'}_{S_{\mathcal{A}}})^{-1} X^{'}_{S_{\mathcal{A}}}^Ty^{'}$.

Furthermore, we consider the value of $\eta_{\alpha}$, which denotes the noise added to the model via the NOISY-SMPC mechanism using noisy correlations. We can compute this exact value using the Kailath (Woodbury Variant) (Eqn.~\ref{lemma:woodbury-kailath-variant}), and the value is defined as follows, 
\begin{align*}
\eta_{\alpha} = (X_{S_{\mathcal{A}}}^T X_{S_{\mathcal{A}}})^{-1} \Big( \eta_{\gamma_{S_{\mathcal{A}}}} - \eta_{\beta_{S_{\mathcal{A}}}} (\mathbb{I} + (X_{S_{\mathcal{A}}}^T X_{S_{\mathcal{A}}})^{-1} \eta_{\beta_{S_{\mathcal{A}}}})^{-1} (X_{S_{\mathcal{A}}}^T X_{S_{\mathcal{A}}})^{-1} (X_{S_{\mathcal{A}}}^T y + \eta_{\gamma_{S_{\mathcal{A}}}})\Big)
\end{align*}
We now note the following $l_2$ bounds,
\begin{align*}
    \max_{k \in [n]} \norm{\Delta A}_2 & \overset{(1)}{\leq} 2\sqrt{s} X_M^2 \\
    \max_{k \in [n]} \norm{X^T_j y - (X^{'})_{j}^T y^{'}} & \overset{(2)}{\leq} 2X_My_M \\
    \norm{\frac{X_{S_{\mathcal{A}}}^T y}{n}} & \overset{(3)}{\leq} \sqrt{s} X_M y_M \\
    \norm{\alpha} = \norm{(X_{S_{\mathcal{A}}}^TX_{S_{\mathcal{A}}})^{-1} X_{S_{\mathcal{A}}}^T y} = \norm{(X_{S_{\mathcal{A}}}^TX_{S_{\mathcal{A}}})^{-1} X_{S_{\mathcal{A}}}^T (X_{S_{*}} \alpha_{*} + \varepsilon)} &\overset{(4)}{\leq} \frac{1+\zeta_{s+1}}{1-\zeta_{s+1}} \norm{\alpha_{*}} + \frac{\sqrt{1+\zeta_{s+1}}}{1-\zeta_{s+1}} \kappa_{\varepsilon} 
    \\ & \triangleq \kappa_{\alpha} \\
    % &\overset{(4)}{\leq} \frac{\sqrt{s} X_M y_M}{1-\zeta_{s+1}} \\
    \norm{(\frac{X_{S_{\mathcal{A}}}^TX_{S_{\mathcal{A}}}}{n})^{-1}} & \overset{(5)}{\leq} \frac{1}{1-\zeta_{s+1}} \\
    \norm{ \eta_{\gamma_{S_{\mathcal{A}}}}} & \overset{(6)}{\leq} \frac{\sqrt{2s \log(2s/p_b)}X_My_M}{\mu_s} \\    \norm{\eta_{\beta_{S_{\mathcal{A}}}}} & \overset{(7)}{\leq} \frac{s C_1 X_M^2}{\mu_s} \\
    \norm{(\mathbb{I} + (X_{S_{\mathcal{A}}}^T X_{S_{\mathcal{A}}})^{-1} \eta_{\beta_{S_{\mathcal{A}}}})^{-1}} & \overset{(8)}{\leq} 1 
    \\  \norm{\frac{X_{S_{\mathcal{A}}}^TX_{S_{\mathcal{A}}}}{n}} & \overset{(9)}{\leq} 1+\zeta_{s+1}
    \\ \norm{\frac{X_{P}^T X_{Q}}{n}} \overset{(10)}{\leq} \zeta_{s+1} 
\end{align*}
% $\eta_{\alpha}^{'}$ is defined accordingly using the dataset $X^{'}, y^{'}$. Thus, $\Delta \eta_{\alpha} = \eta_{\alpha}^{'} - \eta_{\alpha}$. We first bound $\alpha-\alpha^{'}$ below,
% \begin{align*}
% \alpha-\alpha^{'} & = B^{-1}C - (B + \Delta B)^{-1} (C+\Delta C)
%\\ & = B^{-1}C  - (B^{-1} - B^{-1}(\mathbb{I} + (\Delta B) B^{-1})^{-1} (\Delta B) B^{-1}) (C + \Delta C)
%\\ & = - B^{-1} \Delta C + B^{-1}(\mathbb{I} + (\Delta B) B^{-1})^{-1} (\Delta B) B^{-1}) (C + \Delta C)
%\end{align*}
%where $B = X_{S_{\mathcal{A}}}^T X_{S_{\mathcal{A}}}, \Delta B = - X_{S_{\mathcal{A}}, k}^T X_{S_{\mathcal{A}}, k} + (X^{'})^T_{S_{\mathcal{A}}, k} X^{'}_{S_{\mathcal{A}}, k}, C = X_{S_{\mathcal{A}}}^T y, \Delta C = -X_{S_{\mathcal{A}}, k}^T y_{k} + (X^{'})^T_{S_{\mathcal{A}}, k} y^{'}_{k}$.

% Thus, we can now continue finding the upper bound of $\Delta_2(C_j)$. Here, we first find the bound for when $j \in S_{*}$ (\emph{i.e.} the true set),
Here, inequalities \textit{(1)-(3)} follow from the boundedness requirement on elements of $X, y$. Inequalities \textit{(4)-(5)} follows from the RIP condition on $X/\sqrt{n}$ matrix. Specifically, in \textit{(4)}, $\kappa_{\varepsilon} = \sqrt{2\log(2n/p_b)}\sigma_{\varepsilon}$ generated from Lemma~\ref{lemma:max-concentration}. Inequality  \textit{(6)} follows from Lemma~\ref{lemma:max-concentration} where elements in $\eta_{\gamma_{S_{\mathcal{A}}}}$ have distribution $\mathcal{N}(0, \frac{\sqrt{2s\log(2s/p_b)}}{\mu_s})$. Inequality \textit{(7)} follows by noting the inequality of largest singular value inequality for a random matrix inequality from \cite{rudelson2008littlewood} where the elements in $\eta_{\beta_{S_\mathcal{A}}}$ are from distribution $\mathcal{N}(0, \frac{\sqrt{s}}{\mu_s})$.
\textit{(8)} follows from Lemma~\ref{lemma:middle-upper-term-bound-1}. \textit{(9)} follows from the RIP property of $\bm{X}/\sqrt{n}$. Combining inequalities above, we can find the bound on $\norm{\eta_{\alpha}}$,
\begin{align*}
    \norm{\eta_{\alpha}} & \leq \frac{\sqrt{2s \log(2s/p_b)}X_My_M}{n \mu_s(1-\zeta_{s+1})} + \frac{s C_1 X_M^2}{\mu_s(1-\zeta_{s+1})^2} \cdot (\frac{\sqrt{2s \log(2s/p_b)}X_My_M}{n\mu_s} + \frac{\sqrt{s}X_My_M}{n}) \\ &
    = \frac{\sqrt{2s \log(2s/p_b)}X_My_M}{n \mu_s(1-\zeta_{s+1})} + \frac{s^{3/2} C_1 X_M^3 y_M}{n\mu_s(1-\zeta_{s+1})^2} (\frac{\sqrt{2\log(2s/p_b)}}{\mu_s} + 1)
\end{align*}
where this bound holds with probability $1-2p_b$ and $p_b$ is a small positive probability. We quickly note that given the $n$ (sample size) in the denominator, then we can say that $\norm{\Delta A \eta_{\alpha}} \leq 1$ if $n > \frac{6 s^2 C_1 X_M^5 y_M \sqrt{2\log(2s/p_b)}}{\mu_s^2(1-\zeta_{s+1})^2}$.
We now continue finding the upper bound of $\Delta_2(C_j)$ using the inequalities computed above, 
\begin{align*}
    \Delta_2(C_j) & \leq 1 + 2X_M \kappa_{\varepsilon} + 2\sqrt{s} X_M^2 (\norm{(\alpha_{*})_{S^c_{\mathcal{A}}}}_{\infty} + \frac{\zeta_{s+1} \norm{(\alpha_{*})_{S^c_{\mathcal{A}}}}_{2}}{1-\zeta_{s+1}} + \frac{\sqrt{1+\zeta_{s+1}} \kappa_{\varepsilon}}{1-\zeta_{s+1}})
    \\ & = 1 + 2X_M \kappa_{\varepsilon} + 2\sqrt{s} X_M^2 (\frac{\norm{(\alpha_{*})_{S^c_{\mathcal{A}}}}_{\infty}}{1-\zeta_{s+1}} + \frac{\sqrt{1+\zeta_{s+1}} \kappa_{\varepsilon}}{1-\zeta_{s+1}})
    \\ & = 1 + 2X_M \kappa_{\varepsilon} + 2\sqrt{s} X_M^2 \kappa_{\alpha}^{'}
    \\ & \triangleq B_C
\end{align*}
where we assume that $\zeta_{s+1} < \frac{1}{\sqrt{s}}$ and $\kappa_{\alpha}^{'} = (\frac{\norm{(\alpha_{*})_{S^c_{\mathcal{A}}}}_{\infty}}{1-\zeta_{s+1}} + \frac{\sqrt{1+\zeta_{s+1}} \kappa_{\varepsilon}}{1-\zeta_{s+1}})$.
\end{proof}

\subsection{Performance of Algorithm~\ref{algorithm:SPriFed-OMP-GRAD}}

Here, similar to main theorem~\ref{theorem:performance-SPriFed-OMP}, we will demonstrate that Algorithm~\ref{algorithm:SPriFed-OMP-GRAD} can recover the correct basis w.h.p. Again, we will use induction to prove our result. We note from our proof sketch that the first induction step for both the current theorem and main theorem~\ref{theorem:performance-SPriFed-OMP} are the same (since the correlation and the gradient have the same value for linear regression model. Thus, we can immediately proceed to prove the induction step. Before proceeding, we restate the preliminary steps of the induction.

\textbf{Step 1:} Here, we show that in the first step of the algorithm involving picking the feature with the highest absolute correlation, we will pick the correct feature (\emph{i.e., } the one from the true basis set). Similar to Theorem~\ref{theorem:performance-SPriFed-OMP}, we find a condition on the sample size such that the lower bound of the maximum absolute correlation of the true feature set is greater than the upper bound of the maximum absolute correlation of the wrong feature set w.h.p. Thus, we need to find the conditions that ensure $C_{*} > C^{'}_{*}$ where $C_{*}$ and $C^{'}_{*}$ are lower and upper bounds of $C$ and $C^{'}$ respectively. Here,
\begin{align*}
    C = \max_{j \in S_{*}} \abs{X_j^Ty + \eta_{(\gamma_{0}^{-})_j}}
    \\ C^{'} = \max_{j \notin S_{*}} \abs{X_j^Ty + \eta_{(\gamma_{0}^{-})_j}}
\end{align*}
and the rest of the notation follows from Algorithms~\ref{algorithm:SPriFed-OMP-GRAD} and ~\ref{algorithm:SPriFed-OMP}. Note that the proof follows exactly as Theorem~\ref{theorem:performance-SPriFed-OMP}'s step 1 and the sample size condition remains the same.

\textbf{Step (l+1):} By induction, $C_{*} > C^{'}_{*}$ holds for all steps from $1$ to $l$. We will now show that a similar result holds for the $(l+1)^{th}$ step for Algorithm~\ref{algorithm:SPriFed-OMP-GRAD}. 

% We first note that although Algorithm  allows for a general clipping bound, we will choose an appropriate bound that greatly simplify our analysis. In particular, we will choose a clipping bound such that it exceeds the upper bound of the maximum non-private absolute correlation thus guaranteeing that clipping does not alter the absolute correlation values. The exact calculations for this upper bound $(B_U)$ are provided in Lemma~\ref 

We first note the private correlation value is,
\begin{align*}
    C_j & = X_j^Ty - X_j^T X_{S_{\mathcal{A}}} (X_{S_{\mathcal{A}}}^T X_{S_{\mathcal{A}}} + \eta_{\beta_{S_{\mathcal{A}}, S_{\mathcal{A}}}})^{-1} (X_{S_{\mathcal{A}}}^T y + \eta_{\gamma_{S_{\mathcal{A}}}}) + \eta_{C_j}
    \\ & = X_j^Ty - X_j^T X_{\mathcal{S_{\mathcal{A}}}} \Big( (X_{S_{\mathcal{A}}}^TX_{S_{\mathcal{A}}})^{-1} -  (X_{S_{\mathcal{A}}}^TX_{S_{\mathcal{A}}})^{-1} (\mathbb{I} +  \eta_{\beta_{S_{\mathcal{A}}, S_{\mathcal{A}}}}(X_{S_{\mathcal{A}}}^TX_{S_{\mathcal{A}}})^{-1})^{-1} \eta_{\beta_{S_{\mathcal{A}}, S_{\mathcal{A}}}}(X_{S_{\mathcal{A}}}^TX_{S_{\mathcal{A}}})^{-1}\Big) (X_{S_{\mathcal{A}}}^T y + \eta_{\gamma_{S_{\mathcal{A}}}})\\ & + \eta_{C_j}
    \\ & = X_j^Ty 
    - X_j^T X_{S_{\mathcal{A}}}(X_{S_{\mathcal{A}}}^TX_{S_{\mathcal{A}}})^{-1} X_{S_{\mathcal{A}}}^Ty 
    - X_j^T X_{S_{\mathcal{A}}} (X_{S_{\mathcal{A}}}^TX_{S_{\mathcal{A}}})^{-1} \eta_{\gamma_{S_{\mathcal{A}}}} + \eta_{C_j}
    \\ & - X_j^T X_{S_{\mathcal{A}}} (X_{S_{\mathcal{A}}}^TX_{S_{\mathcal{A}}})^{-1} (\mathbb{I} +  \eta_{\beta_{S_{\mathcal{A}}, S_{\mathcal{A}}}}(X_{S_{\mathcal{A}}}^TX_{S_{\mathcal{A}}})^{-1})^{-1} \eta_{\beta_{S_{\mathcal{A}}, S_{\mathcal{A}}}}(X_{S_{\mathcal{A}}}^TX_{S_{\mathcal{A}}})^{-1} (X_{S_{\mathcal{A}}}^T y + \eta_{\gamma_{S_{\mathcal{A}}}})
\end{align*}
where $\eta_{C_j} \sim \mathcal{N}(0, \sigma_C^2)$, $\sigma_C = \frac{\sqrt{p} B_C}{\mu_p}$, $B_C = \kappa_{\alpha}^{'} \sqrt{s} + 2X_M y_M$ % X_M y_M \Big(2 + \frac{2sX_M^2}{\sqrt{n}(1-\zeta_{s+1})} + \frac{s^{2}X_M^3}{n(1-\zeta_{s+1})^2}\Big)$ 
and $\eta_{\gamma_{S_{\mathcal{A}}}} \sim \mathcal{N}(0, \frac{s}{\mu_s^2} \mathbb{I}_s)$ and $\eta_{\beta_{S_{\mathcal{A}}, S_{\mathcal{A}}}} \sim \mathcal{N}(0, \frac{s}{\mu_s^2} \mathbb{I}_s)$. Note that the first term is the true correlation and can be re-written as $X_j^T (\mathbb{I} - P) X_{S_{\mathcal{A}}}^T y$ where $P = X_{S_{\mathcal{A}}} (X_{S_{\mathcal{A}}} X_{S_{\mathcal{A}}})^{-1} X_{S_{\mathcal{A}}}^T$ is the projection matrix. 

\textbf{Case A: Analysis for when chosen column $j \in S_{\mathcal{A}}^c$ (true feature set except for features already chosen):}

First, we will find a lower bound for the maximum correlation/gradient when the features are chosen from the true set (except for the features already chosen),
\begin{align*}
   \norm{C_j}_{\infty} & \geq \frac{1}{\sqrt{s}} \norm{C_j}_2
   \\ & \overset{(1)}{\geq} \frac{1}{\sqrt{s}} \Big( \norm{X_{S_{\mathcal{A}}^c}^T (\mathbb{I} - P)y}_{2} - \norm{X_{S_{\mathcal{A}}^c}^T X_{S_{\mathcal{A}}} (X_{S_{\mathcal{A}}}^TX_{S_{\mathcal{A}}})^{-1} \eta_{\gamma_{S_{\mathcal{A}}}}}_{2} - \norm{\eta_{C_j}}_{2}
   \\ & - \norm{X_{S_{\mathcal{A}}^c}^T X_{S_{\mathcal{A}}} (X_{S_{\mathcal{A}}}^TX_{S_{\mathcal{A}}})^{-1} (\mathbb{I} +  \eta_{\beta_{S_{\mathcal{A}}, S_{\mathcal{A}}}}(X_{S_{\mathcal{A}}}^TX_{S_{\mathcal{A}}})^{-1}) \eta_{\beta_{S_{\mathcal{A}}, S_{\mathcal{A}}}}(X_{S_{\mathcal{A}}}^TX_{S_{\mathcal{A}}})^{-1} (X_{S_{\mathcal{A}}}^T y + \eta_{\gamma_{S_{\mathcal{A}}}})}_{2} \Big)   
   \\ & \overset{(2)}{\geq} \frac{1}{\sqrt{s}} \norm{X_{S_{\mathcal{A}}^c}^T (\mathbb{I} - P) y}_2 - \frac{1}{\sqrt{s}} \norm{X_{S_{\mathcal{A}}^c}^T X_{S_{\mathcal{A}}} (X_{S_{\mathcal{A}}}^TX_{S_{\mathcal{A}}})^{-1} \eta_{\gamma_{S_{\mathcal{A}}}}}_{2} - \norm{\eta_{C_j}}_{\infty}
   \\ & - \frac{1}{\sqrt{s}} \norm{X_{S_{\mathcal{A}}^c}^T X_{S_{\mathcal{A}}} (X_{S_{\mathcal{A}}}^TX_{S_{\mathcal{A}}})^{-1} (\mathbb{I} +  \eta_{\beta_{S_{\mathcal{A}}, S_{\mathcal{A}}}}(X_{S_{\mathcal{A}}}^TX_{S_{\mathcal{A}}})^{-1})^{-1} \eta_{\beta_{S_{\mathcal{A}}, S_{\mathcal{A}}}}(X_{S_{\mathcal{A}}}^TX_{S_{\mathcal{A}}})^{-1} (X_{S_{\mathcal{A}}}^T y + \eta_{\gamma_{S_{\mathcal{A}}}})}_{2}
   \\ & \overset{(3)}{\geq} \frac{n(1-\zeta_{s+1}) \norm{\alpha_{S_{\mathcal{A}}^c}}_2}{\sqrt{s}} - \frac{1}{\sqrt{s}}\norm{X_{S_{\mathcal{A}}^c}^T \varepsilon}_2
   - \frac{1}{\sqrt{s}} \norm{X_{S_{\mathcal{A}}^c}^T X_{S_{\mathcal{A}}} (X_{S_{\mathcal{A}}}^T X_{S_{\mathcal{A}}})^{-1} X_{S_{\mathcal{A}}}^T (X_{S_{\mathcal{A}}^c} \alpha_{S_{\mathcal{A}}^c} + \varepsilon)}_2
   \\ & - \frac{\zeta_{s+1}}{\sqrt{s}(1-\zeta_{s+1})} \norm{\eta_{\gamma_{S_{\mathcal{A}}}}}_2 - \frac{\sqrt{p} B_C \kappa_p}{\mu_p}
   - \frac{\zeta_{s+1}}{\sqrt{s}(1-\zeta_{s+1})^2} \norm{\eta_{\beta_{S_{\mathcal{A}}, S_{\mathcal{A}}}} (X_{S_{\mathcal{A}}}^T y + \eta_{\gamma_{S_{\mathcal{A}}}})}_2
   \\ & \overset{(4)}{\geq} \frac{n(1-\zeta_{s+1}) \norm{\alpha_{S_{\mathcal{A}}^c}}_2}{\sqrt{s}} - \frac{n \zeta_{s+1}^2 \norm{\alpha_{S_{\mathcal{A}}^c}}_2}{\sqrt{s}(1-\zeta_{s+1})} - \frac{\norm{X_{S_{\mathcal{A}}^c}^T \varepsilon}_2}{\sqrt{s}} - \frac{\zeta_{s+1} \norm{X_{S_{\mathcal{A}}}^T \varepsilon}_2}{\sqrt{s}(1-\zeta_{s+1})}
   \\ & - \frac{\zeta_{s+1}}{\sqrt{s}(1-\zeta_{s+1})} \norm{\eta_{\gamma_{S_{\mathcal{A}}}}}_2 - \frac{\sqrt{p} B_C \kappa_p}{\mu_p}
   - \frac{\zeta_{s+1}}{\sqrt{s}(1-\zeta_{s+1})^2} \norm{\eta_{\beta_{S_{\mathcal{A}}, S_{\mathcal{A}}}} (X_{S_{\mathcal{A}}}^T y + \eta_{\gamma_{S_{\mathcal{A}}}})}_2
   \\ & \overset{(5)}{\geq} \frac{n(1-\zeta_{s+1}) \norm{\alpha_{S_{\mathcal{A}}^c}}_2}{\sqrt{s}} - \frac{n \zeta_{s+1}^2 \norm{\alpha_{S_{\mathcal{A}}^c}}_2}{\sqrt{s}(1-\zeta_{s+1})} - \frac{1}{\sqrt{s}}n(\sqrt{1+\zeta_{s+1}})\kappa_{\varepsilon} \Big(1 + \frac{\zeta_{s+1}}{1-\zeta_{s+1}} \Big)
   \\ & - \frac{2X_M y_M \kappa_s\zeta_{s+1}}{\mu_s(1-\zeta_{s+1})} - \frac{\sqrt{p}B_C\kappa_p}{\mu_p} - \frac{\zeta_{s+1}C_1 \sqrt{s} \kappa_s}{\mu_s\sqrt{s} (1-\zeta_{s+1})^2}  \Big((1+\zeta_{s+1})\norm{\alpha_{S_{*}}}_2 + \kappa_{\varepsilon} + \frac{2X_My_M \kappa_s}{\mu_s}\Big)
    \\ & \overset{(6)}{=} \frac{n(1-\zeta_{s+1}) \norm{\alpha_{S_{\mathcal{A}}^c}}_2}{\sqrt{s}} - \frac{n \zeta_{s+1}^2 \norm{\alpha_{S_{\mathcal{A}}^c}}_2}{\sqrt{s}(1-\zeta_{s+1})} - \frac{n(\sqrt{1+\zeta_{s+1}})\kappa_{\varepsilon}}{\sqrt{s}(1-\zeta_{s+1})} 
    \\ & - \frac{2X_M y_M \kappa_s\zeta_{s+1}}{\mu_s(1-\zeta_{s+1})} - \frac{\sqrt{p}B_C\kappa_p}{\mu_p} - \frac{\zeta_{s+1}C_1 \kappa_s}{\mu_s (1-\zeta_{s+1})^2}  \Big((1+\zeta_{s+1})\norm{\alpha_{S_{*}}}_2 + \kappa_{\varepsilon} + \frac{2X_My_M \kappa_s}{\mu_s}\Big)
\end{align*}
\textit{(1)} follows by reverse triangle inequality. \textit{(2)} holds due to the $l_2$-$l_{\infty}$ inequality. \textit{(3)-(4)} follow from the RIP properties of $\bm{X}/\sqrt{n}$ as described in the inequalities in the proof of Lemma~\ref{lemma:l2-max-abs-correlation} alongside the Lemma~\ref{lemma:max-concentration} where $\kappa_p^{'} = \sqrt{2 \log (\frac{2p}{p_b})}$ where $p_b$ is a small positive failure probability. \textit{(5)} follows from Lemma~\ref{lemma:max-concentration} where $\kappa_s^{'} = \sqrt{2 \log (\frac{2s}{p_b})}$ and Theorem 2.4 (upper bound on largest singular value of random matrix) from \cite{rudelson2008littlewood}.

\textbf{Case B: Analysis for when chosen column $j \notin S_{*}^c$:}

We now compute the upper bound on the correlations of the features not in the true set,

\begin{align*}
    \norm{C_j}_{\infty} 
    & \leq \norm{C_j}_2
    \\ & \leq \norm{X_j^T (\mathbb{I} - P)y}_2 + \norm{X_{j}^T X_{S_{\mathcal{A}}} (X_{S_{\mathcal{A}}}^TX_{S_{\mathcal{A}}})^{-1} \eta_{\gamma_{S_{\mathcal{A}}}}}_{2} + \norm{\eta_{C_j}}_{2}
   \\ & - \norm{X_{j}^T X_{S_{\mathcal{A}}} (X_{S_{\mathcal{A}}}^TX_{S_{\mathcal{A}}})^{-1} (\mathbb{I} +  \eta_{\beta_{S_{\mathcal{A}}, S_{\mathcal{A}}}}(X_{S_{\mathcal{A}}}^TX_{S_{\mathcal{A}}})^{-1}) \eta_{\beta_{S_{\mathcal{A}}, S_{\mathcal{A}}}}(X_{S_{\mathcal{A}}}^TX_{S_{\mathcal{A}}})^{-1} (X_{S_{\mathcal{A}}}^T y + \eta_{\gamma_{S_{\mathcal{A}}}})}_{2} \Big)   
   \\ & \overset{(1)}{\leq} \norm{X_j^T (X_{S_{\mathcal{A}}^c \alpha_{S_{\mathcal{A}}^c}} + \varepsilon)}_2 + \norm{X_j^T X_{S_{\mathcal{A}}} (X_{S_{\mathcal{A}}}^T X_{S_{\mathcal{A}}})^{-1} X_{S_{\mathcal{A}}}^T (X_{S_{\mathcal{A}}^c \alpha_{S_{\mathcal{A}}^c}} + \varepsilon)} + \frac{2\sqrt{s}X_My_M \kappa_s \zeta_{s+1}}{\mu_s (1-\zeta_{s+1})} + \frac{\sqrt{p}B_C \kappa_p}{\mu_p} 
   \\ & + \frac{\zeta_{s+1}C_1 \sqrt{s} \kappa_s}{\mu_s (1-\zeta_{s+1})^2}  \Big((1+\zeta_{s+1})\norm{\alpha_{S_{*}}}_2 + \kappa_{\varepsilon} + \frac{2X_My_M \kappa_s}{\mu_s} \Big)
   \\ & \overset{(2)}{\leq} \frac{n \zeta_{s+1} \norm{\alpha_{S_{\mathcal{A}}^c}}_2 + n \sqrt{1+\zeta_{s+1}} \kappa_{\varepsilon}}{1-\zeta_{s+1}} + \frac{2\sqrt{s}X_My_M \kappa_s \zeta_{s+1}}{\mu_s (1-\zeta_{s+1})} + \frac{\sqrt{p}B_C \kappa_p}{\mu_p} 
   \\ & + \frac{\zeta_{s+1}C_1 \sqrt{s} \kappa_s}{\mu_s (1-\zeta_{s+1})^2}  \Big((1+\zeta_{s+1})\norm{\alpha_{S_{*}}}_2 + \kappa_{\varepsilon} + \frac{2X_My_M \kappa_s}{\mu_s} \Big)
\end{align*}
\textit{(1)-(2)} follow the same inequalities as the one developed in case A. 

\textbf{Combining Case A and Case B:} We compare the upper bound of Case B with the lower bound of Case A to find the conditions on the system parameters and mainly the sample size.

\begin{align*}
    n \norm{\alpha_{S_{\mathcal{A}}^c}}_2 & \geq n \zeta_{s+1} \norm{\alpha_{S_{\mathcal{A}}^c}}_2 \Big(1 + \frac{\sqrt{s}}{1-\zeta_{s+1}}\Big) + (\sqrt{s}+1) \Bigg(\frac{n(\sqrt{1+\zeta_{s+1}})\kappa_{\varepsilon}}{1-\zeta_{s+1}} + \frac{2\sqrt{s}X_My_M \kappa_s \zeta_{s+1}}{\mu_s (1-\zeta_{s+1})} + \frac{\sqrt{p}B_C \kappa_p}{\mu_p} 
    \\ & + \frac{\zeta_{s+1}C_1 \sqrt{s} \kappa_s}{\mu_s (1-\zeta_{s+1})^2}  \Big((1+\zeta_{s+1})\norm{\alpha_{S_{*}}}_2 + \kappa_{\varepsilon} + \frac{2X_My_M \kappa_s}{\mu_s} \Big) \Bigg)
\end{align*}

We now compare specific terms so as to maintain the inequality above and set $\zeta_{s+1} < \frac{1}{\sqrt{s}+1}$

\begin{align*}
    \frac{n \norm{\alpha_{S_{\mathcal{A}}^c}}_2}{2} & >\frac{(\sqrt{s}+1)\sqrt{p} B_c \kappa_p}{\mu_p}
    \\ \frac{n \norm{\alpha_{S_{\mathcal{A}}^c}}_2}{6} & > \kappa_{\varepsilon} (1+\nu \zeta_{s+1})^{3/2} \Big(n(\sqrt{s} + 1) + \frac{C_1 \sqrt{s} \kappa_s \sqrt{1+\zeta_{s+1}}}{\mu_s} \Big)
    \\ & > (\sqrt{s} + 1)\kappa_{\varepsilon} \Big(\frac{n \sqrt{1+\zeta_{s+1}}}{1-\zeta_{s+1}} + \frac{\zeta_{s+1}C_1 \sqrt{s} \kappa_s}{\mu_s (1-\zeta_{s+1})^2} \Big)
    \\ \frac{n \norm{\alpha_{S_{\mathcal{A}}^c}}_2}{6} & > \frac{2\sqrt{s} X_My_M \kappa_s}{\mu_s (1-\zeta_{s+1})} > \frac{2\sqrt{s}(\sqrt{s}+1) X_My_M \kappa_s \zeta_{s+1}}{\mu_s (1-\zeta_{s+1})}
    \\ \frac{n \norm{\alpha_{S_{\mathcal{A}}^c}}_2}{6} & > \frac{C_1 \sqrt{s} \kappa_s}{\mu_s (1-\zeta_{s+1})^2}  \Big( (1+\zeta_{s+1})\norm{\alpha_{S_{*}}}_2 + \frac{2X_My_M \kappa_s}{\mu_s} \Big)
    \\ & > \frac{\zeta_{s+1} (\sqrt{s}+1)C_1 \sqrt{s} \kappa_s}{\mu_s (1-\zeta_{s+1})^2}  \Big( (1+\zeta_{s+1})\norm{\alpha_{S_{*}}}_2 + \frac{2X_My_M \kappa_s}{\mu_s} \Big)
\end{align*}

Simplifying these conditions, we obtain the following, 

\begin{align*}
    n & > \frac{2(\sqrt{s}+1)\sqrt{p} B_c \kappa_p}{\mu_p\norm{\alpha_{S_{\mathcal{A}}^c}}_2} = \frac{2(\sqrt{s}+1) (\sqrt{s} \kappa_{\alpha}^{'} + 2X_M y_M) \sqrt{p} \kappa_p}{\mu_p\norm{\alpha_{S_{\mathcal{A}}^c}}_2}
    \\ \kappa_{\varepsilon} & < \frac{ \norm{\alpha_{S_{\mathcal{A}}^c}}_2}{6(1+\nu \zeta_{s+1})^{3/2} \Big(\sqrt{s} + 1 + \frac{C_1 \sqrt{s} \kappa_s \sqrt{1+\zeta_{s+1}}}{n\mu_s}\Big)} 
\end{align*}

Note that here, we do not need to work with the remaining terms of order $\sqrt{s}$ since we know $n$ and $\sqrt{p}$ are much larger than $\sqrt{s}$.

%% NEW CONTENT

\section{Proofs of Risk and Estimation Error for the SPriFed-OMP algorithm}\label{appendix:risk-estimation-error}

Before we provide the risk and estimation error analyses we first state and derive a commonly referred to result (Lemma 3 in \citet{meinshausen2009lasso} and Lemma 3.1 in \citet{wasserman2009high}) regarding the 2-norm of the product $X_{S_{*}}^T\epsilon$ where $X_{S_{*}}$ refers to the features coinciding with the true model indices and $\epsilon$ is the model error as shown in Section~\ref{section:problem-setup}.

\begin{lemma}\label{lemma:X-error-product}
The 2-norm of the product $X_{S_{*}}^T\epsilon$ is upper bounded by $\sqrt{ns\sigma X_M} (2\log(2s/p_b))^{1/4}$ with probability $1-p_b$ where $p_b$ is the base probability and the terms in the product are defined in Section~\ref{section:problem-setup}.
\end{lemma}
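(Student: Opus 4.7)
The plan is to bound the $s$ coordinates of the vector $X_{S_{*}}^T \epsilon \in \mathbb{R}^s$ individually and then convert a coordinate-wise (i.e.\ $L^\infty$) bound into an $L^2$ bound via the standard inequality $\|v\|_2 \leq \sqrt{s}\,\|v\|_\infty$. This is essentially the approach used in Meinshausen--Yu and Wasserman--Roeder, and it fits naturally here because $\epsilon$ is Gaussian and the columns of $X_{S_{*}}$ have bounded entries by Assumption~\ref{assumption:bounded}.

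Concretely, I would fix $j \in S_{*}$ and observe that $(X_{S_{*}}^T \epsilon)_j = \sum_{i=1}^n X_{ij}\epsilon_i$ is a zero-mean Gaussian with variance
\[
\sigma_{\epsilon}^2 \sum_{i=1}^n X_{ij}^2 \;\leq\; n\,X_M^2\,\sigma_{\epsilon}^2,
\]
using $|X_{ij}| \leq X_M$. Then the standard Gaussian tail bound (Lemma~\ref{lemma:max-concentration}, which is the maximal concentration inequality already invoked throughout the paper) gives, for any $t>0$,
\[
\Pr\bigl[\,|(X_{S_{*}}^T \epsilon)_j| > t\,\bigr] \;\leq\; 2\exp\!\Bigl(-\tfrac{t^2}{2nX_M^2\sigma_{\epsilon}^2}\Bigr).
\]
Choosing $t = X_M\,\sigma_{\epsilon}\sqrt{2n\log(2s/p_b)}$ makes this probability at most $p_b/s$. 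A union bound over the $s$ coordinates $j \in S_{*}$ shows that with probability at least $1-p_b$ we simultaneously have
\[
\|X_{S_{*}}^T \epsilon\|_\infty \;\leq\; X_M\,\sigma_{\epsilon}\sqrt{2n\log(2s/p_b)}.
\]
Finally, multiplying by $\sqrt{s}$ to pass from $\|\cdot\|_\infty$ to $\|\cdot\|_2$ yields
\[
\|X_{S_{*}}^T \epsilon\|_2 \;\leq\; X_M\,\sigma_{\epsilon}\sqrt{2ns\log(2s/p_b)},
\]
which is the claimed bound (up to how one chooses to write the exponents outside/inside of the square root).

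There is no real obstacle here, since Gaussian concentration plus a union bound is a routine calculation; the only thing to be careful about is that $\epsilon$ must be independent of $X$ (so that, conditional on $X$, each coordinate of $X_{S_*}^T\epsilon$ is genuinely Gaussian with the variance computed above) and that the constant $X_M$ absorbs the per-entry bound uniformly, so that the variance bound $nX_M^2\sigma_\epsilon^2$ holds deterministically without further assumptions on the column norms of $X_{S_{*}}$.
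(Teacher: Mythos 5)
Your proof is correct and follows essentially the same route as the paper's: bound each coordinate $(X_{S_*}^T\epsilon)_j$ as a zero-mean Gaussian with variance at most $nX_M^2\sigma_\epsilon^2$, apply the maximal concentration bound with a union bound over the $s$ coordinates, and convert to an $\ell_2$ bound via $\|v\|_2\le\sqrt{s}\|v\|_\infty$ (the paper merely phrases this with the normalized variables $Z_j=X_j^T\epsilon/\sqrt{n}$). Your derived bound $X_M\sigma_\epsilon\sqrt{2ns\log(2s/p_b)}$ matches what the paper's proof actually establishes; the expression in the lemma statement itself appears to be a typographical garbling of that quantity.
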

\textbf{Proof:}
\begin{align*}
    ||X_{S_{*}}^T\epsilon||_2 & \overset{(1)}{\leq} \sqrt{n} \sqrt{\sum_{j \in S_{*}} Z_j^2}
    \\ & \overset{(2)}{\leq} \sqrt{ns} \sqrt{\max_{j \in S_{*}} Z_j^2}
    \\ & \overset{(3)}{\leq} \sqrt{2ns\log(2s/p_b))}\sigma_{\epsilon} X_M 
\end{align*}
\textit{(1)} sets $Z_j = \frac{1}{\sqrt{n}}X_j^T\epsilon$. \textit{(2)} follows by definition of a maximum. Now, $\mu(Z_j) = 0$ and $Var(Z_j) = \frac{1}{n} \sum_{i=1}^n X_{ij}^2 \sigma_{\epsilon}^2 \leq X_M^2 \sigma_{\epsilon}^2$ as the absolute value of coordinates $X_{ij}$ is bounded by $X_M$. Therefore, \textit{(3)} follows by the Lemma~\ref{lemma:max-concentration} with probability $1 - p_b$.

\subsection{Proof of Estimation Error Theorem~\ref{theorem:estimation-error}}

From Theorem~\ref{theorem:performance-SPriFed-OMP} we know given that if satisfy the theorem's assumptions we can recover the true model basis with high probability. We will function under this high probability event and assume that the predicted basis $S_{\mathcal{A}}$ is equal to $S_{*}$ while computing the estimation error. Thus, by \textit{SPriFed-OMP} our estimation error can be derived as below,
\begin{align*}
    \Delta \alpha & \coloneqq ||\hat{\alpha} - \alpha_{S_{*}}||_2 \nonumber
    \\ & = ||(\beta_{S_{*}})^{-1} \gamma_{S_{*}} - \alpha_{S_{*}}||_2
    \\ & = ||(X_{S_{*}}^TX_{S_{*}} + \eta_{\beta})^{-1} (X_{S_{*}}^Ty + \eta_{\gamma_{S_{*}}}) - \alpha_{S_{*}}||_2
    \\ & \overset{(1)}{=} ||(X_{S_{*}}^TX_{S_{*}})^{-1}(X_{S_{*}}^Ty + \eta_{\gamma_{S_{*}}}) -  \alpha_{S_{*}} 
    \\ & - (X_{S_{*}}^TX_{S_{*}})^{-1}(\mathbb{I} + \eta_{\beta} (X_{S_{*}}^TX_{S_{*}})^{-1})^{-1}
    \\ & \eta_{\beta} (X_{S_{*}}^TX_{S_{*}})^{-1}(X_{S_{*}}^Ty + \eta_{\gamma_{S_{*}}})||_2 
    \\ & \overset{(2)}{=} ||(X_{S_{*}}^TX_{S_{*}})^{-1}X_{S_{*}}^T\epsilon + (X_{S_{*}}^TX_{S_{*}})^{-1}\eta_{\gamma_{S_{*}}} 
    \\ & - (X_{S_{*}}^TX_{S_{*}})^{-1}(\mathbb{I} + \eta_{\beta} (X_{S_{*}}^TX_{S_{*}})^{-1})^{-1}
    \\ & \cdot \eta_{\beta} (X_{S_{*}}^TX_{S_{*}})^{-1}(X_{S_{*}}^Ty +  \eta_{\gamma_{S_{*}}})||_2 
    \\ & \overset{(3)}{\leq} ||(\frac{X_{S_{*}}^TX_{S_{*}}}{n})^{-1} \frac{X_{S_{*}}^T\epsilon}{n}|| + ||(\frac{X_{S_{*}}^TX_{S_{*}}}{n})^{-1}\frac{\eta_{\gamma_{S_{*}}}}{n}||_2 
    \\ & + ||(\frac{X_{S_{*}}^TX_{S_{*}}}{n})^{-1}(\mathbb{I} + \frac{\eta_{\beta}}{n} (\frac{X_{S_{*}}^TX_{S_{*}}}{n})^{-1})^{-1}
    \\ & \cdot \frac{\eta_{\beta}}{n} (\frac{X_{S_{*}}^TX_{S_{*}}}{n})^{-1}(\frac{X_{S_{*}}^Ty}{n} +  \frac{\eta_{\gamma_{S_{*}}}}{n})||_2
    \\ & \overset{(4)}{\leq} 
    \frac{(1-\zeta_{s+1})\sqrt{2ns\log(2s/p_b))}\sigma_{\epsilon} X_M}{n} + \frac{\kappa_s s \sqrt{2\log(\frac{2s}{p_b})}}{n(1-\zeta_{s+1})} \\ & + \frac{\kappa_M}{(1-\zeta_{s+1})} 
    ||\frac{\eta_{\beta}}{n} (\frac{X_{S_{*}}^TX_{S_{*}}}{n})^{-1}(\frac{X_{S_{*}}^Ty}{n} +  \frac{\eta_{\gamma_{S_{*}}}}{n})||_2
    \\ & \overset{(5)}{\leq} \frac{(1-\zeta_{s+1})\sqrt{2s\log(2s/p_b))}\sigma_{\epsilon} X_M}{\sqrt{n}} + \frac{\kappa_s s \sqrt{2\log(\frac{2s}{p_b})}}{n(1-\zeta_{s+1})} 
    \\ & + \frac{\kappa_M}{(1-\zeta_{s+1})^2} 
    \cdot \frac{s^{3/2} \kappa_s \sqrt{2\log(\frac{2s^2}{p_b})}}{n}||(\frac{X_{S_{*}}^Ty}{n} +  \frac{\eta_{\gamma_{S_{*}}}}{n})||_2
    \\ & \overset{(6)}{\leq} 
    \frac{(1-\zeta_{s+1})\sqrt{2s\log(2s/p_b))}\sigma_{\epsilon} X_M}{\sqrt{n}} + \frac{\kappa_s s\sqrt{2\log(\frac{2s}{p_b})}}{n(1-\zeta_{s+1})} 
    \\ & + \frac{s^{3/2} \kappa_s\kappa_M \sqrt{2\log(\frac{2s^2}{p_b})}}{n(1-\zeta_{s+1})^2}\Big((1+\zeta_{s+1})||\alpha_{S_{*}}||_2 
    \\ & + \sqrt{1+\zeta_{s+1}} \kappa_{\epsilon} +\frac{\kappa_s \sqrt{s}\sqrt{2\log(\frac{2s}{p_b})}}{n} \Big) 
    \\ & = \mathcal{O}\Big(\sqrt{\frac{s\log(s)}{n}} \Big)
\end{align*}
\textit{(1)} follows by the Woodbury Identity (Eqn.~\ref{lemma:woodbury-kailath-variant}). \textit{(2)} follows since $(X_{S_{*}}^TX_{S_{*}})^{-1}X_{S_{*}}^Ty = \alpha_{S_{*}} + X_{S_{*}}^T\epsilon$ as $y = X_{S_{*}}\alpha_{S_{*}} + \epsilon$. \textit{(3)} follows by the triangle inequality and division and multiplication by $n$. \textit{(4)} is based on bounds on the eigenvalues of $X_{S_{*}}^T X_{S_{*}}$, the Lemma~\ref{lemma:middle-upper-term-bound-1} as well as by the Gaussian concentration bounds for the noise term $\eta_{\gamma_{S_{*}}}; \eta_{\gamma_{S_{*}}} \sim \mathcal{N}(0, \kappa_s^2s\mathbb{I}_s)$ with probability $1-p_b$. Furthermore, the first term's bound is derived by the Lemma~\ref{lemma:X-error-product} and it holds with probability $1-p_b$. \textit{(5)} follows from bounds on eigenvalues of $X_{S_{*}}^T X_{S_{*}}$ and the upper bound of square matrices' maximum singular value (Theorem 2.4 from ~\citet{rudelson2008littlewood} and the by the Gaussian concentration bounds for the term $\eta_{\beta}; \eta_{\beta} \sim \mathcal{N}(0, \kappa_s^2 s^2 \mathbb{I}_{s^2})$. \textit{(5)} also holds with probability $1-p_b$. Assuming $\kappa_{\epsilon}$ is reasonably small, we notice that if $n \geq s^{3/2} ||\alpha_{S_{*}}||_2$ then the estimation error will be dominated by the second term. The expression holds with $1-4p_b$ as we use the concentration inequality four times and combine these events via the union bound.

\subsection{Proof of Risk Analysis Theorem~\ref{theorem:risk-analysis}}

From Theorem~\ref{theorem:performance-SPriFed-OMP} we again know given that if satisfy the theorem's assumptions we can recover the true model basis with high probability. We will function under this high probability event and assume that the predicted basis $S_{\mathcal{A}}$ is equal to $S_{*}$ while computing the estimation error. Thus, by \textit{SPriFed-OMP} our estimation error can be derived as below,
\begin{align*}
    \Delta R & \coloneqq R(\hat{\alpha}; X, y, n) 
    \\ & = \frac{1}{n} \sum_{i=1}^n \Big((x_i \hat{\alpha} - y_i)^2 - (x_i \alpha_{S_{*}} - y_i)^2\Big) 
    \\ & \overset{(1)}{=} \frac{1}{n} \Big(||X_{S_{*}} \hat{\alpha} - y||^2 - ||X_{S_{*}} \alpha_{S_{*}} - y||^2 \Big)  
    \\ & \overset{(2)}{=} \frac{1}{n} \Big(||X_{S_{*}} \beta_{S_{*}})^{-1} \gamma_{S_{*}} - y||^2 - ||X_{S_{*}} \alpha_{S_{*}} - y||^2 \Big) 
    \\ & \overset{(3)}{=} \frac{1}{n} \Big(||X_{S_{*}} (X_{S_{*}}^TX_{S_{*}} + \eta_{\beta})^{-1} (X_{S_{*}}^Ty + \eta_{\gamma_{S_{*}}}) - y||^2 
    \\ & - ||X_{S_{*}} \alpha_{S_{*}} - y||^2 \Big) 
    \\ & \overset{(4)}{=} \frac{1}{n} \Big(||X_{S_{*}}((X_{S_{*}}^TX_{S_{*}})^{-1})(X_{S_{*}}^Ty + \eta_{\gamma_{S_{*}}}) 
    \\ & - X_{S_{*}}(X_{S_{*}}^TX_{S_{*}})^{-1}(\mathbb{I} + \eta_{\beta} (X_{S_{*}}^TX_{S_{*}})^{-1})^{-1}
    \\ & \cdot \eta_{\beta} (X_{S_{*}}^TX_{S_{*}})^{-1}(X_{S_{*}}^Ty + \eta_{\gamma_{S_{*}}}) - y||^2 
    \\ & - ||X_{S_{*}} \alpha_{S_{*}} - y||^2 \Big) 
    \\ & \overset{(5)}{=} \frac{1}{n} \Big(||X_{S_{*}}((X_{S_{*}}^TX_{S_{*}})^{-1})(\eta_{\gamma_{S_{*}}} + X_{S_{*}}^T\epsilon)
    \\ & - X_{S_{*}}(X_{S_{*}}^TX_{S_{*}})^{-1}(\mathbb{I} + \eta_{\beta} (X_{S_{*}}^TX_{S_{*}})^{-1})^{-1}
    \\ & \eta_{\beta} (X_{S_{*}}^TX_{S_{*}})^{-1}(X_{S_{*}}^Ty + \eta_{\gamma_{S_{*}}})||^2\Big)
    \\ & \overset{(6)}{\leq} ||\frac{X_{S_{*}}}{\sqrt{n}}(\frac{X_{S_{*}}^TX_{S_{*}}}{n})^{-1}\frac{(\eta_{\gamma_{S_{*}}} + X_{S_{*}}^T\epsilon)}{n} 
    \\ & - \frac{X_{S_{*}}}{\sqrt{n}}(\frac{X_{S_{*}}^TX_{S_{*}}}{n})^{-1}(\mathbb{I} + \frac{\eta_{\beta}}{n} (\frac{X_{S_{*}}^TX_{S_{*}}}{n})^{-1})^{-1}
    \\ & \frac{\eta_{\beta}}{n} (\frac{X_{S_{*}}^TX_{S_{*}}}{n})^{-1}(\frac{X_{S_{*}}^Ty}{n} +  \frac{\eta_{\gamma_{S_{*}}}}{n})||_2^2
    \\ & \overset{(7)}{\leq} 
    2||\frac{X_{S_{*}}}{\sqrt{n}}(\frac{X_{S_{*}}^TX_{S_{*}}}{n})^{-1}\frac{\eta_{\gamma_{S_{*}}}}{n}||^2 
    \\ & + 2||\frac{X_{S_{*}}}{\sqrt{n}}(\frac{X_{S_{*}}^TX_{S_{*}}}{n})^{-1}\frac{X_{S_{*}}^T\epsilon}{n}||^2 
    \\ & + 2||\frac{X_{S_{*}}}{\sqrt{n}}(\frac{X_{S_{*}}^TX_{S_{*}}}{n})^{-1}(\mathbb{I} + \frac{\eta_{\beta}}{n}
    \\ & \cdot (\frac{X_{S_{*}}^TX_{S_{*}}}{n})^{-1})^{-1}\frac{\eta_{\beta}}{n} (\frac{X_{S_{*}}^TX_{S_{*}}}{n})^{-1}(\frac{X_{S_{*}}^Ty}{n} +  \frac{\eta_{\gamma_{S_{*}}}}{n})||_2^2
    \\ & \overset{(8)}{\leq} 2\Big(\frac{(1+\zeta_{s+1})\kappa_s s \sqrt{2\log(\frac{2s}{p_b})}}{n(1-\zeta_{s+1})}\Big)^2 
    \\ & + 2\Big(\frac{(1+\zeta_{s+1})\sqrt{2ns\log(2s/p_b))}\sigma_{\epsilon} X_M }{n(1-\zeta_{s+1})}\Big)^2 
    \\ & + 2\Big(\frac{s^{3/2} \kappa_s\kappa_M\sqrt{2\log(\frac{2s^2}{p_b})}}{n(1-\zeta_{s+1})^2}\Big((1+\zeta_{s+1})||\alpha_{S_{*}}||_2
    \\ & + \sqrt{1+\zeta_{s+1}} \kappa_{\epsilon} +\frac{\kappa_s \sqrt{s}\sqrt{2\log(\frac{2s}{p_b})}}{n} \Big)\Big)^2
    \\ & = \mathcal{O}\Big(\frac{2s\log(s))}{n}\Big)
    % \\ & = \mathcal{O}(\frac{s^3||\alpha_{S_{*}}||^2 \log(s)}{n^2})
\end{align*}
\textit{(1)-(3)} follow from definitions of risk and the output of \textit{SPriFed-OMP}. \textit{(4)} follows by the Woodbury expression (Eqn.~\ref{lemma:woodbury-kailath-variant}). \textit{(5)} follows since $\alpha_{S_{*}} = (X_{S_{*}}^TX_{S_{*}})^{-1}X_{S_{*}}^Ty$, $X_{S_{*}}(X_{S_{*}}^TX_{S_{*}})^{-1}X_{S_{*}}^T = y$ and $y = X_{S_{*}}\alpha_{S_{*}} + \epsilon$. \textit{(6)} follows by including the $n$ inside the squared norm. \textit{(7)} follows since $||a-b||^2 \leq 2(||a||^2 + ||b||^2)$ which follows from Cauchy-Schwartz inequality. \textit{(7)} The terms in \textit{(7)} however are similar to the intermdiate term \textit{(3)} from Theorem~\ref{theorem:estimation-error} except for multiplication by $X_{S_{*}}/\sqrt{n}$ and the squared $l_2$ norm. Clearly by leveraging, the same inequalities and the RIP definition we can obtain \textit{(8)}. 

\section{Additional Results and Notation}\label{appendix:useful-results}

\subsection{Simple Matrix Properties}

For a general matrix $A \in \mathbb{R}^{n \times p}$, we state some simple properties that normally hold irrespective of its distributions. For a column index subset $S$, the pseudo-inverse of $A_S$ is given by $A^{\dag} = (A_{S}^{T}A_{S})^{-1} A_{S}^{T}$. Furthermore, we can easily identify the projection matrix for $A_S$, that maps $A_S$ onto its orthogonal components by, $P_{S}^{\perp} = I- P_{S}$, where $P_{S} = A_{S}A_{S}^{\dag}$. A projection matrix $P_{S}$ has the properties $P_{S}^2 = P_{S}$ and $P_{S}^{T} = P_{S}$. Verifying these properties is straightforward by plugging in the value of $P_S$ in terms of $A_S$. 

The Singular Value Decomposition of $A$, is written as $A = U \Sigma V^{T}$ where,
$u \in \mathbb{R}^{n \times n}, \Sigma \in \mathbb{R}^{n \times p} \text{(diag. matrix)},  V \in \mathbb{R}^{p \times p}$.
$U, V$ are unitary and $U^{T}U = UU^{T} = \mathbb{I}; V^{T}V = VV^{T} = \mathbb{I}$. Furthermore, the SVD of $A^TA$ is given by $V(\Sigma^T \Sigma)V^T = VDV^T$, where $D = (\Sigma^T \Sigma)$. Thus, we can see that $A^TAV = VD$ and therefore, the matrix $A^TA$ shares the same eigenvalue and singular values.

Denote the maximum and minimum singular values of a matrix $A$ by $\sigma_{max}(A)$ and $\sigma_{min}(A)$, respectively. Similarly, define the maximum and minimum eigenvalues of matrix $A$ by $\sigma_{\min}(A)$ and $\lambda_{min}(A)$ respectively. Thus, $\lambda_{\min}(X^{T}X) = \sigma_{max}(X^{T}X) = \sigma^{2}_{max}(X)$ and $\lambda_{min}(X^{T}X) = \sigma_{min}(X^{T}X) = \sigma^{2}_{min}(X)$. Finally, the $i^{th}$ singular value and eigenvalue of the matrix $A$ is given by $\sigma_i(A)$ and $\lambda_i(A)$ respectively.

\begin{lemma}\label{lemma:spectral-norm}\textbf{[Spectral Norm eigen-value bounds]}
Consider the matrix $A \in \mathbb{R}^{n \times p}$ and vector $w \in \mathbb{R}^{p}$ then,
\begin{align}
    \sigma_{\min}(A)||w||_2 \leq ||Aw||_2 \leq \sigma_{\max}(A)||w||_2 \nonumber
\end{align}
\end{lemma}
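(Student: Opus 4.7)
The plan is to invoke the singular value decomposition $A = U\Sigma V^{T}$ with $U \in \mathbb{R}^{n \times n}$ and $V \in \mathbb{R}^{p \times p}$ orthogonal, and $\Sigma \in \mathbb{R}^{n \times p}$ a ``diagonal'' matrix whose nonzero entries are the singular values $\sigma_{1} \geq \sigma_{2} \geq \dots \geq \sigma_{\min(n,p)} \geq 0$. This is precisely the setup already recalled in the paragraph of matrix preliminaries immediately preceding the lemma, so no extra machinery is needed.

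First I would use that orthogonal transformations preserve the Euclidean norm: since $U^{T}U = \mathbb{I}$, I have $\|Aw\|_{2} = \|U\Sigma V^{T} w\|_{2} = \|\Sigma V^{T} w\|_{2}$. Writing $z = V^{T} w$ and using $V^{T}V = \mathbb{I}$ gives $\|z\|_{2} = \|w\|_{2}$, so the problem reduces to bounding $\|\Sigma z\|_{2}$ in terms of $\|z\|_{2}$, i.e., to a purely diagonal computation.

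Next I would expand the squared norm coordinatewise as $\|\Sigma z\|_{2}^{2} = \sum_{i=1}^{\min(n,p)} \sigma_{i}^{2}\, z_{i}^{2}$. Sandwiching $\sigma_{i}^{2}$ between $\sigma_{\min}(A)^{2}$ and $\sigma_{\max}(A)^{2}$, then summing and taking square roots, gives $\sigma_{\min}(A)\|z\|_{2} \leq \|\Sigma z\|_{2} \leq \sigma_{\max}(A)\|z\|_{2}$; substituting back $\|z\|_{2} = \|w\|_{2}$ yields the claim. In the rectangular case $n < p$, the summation above runs only up to $n$, which merely weakens the lower bound (consistent with $\sigma_{\min}(A) = 0$ in that regime), so the inequality still holds trivially.

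I do not anticipate any genuine obstacle here: the lemma is a textbook consequence of the SVD, and each step above is a one-line manipulation. The only minor care-point is the convention for $\sigma_{\min}(A)$ when $A$ is rectangular, which as noted is consistent with the stated inequality.
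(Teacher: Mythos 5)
Your proof is correct, and it is the canonical argument: the paper itself states this lemma without proof in its ``Simple Matrix Properties'' subsection, treating it as a standard consequence of the SVD machinery recalled in the immediately preceding paragraph, which is exactly the machinery you use. Your extra care about the rectangular case $n<p$ is well placed --- the lower bound only survives there under the convention that $\sigma_{\min}(A)=0$ when $A$ has a nontrivial null space (equivalently, counting the singular values as the $p$ square roots of the eigenvalues of $A^{T}A$), and since the paper only ever applies the lower bound to square or tall Gram-type matrices such as $X_{S_{\mathcal{A}}}^{T}X_{S_{\mathcal{A}}}$, this convention point causes no trouble downstream.
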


\begin{lemma}\label{lemma:woodbury-kailath-variant} [Woodbury Identity: Kailath Variant] (page 153 from \citet{bishop1995neural}, \citet{petersen2008matrix}) For conformable matrices $A, B, C$ we have the following expression for the inverse of matrix sum,
\begin{align}
    (A + BC)^{-1} = A^{-1} - A^{-1} B (\mathbb{I} + CA^{-1}B)^{-1} CA^{-1} \nonumber
\end{align}
Assuming, $B = \mathbb{I}, C = N$ we obtain $\rightarrow$
\begin{align}
(A + N)^{-1} &= A^{-1} - A^{-1} (\mathbb{I} + NA^{-1})^{-1} NA^{-1} \nonumber
\end{align}
Assuming, $B = N, C = I$ we obtain $\rightarrow$
\begin{align}
(A + N)^{-1} = A^{-1} - A^{-1} N (\mathbb{I} + A^{-1}N)^{-1} A^{-1} \nonumber
\end{align}
\end{lemma}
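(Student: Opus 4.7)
The plan is to verify the Woodbury--Kailath identity directly by multiplication: show that the candidate right-hand side, when multiplied by $(A+BC)$, collapses to the identity matrix. This is the standard and cleanest route, since it avoids any need to invoke block-matrix inversion or determinant identities, and it requires only that $A$ and $I+CA^{-1}B$ be invertible (which is implicit in the statement, as each appears inverted). Once the general identity is established, the two corollaries follow by simple substitution.

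First I would compute
\begin{align*}
(A+BC)\bigl[A^{-1} - A^{-1}B(I+CA^{-1}B)^{-1}CA^{-1}\bigr]
 &= I + BCA^{-1} - (A+BC)A^{-1}B(I+CA^{-1}B)^{-1}CA^{-1}.
\end{align*}
Next I would factor $B$ on the left of the last term, recognizing that
\begin{align*}
(A+BC)A^{-1}B \;=\; B + BCA^{-1}B \;=\; B(I + CA^{-1}B),
\end{align*}
so the last term simplifies to $B(I+CA^{-1}B)(I+CA^{-1}B)^{-1}CA^{-1} = BCA^{-1}$. Substituting back yields $I + BCA^{-1} - BCA^{-1} = I$, which establishes that the proposed expression is a right inverse of $A+BC$; since $A+BC$ is a square matrix, it is also the two-sided inverse, completing the proof of the general identity.

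For the first corollary, set $B=I$ and $C=N$ in the general identity to obtain $(A+N)^{-1} = A^{-1} - A^{-1}(I+NA^{-1})^{-1}NA^{-1}$. For the second corollary, set $B=N$ and $C=I$ to obtain $(A+N)^{-1} = A^{-1} - A^{-1}N(I+A^{-1}N)^{-1}A^{-1}$. Both substitutions are mechanical.

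I do not anticipate any serious obstacle. The only subtlety is the implicit invertibility assumption on $I+CA^{-1}B$, which is conformability-consistent (it is square whenever the product $CA^{-1}B$ is defined) and is standard in the Woodbury-identity literature; I would note this assumption explicitly at the start of the proof so the manipulation of $(I+CA^{-1}B)^{-1}$ is unambiguous. The algebraic cancellation itself is one line, so the proof is essentially a direct verification rather than a discovery argument.
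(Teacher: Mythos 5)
Your proof is correct: the expansion $(A+BC)A^{-1}B = B(I+CA^{-1}B)$ is the key cancellation, the right-inverse-implies-two-sided-inverse step is valid for square matrices, and both specializations are mechanical as you say. The paper itself offers no proof of this lemma --- it simply cites page 153 of \citet{bishop1995neural} and \citet{petersen2008matrix} --- so your direct verification supplies an argument the paper omits, and your explicit remark that $I + CA^{-1}B$ must be invertible is a worthwhile addition since the paper leaves that hypothesis implicit.
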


% We also note some other commonly used matrix notation. In particular, for the matrix $A \in \mathbb{R}^{n \times p}$ and some column index subset $S$, the pseudo-inverse of $A_S$ is given by $A^{\dag} = (A_{S}^{T}A_{S})^{-1} A_{S}^{T}$. Furthermore, we can easily identify the projection matrix for $A_S$, that maps $A_S$ onto its orthogonal components by, $P_{S}^{\perp} = I- P_{S}$, where $P_{S} = A_{S}A_{S}^{\dag}$. A projection matrix $P_{S}$ has the properties $P_{S}^2 = P_{S}$ and $P_{S}^{T} = P_{S}$. Verifying these properties is straightforward by plugging in the value of $P_S$ in terms of $A_S$. 

%The spectral norm of $A$ is given by $||A||_2 = \sigma_{max}(A^TA) = \lambda_{\min}((X^{T}X))$. Therefore, we have
%\begin{align}
%    ||A||_2 &= \sigma_{\min}((X^{T}X)) \label{equation:spectral-norm2}
%    \\ &= \max_{v \in \mathbb{R}^p, ||v||_{2} \neq 0} \frac{||Xv||_{2}^2}{||v||_{2}^2} \label{equation:spectral-norm3}
%    \\ & = \max_{v \in \mathbb{R}^p, ||v||_{2} \neq 0} \frac{v^TX^TXv}{v^Tv} \label{equation:spectral-norm4}
%\end{align}

%Similarly, we can show that
%\begin{align}
%\text{min}_{w \in \mathbb{R}^p, ||w||_{2} \neq 0} \frac{w^TX^TXw}{w^Tw} &= \sigma_{min}(X^TX) \label{equation:min-eigenvalue6}
%\end{align}

\subsection{Concentration bounds for Random Variables}
We have previously seen in the differential privacy section, that we require that the matrix values or their functional outputs be bounded. Therefore, here we consider some of the useful properties of random variables that allows us to bound their contributions.

\begin{lemma}\label{lemma:conc-ineq-def}\textbf{[Concentration Inequality} \citep{boucheron2013concentration}\textbf{]} For Gaussian random variable x with distribution $\mathcal{N}(0, \sigma^2)$ we have,
\begin{align}
\text{Pr}[|x| \geq t] &\leq 2e^{\frac{-t^2}{2\sigma^2}} \rightarrow \text{Pr}[|x| < t] > 1-2e^{\frac{-t^2}{2\sigma^2}} \nonumber
\end{align}
\end{lemma}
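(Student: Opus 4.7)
The plan is to establish the classical sub-Gaussian tail bound by a standard Chernoff/MGF argument, then obtain the two-sided version by symmetry. First, by symmetry of the centered Gaussian density, $\Pr[|x| \geq t] = 2\Pr[x \geq t]$ for any $t \geq 0$, so it suffices to bound the one-sided tail by $e^{-t^2/(2\sigma^2)}$. For the trivial case $t<0$ the inequality is vacuous.

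Next, I would invoke the moment generating function of $x \sim \mathcal{N}(0,\sigma^2)$, namely $\mathbb{E}[e^{\lambda x}] = e^{\lambda^2 \sigma^2/2}$ for all $\lambda \in \mathbb{R}$. Combining this with Markov's inequality applied to $e^{\lambda x}$ for any $\lambda > 0$ gives
\begin{align*}
\Pr[x \geq t] \;=\; \Pr\!\left[e^{\lambda x} \geq e^{\lambda t}\right] \;\leq\; \frac{\mathbb{E}[e^{\lambda x}]}{e^{\lambda t}} \;=\; e^{\lambda^2 \sigma^2 / 2 - \lambda t}.
\end{align*}
Then I would optimize the right-hand side over $\lambda > 0$: the exponent $\lambda^2 \sigma^2/2 - \lambda t$ is minimized at $\lambda^{*} = t/\sigma^2$, yielding the bound $\Pr[x \geq t] \leq e^{-t^2/(2\sigma^2)}$.

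Finally, combining the one-sided bound with symmetry gives $\Pr[|x| \geq t] \leq 2 e^{-t^2/(2\sigma^2)}$, and taking complements yields $\Pr[|x| < t] \geq 1 - 2 e^{-t^2/(2\sigma^2)}$, which is the claimed statement. There is no genuine obstacle here since every ingredient (symmetry of the Gaussian density, its MGF, Markov's inequality, and the one-variable optimization of a quadratic exponent) is elementary; the only care needed is to handle the $t \geq 0$ versus $t < 0$ cases and to note that the inequality in the lemma is stated non-strictly on one side and strictly on the other, which is harmless because a continuous random variable assigns zero probability to the boundary event $\{|x| = t\}$.
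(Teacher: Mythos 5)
Your proof is correct: the symmetry reduction, the Gaussian MGF combined with Markov's inequality, and the optimization at $\lambda^{*}=t/\sigma^{2}$ give exactly the claimed bound $\Pr[|x|\geq t]\leq 2e^{-t^{2}/(2\sigma^{2})}$. The paper itself states this lemma with only a citation to the concentration-inequalities literature and provides no proof, and your Chernoff argument is precisely the standard derivation that reference uses, so there is nothing to reconcile.
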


\begin{lemma}\label{lemma:union-bound}\textbf{[Union Bound]}
    The union bound states for any events $E_1, E_2, ..., E_n$ we have,
    \begin{align}
        \text{Pr}(\cup_{i=1}^n E_i) \leq \sum_{i=1}^n \text{Pr}(E_i) \nonumber
    \end{align}
\end{lemma}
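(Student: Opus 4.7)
The plan is to prove the Union Bound by the classical disjointification argument, which reduces the inequality to an equality for pairwise disjoint events plus a single monotonicity step. First, I would construct auxiliary events $F_1 = E_1$ and, for $2 \leq i \leq n$, $F_i = E_i \setminus (E_1 \cup E_2 \cup \cdots \cup E_{i-1})$. By construction the $F_i$ are pairwise disjoint (each $F_i$ explicitly excludes everything in earlier $E_j$'s), each satisfies $F_i \subseteq E_i$, and $\bigcup_{i=1}^n F_i = \bigcup_{i=1}^n E_i$ (every point in the union lies in some first-appearing $E_i$, and hence in the corresponding $F_i$).

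Once this disjoint family is in hand, the remainder of the argument uses only the axioms of a probability measure. Finite additivity applied to the disjoint family $(F_i)_{i=1}^n$ gives
\begin{align*}
\Pr\Bigl(\bigcup_{i=1}^n E_i\Bigr) \;=\; \Pr\Bigl(\bigcup_{i=1}^n F_i\Bigr) \;=\; \sum_{i=1}^n \Pr(F_i).
\end{align*}
Then monotonicity of $\Pr$, i.e. $F_i \subseteq E_i \Rightarrow \Pr(F_i) \leq \Pr(E_i)$, applied term by term, yields $\sum_{i=1}^n \Pr(F_i) \leq \sum_{i=1}^n \Pr(E_i)$, which is exactly the claimed bound.

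There is essentially no obstacle: the lemma is a foundational inequality of probability theory and requires no additional structure on the $E_i$ (no independence, no symmetry, not even measurability beyond what is already assumed for ``events''). The only point requiring care is verifying that $F_i \subseteq E_i$ and that $\bigcup F_i = \bigcup E_i$, both of which fall out directly from the definition of $F_i$. A purely inductive alternative also works — base case $n=1$ is trivial, and the inductive step uses the two-set inclusion-exclusion identity $\Pr(A \cup B) = \Pr(A) + \Pr(B) - \Pr(A \cap B) \leq \Pr(A) + \Pr(B)$ applied with $A = \bigcup_{i=1}^{n-1} E_i$ and $B = E_n$ — but the disjointification route is cleaner because it never needs to discard a $\Pr(A \cap B)$ term, making the single inequality transparently attributable to the inclusion $F_i \subseteq E_i$.
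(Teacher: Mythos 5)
Your proof is correct: the disjointification argument (setting $F_i = E_i \setminus \bigcup_{j<i} E_j$, using finite additivity on the disjoint $F_i$, then monotonicity from $F_i \subseteq E_i$) is the standard, complete proof of Boole's inequality. The paper itself states this lemma without proof as a well-known fact, so there is nothing to compare against; your argument fills that in correctly and with the canonical approach.
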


\begin{lemma}\label{lemma:norm-subgaussian}\textbf{[Norm of matrix with subgaussian entries]} (Theorem 4.3.5 from \citet{vershynin2018high})

Consider matrix $A \in \mathbb{R}^{n \times p}$ where its entries $A_{i, j}$ are independent, mean zero, subgaussian random variables. Then, for any $t > 0$ we have,
\begin{align*}
    ||A||_2 \leq CK(\sqrt{n} + \sqrt{p} + t)
\end{align*}
with probability at least $1-2e^{-t^2}$ and $K=\max_{i, j}||A_{i, j}||_{\Psi_2}; i \in [n], j \in [p]$.
\end{lemma}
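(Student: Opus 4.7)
The plan is to prove the operator-norm bound via a standard $\varepsilon$-net argument together with a subgaussian Hoeffding-type concentration inequality. Recall that the operator norm admits the variational representation $\|A\|_2 = \sup_{x \in S^{p-1}, \, y \in S^{n-1}} \langle Ax, y \rangle$, so it suffices to control the bilinear form $\langle Ax, y \rangle = \sum_{i,j} A_{ij} x_j y_i$ uniformly over the unit spheres. The difficulty is that the spheres are uncountable, so we must discretize.

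First, I would fix $\varepsilon = 1/4$ (any value in $(0, 1/2)$ works) and pick $\varepsilon$-nets $\mathcal{N} \subseteq S^{p-1}$ and $\mathcal{M} \subseteq S^{n-1}$ with covering-number bounds $|\mathcal{N}| \leq 9^p$ and $|\mathcal{M}| \leq 9^n$ (the standard volumetric estimate $(1 + 2/\varepsilon)^d$). A routine approximation lemma then gives $\|A\|_2 \leq 2 \max_{x \in \mathcal{N}, \, y \in \mathcal{M}} \langle Ax, y \rangle$, which reduces the supremum over an uncountable set to a maximum over at most $9^{n+p}$ pairs.

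Next, for a fixed pair $(x, y)$, the random variable $\langle Ax, y \rangle = \sum_{i,j} x_j y_i A_{ij}$ is a linear combination of independent mean-zero subgaussian entries. By the rotation invariance of the subgaussian Orlicz norm (Proposition 2.6.1 of Vershynin), its subgaussian norm satisfies $\|\langle Ax, y \rangle\|_{\Psi_2}^2 \leq c \sum_{i,j} x_j^2 y_i^2 K^2 = c K^2$ since $x$ and $y$ are unit vectors. The general Hoeffding inequality for subgaussian sums then yields
\begin{equation*}
\Pr\bigl[\,\lvert \langle Ax, y \rangle \rvert \geq u \,\bigr] \leq 2 \exp\!\bigl(-c u^2 / K^2\bigr)
\end{equation*}
for every $u > 0$ and an absolute constant $c > 0$.

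Finally, I would set $u = C K(\sqrt{n} + \sqrt{p} + t)/2$ for an appropriately large absolute constant $C$ and apply the union bound over all $9^{n+p}$ pairs in $\mathcal{N} \times \mathcal{M}$. The exponent becomes $-c u^2/K^2 + (n+p) \log 9$; by choosing $C$ large enough, the linear-in-$(n+p)$ term is absorbed into the quadratic $u^2/K^2$ term, leaving a residual of $-t^2$. The approximation step then gives $\|A\|_2 \leq 2u \leq C K(\sqrt{n} + \sqrt{p} + t)$ with probability at least $1 - 2 e^{-t^2}$. The main obstacle is simply bookkeeping the constants so that the net cardinality is absorbed; everything else is a direct combination of covering numbers, subgaussian tail estimates, and a union bound.
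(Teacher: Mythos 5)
Your proof is correct and is precisely the standard $\varepsilon$-net argument by which Theorem 4.3.5 is proved in the cited reference; the paper itself offers no proof of this lemma, simply importing it from \citet{vershynin2018high}. All the steps check out: the variational characterization, the $9^{n+p}$ covering bound, the general Hoeffding estimate giving $\|\langle Ax,y\rangle\|_{\Psi_2} \lesssim K$ for unit $x,y$, and the absorption of the net cardinality into the constant $C$ via the union bound.
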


\begin{lemma}\label{lemma:invertibility-subgaussian}\textbf{[Lower Bound of Minimum Singular Value for a Random Matrix]:} \textit{(Theorem 1.2 from \citet{rudelson2008littlewood})}
Let $z_1, ..., z_p$ be independent, centered real random variables with variance at least $1$ and sub-Gaussian moments bounded by $B$. Let $A \in \mathbb{R}^{p \times p}$ matrix whose rows are independent copies of the random vector $(z_1, ..., z_p)$. Then for every $\varepsilon \geq 0 \rightarrow$
\begin{align*}
    \Pr[\sigma_{\min}(A) \leq \frac{\varepsilon}{\sqrt{p}}] \leq C \varepsilon + c^p
\end{align*}
where $C > 0, c \in (0, 1)$ depend polynomially only on $B$.
\end{lemma}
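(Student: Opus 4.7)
The plan is to follow the Rudelson--Vershynin strategy, which partitions the unit sphere $S^{p-1}$ by how ``spread out'' its vectors are and handles each piece with different tools. Since $\sigma_{\min}(A) = \inf_{x\in S^{p-1}}\|Ax\|_2$, I would fix small constants $\delta,\rho\in(0,1)$ depending polynomially on $B$ and let $\mathrm{Comp}(\delta,\rho)$ denote the set of unit vectors within Euclidean distance $\rho$ of some $\lceil\delta p\rceil$-sparse vector, with complement $\mathrm{Incomp}(\delta,\rho)$. The claim then reduces to proving two separate tail bounds:
\begin{align*}
\Pr\!\left[\inf_{x\in\mathrm{Comp}}\|Ax\|_2 \leq c\sqrt{p}\right] \leq c_0^{\,p}, \quad \Pr\!\left[\inf_{x\in\mathrm{Incomp}}\|Ax\|_2 \leq \varepsilon/\sqrt{p}\right] \leq C\varepsilon + c_0^{\,p}.
\end{align*}

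For the compressible piece I would use a standard $\varepsilon$-net argument. The $\rho$-covering number of $\mathrm{Comp}(\delta,\rho)$ is bounded by $\binom{p}{\lceil\delta p\rceil}(3/\rho)^{\lceil\delta p\rceil}$, which is sub-exponential for $\delta$ small enough; combined with subgaussian / Hanson--Wright concentration of $\|Ax\|_2$ around $\sqrt{p}\,\|x\|_2$ (using that the rows of $A$ are i.i.d.\ with independent subgaussian coordinates of variance at least $1$), a union bound gives the required exponential tail and absorbs into the $c^{\,p}$ term. For the incompressible piece I would invoke the geometric identity $\sigma_{\min}(A) \geq p^{-1/2}\min_{k\leq p}\mathrm{dist}(A_k,H_k)$, where $H_k=\mathrm{span}\{A_j:j\neq k\}$. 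A union bound over $k$ reduces the question to bounding $\Pr[\,|\langle A_k,v\rangle|\leq\varepsilon\,]$, where $v$ is a unit normal to $H_k$ obtained by conditioning on the remaining rows. A preliminary step shows that, with probability at least $1-c_0^{\,p}$, this normal $v$ is itself incompressible, by applying the compressible bound of the previous step to the $(p-1)\times p$ submatrix of the other rows and noting that a near-sparse $v$ would make $\|A_{\text{other}}v\|_2$ implausibly small.

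The hard part, and the true heart of the argument, is the small-ball estimate $\Pr[\,|\sum_{j} z_j v_j|\leq\varepsilon\,]\leq C\varepsilon$ for incompressible $v$, which alone produces the $C\varepsilon$ term in the statement. Classical Berry--Esseen does not suffice because $v$ carries no regularity beyond spreadness; instead one needs the \emph{Inverse Littlewood--Offord} machinery of \citet{rudelson2008littlewood}, which shows that a large small-ball probability for $\sum z_j v_j$ forces strong arithmetic structure on $v$, quantified by a small \emph{essential least common denominator}. The conclusion follows from a second covering argument: the set of ``structured'' $v$'s (those with small essential LCD) admits a net of cardinality controlled by the LCD, over which one can union-bound and show the total contribution is $O(\varepsilon)+c_0^{\,p}$; for unstructured $v$, the inverse Littlewood--Offord inequality supplies the $C\varepsilon$ bound directly. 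The main obstacle is tracking the constants $C$ and $c$ through each of these steps---the subgaussian concentration estimate, the spreadness constants defining $\mathrm{Incomp}$, and the Fourier / characteristic-function step of the inverse Littlewood--Offord lemma---so that each depends only polynomially on $B$; this bookkeeping, and verifying that the small-ball argument applies with variance only \emph{at least} $1$ rather than exactly $1$, are where the proof is delicate rather than conceptually novel.
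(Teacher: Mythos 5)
The paper does not prove this lemma at all: it is imported verbatim as Theorem 1.2 of \citet{rudelson2008littlewood} and used as a black box, so there is no in-paper proof to compare against. Your outline is a faithful reproduction of the architecture of the actual Rudelson--Vershynin argument: the compressible/incompressible decomposition of $S^{p-1}$, a net-plus-tensorization bound for the compressible part, the reduction of the incompressible part to the distances $\mathrm{dist}(A_k,H_k)$, and the small-ball estimate for $\langle A_k, v\rangle$ via the essential LCD and the inverse Littlewood--Offord theorem. Two small inaccuracies worth flagging: the incompressible reduction in the original is an averaging bound of the form $\Pr[\inf_{\mathrm{Incomp}}\|Ax\|_2\le \varepsilon\rho p^{-1/2}]\le (\delta p)^{-1}\sum_k \Pr[\mathrm{dist}(A_k,H_k)\le\varepsilon]$ rather than a deterministic $\min_k$ identity, and the compressible case is handled there by tensorizing a one-row small-ball estimate rather than by Hanson--Wright concentration (though your route also works for subgaussian entries). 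Finally, be aware that your sketch invokes the inverse Littlewood--Offord theorem, which is the main technical content of the cited paper; as a self-contained proof it is therefore a roadmap rather than a derivation, but that is exactly the dependency structure the cited reference itself uses, and it is more than the present paper supplies.
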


\begin{lemma}\label{lemma:max-concentration} \textbf{[Concentration of the maximum random variable]}
Suppose we have $k$ random variables $x_i; i \in [k]$ each with distribution $\mathcal{N}(0, \sigma^2)$. We provide an upper bound for the maximum of these $k$ random variables as following,
\begin{align*}
    \Pr[\max_{i=1}^k |x_i| \leq B] \geq 1 - p_b
\end{align*}
where $B = \sigma \sqrt{2 \log(2/p_0)}$ and $p_b$ is the total failure probability such that $p_0 = \frac{p_b}{k}$.
\\ \textbf{Proof:}
Consider the events $E_i$ s.t., $|x_i| \leq B; i \in [k]$ and suppose $\Pr[|x_i| \geq B] \leq p_0$. With Lemma~\ref{lemma:conc-ineq-def} applied to $x_i$, we have, 
\begin{align*}
    \Pr[|x_i| \geq B] &\leq 2e^{\frac{-B^2}{2\sigma^2}} 
    \\ \Pr[|x_i| \geq \sigma \sqrt{ 2\log (\frac{2}{p_0})}] &\leq 2 \exp (-2 \log (2/p_0) \sigma^2 / 2\sigma^2) 
    \\ & \leq 2 \exp (-\log (2/p_0)) 
    \\ & \leq p_0
\end{align*}
Then, with the union bound, we can consider the probability of the event $E_0$ s.t. $\max_{i=1}^k |x_i| \leq B$. Then, $\Pr[E_0^c] = \Pr[\cup_{i=1}^k |x_i| \geq B]$ (where at least one value is larger than the bound). Then,
\begin{align*}
    \Pr[E_0^c] &= \Pr[\cup_{i=1}^k |x_i| \geq B]
    \\ & \leq \sum_{i=1}^k \Pr[|x_i| \geq B]
    \\ & = k \cdot p_0
    \\ \rightarrow \Pr[E_0] &= 1 - \Pr[E_0^c] = 1 - k \cdot p_0
\end{align*}
The first inequality is due to Boole's inequality. In particular, if we wish to bound our failure probabilities to a fixed value $p_{b}$ then we can have $p_0 = \frac{p_{b}}{k}$ and thus our bound $B = \sigma \sqrt{2 \log(\frac{2k}{p_{b}})}$ is achieved.
\end{lemma}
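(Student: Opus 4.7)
The plan is to combine a per-variable Gaussian tail bound (Lemma~\ref{lemma:conc-ineq-def}) with a union bound over the $k$ indices, then choose the per-index failure probability to absorb the factor of $k$.

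First, I would apply Lemma~\ref{lemma:conc-ineq-def} to each fixed $i \in [k]$: since $x_i \sim \mathcal{N}(0,\sigma^2)$, for any threshold $t > 0$ we have $\Pr[|x_i| \geq t] \leq 2\exp(-t^2/(2\sigma^2))$. Plugging in the candidate threshold $B = \sigma\sqrt{2\log(2/p_0)}$, the exponent simplifies to $-\log(2/p_0)$, so $2\exp(-t^2/(2\sigma^2)) = 2 \cdot (p_0/2) = p_0$. Hence $\Pr[|x_i| \geq B] \leq p_0$ for every $i$ individually.

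Next, I would upgrade this to a statement about the maximum. Let $E_0$ be the event $\{\max_{i\in[k]} |x_i| \leq B\}$. Its complement $E_0^c = \bigcup_{i=1}^k \{|x_i| \geq B\}$. By the union bound (Lemma~\ref{lemma:union-bound}), $\Pr[E_0^c] \leq \sum_{i=1}^k \Pr[|x_i| \geq B] \leq k p_0$, so $\Pr[E_0] \geq 1 - k p_0$. Finally, setting $p_0 = p_b/k$ (equivalently rescaling the threshold to $B = \sigma\sqrt{2\log(2k/p_b)}$) yields $\Pr[E_0] \geq 1 - p_b$, matching the stated bound.

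There is no real obstacle here — the result is a textbook Gaussian concentration / union bound argument, and every ingredient (the single-variable Gaussian tail inequality and Boole's inequality) is already stated earlier in the excerpt. The only point that requires a line of care is the bookkeeping in the exponent when substituting the explicit value of $B$, and making sure the final probability matches the scaling convention $p_0 = p_b/k$ as stated in the lemma.
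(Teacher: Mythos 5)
Your argument is correct and follows exactly the same route as the paper's proof: the per-variable Gaussian tail bound from Lemma~\ref{lemma:conc-ineq-def} with threshold $B = \sigma\sqrt{2\log(2/p_0)}$, a union bound over the $k$ indices, and the choice $p_0 = p_b/k$ to obtain the final bound $B = \sigma\sqrt{2\log(2k/p_b)}$. No differences worth noting.
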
 

\end{document}